%% file: main.tex
\documentclass[12pt]{amsart}
\usepackage[foot]{amsaddr}
\usepackage{graphicx} % Required for inserting images
\usepackage{amsfonts}
\usepackage{amsmath}
\usepackage{amssymb}
\usepackage{amsthm}
\usepackage{enumitem}
\usepackage{geometry}
\usepackage{xcolor}
\usepackage{hyperref}
\hypersetup{
colorlinks=true,
linkcolor=cyan,
filecolor=blue,
urlcolor=red,
citecolor=green,
}
\usepackage{cleveref}
\usepackage{tikz-cd}
\usetikzlibrary{shapes.geometric}
\usetikzlibrary{arrows.meta}

\tikzset{
  boxed/.style={
    draw,
    rectangle,
    rounded corners=2pt,
    inner sep=3pt
  },
      plane/.style={
        draw,
        fill=#1,
        thick,
        opacity=0.8,
        general shadow={shadow scale=1, shadow xshift=2pt, shadow yshift=-2pt, opacity=0.3}
    }
}

\numberwithin{equation}{section}

\newtheorem{theorem}{Theorem}[section]
\newtheorem{proposition}{Proposition}[section]
\newtheorem{lemma}{Lemma}[section]
\newtheorem{remark}{Remark}[section]
\newtheorem{definition}{Definition}[section]

\newtheorem{example}{Example}[section]
\newtheorem*{theorem*}{Theorem}

\theoremstyle{remark}

\newcommand{\cjp}[1]{{\color{orange} CJ: \footnotesize #1}}

\title{Deep learning and the rate of approximation by flows}
\author{Jingpu Cheng$^{1}$}
\address{$^{1}$Department of Mathematics, National University of Singapore, 117543, Singapore}
\email{chengjingpu@u.nus.edu}

\author{Qianxiao Li$^{2}$}
\address{$^{2}$Department of Mathematics and Institute for Functional Intelligent Materials, National University of Singapore, 117543, Singapore}
\email{qianxiao@nus.edu.sg}

\author{Ting Lin$^{3}$}
\address{$^{3}$School of Mathematical Sciences, Peking University, 100871, China}
\email{lintingsms@pku.edu.cn}

\author{Zuowei Shen$^{4}$}
\address{$^{4}$Department of Mathematics, National University of Singapore, 117543, Singapore}
\email{matzuows@nus.edu.sg}

\date{\today}

\begin{document}

\begin{abstract}
We investigate the dependence of the approximation capacity of
deep residual networks on its depth in a continuous dynamical systems setting.
This can be formulated as the general problem of quantifying the
minimal time-horizon required to approximate a diffeomorphism by flows driven
by a given family $\mathcal F$ of vector fields.
We show that this minimal time can be identified as a geodesic
distance on a sub-Finsler manifold of diffeomorphisms,
where the local geometry is characterised by a
variational principle involving $\mathcal F$.
This connects the learning efficiency of target relationships
to their compatibility with the learning architectural choice.
Further, the results suggest that the key approximation mechanism
in deep learning, namely the approximation of functions
by composition or dynamics, differs in a fundamental
way from linear approximation theory,
where linear spaces and norm-based rate estimates
are replaced by manifolds and geodesic distances.

\end{abstract}

\maketitle

\tableofcontents

\input{intro_QL.tex}

\input{formulation_and_main_results.tex}

\input{results_1DReLU.tex}

\input{proofs_1DReLU.tex}

\input{general_formulations.tex}

\input{applications.tex}

\bibliographystyle{plain}
\bibliography{ref}

\end{document}

%% file: intro_QL.tex
\section{Introduction}

% \ql{I will write a new intro for clarity of thought. Please read through and combine if necessary}

Deep residual networks (ResNets) form the backbone of modern deep learning architectures,
ranging from convolution networks for image processing~\cite{he2016deep} to transformers
powering large language models \cite{vaswani2017attention}.
At the same time, their success also pose interesting new mathematical questions.
A deep residual network is built from repeated compositions of transformations
\begin{equation}\label{eq:resnet_discrete}
    x(n+1) = x(n) + f_n(x(n)), \qquad f_n \in \mathcal{F}, \qquad n=0, 1, 2,\dots,N-1,
\end{equation}
where $N$ is the depth of the network,
$x(n) \in \mathbb{R}^d$ is the hidden state of the network at layer $n$
and $f_n : \mathbb{R}^d\to\mathbb{R}^d$ is the trainable non-linear transformation applied at each layer,
with $\mathcal{F}$ being the set of possible transformations that is determined
by the architectural choice.
For example, in fully connected ResNets, $\mathcal F$ is the set of all functions that can be represented by a single-layer feedforward network~\cite{touvron2022resmlp,tolstikhin2021mlp,gorishniy2021revisiting}, whereas for transformers, $\mathcal F$ is the set of token-wise networks composed with self-attention maps~\cite{vaswani2017attention,devlin2019bert,dosovitskiy2020image}.
% \ql{give 2 examples, say fully connected, and also attention/transformers.}

With this compositional structure, one can build complex transformations $x(0)\mapsto x(N)$
by increasing the number of layers $N$,
and this is the main way of improving model learning capacity in modern artificial intelligence applications.
These fall into the following two complementary formulations.

\noindent \textbf{(i) Approximation of functions in supervised learning.}
In standard supervised learning tasks, we consider a set of
inputs $X \subset \mathbb R^d$ and a set of outputs $Y \subset \mathbb R^m$.
The goal of learning is to approximate an unknown target function $F:X\to Y$ that maps inputs to outputs.
For example, $X$ can be the set of natural images represented by pixel intensities, and
$Y$ is the corresponding classification of the image into types, such as cats, birds, cars, etc.
$F$ would then be the function that classify a given image into its type,
realising an image recognition tool.

Residual networks tackles this approximation problem by considering the composition
\begin{equation}
    \tilde F : = \beta\circ \varphi_N\circ \alpha,
\end{equation}
where $x(0) \mapsto \varphi_N(x(0)) = x(N)$ is the map defined by the $N$-layer residual dynamics in~\eqref{eq:resnet_discrete},
and $\alpha:X\to \mathbb R^d$ and $\beta:\mathbb R^d\to Y$ are simple maps (e.g., $\beta$ can be affine functions for regression problems and indicator functions
of a half-space for binary classification problems).
The function $\tilde F$ is then used to approximate the target $F$,
with its approximation capacity imparted by the large number of building blocks in $\varphi_N$.

\smallskip
\noindent\textbf{(ii) Approximations of distributions in generative modelling.}
Another highly relevant application relying on a similar dynamical approximation scheme is generative
modelling~\cite{rezende2015variational,ho2020denoising,song2019generative},
where the goal is to learn a model that can generate samples (e.g. images, text, crystal structure, etc.)
from a target data distribution. 
In this case, a predominant approach is to model a transport map that pushes forward a simple reference distribution
$\mu_0$ (e.g., a Gaussian) to a complicated target data distribution $\mu_1$.
More concretely, we aim to find a map $\varphi$ such that
\begin{equation}
    \varphi_{\#}\mu_0 \approx \mu_1.
\end{equation}
In practice, such transport maps are often modelled as the flow map of an ODE realized by the
deep residual structure~\cite{ho2020denoising,song2019generative,song2020score,rezende2015variational} (serving as a discrete approximation) in~\eqref{eq:resnet_discrete}.

In both scenarios discussed above, one requires the compositions of residual layers
to approximate a complicated transformation on $X$.
A primary mathematical question that arises is: how does such compositional structures
build complexity, and how is it different from classical methods --- such as finite elements, splines, and wavelets --- which typically
rely on linear combinations of simpler functions?
A convenient way to study this is the \emph{dynamical systems} approach,
which idealises the compositions in~\eqref{eq:resnet_discrete} as a continuous dynamics
\begin{equation}\label{eq:resnet_continuous}
    \dot{x}(t) = f_t(x(t)), \qquad f_t \in \mathcal{F}, \qquad t\in[0,T].
\end{equation}
This connects the problem of expressivity of deep ResNets to the study
of the approximation or controllability properties of flows of differential equations,
and this approach has yielded a number of insights in deep
learning~\cite{li2022deep,tabuada2022universal,agrachev2022control, ruiz2023neural, cheng2025interpolation},
Most notably, it was shown under fairly weak assumptions on $\mathcal{F}$ that
the family of flow maps of~\eqref{eq:resnet_continuous} is dense in appropriate
spaces of maps from $\mathbb{R}^d$ to itself~\cite{li2022deep,cheng2025interpolation}.
This implies in particular that as long as ResNets are sufficiently deep,
they can be used to solve arbitrary tasks, such as image recognition,
language modelling and reinforcement learning.

However, the follow-up (and perhaps the more important) question remains:
what maps from $\mathbb{R}^d$ to itself are more easily approximated by a
relatively shallow network as opposed to a very deep one?
This connects to the important practical problem of why and when
one should use deep learning over other approximation schemes.
A precise answer to this question is thus important in both theory and practice.

There are familiar analogues in classical approximation theory
for which the corresponding issues are well-understood.
For example, while Weierstrass~\cite{devore1993constructive} showed that polynomials
are dense in the space of continuous functions on the unit interval,
it does not tell us which continuous functions can be approximated well
with a low polynomial order.
The classical theorem due to Jackson~\cite{devore1993constructive} provides an answer to this question,
where it is shown that
\begin{equation}
    \inf_{p \in \mathcal{P}_n}
    \|F(x) - p(x)\|_{C([0,1])}
    \lesssim
    \| F^{(\alpha)} \|_{C([0,1])} n^{-\alpha},
    \qquad
    F \in C^{\alpha}([0,1]),
\end{equation}
where $\mathcal{P}_n$ is the set of polynomials with degree up to and including $n$,
$\alpha$ is a positive integer, and $F^{(\alpha)}$ is the order $\alpha$ derivative of $F$,
which we assume to be $\alpha$-times continuously differentiable.
Jackson's theorem identifies a proper subset of continuous functions ($C^{(\alpha)}$)
as target function spaces, on which we can establish a \emph{rate of approximation}.
In particular, we see that the smoother the target is, the faster the rate of approximation%
\footnote{A converse due to Bernstein~\cite{devore1993constructive} shows that an approximation error
decay rate of $n^{\alpha + \delta}$ ($\delta > 0$) implies $F \in C^{(\alpha)}$,
meaning that only smooth functions can be efficiently approximated.}.
Under the same smoothness condition, a function with a smaller $\|\cdot \|_{C([0,1])}$
admits a lower approximation error. This is a measure of the \emph{complexity}
of a target function under polynomial approximation.
This result has immediate practical consequences:
we now understand that it is precisely the continuously differentiable functions
that admit effective approximations by polynomials, and the complexity is governed
by the size of its derivatives -- the smaller the better.

The purpose of this paper is to study the parallel problem for approximation by flows, as in \cref{eq:resnet_continuous}. In our setting, the cost is not the number of terms in a linear combination of monomials (polynomial degree) but the depth of the residual architecture, idealised as the time horizon $T$ in \cref{eq:resnet_continuous}. We ask: for which target maps can the flow approximate arbitrarily well within a finite time $T$, and how does the minimal such $T$ depend on the target and our choice of architecture $\mathcal F$? We view this minimal time as a notion of complexity of the target.
This differs subtly from the classical Jackson-style question, which fixes a budget (e.g., degree $n$) and bounds the approximation error as a function of that budget. Here we invert the perspective: for a prescribed accuracy (arbitrarily small), we seek the minimal budget $T$ that makes the approximation possible. A simple linear analogy illustrates this distinction.
Consider a Hilbert space $H$ with an orthonormal basis $\{e_i\}_{i\ge 1}$. If one fixes a linear budget $S$ (say, an $\ell^1$ or $\ell^2$ constraint on coefficients), the set of functions that can be approximated arbitrarily well under that budget is just the corresponding ball in $H$ -- a trivial characterisation. By contrast, in the compositional/flow setting the hypothesis class is closed under composition, not addition, and the resulting minimal-budget characterisation is non-trivial.
This is precisely the object we will study.
The problem of minimal time of flows for approximation has been studied in specific settings in previous works.
For instance, in~\cite{ruiz2023neural}, the authors studied the approximation using continuous-time ResNets with ReLU type activations. They provided an upper bound on the time to achieve a certain accuracy for general $L^2$ functions. In~\cite{li2022deep}, the authors provide a quantitative estimate on the minimal reachable time in 1D case for ReLU activations. In~\cite{geshkovski2024measure}, the estimation of time for interpolating a set of measures using flow-based models with self-attention layers is studied.
These works mainly focus on specific architectures and are based on constructive approaches.
In contrast, our purpose is to develop a general framework for studying the minimal time problem for general flow-based models using a geometric viewpoint.
% \ql{I feel we are lacking some references on approx theory (in depth) somewhere, we should either mention some here
% (and talk about why they do not solve this problem)
% and then talk about the differences of our work vs theirs later after we introduce and prove he main theorem}

We now give a informal preview of the main results and insights.
First, we show that the right space of maps on which we can quantify flow approximation
should be a metric space respecting the compositional structure of the problem.
That is, instead of discussing the complexity of approximating \emph{one} map,
we should instead study the complexity of connecting \emph{two} maps by a flow.
The approximation complexity will then be identified with the metric on this space.
Next, this global picture is then supplemented by a local one,
where we show that the metric is realised as the geodesic distance
on this metric space viewed as a sub-Finsler manifold.
Most importantly, the local lengths (Finsler norms) are characterised
in a variational form involving the family of vector fields $\mathcal{F}$.
Together, this gives a concrete picture of the quantitative aspects of flow (ResNet) approximation:
the complexity of connecting two given maps by a continuous deep ResNet
is a distance on a manifold of target maps,
where the local geometry is generated by the expressiveness
of each shallow layer architecture of the deep network.
This framework addresses the basic question of quantitative measures of complexity
for approximation via deep architectures, and holds for in general dimensions
and architecture choices.
In~\cref{sec:applications} we give some examples of this approach for problems
where the precise manifold can be identified
and its associated geodesic distances can be computed or estimated.

%% file: formulation_and_main_results.tex
\section{Problem formulation and main results}
 In this section, we summarize our main results on the geometric framework for the complexity class of flow approximation. We begin by introducing the notations and the precise formulation of the problem, followed by the statement and explanation of the main results.
\subsection{Notations}

In the following, for a finite-dimensional manifold $M$, a vector field $f\in \operatorname{Vec}(M)$
and time $t\ge 0$, we denotes its flow map $\varphi_f^t$ as:
\begin{equation}
   \varphi_f^t: x(0)\to x(t), \quad \dot x(t) = f(x(t)).
\end{equation}
We adopt the following convention whenever possible:
\begin{enumerate}
\item We use $f, g$ to denote the vector fields and $\varphi_f^t$ as the flow map of $f$ at time horizon $t$.
\item We use $\psi, \xi,$ to denote the target mappings from a manifold $M$ to itself.
\item We use $u, v$ to denote tangent vectors.
\end{enumerate}

Let $X$ be a vector space. We write $\overline{X}^{\|\cdot \|}$ as the closure of $X$ under the norm $\|\cdot\|$. Moreover, if there is a natural linear inclusion $\overline{X}^{\|\cdot\|} \hookrightarrow Y$ to some classical space $Y$, we will identify both $X$ and $\overline{X}^{\|\cdot\|}$ as their image in $Y$ under this inclusion.

For $u:M\to\mathbb R^k$, we write $u\in W^{1,\infty}(M)$ if $u\in L^\infty(M)$ and its (weak) derivative satisfies $\nabla u\in L^\infty(M)$.
We equip it with the norm
\begin{equation}
\|u\|_{W^{1,\infty}(M)} := \|u\|_{L^\infty(M)} + \|\nabla u\|_{L^\infty(M)}.
\end{equation}
For a vector field $f\in \operatorname{Vec}(M)$, we interpret $f\in W^{1,\infty}(M)$ component-wise in local charts (equivalently, via any fixed smooth atlas on compact $M$).
We write $\operatorname{Diff}^{W^{1,\infty}}(M)$ for the group of homeomorphisms
$\psi:M\to M$ whose coordinate representations belong to $W^{1,\infty}$, i.e.
\begin{equation}
\psi,\ \psi^{-1} \in W^{1,\infty}(M).
\end{equation}

\subsection{Formulation and main results}
Let $M\subset \mathbb R^d$ be a smooth compact manifold (possibly with boundary). Consider the following control system:
\begin{equation}
    \label{eq:control_system}
    \dot x(t)  = f(x(t), \theta(t)), \quad x(0) = x_0 \in M,\quad \theta(t)\in \Theta,\; t\in [0,T],
\end{equation}
where for each parameter $\theta\in\Theta$ the map $x\mapsto f(x,\theta)$ is a vector field on $M$, i.e., $f(\cdot,\theta)\in \operatorname{Vec}(M)$. We refer to $x(\cdot)$ as the state trajectory and to $\theta(\cdot)$ as the control signal. The associated
family of vector fields (hereafter called a \emph{control family}) is
\begin{equation}
    \mathcal F :=\{\,x\mapsto f(x, \theta)\mid \theta \in \Theta\,\}\subset \operatorname{Vec}(M),
\end{equation}
which is the set of admissible control functions. We assume throughout that $\mathcal F$ is symmetric, i.e., $f\in \mathcal F$ implies $-f\in \mathcal F$, and functions in $\mathcal F$ are uniformly bounded and uniformly Lipschitz on $M$, i.e., there exists $L>0$ such that for all $f\in \mathcal F$,
\begin{equation}
    \|f\|_{L^\infty(M)}:=\sup_{x\in M} \|f(x)\|\le L,\quad \text{and}\quad \operatorname{Lip}(f):=\sup_{x,y\in M, x\neq y} \frac{\|f(x)-f(y)\|}{\|x-y\|}\le L.
\end{equation}
% \ql{We should assume that $\mathcal F$ is symmetric and sufficiently uniformly bounded here so we
% don't miss this later? Without the latter, the minimal time would not be well-defined.}

For a given horizon $T>0$, we denote by $\mathcal A_{\mathcal F}(T)$ the class of all flow maps generated by time-splitting the dynamics in $\mathcal F$ over
a total duration $T$:
\begin{equation}
        \mathcal A_{\mathcal F}(T):=\big\{\,\varphi_{f_m}^{t_m}\circ \cdots\circ \varphi_{f_1}^{t_1}\;\big|\; m\in\mathbb N,\ f_i\in\mathcal F,\ t_i>0,\ \sum_{i=1}^m t_i = T\,\big\},
\end{equation}

It is now known that under mild conditions on $\mathcal F$, the set
\begin{equation}
    \mathcal A_{\mathcal F}:=\bigcup_{T>0} \mathcal A_{\mathcal F}(T)
\end{equation}
is dense in $C(M,M)$ (the class of continuous maps $M\to M$)
with respect to natural topologies (e.g., uniform on compacta or $L^p_{\operatorname{loc}}$).
For example, in~\cite{li2022deep,tabuada2022universal,ruiz2023neural,agrachev2022control, cheng2025interpolation} it is shown that when $M=\mathbb R^{d}$ and $\mathcal F$ is affine-invariant and contains a non-linear vector field, then $\mathcal A_{\mathcal F}$ is dense in $L_{\operatorname{loc}}^p(\mathbb R^d, \mathbb R^d)$ for any $p\in[1,\infty)$.
In~\cite{duan2025minimal}, it is also shown that under similar condition,
$\mathcal A_{\mathcal F}$ is dense in $C(M,M)$ topology in the set of orientation preserving diffeomorphisms over $M$.
This means in particular that given sufficiently long time horizons,
maps generated by flows of reasonable control families $\mathcal F$ can
learn arbitrary relationships to any desired accuracy.

The density results give basic guarantees on the viability of using
deep learning for complex tasks.
However, quantitative rate estimates for such approximation schemes is much less explored.
Here, we focus on the minimal time needed to approximate a given target map.
Specifically, for $\psi\in \operatorname{Diff}(M)$ we define
\begin{equation}
    \label{eq:complexity_def}
    \mathbf C_{\mathcal F}(\psi):=\inf\{T>0\mid \psi\in \overline{\mathcal A_{\mathcal F}(T)}\}.
\end{equation}
Here, $\overline{\mathcal A_{\mathcal F}(T)}$ denotes the closure of $\mathcal A_{\mathcal F}(T)$ in the $C(M,M)$ topology.
Thus, $\mathbf C_{\mathcal F}(\psi)$ is the minimal time for which the flows-maps
generated by \eqref{eq:control_system} can approximate $\psi$ arbitrarily well.
In practice, this quantity corresponds to how deep a residual network with
layer architecture $\mathcal F$ needs to be to learn a relationship $\psi$.
We take as our target space the class of diffeomorphisms that are reachable in finite time:
\begin{equation}
    \label{eq:target_space}
    \mathcal T :=\{\psi\in \operatorname{Diff}(M)\mid \mathbf C_{\mathcal F}(\psi)<\infty\}.
\end{equation}
Our goal is to characterize, or at least provide estimates for, $\mathbf C_{\mathcal F}(\psi)$ for $\psi\in \mathcal T$.

The quantity $\mathbf{C}_{\mathcal F}(\psi)$ serves as a notion of complexity for maps in $\mathcal T$.
Approximation complexity in classical linear approximation theory is typically
characterized by certain norms (e.g. Sobolev norms, Besov norms),
where closure under linear combinations drives the analysis.
In contrast, our hypothesis class $\mathcal A_{\mathcal F}$ is generally
not closed under linear combinations.
Instead, it is closed under compositions.
That is, given $\psi,\xi$ in the class, we typically have $\psi\circ \xi\in\mathcal A_{\mathcal F}$, but not $\alpha \psi+\beta \xi$ for all $\alpha, \beta\in \mathbb R$.
This leads to fundamentally different approximation mechanisms.  For example, the approximation of the identity function is trivial using the hypothesis space of flows/ResNets.
However, the difference $0 = \operatorname{Id}-\operatorname{Id}$ is not easy to uniformly approximate using flows/ResNets, at least in 1D~\cite{li2022deep}.
In other words, the approximation error is not compatible with linear combinations of target functions.

A key observation is that approximation error/complexity should respect the algebraic structure of the hypothesis space. For a linear space $\mathcal H$
closed under linear combinations, one has the triangle inequality
\begin{equation}
    \inf_{h\in\mathcal H}\|h-(\psi+\xi)\|\le \inf_{h\in \mathcal H}\|h-\psi\|+\inf_{h\in \mathcal H} \|h-\xi\|.
\end{equation}
That is, the approximation error of $\psi+\xi$ is controlled by
the sum of the individual errors, compatible with linear structure.
In our compositional setting, an analogous triangle inequality holds with respect to composition:
\begin{equation}
    \label{eq:compositional_triangle_inequality}
    \mathbf C_{\mathcal F}(\psi\circ \xi)\le \mathbf C_{\mathcal F}(\psi)+\mathbf C_{\mathcal F}(\xi).
\end{equation}
In words, the minimal time to approximate $\psi\circ \xi$ is at most the sum of the times for $\psi$ and for $\xi$.

Here we arrive at an important observation:
$\mathbf C_{\mathcal F}(\psi)$ \emph{cannot} be any kind of a norm of $\psi$,
for otherwise the usual triangle inequality would imply the ease of approximation
of $\alpha \psi + \beta \xi$ provided the ease of approximation of $\psi,\xi$, which is false.
This then hints at the next natural possibility: an appropriate metric
satisfying the compositional triangle inequality should describe flow
approximation complexity.
This turns out to be the correct approach. Concretely, we extend $\mathbf C_{\mathcal F}(\cdot)$  to a distance on $\mathcal T$. For $\psi_1,\psi_2\in\mathcal T$ define
\begin{equation}
    \label{eq:d_f_def}
    d_{\mathcal F}(\psi_1, \psi_2): = \inf\{T>0\mid \psi_1\in \overline{\mathcal A_{\mathcal F}(T)\circ \psi_2}\},
\end{equation}
that is, $d_{\mathcal F}(\psi_1,\psi_2)$ is the minimal time to connect $\psi_2$ to $\psi_1$ up to arbitrary accuracy.
Recall that since $\mathcal F$ is symmetric ($f\in \mathcal F$ implies $-f\in \mathcal F$),
$d_{\mathcal F}$ is a metric on $\mathcal T$.
This metric can also be generalized to any pairs $\psi_1,\psi_2\in \operatorname{Diff}(M)$,
by allowing $d_{\mathcal F}(\psi_1,\psi_2)$ to take value $+\infty$ when $\psi_1$ is not reachable from $\psi_2$.
 With this metric, we have
\begin{equation*}
    \mathbf C_{\mathcal F}(\psi) = d_{\mathcal F}(\psi, \operatorname{Id}),\quad \text{where } \operatorname{Id} \text{ is the identity map.}
\end{equation*}

The critical question is then: how does $\mathcal{F}$
(the complexity of the control family, or in deep learning, the architectural choice of each layer)
determine this metric?
To answer this question, we develop a geometric viewpoint,
where we consider $\mathcal T\subseteq \mathcal M\subset \operatorname{Diff}^0(M)$ 
as a subset of some $\mathcal M$ with Banach manifold structure (e.g., $\operatorname{Diff}^{W^{1,\infty}}(M)$ for suitable $M$), and
endow $\mathcal M$ with a sub-Finsler structure --
a generalisation of a sub-Riemannian structure~\cite{sontag2013mathematical,agrachev2019comprehensive,arguillere2020sub}
where a norm on each tangent space replaces the sub-Riemannian inner product.
Crucially, as in sub-Riemannian manifolds one can still define a geodesic distance,
which we show is exactly $d_{\mathcal F}$.
This geodesic distance is called the Carnot-Carathéodory (CC) distance in the literature of sub-Riemannian geometry.
For simplicity, we adopt the name ``geodesic distance'' in the rest of this paper,
keeping in mind that in the most general case this should be understood as the CC distance.
Remarkably, the local sub-Finsler norm
admits a variational characterization tied to $\mathcal F$, yielding a new
lens on approximation complexity for flow-based models. 
In the following, for $s>0$, we define
\begin{equation}
    \label{eq:CH_def}
\mathbf{CH}_s(\mathcal F)
:= \Big\{\sum_{i=1}^N a_i f_i : f_i\in \mathcal F,\ \sum_{i=1}^N |a_i| \le s \Big\}.
\end{equation}
% This corresponds to the $s$-ball in the norm $\|\cdot\|_{\mathcal F}$
% defined in~\cref{eq:norm_F} below.

% --- Atomic norm generated by the control family ---
% For $v\in \operatorname{span}\mathcal F$, define the (atomic) norm
% \begin{equation}
%     \label{eq:norm_F}
%     \|v\|_{\mathcal F}:=\inf\Big\{\sum_{i=1}^N |a_i|:\ v=\sum_{i=1}^N a_i f_i,\ f_i\in\mathcal F\Big\},\qquad
%     X_{\mathcal F}:=\overline{\operatorname{span}\mathcal F}^{\ \|\cdot\|_{\mathcal F}}.
% \end{equation}
% Equivalently, $\mathbf{CH}_s(\mathcal F)$ in~\eqref{eq:CH_def} is the $s$-ball of $\|\cdot\|_{\mathcal F}$.

Then, the key intuition of the sub-Finsler structure is as follows.
An infinitesimal change of a map $\psi$ can be produced by composing it with a short-time flow:
for a vector field $v\in \operatorname{Vec}(M)$ and small $\varepsilon>0$,
\[
    (\varphi_v^{\varepsilon}\circ \psi)(x)=\psi(x)+\varepsilon\,v(\psi(x))+o(\varepsilon).
\]
Thus the instantaneous velocity at $\psi$ has the form $u=v\circ \psi$.
If $v$ belongs to the $s$-scaled convex hull $\mathbf{CH}_s(\mathcal F)$, then (by time-rescaling)
this short move can be realised using controls from $\mathcal F$ with time proportional to $s$.
This motivates measuring the ``size'' of an infinitesimal velocity $u$ at $\psi$ by the smallest such $s$:
% \cjp{changed the closure from $C^0$ sense to the atomic norm sense. Under our assumptions, if in $W^{1,\infty}$ base manifold, they should be the same. But in $C^1$ case maybe slightly different.}
\begin{equation}
    \label{eq:local_norm_variational}
    \|u\|_{\psi}=\inf\{\,s>0\mid u\in \overline{\mathbf{CH}_s(\mathcal F)}^{C^0}\circ \psi\,\},
\end{equation}
with the convention $\|u\|_{\psi}=+\infty$ when $u$ is not of the form $v\circ\psi$.
Here, the closure is taken in the $C^0(M, \mathbb R^d)$ topology.
\Cref{sec:banach_finsler_geometry,sec:main_results_general} provides the rigorous presentation of this notion,
where this local quantity is defined intrinsically (as a sub-Finsler fiber norm on $\mathcal M$)
and proved to be well-posed.

Integrating this local
norm along suitable paths in $\mathcal M$ provides computable upper bounds on
$d_{\mathcal F}$, and hence on $\mathbf C_{\mathcal F}$. 
This result is summarized in the following theorem, which is the main result of this paper.
The geometric framework built in the main theorem is also illustrated in the diagram~\Cref{fig:sub_finsler_geometry}.

\begin{theorem}
\label{thm:main}
Suppose $(\mathcal M, \mathcal F)$ is a compatible pair, as defined in Definition~\ref{def:compatible-pair}. Then, the flow-complexity metric $d_{\mathcal F}$
coincides with the geodesic distance induced by the local norm~\eqref{eq:local_norm_variational}.
In particular, for all $\psi_1,\psi_2\in\mathcal M$,
\begin{equation}
    \label{eq:geodesic_distance}
    d_{\mathcal F}(\psi_1,\psi_2)
    =\inf\Big\{\int_0^1 \|\dot\gamma(t)\|_{\gamma(t)}\,dt\ \Big|\ \gamma(0)=\psi_1,\ \gamma(1)=\psi_2\Big\}.
\end{equation}
The proof is given in~\Cref{sec:main_results_general}.
\end{theorem}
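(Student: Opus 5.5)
We prove the two inequalities separately, writing $\rho(\psi_1,\psi_2)$ for the right-hand side of~\eqref{eq:geodesic_distance}. Only two consequences of compatibility (Definition~\ref{def:compatible-pair}) will be used. First, flows of vector fields in $\overline{\mathbf{CH}_s(\mathcal F)}^{C^0}$ act on $\mathcal M$ and give absolutely continuous (e.g.\ Lipschitz in time) curves there. Second, the closure in $C^0$ of compositions is stable enough that the definition of $d_{\mathcal F}$ is unaffected by passing to these closures. Throughout we use the right-invariance of both quantities: $d_{\mathcal F}(\psi_1\circ\xi,\psi_2\circ\xi)=d_{\mathcal F}(\psi_1,\psi_2)$, and $\|u\circ\xi\|_{\psi\circ\xi}=\|u\|_\psi$, the latter immediate from~\eqref{eq:local_norm_variational}.

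\emph{Step 1: $\rho\le d_{\mathcal F}$.} Fix $T>d_{\mathcal F}(\psi_1,\psi_2)$. Then $\psi_1$ is a $C^0$-limit of maps $\varphi_{f_m}^{t_m}\circ\cdots\circ\varphi_{f_1}^{t_1}\circ\psi_2$ with $\sum t_i=T$. Each such composition is a concrete curve $\gamma(t)$ from $\psi_2$, whose velocity is $\dot\gamma(t)=f_{i(t)}\circ\gamma(t)$; reparametrized to $[0,1]$, its length is at most $T$, because $\|f\circ\gamma\|_\gamma\le1$ for $f\in\mathcal F$. The difficulty is that the endpoint is only close to $\psi_1$ in $C^0$, not equal to it. We close the gap by a compactness argument. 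The curves are uniformly Lipschitz, with velocities lying in a fixed set $\mathbf{CH}_T(\mathcal F)\circ\gamma$ after rescaling. An Arzelà--Ascoli plus relaxation (Filippov-type) argument then produces a limit curve $\gamma$ with $\dot\gamma(t)\in\overline{\mathbf{CH}_{T}(\mathcal F)}^{C^0}\circ\gamma(t)$ for a.e.\ $t$, joining $\psi_2$ to $\psi_1$. Convexity and closedness of the hull are exactly what make the velocity set stable under weak limits. This gives length $\le T$, hence $\rho\le d_{\mathcal F}$. I expect this to be the main technical obstacle. It requires lower semicontinuity of the length functional under $C^0$ convergence of curves in $\mathcal M$, and that the limit curve stays in $\mathcal M$, which is where compatibility must be invoked.

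\emph{Step 2: $d_{\mathcal F}\le\rho$.} Take a curve $\gamma$ of length $\ell<\rho+\varepsilon$, reparametrized by arc length so that $\dot\gamma(t)=v_t\circ\gamma(t)$ with $v_t\in\overline{\mathbf{CH}_{1+\varepsilon}(\mathcal F)}^{C^0}$ on $[0,\ell]$. Then $\gamma(t)$ is the flow of the time-dependent field $v_t$ applied to $\psi_1$. By right-invariance we may assume $\gamma(0)=\operatorname{Id}$. We then approximate in three moves. First, replace $v_t$ by a piecewise-constant-in-time field $\tilde v_t\in\mathbf{CH}_{1+2\varepsilon}(\mathcal F)$, which is possible by measurable selection and density. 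Gronwall, using the uniform Lipschitz bound $L$, controls the $C^0$ error of the flows. Second, on each subinterval of length $h$ with $\tilde v=\sum a_if_i$ and $\sum|a_i|\le s$, use the symmetry of $\mathcal F$ to take $a_i>0$ and apply the Lie--Trotter/chattering approximation
\[
\varphi^{h}_{\sum a_i f_i}\approx\varphi^{ha_N}_{f_N}\circ\cdots\circ\varphi^{ha_1}_{f_1}.
\]
This costs time $h\sum a_i\le hs$ and has error $o(h)$ uniformly, again by the uniform bound and Lipschitz constant $L$; summing over subintervals gives a vanishing total error. Third, the closure in $C^0$ of the velocity set is harmless by Gronwall once more. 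Altogether $\psi_2\circ\psi_1^{-1}\in\overline{\mathcal A_{\mathcal F}((1+2\varepsilon)\ell)}$. Letting $\varepsilon\to0$, and using that $\mathcal A_{\mathcal F}(T)\subset\overline{\mathcal A_{\mathcal F}(T')}$ for $T\le T'$ (pad with $\varphi_f^{\tau}\circ\varphi_{-f}^{\tau}=\operatorname{Id}$), we obtain $d_{\mathcal F}\le\rho$.
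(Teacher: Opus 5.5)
Your proposal is correct and follows essentially the same route as the paper. For $d\le d_{\mathcal F}$, the paper likewise takes a $C^0$ limit of the flow curves, relaxes the piecewise-constant controls inside the compact convex set $\overline{\mathbf{CH}_1(\mathcal F)}^{C^0}$ (via a Young-measure lemma and barycenters, which is the concrete form of your Filippov-type step), and uses compatibility condition (4) to keep the limit curve in $\mathcal M$; for $d_{\mathcal F}\le d$, it approximates the control by simple functions in time and realizes each convex combination by chattering, and your treatment of this direction (Gronwall, the $O(h^2)$ splitting error, padding with $\varphi_f^{\tau}\circ\varphi_{-f}^{\tau}$) is actually more explicit than the paper's.
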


\begin{remark}
The norm in~\eqref{eq:local_norm_variational} is also called the atomic norm or Minkowski functional of the set $\mathbf{CH}_s(\mathcal F)\circ \psi$, and is a concept widely applied in convex analysis and inverse problems~\cite{Chandrasekaran2012-FoCM,Rockafellar1970-ConvexAnalysis}.
\end{remark}

\begin{figure}[htbp]
    \centering
    \includegraphics[width=\linewidth]{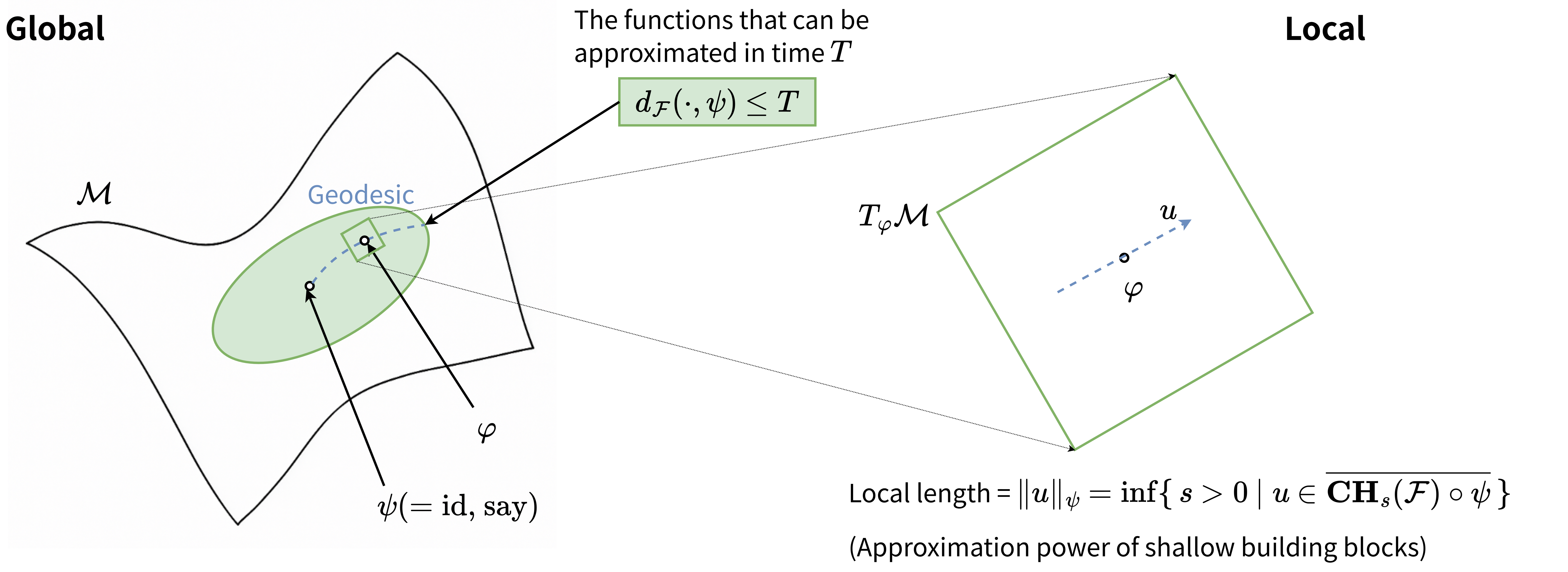}
    \caption{Diagram illustrating the connection built in the main result
    between the approximation complexity by flows and sub-Finsler geometry}
    \label{fig:sub_finsler_geometry}
\end{figure}

% \ql{3.13 is also sometimes called the ``atomic norm''. Some remarks about this should be discussed.}

The result above establishes a geometric framework for analyzing the complexity $\mathbf C_{\mathcal F}(\psi)$ of approximating a target map $\psi$ by flows of \eqref{eq:control_system}. Given $\psi\in\mathcal M$, one can estimate $\mathbf C_{\mathcal F}(\psi)$ by constructing a path $\gamma$ in $\mathcal M$ connecting $\operatorname{Id}$ to $\psi$, and integrating the local norm $\|\cdot\|_{\gamma(t)}$ 
% \ql{D still here, check notation throughout again}
along $\gamma$. The local norm itself is characterized by a variational principle involving the convex hull of the control family $\mathcal F$, which relates to the approximation results for shallow neural networks and is relatively well-studied~\cite{barron2002universal, ma2022barron,siegel2020approximation,siegel2023characterization}.

On the mathematical side, this result gives a quantitative
characterization of the rate of approximation of diffeomorphisms by flows,
where the vector field at each time is constrained to a family $\mathcal F$.
On the application side, it also addresses, in an idealised setting,
a key question of deep learning, namely which target relationships are best
learned using deep as opposed to shallow networks. Notably, this has implications in the two approximation problems mentioned in the introduction.

\subsection{Implications for approximation theory and compositional models}
\label{subsec:approx-theory-interpretation}
Approximation theory is fundamentally about metrics on functions. One needs a metric to quantify the approximation error (how close an approximant is to a target), and one also needs a notion of complexity of the target relative to a hypothesis class (how hard it is to approximate). A central feature that distinguishes deep neural networks from both shallow networks and classical linear approximation schemes is that complexity is built through function compositions, rather than linear combinations. From this viewpoint, a key question for an approximation theory of deep networks is: how does composition improve approximation, and how should we measure the corresponding complexity in a way that is compatible with composition? Without a good understanding of this question, it is difficult to understand the essential benefits of depth in deep learning.

Our first contribution to this question is the introduction of the distance $d_{\mathcal F}(\cdot,\cdot)$ on the target space $\mathcal T$ as a measure of compositional closeness. As shown in~\Cref{eq:compositional_triangle_inequality}, the complexity functional
$C_{\mathcal F}(\psi)=d_{\mathcal F}(\psi,\operatorname{Id})$ satisfies a triangle inequality with respect to composition. This is precisely the compatibility one expects for a compositional hypothesis space: if $\psi$ can be well-approximated by composing two intermediate maps, then the corresponding complexity should be bounded by the sum of the intermediate complexities.

Moreover, \Cref{thm:main} provides a quantitative characterization of $d_{\mathcal F}$ via a sub-Finsler geometry on $\mathcal T$ induced by the layer class $\mathcal F$. In particular, the local norm at each point $\psi$ is given by a variational principle involving the (scaled) convex hull of $\mathcal F$, which is closely related to approximation properties of shallow networks. This turns $d_{\mathcal F}$ from an abstract definition into a quantity that is, at least in principle, computable or estimable: one may identify the target class $\mathcal T$ and derive bounds on $d_{\mathcal F}$ by analyzing the shallow approximation power of $\mathcal F$.

This framework has several implications for approximation using deep neural networks. Recall that a residual network with depth $N$ builds an input--output map $\mathbb{R}^d\to\mathbb{R}^d$, $x_0\mapsto x_N$, via
\begin{equation}
x_{k+1}=x_k+\Delta t\, f_k(x_k),\qquad f_k\in \mathcal F,\qquad k=0,\dots,N-1,
\label{eq:resnet-discrete-interpretation}
\end{equation}
where $\mathcal F$ is the class of transformations realizable by one layer. Letting $N\to\infty$ while keeping the total horizon $T=N\Delta t$ fixed yields the continuous-time idealization
\begin{equation}
\dot x(t)=f_t(x(t)),\qquad f_t\in \mathcal F,\qquad t\in[0,T],
\label{eq:resnet-ode-interpretation}
\end{equation}
whose flow map $x(0)\mapsto x(T)$ represents the network input--output relation. In this setting, the horizon $T$ plays the role of an idealized notion of depth.

The condition $d_{\mathcal F}(\psi,\psi_0)<\infty$ has a direct approximation-theoretic meaning: it asserts that the target map $\psi$ can be approximated arbitrarily well by composing $\psi_0$ with a finite-time flow generated by controls in $\mathcal F$. More precisely,
\begin{equation}
\label{eq:finite-distance-epsilon}
d_{\mathcal F}(\psi,\psi_0)<\infty
\quad\Longleftrightarrow\quad
\exists\,T<\infty\ \text{such that}\ 
\forall \varepsilon>0,\ \exists\,\phi_\varepsilon\in\mathcal A_{\mathcal F}(T)\ \text{with}\ 
\|\phi_\varepsilon\circ\psi_0-\psi\|_{C^0(M)}<\varepsilon .
\end{equation}
Thus, within the continuous-time idealization, lying in the same $d_{\mathcal F}$-component is exactly the statement that one can reach (up to arbitrary accuracy) in finite depth by composing layers.

In supervised learning, one often models a target map $F:K\subset\mathbb{R}^\ell\to\mathbb{R}^k$ as
\begin{equation}
F \approx \beta \circ \psi\circ \alpha,
\label{eq:supervised-factorization}
\end{equation}
where $\psi\in\mathcal A_{\mathcal F}$ is a representation map generated by the deep dynamics, while
$\alpha$ and $\beta$ belong to comparatively simple input/output classes (e.g.\ linear maps or shallow readouts).
If a representation $\psi$ that makes \eqref{eq:supervised-factorization} feasible lies in $\mathcal T$, then
\eqref{eq:finite-distance-epsilon} guarantees that $\psi$ can be approximated within finite idealized depth.
In particular, applying our framework for higher-dimensional flows, we show in~\Cref{sec:finite_time_approximation}
that for the introduced ReLU-based control family considered there, for any compact $K$ and any sufficiently smooth target
$F$, there exist linear maps $\alpha,\beta$ and a horizon $T>0$ such that for every $\varepsilon>0$ one can find
$\phi_\varepsilon\in\mathcal A_{\mathcal F}(T)$ satisfying
\begin{equation}
\|\beta\circ \phi_\varepsilon\circ \alpha - F\|_{C^0(K)}<\varepsilon.
\end{equation}
Importantly, the same \emph{uniform} time horizon $T$ works for arbitrarily small $\varepsilon$.
This contrasts with approximation results in which the time needed to approximate a general target function may
diverge as the required accuracy increases such as in~\cite{cheng2025interpolation,li2022deep}. See
\Cref{sec:finite_time_approximation} for further discussion.

In flow-based generative modelling, given a reference distribution $\nu$ on $M$ (a prior) and a target
distribution $\mu$, one seeks a transport map $\psi$ such that $\psi_\#\nu=\mu$. In general there may be
infinitely many such maps, but their approximation complexity with respect to the model class
$\mathcal A_{\mathcal F}$ can be very different. Our framework provides a principled way to compare the
complexity of different constructions: one may compare $d_{\mathcal F}(\psi_1,\operatorname{Id})$ and
$d_{\mathcal F}(\psi_2,\operatorname{Id})$ for candidate transport maps $\psi_1,\psi_2$ realizing the same
pushforward. This is particularly relevant for flow matching~\cite{lipman2022flowmatching}, where multiple
vector fields (and hence multiple flow maps) can be used to connect a given prior to the same target
distribution. The metric $d_{\mathcal F}$ offers a theoretical tool to quantify which constructions are more
compatible with the chosen layer class, and therefore potentially easier to approximate in practice.

In the following sections, we demonstrate how to apply this framework to obtain estimates for $d_{\mathcal F}$
through explicit examples. In the one-dimensional case $M=[0,1]$, when $\mathcal F$ is generated by neural
networks with activation $\operatorname{ReLU}(x):=\max\{0,x\}$, we can compute $d_{\mathcal F}$ in closed form
for any pair of diffeomorphisms $\psi_1,\psi_2\in\mathcal T$ (see~\Cref{thm:1D_relu}):
\begin{equation}
d_{\mathcal F}(\psi_1,\psi_2)=\|\ln \psi_1^{\prime}-\ln \psi_2^\prime \|_{\operatorname{TV}([0,1])}.
\end{equation}
Notice that the right-hand side is not a norm of $\psi_1-\psi_2$, highlighting a fundamental difference from
classical linear approximation theory. Furthermore, if we regard $\psi_1,\psi_2$ as cumulative distribution
functions, then the right-hand side coincides with the $L^1$ distance between their score functions, a quantity
frequently appearing in flow-based generative models. We give more details in~\Cref{sec:1d_relu} and
\Cref{sec:applications}.

We also provide a two-dimensional example in~\Cref{sec:applications}, where $M=S^1\subset\mathbb{R}^2$ is the
unit circle and the control family is given by a two-dimensional ReLU control family equipped with a
layer-normalization-type constraint. In this case, while an exact closed form for $d_{\mathcal F}$ is difficult to
obtain, we can still derive explicit estimates for pairs $\psi_1,\psi_2$ with $C^3$ regularity.

Moreover, viewing approximation complexity through $d_{\mathcal F}$ suggests a practical guideline for model
design. While deep models are often initialized at the identity map $\operatorname{Id}$, the metric perspective
indicates that the initialization need not be $\operatorname{Id}$: the idealized depth required to reach a target
$\psi$ from an initial map $\psi_0$ is $d_{\mathcal F}(\psi,\psi_0)$. If domain knowledge can be used to select an
initialization $\psi_0$ lying in the same component as $\psi$ and closer in $d_{\mathcal F}$, then the
compositional effort needed for approximation can be substantially reduced. This perspective, together with
the connected-component obstruction, will be discussed further in~\Cref{sec:new_ways_build_model}.

Finally, although our framework is developed for the continuous-time idealization, the key insights also apply
to the discrete-time setting. The dynamical formulation preserves the compositional structure of the hypothesis
space, which is the key feature of deep neural networks. By applying suitable time-discretization schemes, one
can translate estimates for $d_{\mathcal F}$ into corresponding error and complexity bounds for the discrete-time
approximation complexity $\mathbf C_{\mathcal F}(\psi)$, which is closer to practical implementations.

%% file: results_1DReLU.tex
\section{The rate of approximation by flows: one dimensional ReLU networks}
\label{sec:1d_relu}
We begin by illustrating the basic ideas and general philosophies of our approach using an example,
where the control family is a set of one-dimensional neural network functions with the ReLU activation function.
This is a simple case where many computations have closed form.
At the same time, it extends the previous investigations of this example
in~\cite{li2022deep}.
Here we outline the main findings with the detailed proofs and computations deferred to~\cref{sec:1d_relu_proofs}.

Let $M: = [0, 1]$ and consider the following control family in $\operatorname{Vec}(M)$:
\begin{equation}
    \label{eq:1d_relu_family}
    \mathcal F:= \{f(x) =\sum_{i=1}^2 w_i \operatorname{ReLU}(a_i x+b_i)\mid  f(0)=f(1)=0, \sum_{i=1}^2 |w_i a_i| \le 1 \}\subset \operatorname{Vec}(M),
\end{equation}
% \ql{I prefer setting $n=2$ or say that $n\geq 2$ is fixed,
% here because it highlights taht $n$ is finite and $\mathcal F$ is not universal.}
which is the set of shallow neural networks with bounded weights sum and fixed values at $0$ and $1$, with the activation function $\operatorname{ReLU}(x): = \max\{0, x\}$.
In the language of machine learning, each layer in this deep ResNet
consists of a width-2 ReLU-activated fully connected neural network, operating in one hidden dimension.
Observe that for $f \in \mathcal F$, its flow map $\varphi_t^f$ has the following property:
\begin{center}
$\varphi_t^f$ is an increasing function from $[0, 1]$ to $[0, 1]$ with $\varphi_t^f(0) = 0$ and $\varphi_t^f(1) = 1$.
\end{center}

Now, we introduce our target function space as in~\eqref{eq:target_space}, i.e. the set of all functions that can be uniformly approximated by the flow maps in $\mathcal A_{\mathcal F}$ within finite time.
In this example, we have a clearer description of the target function space: any function in $\mathcal M$ is a non-decreasing function from $[0, 1]$ to $[0, 1]$ with fixed endpoints $0$ and $1$.

% where $\mathbf C_{\mathcal F}(\psi)$ is the minimal reachable time complexity for $\psi$ defined in~\eqref{eq:complexity_def}.

We are then interested in characterizing the complexity measure $\mathbf C_{\mathcal F}(\psi)$ for $\psi\in \mathcal M$, as well as a closed-form characterization of $\mathcal M$. In~\cite{li2022deep}, an estimation of $\mathbf{C}_{\mathcal F}(\psi)$ is provided as following:
\begin{proposition}[Proposition 4.8 in \cite{li2022deep}]
    \label{thm:1D_relu}
    If $\psi$ is a piece-wise smooth increasing function with $\psi(0)=0$, $\psi(1)=1$, and $\|\ln \psi^\prime\|_{\operatorname{TV}([0,1])}<\infty$, then $\psi \in \mathcal M$. In particular,
    \begin{equation}
    \mathbf{C}_{\mathcal F}(\psi) \le \left\|\ln \psi^{\prime}\right\|_{\mathrm{TV}([0,1])}+ |\ln \psi^\prime(0)|+|\ln \psi^\prime(1)|.
    \end{equation}
Here, $\|\cdot\|_{\mathrm{TV}([0,1])}$ denotes the total variation of a function on $[0,1]$, defined as
\begin{equation}
    \|g\|_{\mathrm{TV}([0,1])} := \sup \{\sum_{i=1}^{n-1} |g(x_{i+1}) - g(x_i)| \;\Big| \; n\in \mathbb N,\; 0\le x_1 < x_2 < \cdots < x_n \le 1\}.
\end{equation}
\end{proposition}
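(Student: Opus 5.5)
We prove the bound constructively, by realising $\ln\psi'$ as a sum of simple ``kink'' contributions. Each contribution is produced by a single vector field in $\mathcal F$ run for a time equal to the size of the corresponding jump in $\ln$ of the derivative. The key computation concerns flows of piecewise linear fields. Take $f\in\mathcal F$ that vanishes on $[0,c]$ and is affine with slope $\lambda$ on $[c,1-\delta]$, and is then brought back to $0$ at $1$ by the second ReLU. Near $c$ its flow map $\varphi^t_f$ fixes $[0,c]$ and acts as $x\mapsto c+e^{\lambda t}(x-c)$ just to the right of $c$. So running $f$ for time $t$ multiplies the derivative on one side of $c$ by $e^{\lambda t}$, which is a jump of size $\lambda t$ in $\ln(\varphi^t_f)'$.

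The weight constraint $\sum|w_ia_i|\le 1$ forces $|\lambda|\le 1$, up to the budget spent on the second ReLU that restores $f(1)=0$. A jump of size $s$ in the log-derivative therefore costs time at least $|s|$. The endpoint corrections arise from the compensating ReLU near $1$, or from having to set the slopes at $0$ and $1$. This explains the extra terms $|\ln\psi'(0)|+|\ln\psi'(1)|$.

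The steps are as follows.
\begin{enumerate}[label=(\arabic*)]
\item Reduce to piecewise linear $\psi$. Approximate $\psi$ uniformly by the increasing piecewise linear interpolant $\psi_n$ on a fine partition. The log-slopes of $\psi_n$ are averages of $\ln\psi'$, so by Jensen/mean value arguments $\|\ln\psi_n'\|_{\mathrm{TV}}\le\|\ln\psi'\|_{\mathrm{TV}}$, and the endpoint slopes converge. Since $\mathbf C_{\mathcal F}$ is defined through closures in $C^0$, it suffices to prove the bound for $\psi_n$ with an arbitrarily small additive error.
\item Handle piecewise linear $\psi$ with breakpoints $0=x_0<\dots<x_k=1$ and log-slopes $\ell_1,\dots,\ell_k$. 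Build $\psi$ as a composition of maps, each changing the slope at a single breakpoint. It is convenient to work with the pushed-forward breakpoints, so that the composition order is from right to left: $\psi=\xi_k\circ\cdots\circ\xi_1$. Here each $\xi_j$ is a flow of a two-ReLU field fixing $[0,y_j]$ and linear on $[y_j,1-\delta]$. The total log-slope change introduced at the interior breakpoints is $\sum|\ell_{j+1}-\ell_j|=\|\ln\psi'\|_{\mathrm{TV}}$. The slope at the start is adjusted from $0$ to $\ell_1$ at cost $|\ell_1|=|\ln\psi'(0)|$. The residual slope that accumulates near $1$ is cancelled at cost $|\ell_k|$.
\item Use the compositional triangle inequality \eqref{eq:compositional_triangle_inequality} to add up the times.
\end{enumerate}

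The main obstacle is that the compensating ReLU near $x=1$ (needed for $f(1)=0$) also distorts the map on $[1-\delta,1]$. It also consumes part of the budget $\sum|w_ia_i|\le1$, so the slope $\lambda$ is not exactly $1$. To deal with this, I would let $\delta\to0$. Then the second ReLU has weight $w_2a_2$ of size comparable to $\lambda$, so the budget is split as $|\lambda|+|\lambda'|\le1$. I would check that the effect of all the compensating pieces concentrates in a shrinking neighbourhood of $1$, where the cumulative effect is exactly a log-slope change of $-\sum(\text{jumps})=\ell_k-\ell_1$. The precise accounting must ensure that paying for this once, as $|\ln\psi'(1)|$ plus the $|\ln\psi'(0)|$ term, suffices, rather than paying at each step. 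One clean way is to apply the moves sequentially so that the distortion near $1$ telescopes, and then to use uniform convergence as $\delta\to0$ to pass to the $C^0$ closure.
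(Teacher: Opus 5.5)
\textbf{There is a genuine gap in step (2).} Your reduction (1) and the summation (3) are fine. In (1) the slopes of $\psi_n$ are logarithms of averages of $\psi'$, not averages of $\ln\psi'$, but each lies in the range of $\ln\psi'$ on its cell, which suffices for the variation bound.

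Your resolution of the ``main obstacle'' goes the wrong way. Take a field that vanishes on $[0,c]$, has slope $\lambda$ on $[c,1-\delta]$ and satisfies $f(1)=0$. It must fall from $\lambda(1-c-\delta)$ to $0$ on $[1-\delta,1]$, so its second neuron has $w_2a_2=-\lambda(1-c)/\delta$ and $\sum_i|w_ia_i|=|\lambda|\bigl(1+(1-c)/\delta\bigr)$. Hence $|\lambda|\le\delta/(1-c+\delta)$, and a log-slope jump $s$ at $c$ costs time $|s|\bigl(1+(1-c)/\delta\bigr)\to\infty$ as $\delta\to0$.

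This is structural. Since $\partial_t\ln(\varphi_f^t)'=f'\circ\varphi_f^t$, a field changes the log-slope at each of its kinks at the rate of that kink's weight, and the budget pays for all of them, including the compensating one. Times add over your sequential moves, so the compensations near $1$ do not telescope in cost even if their net effect does. In $\mathbf{CH}_s(\mathcal F)$ the cost is the sum of the atoms' budgets, so cancellation of compensating kinks across such atoms does not help either. Moreover, the maps $\xi_j$ do not exist with a shrinking compensation zone: an increasing map fixing $1$ that is the identity on $[0,y_j]$ and has slope $e^{s}>1$ on $[y_j,1-\delta]$ needs $\delta\ge(1-y_j)(1-e^{-s})$.

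\textbf{What is missing.} The constraint $v(0)=v(1)=0$ acts on the velocity at every instant, so the path $\ell_t=\ln\gamma_t'$ must keep $\int_0^1e^{\ell_t}=1$ throughout. You therefore have to design the path globally and price each velocity by pairing its kinks with each other, not with a boundary layer.
\begin{itemize}
\item On $[0,1]$, every element of $\mathcal F$ is either a pair of right-facing kinks in $[0,1)$ or a pair of left-facing kinks in $(0,1]$.
\item Write a piecewise linear $v$ with $v(0)=v(1)=0$ as $v'(0^+)x+\sum_j\mu_j\sigma(x-c_j)$. Matching the positive and negative charges $\mu_j(1-c_j)$, together with the charge $v'(0^+)$ at $c=0$, gives $v\in\mathbf{CH}_s(\mathcal F)$ with $s=|v'(0^+)|+\|v'\|_{\mathrm{TV}}$. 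This boundary term is the real origin of $|\ln\psi'(0)|+|\ln\psi'(1)|$.
\item One path that works is the truncation $\ell_t=\max(\min(\ell,b_t),a_t)$. Here $a_t$ decreases from $0$ to $\min\ell$ and $b_t$ increases from $0$ to $\max\ell$, chosen so that $\int_0^1e^{\ell_t}=1$.
\item By the coarea formula, $\int_0^1\bigl(|\partial_t\ell_t(0)|+\|\partial_t\ell_t\|_{\mathrm{TV}}+|\partial_t\ell_t(1)|\bigr)\,dt$ dominates the accumulated cost and equals the total variation on $\mathbb R$ of $\ell=\ln\psi'$ extended by zero. That is exactly the right-hand side of the statement.
\item The time-splitting half ($d_{\mathcal F}\le d$) of the proof of \Cref{thm:main} then turns the path into flows.
\end{itemize}

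\textbf{On the paper's route.} The paper does not prove this statement; it quotes it from \cite{li2022deep}. Its sharper route via \Cref{prop:local_norm_relu} is not a shortcut for this $\mathcal F$. The two-kink description gives $|f(1/2)|\le 1/6$ for every $f\in\mathcal F$. So the tent $u$ with $u'=\mp\frac12$ on the two halves has $\|u'\|_{\mathrm{TV}}=1$ but needs $s\ge 6|u(1/2)|=\frac32$. That proof uses single neurons plus a constant as atoms, and these are not in $\mathcal F$.
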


\Cref{thm:1D_relu} provides an upper bound of the complexity measure $\mathbf C_{\mathcal F}(\psi)$ using a constructive approach~\cite{li2022deep},
but it was not shown there whether this bound is tight, or if an exact
formula for $\mathbf C_{\mathcal F}(\psi)$ can be obtained.

A key observation is that $\mathcal M$ is closed under function composition.
That is, for any $\psi_1, \psi_2 \in \mathcal M$, we have $\psi_1 \circ \psi_2 \in \mathcal M$.
% \ql{We should mention it here.}
This inherent algebraic structure parallels
the linear structure in classical approximation theory, and thus motivates us to consider the compatibility of the complexity measure $\mathbf C_{\mathcal F}(\psi)$ with respect to function composition. Specifically, for any $\psi_1, \psi_2 \in \mathcal M$, we expect the compositional triangle inequality in~\eqref{eq:compositional_triangle_inequality} to hold.
From a dynamical system perspective, the complexity $\mathbf C_{\mathcal F}(\psi)$ can be interpreted as the minimal time needed to steer the system from the identity map $\operatorname{Id}$ to the target function $\psi$ using the ReLU control family~\eqref{eq:1d_relu_family}. With this interpretation, the compositional triangle inequality in~\eqref{eq:compositional_triangle_inequality} can be understood as follows: to reach $\psi_1 \circ \psi_2$ from $\operatorname{Id}$, one can first reach $\psi_2$ from $\operatorname{Id}$, and then reach $\psi_1 \circ \psi_2$ from $\psi_2$. The total time taken is the sum of the two individual times, which gives an upper bound for $\mathbf C_{\mathcal F}(\psi_1 \circ \psi_2)$.

This observation indicates that $\mathbf C_{\mathcal F}(\psi)$ should be considered in a pairwise manner, rather than individually for each $\psi$. In other words, instead of focusing solely on the complexity of reaching a single target function from the identity, we should investigate the complexity of transitioning between any two target functions in $\mathcal M$, leading to the definition of a distance function on $\mathcal M$:
\begin{equation}
        \label{eq:distance}
        d_{\mathcal F}(\psi_1, \psi_2): = \inf\{T>0\mid \psi_1\in \overline{\mathcal A_{\mathcal F}(T)\circ \psi_2}\}.
    \end{equation}
Here $$\overline{\mathcal A_{\mathcal F}(T) \circ \psi_2} := \{ \xi \in C([0,1]) ~|~ \inf_{\phi \in \mathcal A_{\mathcal F}(T)} \| \xi - \phi\circ \psi_2 \|_{C([0,1])} = 0\}.$$

We can easily verify that $d_{\mathcal F}(\cdot, \cdot)$ is indeed a metric on $\mathcal M$, i.e. it is positive definite, symmetric, and satisfies the triangle inequality. Furthermore, for any $\psi_1, \psi_2\in \mathcal M$, it can be directly checked that:
\begin{equation}
    \mathbf{C}_{\mathcal F}(f_2\circ f_1)= d_{\mathcal F}(f_2\circ f_1, \operatorname{Id})
    \le d_{\mathcal F}(f_2\circ f_1, f_1)+d_{\mathcal F}(f_1, \operatorname{Id})=\mathbf{C}_{\mathcal F}(f_2)+\mathbf{C}_{\mathcal F}(f_1),
\end{equation}
which is the triangular inequality with respect to function compositions.
With this additional structure identified,
$\mathcal M$ is not only a set of functions, but also a metric space equipped with the metric $d_{\mathcal F}(\cdot, \cdot)$ that is compatible with function compositions, an inherent algebraic structure of $\mathcal M$.

The next question is then: how does this metric depend on the control family $\mathcal F$?
Unravelling this relation is crucial for understanding
what maps on the unit interval are more amenable to approximation with a moderate time horizon.

We begin with a simple case by fixing a function $\psi \in \mathcal M$ and considering another function $\xi \in \mathcal M$ that is very close to $\psi$ in the metric $d_{\mathcal F}$.
How can we connect $\psi$ and $\xi$ with a flow?
For a very small scale $\tau>0$ and $ \alpha_1, \cdots, \alpha_n>0$,
consider the flow map:
\begin{equation}
    \varphi_{f_n}^{\alpha_n \tau}\circ \cdots \circ \varphi_{f_1}^{\alpha_1 \tau} \in \mathcal A_{\mathcal F}, \text{ with } f_i \in \mathcal F,
\end{equation}
which admits the first-order approximation:
\begin{equation}
    \varphi_{f_n}^{\alpha_n t}\circ \cdots \circ \varphi_{f_1}^{\alpha_1 t}(x) = x + \tau \sum_{i=1}^n \alpha_i f_i(x) + o(\tau).
\end{equation}
Therefore, the corresponding function in $\mathcal A_{\mathcal F}\circ \psi$ is close to
\begin{equation}
    \psi + \tau \sum_{i=1}^n \alpha_i f_i\circ \psi + o(\tau).
\end{equation}
If $u:=\frac{\xi-\psi}{\tau}$ is close to $\sum_{i=1}^n \alpha_i f_i\circ \psi$ for some $f_i \in \mathcal F$ and $\alpha_i>0$, then the map $\xi = \psi +\tau u$ is approximately reachable from $\psi$ within time $\tau \sum_{i=1}^n \alpha_i$.
Therefore, given a perturbation function $u$ and small $t>0$, the distance $d_{\mathcal F}(\psi, \psi+\tau u)$ is approximately given by the product of $\tau$ and the minimal weight sum needed to approximate $u$ using functions in $\mathcal F$ composed with $\xi$. For $s>0$, if we define
\begin{equation}
    \mathbf{CH}_s(\mathcal F):=
    \left\{\sum_{i=1}^n \alpha_i f_i \mid f_i \in \mathcal F, \alpha_i\ge 0, \sum_{i=1}^n \alpha_i \le s
    \right\}
\end{equation}
as the $s$-scaled convex hull of $\mathcal F$ in $\operatorname{Vec}(M)$. 
% \begin{equation}
%     \overline{\mathbf{CH}_s(\mathcal F)}^{\|\cdot \|_{\mathcal F}}: =     \left\{\sum_{i=1}^\infty \alpha_i f_i \mid f_i \in \mathcal F, \alpha_i\ge 0, \sum_{i=1}^\infty \alpha_i \le s
%     \right\}
% \end{equation}
% as the closure of $\mathbf{CH}_s(\mathcal F)$ under the atomic norm defined by $\mathcal F$ embeded in $\operatorname{Vec}(M)$.
Then, the local norm
\begin{equation}
    \label{eq:local_norm_relu}
    \|u\|_{\psi}:=\inf\{\,s>0\mid u\in \overline{\mathbf{CH}_s(\mathcal F)}^{C^0}\circ \psi\,\},
\end{equation}
represents the minimal weight sum needed to approximate $u$ using functions in $\mathcal F$ composed with $\psi$. 

Then, based the above local analysis, for a perturbation function $u$ and small $\tau>0$, the function $\psi + \tau u$ is approximately reachable from $\psi$ within time $\tau \|u\|_{\psi}$.

% More rigorously, we have the following relation:
% \begin{equation}
%     \label{eq:local_distance_relu}
%     \lim_{\tau\to 0} \frac{d_{\mathcal F}(\psi, \psi+\tau u)}{\tau} = \|u\|_{\psi}.
% \end{equation}
% Here, we set $d_{\mathcal F}(\psi, \psi+\tau u)$ as infinity if $\psi+\tau u \notin \mathcal M$.

% The expressions~\eqref{eq:local_norm_relu} and~\eqref{eq:local_distance_relu} reveals a relation between the distance $d_{\mathcal F}(\psi, \cdot)$ and the control family $\mathcal F$: the norm induced by the infinitesimal of $d_{\mathcal F}(\cdot, \cdot)$ at each point $\psi \in \mathcal M$ can be characterized by the complexity of approximating perturbation functions using the convex hull of the control family composed with $\psi$. This variational characterization relates to the approximation theory of shallow neural networks, which is relatively well-studied~\cite{barron2002universal, ma2022barron,siegel2020approximation,siegel2023characterization}.
% \ql{This last part loses focus, move this to the general theory. In fact, I think the 4.10 is not needed here.}

In the one-dimensional ReLU example, we can explicitly compute $\|\cdot\|_{\psi}$ by investigating the corresponding variational problem in~\eqref{eq:local_norm_relu}, which is essentially a spline approximation problem. Here, we introduce the space $\operatorname{BV}^2([0,1])$ defined as:
\begin{equation}
    \operatorname{BV}^2([0,1]) := \{u\in C([0,1]) \mid u' \text{ exists a.e., and } u' \in \operatorname{BV}([0,1])\}.
\end{equation}
We consider the perturbation $u$ to be in the space $\operatorname{BV}^2([0,1])$ and with zero boundary conditions, defined as:
\begin{equation}
    \operatorname{BV}^2_0([0,1]) := \{u\in \operatorname{BV}^2([0,1]) \mid  u(0)=u(1)=0\}.
\end{equation}

\begin{proposition}
    \label{prop:local_norm_relu}
        For $ u \in \operatorname{BV}_0^2([0,1])$ and $\psi\in \mathcal M$, the following identity holds
    \begin{equation}
       \|u\|_\psi =\inf\{\,s>0\mid u\in \overline{\mathbf{CH}_s(\mathcal F)}^{C^0}\circ \psi\,\}
        =\left\|\frac{u^\prime}{\psi^\prime}\right\|_{\operatorname{TV}[0,1]}.
    \end{equation}
\end{proposition}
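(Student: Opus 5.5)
The plan has three steps: reduce to the base point $\psi=\operatorname{Id}$ by a change of variables, prove the lower bound with a lower-semicontinuous convex functional, and prove the upper bound by explicit decomposition into elements of $\mathcal F$. The last step is the main obstacle, and the endpoint accounting there is not yet resolved.

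\emph{Reduction to $\psi=\operatorname{Id}$.} Since $\psi\in\mathcal M$ is an increasing homeomorphism of $[0,1]$, we have $u\in\overline{\mathbf{CH}_s(\mathcal F)}^{C^0}\circ\psi$ if and only if $v:=u\circ\psi^{-1}\in\overline{\mathbf{CH}_s(\mathcal F)}^{C^0}$. This uses that composition with a fixed homeomorphism is an isometry for the sup norm. At points where the chain rule applies,
\[
v'=\frac{u'}{\psi'}\circ\psi^{-1}.
\]
Total variation is invariant under monotone reparametrisation, so $\|v'\|_{\operatorname{TV}}=\|u'/\psi'\|_{\operatorname{TV}}$. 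It therefore suffices to prove that, for $v\in\operatorname{BV}^2_0([0,1])$,
\[
\inf\{s>0 \mid v\in\overline{\mathbf{CH}_s(\mathcal F)}^{C^0}\}=\|v'\|_{\operatorname{TV}([0,1])}.
\]
For $\psi$ only piecewise smooth, I would first treat smooth $\psi$ and then approximate. Note also that $\mathcal F$ is symmetric, so nonnegative coefficients in $\mathbf{CH}_s$ cost nothing.

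\emph{Lower bound.} Take $f=\sum_{i=1}^2 w_i\operatorname{ReLU}(a_ix+b_i)\in\mathcal F$. Its derivative $f'$ is piecewise constant, and each jump inside $(0,1)$ has size $|w_ia_i|$. Hence $\|f'\|_{\operatorname{TV}}\le\sum_i|w_ia_i|\le1$. The functional $g\mapsto\|g'\|_{\operatorname{TV}}$ is a convex, positively homogeneous seminorm, so $\|g'\|_{\operatorname{TV}}\le s$ for every $g\in\mathbf{CH}_s(\mathcal F)$. This functional is also lower semicontinuous under uniform convergence, by the dual formula
\[
\|g'\|_{\operatorname{TV}}=\sup\Big\{\int_0^1 g\,\phi''\,dx \;\Big|\; \phi\in C_c^\infty((0,1)),\ |\phi|\le1\Big\},
\]
which is a supremum of $C^0$-continuous linear functionals. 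Passing to the $C^0$ closure therefore gives $\|v'\|_{\operatorname{TV}}\le s$.

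\emph{Upper bound.} First replace $v$ by its piecewise linear interpolants $v_n$ on refining grids. These satisfy $v_n(0)=v_n(1)=0$, $v_n\to v$ uniformly, and $\|v_n'\|_{\operatorname{TV}}\le\|v'\|_{\operatorname{TV}}$. It then suffices to write such a $v_n$, up to an arbitrarily small uniform error, as $\sum_k\lambda_k f_k$ with $f_k\in\mathcal F$ and $\sum_k|\lambda_k|\le\|v_n'\|_{\operatorname{TV}}+o(1)$. The natural building blocks are elements of $\mathcal F$ whose two kinks $c_1<c_2$ both lie in $(0,1)$. An example is
\[
f=\alpha\operatorname{ReLU}(x-c_1)-\beta\operatorname{ReLU}(x-c_2),\qquad \alpha(1-c_1)=\beta(1-c_2),
\]
together with its mirror images built from $\operatorname{ReLU}(c-x)$. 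For these blocks the cost $\sum_i|w_ia_i|=\alpha+\beta$ equals the interior variation $\|f'\|_{\operatorname{TV}}$ exactly. I would then show that the jumps of $v_n'$ can be grouped into such blocks, one family anchored near $0$ and one near $1$, so that the costs add up to $\|v_n'\|_{\operatorname{TV}}$. The $C^0$ closure would handle kinks pushed to the endpoints.

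\emph{Main obstacle.} The difficulty is that $\|v'\|_{\operatorname{TV}([0,1])}$ does not count the boundary values $v'(0^+)$ and $v'(1^-)$. By contrast, realising a nonzero boundary slope with a ReLU kink at or near an endpoint appears to cost that slope additionally in $\sum|w_ia_i|$. A tent function $v$ with its kink at $c$ is the test case. A direct decomposition gives cost $1+\min(c,1-c)$ rather than $\|v'\|_{\operatorname{TV}}=1$. This is consistent with the endpoint terms $|\ln\psi'(0)|+|\ln\psi'(1)|$ in \Cref{thm:1D_relu}. I would first try to remove this surplus via the closure, using sequences of interior-kink blocks whose boundary contributions cancel in the limit. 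If that fails, the identity should be read with the total variation taken on the extended function, with jumps to zero at the endpoints counted, and I would check that this is the convention the authors use.
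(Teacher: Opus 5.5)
Your change of variables and your lower bound are correct. The lower bound, via lower semicontinuity of $g\mapsto\|g'\|_{\operatorname{TV}}$ under uniform convergence, is cleaner than the paper's discretised contradiction argument in Step~2. The gap is the upper bound, and it cannot be filled, because the surplus you found for the tent is genuine. Write $\sigma=\operatorname{ReLU}$, fix $0<\varepsilon<\tfrac12$, and set
\[ L(g):=-\varepsilon^{-1}g(\varepsilon)-2\,g\big(\tfrac12\big)-\varepsilon^{-1}g(1-\varepsilon), \]
a bounded functional on $C([0,1])$. Then $L(g)=\int g\,d\Phi''$, where $\Phi$ is piecewise linear with $\Phi(0)=\Phi(1)=0$, $\Phi(\varepsilon)=\Phi(1-\varepsilon)=1+\varepsilon$ and $\Phi(\tfrac12)=\tfrac32$. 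For piecewise linear $g$ with $g(0)=g(1)=0$, two integrations by parts give $L(g)=\sum_{t\in(0,1)}[g'](t)\,\Phi(t)$, where $[g'](t)$ is the jump of $g'$ at $t$.

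The constraint $f(0)=f(1)=0$ leaves only two shapes for an $f\in\mathcal F$ that is not identically zero on $[0,1]$. Either $f=\lambda_1\sigma(x-t_1)+\lambda_2\sigma(x-t_2)$ with $t_i\in[0,1)$ and $\lambda_1(1-t_1)+\lambda_2(1-t_2)=0$. Or $f=\lambda_1\sigma(t_1-x)+\lambda_2\sigma(t_2-x)$ with $t_i\in(0,1]$ and $\lambda_1t_1+\lambda_2t_2=0$. In both cases $|\lambda_1|+|\lambda_2|\le1$. Hence $L(f)=\sum_i\lambda_i(\Phi(t_i)-(1-t_i))$ in the first case and $L(f)=\sum_i\lambda_i(\Phi(t_i)-t_i)$ in the second. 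Since $0\le\Phi(t)\le1+\min(t,1-t)$, every bracket lies in $[-1,1]$. So $|L|\le1$ on $\mathcal F$, and by linearity and $C^0$-continuity $|L|\le s$ on $\overline{\mathbf{CH}_s(\mathcal F)}^{C^0}$.

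Take $v=\tfrac12\min(x,1-x)$, your tent with $c=\tfrac12$. Then $L(v)=-\tfrac32$, and $v=\tfrac12\sigma(x)-\sigma(x-\tfrac12)\in\tfrac32\mathcal F$. So $\|v\|_{\operatorname{Id}}=\tfrac32$, while $\|v'\|_{\operatorname{TV}}=1$. The same $\Phi$ shows that your value $1+\min(c,1-c)$ is exact for every $c$.

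Consequently, no sequence of interior blocks can cancel the boundary cost in the closure. The extended-TV reading also fails: it gives $2$ here. The reason is that the dual test functions for width-two atoms vanishing at both ends need only stay within $1$ of some $C(1-t)$ and of some $C't$, not within $1$ of $0$. As stated, the proposition is false. Only your inequality $\|u\|_\psi\ge\|u'/\psi'\|_{\operatorname{TV}}$ holds.

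This is exactly where the paper's own proof breaks. Its Step~1 minimises $\sum_i(|w_i|+|v_i|)$ over single ReLUs $\sigma(x-\tfrac iN)$, $\sigma(\tfrac iN-x)$ plus a free constant $C_N$, and then treats that weight as the $\mathbf{CH}_s(\mathcal F)$ cost. For the $v$ above, the minimiser is $\tfrac14-\tfrac12|x-\tfrac12|$. It has weight $1$ but needs a third, constant neuron. The modified $\bar u_N=\tilde u_N+\tfrac xN$ has $\bar u_N(1)=\tfrac1N\neq0$, so it is not even in $\operatorname{span}\mathcal F$.

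That computation is what one would need for layers of unrestricted width. There $\tilde u_N$, which interpolates $u(0)=u(1)=0$, is a single admissible atom, and its constant comes at zero cost from a neuron with $a=0$. For the width-two family, a correct formula needs an endpoint correction depending on $u'(0^+)$ and $u'(1^-)$, in line with the boundary terms in \Cref{thm:1D_relu}.

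The error also propagates to the distance formula. Every $\varphi\in\mathcal A_{\mathcal F}(T)$ lies within $C^0$-distance $O(T^2)$ of $\operatorname{Id}+\mathbf{CH}_T(\mathcal F)$. So the same $L$ gives $\mathbf C_{\mathcal F}(\operatorname{Id}+\tau v)\ge\tfrac32\tau-O(\tau^2)$, whereas $\|\ln(1+\tau v')\|_{\operatorname{TV}}=\tau+O(\tau^3)$. This contradicts \Cref{thm:global_distance_1d_relu}.
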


With this local picture, we can imagine computing the global distance $d_{\mathcal F}(\cdot, \cdot)$ by summing up the local costs along a path connecting two target functions. Specifically, consider $\psi_1, \psi_2 \in \mathcal M$, we can consider a sequence of intermediate functions $\xi_0 = \psi_1, \xi_1, \cdots, \xi_n = \psi_2$ in $\mathcal M$, such that each pair $\xi_{i}, \xi_{i+1}$ are very close to each other, i.e. $d_{\mathcal F}(\xi_i, \xi_{i+1})$ is very small. The distance $d_{\mathcal F}(\psi_1, \psi_2)$ is then bounded by:
\begin{equation}
    d_{\mathcal F}(\psi_1, \psi_2) \le \sum_{i=0}^{n-1} d_{\mathcal F}(\xi_i, \xi_{i+1}) \approx \sum_{i=0}^{n-1} \|\xi_{i+1}-\xi_i\|_{\xi_i}.
\end{equation}
Taking the limit as the partition gets finer, we arrive at the integral form:
\begin{equation}
    d_{\mathcal F}(\psi_1, \psi_2)
    \le \int_0^1 \left\|\frac{d}{dt}\gamma(t)\right\|_{\gamma(t)}\, dt
\end{equation}
for any piece-wise smooth path $\gamma:[0,1]\to \mathcal M$ with $\gamma(0) = \psi_1$ and $\gamma(1) = \psi_2$. By taking the infimum over all such paths, we obtain a characterization of $d_{\mathcal F}(\cdot, \cdot)$ as the shortest length of the path connecting two points under the local norm $\|\cdot\|_{\cdot}$:
\begin{equation}
    \label{eq:global_distance_variational_relu}
    \begin{aligned}
    &d_{\mathcal F}(\psi_1, \psi_2) = \\
    &\inf
    \left\{ \int_0^1
    \left\|\frac{d}{dt}\gamma(t) \right\|_{\gamma(t)}\,dt
    \bigm| \gamma:[0,1]\to\mathcal M \text{ is piece-wise smooth },\; \gamma(0)=\psi_1,\; \gamma(1)=\psi_2 \right\}.
    \end{aligned}
\end{equation}
The smoothness of the path $\gamma$ depends on the manifold structure of $\mathcal M$, which will be rigorously introduced in the next section.
With this characterization and the explicit form of the local norm $\|\cdot\|_{\psi}$, we can explicitly derive a path that minimizing the distance integral between any two functions $\psi,\xi$ in $\mathcal M$ (derived in ~\eqref{eq:geodesic_1d}):
% \ql{REF where this is derived}
\begin{equation}
    \tilde \gamma_t(x)  = \frac{\int_0^x(\psi_1^\prime(x))^{1-t}(\psi_2^\prime(x))^t dx}{\int_0^1 (\psi_1^\prime(x))^{1-t}(\psi_2^\prime(x))^t dx}.
\end{equation}
Finally, integrating the local norm along this path, we can derive a closed form expression of the distance $d_{\mathcal F}(\cdot, \cdot)$, and an explicit characterization of the manifold $\mathcal T$. Summarizing the results, we have:
\begin{theorem}
    \label{thm:global_distance_1d_relu}
$\mathcal T$ is characterized as:
\begin{equation}
    \mathcal T = \{\psi\in \operatorname{BV}^2([0,1]) \mid \psi(0)=0, \psi_1 =1,  \psi^\prime \ge c > 0 \text{ a.e.}\text{ for some } c\}.
\end{equation}
For any $\psi_1, \psi_2\in \mathcal T$, we have
\begin{equation}
    \label{eq:global_distance_1d_relu}
    d_{\mathcal F}(\psi_1, \psi_2) =
    \left\|\ln \psi_1^{\prime}-\ln \psi_2^\prime \right\|_{\operatorname{TV}([0,1])}
\end{equation}
\end{theorem}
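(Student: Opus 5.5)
We will prove the distance formula first, working inside the class
\[
\mathcal T' := \{\psi\in \operatorname{BV}^2([0,1]) \mid \psi(0)=0,\ \psi(1)=1,\ \psi'\ge c>0 \text{ a.e. for some } c\}.
\]
The characterization $\mathcal T=\mathcal T'$ then follows from two directions.

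\textbf{Reduction to a log-derivative variable.} Write $\ell_\psi:=\ln\psi'$ for $\psi\in\mathcal T'$; this is a $\operatorname{BV}$ function. We will use the geodesic characterization \eqref{eq:global_distance_variational_relu}, taking Theorem~\ref{thm:main} as given. For a piecewise smooth path $\gamma_t$ in $\mathcal T'$ the velocity $u_t=\partial_t\gamma_t$ lies in $\operatorname{BV}_0^2$, since the endpoints are fixed. Proposition~\ref{prop:local_norm_relu} then gives
\[
\|u_t\|_{\gamma_t}=\big\|\partial_t\gamma_t'/\gamma_t'\big\|_{\operatorname{TV}}=\|\partial_t \ell_{\gamma_t}\|_{\operatorname{TV}}.
\]
So the length of $\gamma$ is $\int_0^1\|\partial_t\ell_{\gamma_t}\|_{\operatorname{TV}}\,dt$.

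\textbf{Lower bound.} The total variation is a seminorm, so by the triangle inequality for Bochner-type integrals (equivalently, for partition sums, passing to the limit)
\[
\int_0^1\|\partial_t\ell_{\gamma_t}\|_{\operatorname{TV}}\,dt\ \ge\ \|\ell_{\psi_2}-\ell_{\psi_1}\|_{\operatorname{TV}}.
\]
Taking the infimum over paths gives $d_{\mathcal F}\ge\|\ln\psi_1'-\ln\psi_2'\|_{\operatorname{TV}}$.

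\textbf{Upper bound.} We use the path $\tilde\gamma_t$ displayed before the theorem. Its log-derivative is
\[
\ell_t=(1-t)\ell_{\psi_1}+t\ell_{\psi_2}-\ln Z_t,
\]
where $Z_t$ is the normalizing constant. Since $\ln Z_t$ does not depend on $x$, $\partial_t\ell_t=\ell_{\psi_2}-\ell_{\psi_1}$ up to an additive constant, and additive constants have zero total variation. Hence the integrand equals $\|\ell_{\psi_2}-\ell_{\psi_1}\|_{\operatorname{TV}}$ for every $t$, and the length is exactly this value. We must also check that $\tilde\gamma_t\in\mathcal T'$ and that the path is smooth in the Banach manifold sense. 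Both are routine: $\tilde\gamma_t'$ is bounded above and away from zero uniformly in $t$, it is $\operatorname{BV}$ because products and compositions of bounded $\operatorname{BV}$ functions with smooth functions are $\operatorname{BV}$, and it depends smoothly on $t$.

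\textbf{Identification of $\mathcal T$.} For $\mathcal T'\subseteq\mathcal T$, take $\psi_2=\operatorname{Id}$: the formula gives $\mathbf C_{\mathcal F}(\psi)=\|\ln\psi'\|_{\operatorname{TV}}<\infty$, which is consistent with Proposition~\ref{thm:1D_relu}.

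For $\mathcal T\subseteq\mathcal T'$, we first note that any $\phi\in\mathcal A_{\mathcal F}(T)$ is a piecewise-linear-flow composition. For such $\phi$ we show that
\[
\|\ln\phi'\|_{\operatorname{TV}}\le T
\quad\text{and}\quad
|\ln\phi'|\le T .
\]
Along a single flow of $f\in\mathcal F$ one has $\tfrac{d}{dt}\ln(\varphi^t)'(x)=f'(\varphi^t(x))$ and $\|f'\|_{\operatorname{TV}}\le\sum|w_ia_i|\le1$. TV is invariant under monotone reparametrization, so composition adds TVs; the pointwise bound uses that $f'$ has zero mean, hence $|f'|\le\|f'\|_{\operatorname{TV}}$.

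If $\psi$ is a uniform limit of such $\phi_n$, the bound $|\ln\phi_n'|\le T$ makes the $\phi_n$ uniformly bi-Lipschitz. Helly's theorem extracts an a.e.\ convergent subsequence of $\phi_n'$, and its limit is $\psi'$ by uniform convergence. Lower semicontinuity of TV then gives $\psi'\in[e^{-T},e^T]$ a.e.\ and $\|\ln\psi'\|_{\operatorname{TV}}\le T$, so $\psi\in\mathcal T'$.

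\textbf{Main obstacle.} The delicate step is justifying the use of Theorem~\ref{thm:main} and Proposition~\ref{prop:local_norm_relu} along paths, that is, verifying that $\mathcal T'$ with the chosen manifold structure forms a compatible pair. If this is problematic, I would bypass it. The lower bound would follow directly from the closure argument above applied to $\phi\circ\psi_2$, which gives $\|\ln\psi_1'-\ln\psi_2'\|_{\operatorname{TV}}\le T$ whenever $\psi_1\in\overline{\mathcal A_{\mathcal F}(T)\circ\psi_2}$. The upper bound would follow by discretizing $\tilde\gamma$ and applying Proposition~\ref{thm:1D_relu}-type constructions to the small steps $\tilde\gamma_{t_{k+1}}\circ\tilde\gamma_{t_k}^{-1}$.
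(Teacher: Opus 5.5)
\textbf{The upper bound is the gap.} Your plan is the paper's proof: flatten by $\psi\mapsto\ln\psi'$, use $\|u\|_\psi=\|u'/\psi'\|_{\operatorname{TV}}$, get the lower bound from the triangle inequality (the paper's ``flatness''), get the upper bound along $\tilde\gamma_t$, and read off $\mathcal T$. The parts that use only the easy estimate $\|f'\|_{\operatorname{TV}}\le\sum_i|w_ia_i|$ are sound, and in places more careful than the paper:
\begin{itemize}
\item Your direct bound $\|\ln\psi_1'-\ln\psi_2'\|_{\operatorname{TV}}\le T$ for $\psi_1\in\overline{\mathcal A_{\mathcal F}(T)\circ\psi_2}$ needs no geometry.
\item Your estimate $|\ln\phi'|\le T$ is what actually yields $\psi'\ge e^{-T}$; the paper's Gr\"onwall/Helly sketch leaves this implicit.
\end{itemize}

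The upper bound rests entirely on the hard direction $\|u\|_\psi\le\|u'/\psi'\|_{\operatorname{TV}}$ of \Cref{prop:local_norm_relu}. For the width-two family \eqref{eq:1d_relu_family} this inequality is false. A case check shows that every nonzero $f\in\mathcal F$ is, on $[0,1]$, either $s_1\sigma(x-\beta_1)+s_2\sigma(x-\beta_2)$ or $s_1\sigma(\beta_1-x)+s_2\sigma(\beta_2-x)$, with $\beta_i\in[0,1]$ and $|s_i|\le|w_ia_i|$; neurons of opposite orientation cannot make $f$ vanish at both ends. In the first case, with $\beta_1<\beta_2$, the condition $f(1)=0$ forces $|s_1|(1-\beta_1)=|s_2|(1-\beta_2)$, and $\|f\|_\infty=|s_1|(\beta_2-\beta_1)$. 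Hence
\[
\|f\|_{C[0,1]}\le\frac{(1-\beta_2)(\beta_2-\beta_1)}{2-\beta_1-\beta_2}\,(|s_1|+|s_2|)\le(3-2\sqrt2)\sum_i|w_ia_i|,
\]
and the second case is symmetric.

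Now take $u(x)=\sigma(x-\tfrac12)-\tfrac x2\in\operatorname{BV}^2_0$. It has $\|u'\|_{\operatorname{TV}}=1$ but $\|u\|_\infty=\tfrac14$, so $\|u\|_{\operatorname{Id}}\ge\tfrac{3+2\sqrt2}{4}>1$. The same sup bound says every trajectory of a flow in $\mathcal A_{\mathcal F}(T)$ moves by at most $(3-2\sqrt2)T$. For $\psi=\operatorname{Id}+\tfrac12u$ (so $\psi'\in\{\tfrac34,\tfrac54\}$ and $\psi\in\mathcal T$) this gives $\mathbf C_{\mathcal F}(\psi)\ge\tfrac{3+2\sqrt2}{8}\approx0.73$. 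But $\|\ln\psi'\|_{\operatorname{TV}}=\ln\tfrac53\approx0.51$.

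So the displayed equality cannot be proved for this $\mathcal F$. The paper's argument breaks at the same point. In Step 1 of \Cref{prop:local_norm_relu}, $\bar u_N$ is a wide ReLU network, and its neurons cannot be grouped into width-two elements vanishing at $0$ and $1$ with weight sum $\min S_N+\frac1N$.

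\textbf{The fallback cannot recover the sharp constant.} The example above shows no construction can, and concretely the route fails as follows. \Cref{thm:1D_relu} carries the terms $|\ln\xi'(0)|+|\ln\xi'(1)|$. For $\xi_k=\tilde\gamma_{t_{k+1}}\circ\tilde\gamma_{t_k}^{-1}$ one has $\ln\xi_k'(0)=\ell_{t_{k+1}}(0)-\ell_{t_k}(0)$, which is first order in the step. Summing therefore leaves $\int_0^1(|\partial_t\ell_t(0)|+|\partial_t\ell_t(1)|)\,dt$, which is generally nonzero. That proposition also requires piecewise smoothness, which $\operatorname{BV}^2$ steps need not have.

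\textbf{What survives.} Your argument does support the lower bound and $\mathcal T\subseteq\mathcal T'$. It also gives $\mathcal T'\subseteq\mathcal T$ with a non-sharp local estimate. The Dirichlet Green's function $f_\beta(x)=-\min(x,\beta)\,(1-\max(x,\beta))$ equals
\[
\beta[\sigma(x-\beta)-(1-\beta)\sigma(x)]+(1-\beta)[\sigma(\beta-x)-\beta\sigma(1-x)],
\]
two scaled elements of $\mathcal F$ with total weight $1+2\beta(1-\beta)\le\tfrac32$. Writing $u=\int f_\beta\,du''(\beta)$ then gives $\|u\|_\psi\le\tfrac32\|u'/\psi'\|_{\operatorname{TV}}$. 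Along $\tilde\gamma$ this yields
\[
\|\ln\psi_1'-\ln\psi_2'\|_{\operatorname{TV}}\le d_{\mathcal F}(\psi_1,\psi_2)\le\tfrac32\|\ln\psi_1'-\ln\psi_2'\|_{\operatorname{TV}}.
\]
The exact formula needs a wider layer. With width four, a kink together with its compensating constant costs $|s|+o(1)$, and then $\|u\|_{\operatorname{Id}}\le\|u'\|_{\operatorname{TV}}$ does hold.
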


As a corollary, the complexity measure $\mathbf C_{\mathcal F}(\psi)$ for any $\psi\in \mathcal T$ is given by
\begin{equation}
    \mathbf C_{\mathcal F}(\psi) = d_{\mathcal F}(\psi, \operatorname{Id}) = \left\|\ln \psi^{\prime}\right\|_{\operatorname{TV}([0,1])}.
\end{equation}

Compared with the estimation in~\Cref{thm:1D_relu}~\cite{li2022deep},
\Cref{thm:global_distance_1d_relu} not only sharpens the upper bound, but also provides an exact characterization of the complexity measure $\mathbf C_{\mathcal F}(\psi)$ for all $\psi\in \mathcal T$. Moreover, we also have a closed-form characterization of the target function space $\mathcal T$.
% \ql{Add the part about local norm's variational relationship to global distance - with formulas}
% \ql{Where is this characterization? Part of Theorem 4.2?}
These improvements are achieved by the extension of $\mathbf C_{\mathcal F}(\cdot)$ to the distance function $d_{\mathcal F}(\cdot, \cdot)$, and the variational relationship between the local norm $\|\cdot\|_{\psi}$ and the global distance $d_{\mathcal F}(\cdot, \cdot)$ given in~\Cref{eq:global_distance_variational_relu}.
The local norm $\|\cdot\|_{\psi}$ connects the distance $d_{\mathcal F}(\cdot, \cdot)$ to the control family $\mathcal F$ through the variational problem in~\Cref{prop:local_norm_relu}, turning the difficult problem of computing
the minimal time-horizon to an optimization problem over paths,
which in this case is completely solvable.
This analysis paves way for the general theory we will present
in the next section.

%% file: proofs_1DReLU.tex
\subsection{Proofs of the results for 1D ReLU control family}
\label{sec:1d_relu_proofs}

We provide the proofs and computations
behind the results stated in \Cref{sec:1d_relu}.
The rigorous statement and proofs of the connections between the global distance $d_{\mathcal F}(\cdot, \cdot)$, the local norm $\|\cdot\|_{\mathcal F}$, together
with the variational characterization (\eqref{prop:local_norm_relu} and~\eqref{eq:global_distance_1d_relu}) are deferred to~\Cref{sec:main_results_general},
where the results are proved in a more general setting.
Here, we only focus on the computations of the closed-form formulae
relevant for the 1D ReLU control family.
\subsubsection{Proof of \Cref{prop:local_norm_relu}}
We first give the sketch of the proof of \Cref{prop:local_norm_relu}, and
then provide the detailed computations. The proof is separated into the following steps:
\begin{itemize}
    \item \textbf{Step 1: } We first consider the special case where $\psi$ is the identity mapping on $[0,1]$, and prove that
    \begin{equation}
        \|u^\prime\|_{\operatorname{TV}[0,1]} \ge \inf\{\,s>0\mid u\in \overline{\mathbf{CH}_s(\mathcal F)}^{C^0}\circ \psi\,\}
    \end{equation}
    for any $u\in \operatorname{BV}^2_0([0,1])$.
    This is done by first considering a discrete version with interpolation on equally distributed points. We study the weight $\ell^1$ minimization problem defined by~\eqref{eq:l1_minimization} and~\eqref{eq:interpolation_condition}, and provide an exact formula for the minimum value of $S_N(w,v)$ in \Cref{prop:min-Sn}. This provides a way of constructing the approximation $\tilde u_N$ with controlled $\ell^1$ norm. Then, we take the limit $N\to \infty$ to obtain the desired inequality.
    \item \textbf{Step 2: } Next, we prove the reverse inequality
    \begin{equation}
        \|u^\prime\|_{\operatorname{TV}[0,1]} \le \inf\{\,s>0\mid u\in \overline{\mathbf{CH}_s(\mathcal F)}^{C^0}\circ \psi\,\}
    \end{equation}
    in~\Cref{prop:interpolation_lower_bound}.
    This is proved by contradiction. Assume the opposite inequality holds, then one can construct an interpolation scheme with controlled $\ell^1$ norm, which contradicts the exact formula in \Cref{prop:min-Sn}.
    \item \textbf{Step 3: } Finally, we extend the result to general $\psi\in \mathcal M$ by a change of variable.
\end{itemize}

We now provide details for Step 1, assuming that $\psi$ is the identity mapping $[0,1] \to [0,1]$.
We consider a discrete version with interpolation on equally distributed points.
Specifically, for a given positive integer $N$,
we consider a shallow network with an additional constant bias term $C_N$:
\begin{equation}
   \tilde u_N(x): = \sum_{i=0}^{N} \left(w_i \sigma(x-\frac{i}{N})+ v_i\sigma(\frac i N-x)\right) + C_N,
\end{equation}
where $\sigma(x)=\max\{0, x\}$ is the ReLU activation function,  $w_i$, $v_i$ and $C_N$ are parameters to be determined later.

We study the following $\ell^1$ minimization problem with interpolation constraints:
\begin{align}
    \label{eq:l1_minimization}
   \min ~& S_N(w,v):=  \sum_{i=0}^{N}|w_i|+|v_i|, \\
   \label{eq:interpolation_condition}\text{s.t. }& \tilde u_N(\frac{i}{N})=u(\frac{i}{N}), \quad i=0,1,\cdots, N.
\end{align}
among all possible choices of $\tilde u_N$.

Let $\operatorname{dist}(x,A)$ be the distance between a point $x$ and a set (interval) $A$. The following lemma is useful in the following argument.

\begin{lemma}
    The following results hold:
    \begin{equation} \label{eq:lemma-dist-1} |a-b|+|b| = |a|+ 2\operatorname{dist}(b, [\min\{a, 0\}, \max\{a, 0\}]) \text{ for any }a,b\in \mathbb R,
    \end{equation}
    and
    \begin{equation} \label{eq:lemma-dist-2} \operatorname{dist}(\sum_{i=1}^n a_i, \sum_{i=1}^n A_i)\le \sum_{i=1}^n \operatorname{dist}(a_i, A_i)\text{ for any }a_i\in \mathbb R, \text{ and }A_i\subset \mathbb R,
    \end{equation}
where the sum of the intervals is the Minkowski addition.

Moreover, if each $A_i$ is a closed interval $A_i=[\ell_i,r_i]$, then
equality holds in \eqref{eq:lemma-dist-2} whenever one of the following happens:
\[
\text{(i) } a_i\in [\ell_i,r_i]\ \text{ for all } i,\qquad
\text{(ii) } a_i\le \ell_i\ \text{ for all } i,\qquad
\text{(iii) } a_i\ge r_i\ \text{ for all } i.
\]
\end{lemma}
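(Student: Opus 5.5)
Both identities reduce to elementary one-dimensional facts, so I will argue directly and by cases, without any earlier machinery.

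\textbf{Identity \eqref{eq:lemma-dist-1}.} By the symmetry $(a,b)\mapsto(-a,-b)$, which preserves both sides, I may assume $a\ge 0$, so the interval is $[0,a]$. I then split into three cases for $b$.
If $b\in[0,a]$, then $|a-b|+|b| = (a-b)+b = a$, and the distance term is $0$.
If $b<0$, then $|a-b|+|b| = a-b-b = a+2|b|$, and $\operatorname{dist}(b,[0,a])=|b|$.
If $b>a$, then $|a-b|+|b| = b-a+b = a+2(b-a)$, and $\operatorname{dist}(b,[0,a])=b-a$.
In each case the two sides agree.

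\textbf{Inequality \eqref{eq:lemma-dist-2}.} For each $\varepsilon>0$ I choose $c_i\in A_i$ with $|a_i-c_i|\le \operatorname{dist}(a_i,A_i)+\varepsilon/n$. Then $\sum c_i\in\sum A_i$, so
\[
\operatorname{dist}\Big(\sum a_i,\sum A_i\Big)\le \Big|\sum a_i-\sum c_i\Big|\le \sum|a_i-c_i|\le \sum_i\operatorname{dist}(a_i,A_i)+\varepsilon .
\]
Letting $\varepsilon\to0$ gives the inequality. If some $A_i$ is empty, I adopt the convention that the distance is $+\infty$, and the inequality holds trivially.

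\textbf{Equality cases.} Here each $A_i=[\ell_i,r_i]$, so the Minkowski sum is $\sum A_i=[\sum\ell_i,\sum r_i]$. I treat the three cases separately.
In case (i), $\sum a_i$ lies in this interval, so both sides are $0$.
In case (ii), $\sum a_i\le\sum\ell_i$, so the left side is $\sum\ell_i-\sum a_i=\sum(\ell_i-a_i)$. Each term $\ell_i-a_i$ is exactly $\operatorname{dist}(a_i,A_i)$ because $a_i\le\ell_i$, so the two sides coincide.
Case (iii) is symmetric to (ii), with both sides equal to $\sum(a_i-r_i)$.

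\textbf{Main obstacle.} None of the steps is substantial. The only points needing care are using the interval structure of the Minkowski sum in the equality cases, and handling the infimum in the general inequality, where the sets $A_i$ need not be closed.
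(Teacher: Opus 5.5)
Your proposal is correct and matches the paper's proof essentially step for step. Both use the sign symmetry to reduce to $a\ge 0$ followed by a three-case check on $b$, the $\varepsilon/n$-near-point argument for the Minkowski-sum inequality, and the identification $\sum_i A_i=[\sum_i\ell_i,\sum_i r_i]$ for the three equality cases; your convention for empty $A_i$ is a harmless extra the paper does not mention.
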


\begin{proof}
We first prove \eqref{eq:lemma-dist-1}.
By symmetry it suffices to treat $a\ge 0$, so the interval is $[0,a]$.
\begin{itemize}
\item If $b\in[0,a]$, then
\[
|a-b|+|b|=(a-b)+b=a=|a|,
\qquad
\operatorname{dist}(b,[0,a])=0.
\]
\item If $b<0$, then
\[
|a-b|+|b|=(a-b)+(-b)=a-2b=a+2(-b)
=|a|+2\,\operatorname{dist}(b,[0,a]).
\]
\item If $b>a$, then
\[
|a-b|+|b|=(b-a)+b=2b-a=a+2(b-a)
=|a|+2\,\operatorname{dist}(b,[0,a]).
\]
\end{itemize}
This proves \eqref{eq:lemma-dist-1}.

Now we prove \eqref{eq:lemma-dist-2}.
Fix $\varepsilon>0$. For each $i$, choose $x_i\in A_i$ such that
$|a_i-x_i|\le \operatorname{dist}(a_i,A_i)+\varepsilon/n$.
Then $x:=\sum_{i=1}^n x_i\in \sum_{i=1}^n A_i$, hence
\begin{equation}
\operatorname{dist}\Bigl(\sum_{i=1}^n a_i,\ \sum_{i=1}^n A_i\Bigr)
\le \Bigl|\sum_{i=1}^n a_i - x\Bigr|
= \Bigl|\sum_{i=1}^n (a_i-x_i)\Bigr|
\le \sum_{i=1}^n |a_i-x_i|
\le \sum_{i=1}^n \operatorname{dist}(a_i,A_i)+\varepsilon.
\end{equation}
Letting $\varepsilon\to 0$ yields \eqref{eq:lemma-dist-2}.

Finally, we prove the equality conditions.
Assume $A_i=[\ell_i,r_i]$. Then $\sum_i A_i=[\sum_i \ell_i,\ \sum_i r_i]$.
If (i) holds, then $\sum_i a_i\in \sum_i A_i$, so both sides are $0$.
If (ii) holds, then $\operatorname{dist}(a_i,A_i)=\ell_i-a_i$ and
\begin{equation}
\operatorname{dist}\Bigl(\sum_i a_i,\ \sum_i A_i\Bigr)
= \Bigl(\sum_i \ell_i\Bigr)-\Bigl(\sum_i a_i\Bigr)
= \sum_i (\ell_i-a_i)
= \sum_i \operatorname{dist}(a_i,A_i).
\end{equation}
Case (iii) is similar.
\end{proof}

The following proposition provides the exact solution of $S_N(w,v)$ for the $\ell_1$ minimization problem with interpolation constraints.

\begin{proposition}
\label{prop:min-Sn}
Let \begin{equation}
    \label{eq:k_i}
    \begin{aligned}
        k_i &= N\Delta^2 u(\frac{i}{N}):= N(u(\frac {i+1}{N})-2u(\frac i N)+u(\frac{i-1}{N})), \qquad i=1, \cdots, N-1,\\
    \end{aligned}
\end{equation}
and
\begin{equation}
    K_+ = \sum_{i=1}^{N-1} \max\{k_i, 0\}, \qquad K_- = \sum_{i=1}^{N-1} \min\{k_i, 0\}.
\end{equation}

Then, we have
\begin{equation}
\label{eq:min-Sn}
    \min S_N(w,v) = \sum_{i=1}^{N-1} |k_i| + \operatorname{dist}\Big(-N(u(\frac 1 N)-u(0)), [K_-, K_+] \Big).
\end{equation}

\end{proposition}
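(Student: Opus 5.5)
The plan is to turn the $\ell^1$ problem into a finite-dimensional linear-constrained minimisation over the slopes of the piecewise linear interpolant. Then we eliminate variables one at a time, using \eqref{eq:lemma-dist-1} for the interior nodes and \eqref{eq:lemma-dist-2} for the aggregated constraint.

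\textbf{Step 1 (identify the constraints).} On $[0,1]$ we have $\sigma(x-1)\equiv 0$ and $\sigma(-x)\equiv 0$, so $w_N$ and $v_0$ do not affect $\tilde u_N$; at an optimum they are set to $0$. Every other term is affine on each cell $[\frac{i-1}{N},\frac iN]$, so $\tilde u_N$ is the continuous piecewise linear function with nodes $i/N$. The interpolation conditions \eqref{eq:interpolation_condition} are therefore equivalent to three requirements.
\begin{itemize}
\item The constant $C_N$ fixes the value at $0$. Since it is free and not penalised, this costs nothing.
\item The slope on the first cell equals $s_1:=N(u(\frac1N)-u(0))$. Differentiating on $(0,\frac1N)$ gives the condition $w_0-\sum_{i=1}^{N}v_i=s_1$.
\item The jump of slope at each interior node $i/N$ equals $N\Delta^2u(\frac iN)=k_i$. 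This gives $w_i+v_i=k_i$ for $i=1,\dots,N-1$.
\end{itemize}

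\textbf{Step 2 (eliminate variables).} Substitute $w_i=k_i-v_i$ for the interior nodes. By \eqref{eq:lemma-dist-1},
\[
|k_i-v_i|+|v_i|=|k_i|+2\operatorname{dist}(v_i,I_i),\qquad I_i:=[\min\{k_i,0\},\max\{k_i,0\}].
\]
Next write $V:=\sum_{i=1}^{N-1}v_i$ and $w_0=s_1+V+v_N$. Minimising $|s_1+V+v_N|+|v_N|$ over the free variable $v_N$ gives exactly $|s_1+V|$. Hence
\[
\min S_N=\sum_{i=1}^{N-1}|k_i|+\min_{v_1,\dots,v_{N-1}}\Big(2\sum_{i=1}^{N-1}\operatorname{dist}(v_i,I_i)+|V-a|\Big),\qquad a:=-s_1 .
\]

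\textbf{Step 3 (solve the reduced problem).} The Minkowski sum of the intervals is $\sum_i I_i=[K_-,K_+]$.

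For the lower bound, \eqref{eq:lemma-dist-2} gives $\sum_i\operatorname{dist}(v_i,I_i)\ge\operatorname{dist}(V,[K_-,K_+])$. By the triangle inequality for distance to a set,
\[
2\operatorname{dist}(V,[K_-,K_+])+|V-a|\ \ge\ \operatorname{dist}(V,[K_-,K_+])+|V-a|\ \ge\ \operatorname{dist}(a,[K_-,K_+]).
\]

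For attainment, take $V$ to be the projection of $a$ onto $[K_-,K_+]$. Since $V$ lies in the Minkowski sum, we can decompose it as $V=\sum_i v_i$ with each $v_i\in I_i$. Then every $\operatorname{dist}(v_i,I_i)$ vanishes and $|V-a|=\operatorname{dist}(a,[K_-,K_+])$. This choice achieves the bound, which yields \eqref{eq:min-Sn} and shows that the minimum is attained.

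The only delicate point is Step 1. One must get the first-cell slope condition right, including the sign convention for the $v_i$ terms and the fact that $v_N\sigma(1-x)$ contributes slope $-v_N$ throughout $[0,1]$. One must also handle the degenerate parameters $w_N$ and $v_0$ correctly. After that, the argument is a short convex reduction, and the factor $2$ in front of the distance terms is exactly what lets the triangle inequality close without loss.
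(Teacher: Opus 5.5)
Your proposal is correct and follows the paper's proof essentially step by step. Both arguments reduce the interpolation conditions to $w_i+v_i=k_i$ at interior nodes plus the first-cell slope relation $w_0=N(u(\frac1N)-u(0))+\sum_{i=1}^N v_i$, remove $v_N$ via $|a+b|+|b|\ge|a|$, and apply \eqref{eq:lemma-dist-1} and \eqref{eq:lemma-dist-2}. They also attain the bound with the same choice: $v_N=0$ and $v_i\in I_i$ summing to the projection of $-N(u(\frac1N)-u(0))$ onto $[K_-,K_+]$, with $C_N$ absorbing the value at $0$. Your slope/jump derivation of the constraints and your explicit $1$-Lipschitz argument for the final one-dimensional minimisation (which the paper only says is ``readily checked'') are slightly cleaner presentations of the same argument.
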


\begin{proof}

First, notice that the terms $w_N\sigma(x-1)$ and $ v_0\sigma(x)$ do not contribute to the interpolation, and thus $w_N$ and $v_0$ must be zero in an optimal $\tilde u_N$. Now, take $x= 0, \frac 1 N, \cdots, 1$, by the interpolation condition~\eqref{eq:interpolation_condition},
we have:
\begin{equation} \label{eq:interpolation_conditions}
    \frac{1}{N}(\sum_{i=j+1}^N iv_i+\sum_{i=0}^{j-1} (j-i)w_i)=u(\frac j N) - C_N, \qquad j=0,1, \cdots,N.
\end{equation}
Here the summation is zero if the index is out of range.
Now, we denote $\tilde k_i=w_i+v_i$ for $i=0, \cdots, N$.
By taking the second order difference, it follows that the conditions in~\eqref{eq:interpolation_conditions} are equivalent to:
\begin{equation}
    \tilde k_0 =N(u(\frac 1 N)-u(0))+ \sum_{i= 1}^N v_i, \qquad \tilde k_i = k_i, \quad i = 1, \cdots, N-1,
\end{equation}
and
\begin{equation}
    \label{eq:0th_condition}
    \sum_{i=1}^N i v_i = u(0)-C_N.
\end{equation}
Since $C_N$ is a free variable, we can consider the problem without restriction~\eqref{eq:0th_condition}. Therefore, we have:
% By taking the second order differences of the sequence $u(0), u(\frac 1 N), \cdots, u(1)$, we have:
% \begin{equation}
%     \label{eq:k_i}
%     \begin{aligned}
%         k_0&=N(u(\frac 1 N)-u(0))+ g(i), \\
%         k_i &= N\Delta^2 u(\frac{i}{N}):= N(u(\frac {i+1}{N})-2u(\frac i N)+u(\frac{i-1}{N})), \qquad i=1, \cdots, N-1,\\
%     \end{aligned}
% \end{equation}
\begin{equation}
    \label{eq:S_estimate}
    \begin{aligned}
            S_N(w, v)&= \sum_{i=1}^{N-1} (|k_i-v_i| + |v_i|) + \left|N(u(\frac 1 N)- u(0))+\sum_{i=1}^{N} v_i\right|+ |v_N|\\
            & \ge \sum_{i=1}^{N-1} (|k_i-v_i| + |v_i|) + \left|N(u(\frac 1 N)- u(0))+\sum_{i=1}^{N-1} v_i\right|.
    \end{aligned}
\end{equation}
Here, we use the inequality that $|a+b|+|b|\ge |a|$ for any $a,b\in \mathbb R$.
% In the following, we we denote $\operatorname{dist}(x, A)$ as the infimum distance between a point $x$ and a set $A$.
Continuing with~\eqref{eq:S_estimate}, we have:
\begin{equation}
    \label{eq:S_estimate_2}
    \begin{aligned}
        S_N(w,v)&\ge \sum_{i=1}^{N-1} (|k_i-v_i| + |v_i|) + \left|N(u(\frac 1 N)- u(0))+\sum_{i=1}^{N-1} v_i\right|\\
        &= \sum_{i=1}^{N-1} |k| + 2\sum_{i=1}^{N-1} \operatorname{dist}(v_i, [\min\{k_i, 0\}, \max\{k_i, 0\}]) + \left|N(u(\frac 1 N)- u(0))+\sum_{i=1}^{N-1} v_i\right|\\
        & \ge \sum_{i=1}^{N-1} |k_i|+2\operatorname{dist}(\sum_{i=1}^{N-1} v_i, \sum_{i=1}^{N-1} [\min\{k_i, 0\}, \max\{k_i, 0\}]) + \left|N(u(\frac 1 N)- u(0))+\sum_{i=1}^{N-1} v_i\right|\\
        & = \sum_{i=1}^{N-1} |k_i|+2\operatorname{dist}(\sum_{i=1}^{N-1} v_i, [K_-, K_+]) + \left|N(u(\frac 1 N)- u(0))+\sum_{i=1}^{N-1} v_i\right|,
    \end{aligned}
\end{equation}
where the second line uses \eqref{eq:lemma-dist-1}, and the third line uses \eqref{eq:lemma-dist-2}.
% \ql{please check duplicate label warnings. There are many of them}

Let us denote $V = \sum_{i =1}^{N-1} v_i$.
Then, it can be readily checked that the minimum of $2\operatorname{dist}(V, [K_-, K_+]) + | N(u(\frac{1}{N}) - u(0)) + V|$ can be achieved at
\begin{equation}
V=
\begin{cases}
  - N(u(\frac{1}{N}) - u(0)) , & \text{ if } -N (u(\frac{1}{N}) - u(0))  \in [K_-, K_+] \\
 K_-, & \text{ if } -N(u(\frac 1 N)-u(0))<K_-, \\
K_+, & \text{ if }-N(u(\frac 1 N)-u(0))>K_+.
\end{cases}
\end{equation}

The minimum then can be calculated as $\operatorname{dist}(-N(u(\frac 1 N)-u(0)), [K_-, K_+])$.

% \begin{itemize}
%     \item $\sum_{i=1}^{N-1} v_i = -N(u(\frac 1 N)-u(0))$, if $N(u(\frac 1 N)-u(0))\in [K_-, K_+]$;
%     \item $\sum_{i=1}^{N-1} v_i = K_-$, if $N(u(\frac 1 N)-u(0))<K_-$;
%     \item $\sum_{i=1}^{N-1} v_i = K_+$, if $N(u(\frac 1 N)-u(0))>K_+$.
% \end{itemize}
% All these cases result in a minimum of $\operatorname{dist}(-N(u(\frac 1 N)-u(0)), [K_-, K_+])$.
% Finally, we have proved that
Therefore, we have
\begin{equation}
    S_N(w,v)\ge \sum_{i=1}^{N-1} |k_i| + \operatorname{dist}(-N(u(\frac 1 N)-u(0)), [K_-, K_+]).
\end{equation}
Moreover, this value can be achieved by taking
$v_N = 0$, and for $i < N$ we choose $v_i$ in $[\min\{k_i, 0\}, \max\{k_i, 0\}]$, and $\sum_{i=1}^{N-1} v_i = V$ to minimize the last equation in~\eqref{eq:S_estimate_2}. The value of $C_N$ is now given by:
\begin{equation}
    C_N= u(0)-\frac{1}{N} \sum_{i=1}^N iv_i.
\end{equation}
\end{proof}

By taking a continuous limit on \eqref{eq:min-Sn}, we are ready to show the upper bound.
The following lemma for the limit of second order difference of $u\in \operatorname{BV}^2([0,1])$ is useful.
\begin{lemma}
    \label{lem:limit-second-difference}
    For any $u\in \operatorname{BV}^2([0,1])$, we have
    \begin{equation}
        \lim_{N\to\infty} \sum_{i=1}^{N-1} |k_i| = \|u'\|_{\operatorname{TV}[0,1]}.
    \end{equation}
\end{lemma}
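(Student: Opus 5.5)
Write $h=1/N$ and let $D_i := \bigl(u(\tfrac{i+1}{N})-u(\tfrac iN)\bigr)/h$ be the forward difference quotients, $i=0,\dots,N-1$. Then $k_i = N\bigl(u(\tfrac{i+1}{N})-2u(\tfrac iN)+u(\tfrac{i-1}{N})\bigr) = D_i - D_{i-1}$, so
\[
\sum_{i=1}^{N-1}|k_i| = \sum_{i=1}^{N-1} |D_i - D_{i-1}|,
\]
which is the total variation of the piecewise-constant sequence of difference quotients. Since $u\in \operatorname{BV}^2([0,1])$, $u$ is continuous and $u'\in \operatorname{BV}$. A continuous function with a.e. derivative in $L^\infty$ need not be absolutely continuous, so I will tacitly assume, as the paper intends, that $u$ is absolutely continuous (i.e. $u\in W^{1,1}$). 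Then
\[
D_i = N\int_{i/N}^{(i+1)/N} u'(x)\,dx,
\]
so $D_i$ is the average of $u'$ over the cell $I_i=[i/N,(i+1)/N]$. I will prove the two inequalities $\limsup \le \|u'\|_{\operatorname{TV}}$ and $\liminf \ge \|u'\|_{\operatorname{TV}}$.

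\textbf{Upper bound.} Take the good representative of $u'$, namely the one whose pointwise variation equals the essential variation $\|u'\|_{\operatorname{TV}}$. Averages over an interval lie between the essential inf and sup on that interval, so $D_i$ and $D_{i-1}$ both lie in the range of (the closure of) $u'$ on $I_{i-1}\cup I_i$. Hence
\[
|D_i-D_{i-1}| \le \operatorname{Var}\bigl(u'; [\tfrac{i-1}{N},\tfrac{i+1}{N}]\bigr).
\]
Summing over $i$, each cell is counted at most twice, which only gives $2\|u'\|_{\operatorname{TV}}$. To get the sharp bound I will instead use Jensen/averaging. Writing
\[
D_i - D_{i-1} = \int_0^1 \Bigl(u'\bigl(\tfrac{i+s}{N}\bigr) - u'\bigl(\tfrac{i-1+s}{N}\bigr)\Bigr)\,ds,
\]
we get
\[
\sum_i |D_i-D_{i-1}| \le \int_0^1 \sum_i \Bigl|u'\bigl(\tfrac{i+s}{N}\bigr)-u'\bigl(\tfrac{i-1+s}{N}\bigr)\Bigr|\,ds \le \|u'\|_{\operatorname{TV}},
\]
since for each fixed $s$ the inner sum is a variation along an increasing sequence of points. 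This gives $\sum_i |k_i| \le \|u'\|_{\operatorname{TV}}$ for every $N$.

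\textbf{Lower bound.} This is the main obstacle, because it needs convergence of the averages. Fix a partition $0<x_1<\dots<x_n<1$ consisting of Lebesgue points of $u'$ at which the good representative agrees with the Lebesgue value; such points are dense, and the variation of the good representative can be approximated from such partitions. For $x_j$, let $i_j(N)$ be the index of the cell containing $x_j$. I claim $D_{i_j(N)}\to u'(x_j)$, since the cell shrinks to $x_j$ and has comparable size to a centered interval (one-sided cell averages converge at Lebesgue points). By the triangle inequality along the telescoping chain,
\[
\sum_i |D_i-D_{i-1}| \ge \sum_j \bigl|D_{i_{j+1}} - D_{i_j}\bigr| \longrightarrow \sum_j |u'(x_{j+1})-u'(x_j)|.
\]
Taking the supremum over partitions gives $\liminf_N \sum_i|k_i| \ge \|u'\|_{\operatorname{TV}}$.

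Combining the two bounds proves the lemma. The delicate points are the choice of the good representative and the Lebesgue-point argument; both are standard for BV functions, for which one-sided limits exist everywhere, so one may alternatively use that $D_{i_j}$ converges to a value between the one-sided limits $u'(x_j^\pm)$, and both of these are Lebesgue values for all but countably many $x_j$.
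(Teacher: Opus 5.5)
Your proof is correct, and it takes a genuinely different route from the paper's.

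\textbf{The paper's route (duality).} Writing $\delta_k$ for the cell averages, the paper observes the following. For continuous piecewise linear $\phi$ with nodes at $k/N$ and $\phi(0)=\phi(1)=0$, one has $\int_0^1 u'\phi' = -\sum_k(\delta_{k+1}-\delta_k)\phi(k/N)$. Hence $\sum_i|k_i|$ is exactly the supremum of $\int_0^1 u'\phi'$ over such $\phi$ with $\|\phi\|_\infty\le 1$.
\begin{itemize}
\item The upper bound follows because these $\phi$ are admissible Lipschitz test functions in the dual formula $\|g\|_{\operatorname{TV}}=\sup\{\int g\varphi' : \varphi\in C_0^1,\ \|\varphi\|_\infty\le 1\}$.
\item The lower bound follows because grid interpolants of a fixed $C_0^1$ test function converge in $W^{1,1}$.
\end{itemize}

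\textbf{Your route (primal).} The shift-averaging identity $D_i-D_{i-1}=\int_0^1\bigl(u'(\tfrac{i+s}{N})-u'(\tfrac{i-1+s}{N})\bigr)\,ds$ bounds $\sum_i|k_i|$ by the pointwise variation of a good representative along the shifted grids. This gives $\sum_i|k_i|\le\|u'\|_{\operatorname{TV}}$ for every $N$, which the duality argument also yields. Your lower bound comes from two facts: cell averages converge at Lebesgue points, and the variation of the good representative is approximated by partitions of continuity points.

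\textbf{Trade-offs.} The paper's argument never needs to discuss representatives or Lebesgue points. The price is relying on the dual characterization of total variation, including its extension from $C_0^1$ to Lipschitz test functions. Your argument is more elementary. It also makes transparent that $\sum_i|k_i|$ is simply the discrete variation of the sequence of cell averages.

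\textbf{Two side remarks.}
\begin{itemize}
\item Your explicit absolute-continuity assumption is well placed. The paper's proof uses it too, in the identity $\delta_k=N\int_{x_{k-1}}^{x_k}g$. Under the paper's literal definition of $\operatorname{BV}^2([0,1])$, the Cantor function is a counterexample: $u'=0$ a.e., yet along $N=3^m$ the averages take the values $0$ and $(3/2)^m$ on isolated cells. This gives $\sum_i|k_i|=(2^{m+1}-2)(3/2)^m\to\infty$.
\item Your closing alternative remark needs a small correction. At a jump point of $u'$, the average over the cell containing $x_j$ need not converge, because that cell splits the jump in an $N$-dependent proportion. Only its accumulation points lie between $u'(x_j^-)$ and $u'(x_j^+)$. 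This does not affect your main argument, which only uses continuity (Lebesgue) points.
\end{itemize}
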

\begin{proof}
Set
\begin{equation}
    \delta_k := N\big(u(x_k)-u(x_{k-1})\big) = N\int_{x_{k-1}}^{x_k} g(x),dx,\qquad k=1,\dots,N.
\end{equation}
Then
\begin{equation}
    N\big(u(x_{k+1})-2u(x_k)+u(x_{k-1})\big)=\delta_{k+1}-\delta_k,
\quad\text{so}\quad
S_N=\sum_{k=1}^{N-1}|\delta_{k+1}-\delta_k|
\end{equation}
Let $\mathcal A_N$ be the set of continuous, piecewise linear $\phi:[0,1]\to\mathbb R$ with nodes $\{x_k\}$, $\phi(0)=\phi(1)=0$, and $|\phi|_\infty\le 1$. On each $[x_{k-1},x_k]$ we have $\phi'(x)=N(\phi(x_k)-\phi(x_{k-1}))=:N\Delta\phi_k$, hence
\begin{equation}
\int_0^1 g\phi'
=\sum_{k=1}^{N} N\Delta\phi_k\int_{x_{k-1}}^{x_k} g
=\sum_{k=1}^{N}\delta_k\Delta\phi_k
= -\sum_{k=1}^{N-1}(\delta_{k+1}-\delta_k)\phi(x_k).
\end{equation}
Therefore,
\begin{equation}
\bigg|\int_0^1 g\phi'\bigg|
\le \sum_{k=1}^{N-1}|\delta_{k+1}-\delta_k||\phi(x_k)|
\le S_N.
\end{equation}
Choosing $\phi\in\mathcal A_N$ with $\phi(x_k)=\operatorname{sgn}(\delta_{k+1}-\delta_k)$ (and linear interpolation with $\phi(0)=\phi(1)=0$) gives equality, hence
\begin{equation}
\quad S_N=\sup_{\phi\in\mathcal A_N}\int_0^1 g\phi'\,dx
\end{equation}

Using the dual representation of total variation,
% \ql{unify $\|\cdot\|$ vs $|\cdot|$ for TV.}
\begin{equation}
\|g\|_{\operatorname{TV}([0,1])}
=\sup\Big\{\int_0^1 g\varphi' : \varphi\in C_0^1((0,1)),\ \|\varphi\|_\infty\le 1\Big\},
\end{equation}
and the inclusion $\mathcal A_N\subset\{\varphi:\ \|\varphi\|_\infty\le1,\ \varphi(0)=\varphi(1)=0,\ \varphi\ \text{Lipschitz}\}$, we have:
\begin{equation}
\limsup_{N\to\infty} S_N \ \le\ \|g\|_{\operatorname{TV}([0,1])}.
\end{equation}

For any $\psi\in C_0^1((0,1))$ with $|\psi|_\infty\le1$, let $\phi_N\in\mathcal A_N$ be the piecewise linear interpolant of $\psi$ on $\{x_k\}$.
Then $\phi_N\to\psi$ in $W^{1,1}$, hence
\begin{equation}
\int_0^1 g\phi_N' \ \to\ \int_0^1 g\psi'.
\end{equation}
Therefore, $S_N\ge \int_0^1 g\phi_N'$. Taking $\liminf$ and then the supremum over such $\psi$ yields
\begin{equation}
\liminf_{N\to\infty} S_N \ \ge\ \|g\|_{\operatorname{TV}([0,1])}.
\end{equation}

Combining the two bounds gives $S_N\to \|g\|_{\operatorname{TV}([0,1])}$.
\end{proof}

Combining \Cref{prop:min-Sn} and \Cref{lem:limit-second-difference}, we can conclude Step 1 with the following upper bound.

\begin{proposition}
    For $ u \in \operatorname{BV}^2_0([0,1])$, the following identity holds
    \begin{equation}
        \inf \{ s \mid u \in \overline{\mathbf{CH}_s(\mathcal F)}^{C^0}\}
         \le \|{u^\prime}\|_{\operatorname{TV}[0,1]}
    \end{equation}
\end{proposition}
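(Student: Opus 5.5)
The plan is to use the discrete interpolants $\tilde u_N$ from \Cref{prop:min-Sn}, convert each into an element of $\mathbf{CH}_{s_N}(\mathcal F)$ with $s_N$ close to $\|u'\|_{\operatorname{TV}[0,1]}$, and then pass to the limit in $C^0$.

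\textbf{Step 1 (the limiting cost).} Fix $N$ and take the minimiser of $S_N(w,v)$ given by \Cref{prop:min-Sn}. By \Cref{lem:limit-second-difference}, $\sum_i|k_i|\to\|u'\|_{\operatorname{TV}}$. We must also show that the boundary term $\operatorname{dist}(-N(u(\frac1N)-u(0)),[K_-,K_+])$ vanishes in the limit. Since $u(0)=u(1)=0$, the slopes $\delta_k=N(u(x_k)-u(x_{k-1}))$ average to zero, because $\sum_k\delta_k=N(u(1)-u(0))=0$. Telescoping gives $\delta_k=\delta_1+\sum_{j<k}(\delta_{j+1}-\delta_j)$ with $\delta_{j+1}-\delta_j=k_j$. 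Hence
\[
\min_k \delta_k \ge \delta_1+K_-,\qquad \max_k\delta_k\le \delta_1+K_+ .
\]
Because the average of the $\delta_k$ is zero, we get $\min_k\delta_k\le 0\le\max_k\delta_k$. It follows that $-\delta_1\in[K_-,K_+]$, so the distance term is exactly zero. Therefore $\min S_N=\sum_i|k_i|\to\|u'\|_{\operatorname{TV}}$.

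\textbf{Step 2 (realising $\tilde u_N$ in the convex hull).} The zero boundary values should fix $C_N$ consistently with membership in a combination of elements of $\mathcal F$. We want to write $\tilde u_N$ as $\sum_j \alpha_j f_j$ with $f_j\in\mathcal F$ and $\sum\alpha_j\le S_N$. The natural atom is a hat-type function built from two ReLUs with $f(0)=f(1)=0$ and weight $\sum|w_ia_i|\le1$. One candidate atom is $\sigma(x-b)$ corrected by $-(1-b)x$, but a linear correction is not available inside $\mathcal F$ directly.

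Instead I would organise $\tilde u_N$ by its kinks. It is a continuous piecewise linear function vanishing at $0$ and $1$, and its second-derivative measure is $\sum_i \tilde k_i\delta_{i/N}$, with total mass $\sum|k_i|$ in the interior. Such a function equals $-\int_0^1 G(x,y)\,d\mu(y)$, where $G$ is the Green's function $G(x,y)=x(1-y)$ for $x\le y$ and $y(1-x)$ otherwise. Each $G(\cdot,y)$ is the difference of $\sigma$-type pieces, and with the right normalisation it lies in $\mathbf{CH}_1(\mathcal F)$ or its negative, which is allowed since $\mathcal F$ is symmetric. Concretely, I expect $\pm G(\cdot,y)$ to be representable as $w_1\sigma(a_1x)+w_2\sigma(a_2(x-y))$ on $[0,1]$, using that $\sigma(x)=x$ there. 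Its cost is $|w_1a_1|+|w_2a_2|=(1-y)+1$. That cost is not bounded by $1$ per unit of $\mu$-mass, so the normalisation must be chosen carefully.

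\textbf{Main obstacle.} The main obstacle is precisely this bookkeeping: showing that the atom cost matches the kink mass, so that $s_N\le\sum_i|k_i|+o(1)$ rather than $2\sum_i|k_i|$. I expect the correct accounting to pair kinks so that linear parts cancel. Because $\int d\mu$ and $\int y\,d\mu$ are constrained by the boundary conditions, the linear part of $\tilde u_N$ is determined by the kinks. I would match the cost via the decomposition already optimised in \Cref{prop:min-Sn}, namely $w_i,v_i$ with $\sum(|w_i|+|v_i|)=\min S_N$. I would then group the $\sigma(x-\frac iN)$ and $\sigma(\frac iN-x)$ terms into pairs satisfying $f(0)=f(1)=0$, taking $v_i\in[\min(k_i,0),\max(k_i,0)]$ as in that construction.

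\textbf{Step 3 (passing to the limit).} Once $\tilde u_N\in\mathbf{CH}_{s_N}(\mathcal F)$ with $s_N\to\|u'\|_{\operatorname{TV}}$, the rest is routine. Since $u'\in\operatorname{BV}$, $u$ is Lipschitz, and the piecewise linear interpolants $\tilde u_N$ converge to $u$ uniformly. Hence $u\in\overline{\mathbf{CH}_{s}(\mathcal F)}^{C^0}$ for every $s>\|u'\|_{\operatorname{TV}}$, which gives the stated inequality.
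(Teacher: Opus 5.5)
Your Step 1 is correct and sharper than the paper's version. The paper passes to the limit, rewrites the boundary term as $\frac12\max\{|u'(0)+u'(1)|-\|u'\|_{\operatorname{TV}[0,1]},0\}$, and proves $|u'(0)+u'(1)|\le\|u'\|_{\operatorname{TV}[0,1]}$. Your telescoping argument instead shows $-\delta_1\in[K_-,K_+]$, so the distance term vanishes exactly for every $N$. Step 3 is also fine.

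\textbf{The gap is Step 2.} It is not a matter of bookkeeping: it cannot be closed for $\mathcal F$ as defined in \eqref{eq:1d_relu_family}.

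Enumerate the width-two nets that vanish at $0$ and $1$. On $[0,1]$ the only nontrivial ones are equally oriented pairs $\omega_1\sigma(x-k_1)+\omega_2\sigma(x-k_2)$ with $0\le k_1<k_2<1$ and $\omega_1(1-k_1)+\omega_2(1-k_2)=0$, together with their mirror images under $x\mapsto 1-x$. An oppositely oriented pair vanishes at both endpoints only if it is zero. Consequently the constant $C_N$ of the optimal interpolant from Proposition~\ref{prop:min-Sn} can only be produced through the linear pieces $\omega\sigma(x)$ and $\omega\sigma(1-x)$, and these cost extra.

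Concretely, take the hat $u(x)=\frac12\min\{x,1-x\}\in\operatorname{BV}^2_0([0,1])$, for which $\|u'\|_{\operatorname{TV}[0,1]}=1$.
\begin{itemize}
\item Its optimal interpolant is $-\frac12\sigma(x-\frac12)-\frac12\sigma(\frac12-x)+\frac14$, of cost $1$. The constant $\frac14$ is exactly what $\mathcal F$ cannot supply for free.
\item Checking the atoms listed above, every $f\in\mathcal F$ satisfies $|f(\frac12)|\le\frac16$, with equality for $f=\frac13x-\frac23\sigma(x-\frac12)$.
\item Since $g\mapsto g(\frac12)$ is linear and $C^0$-continuous, every $g\in\overline{\mathbf{CH}_s(\mathcal F)}^{C^0}$ satisfies $|g(\frac12)|\le s/6$.
\item Then $u(\frac12)=\frac14$ forces $s\ge\frac32$, and $u=\frac32 f$ shows the value is exactly $\frac32$.
\end{itemize}
So no pairing can give $s_N\le\sum_i|k_i|+o(1)$; the stated inequality itself fails for this $\mathcal F$.

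The paper's proof reaches the same point and passes over it. After Step 1 it sets $\bar u_N:=\tilde u_N-C_N+\frac1N\sigma(x+NC_N)$ and asserts that $\bar u_N\in\operatorname{span}\mathcal F$ with weight sum at most $\min S_N+\frac1N$. That is only valid if cost is measured ReLU by ReLU, i.e.\ for a dictionary of single ReLUs with cost $|wa|$, no endpoint constraint, and essentially free constants. In fact $\bar u_N(1)=\frac1N\neq 0$, so $\bar u_N$ is not even in $\operatorname{span}\mathcal F$.

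For that relaxed dictionary your Step 2 is immediate, and your Steps 1 and 3 give a complete proof. For the literal $\mathcal F$, any correct bound must carry endpoint contributions, consistent with the boundary terms appearing in Proposition~\ref{thm:1D_relu}.
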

\begin{proof}
    From \eqref{eq:min-Sn}, the assumption that $u \in \operatorname{BV}^2_0([0,1])$ and~\Cref{lem:limit-second-difference}, we have
\begin{equation}
    \label{eq:limit_S_N}
    \lim_{N\to\infty} S_N(w,v) = \|u^\prime\|_{\operatorname{TV}[0,1]} + \operatorname{dist}\left(- u^\prime(0), \left[\int_{[0,1]}\min\{u^{\prime\prime}(x), 0\}dx, \int_{[0,1]}\max\{u^{\prime\prime}(x), 0\}dx\right]\right).
\end{equation}

Note that:
\begin{equation}
    u^\prime(0)- u^\prime(1) = \int_{[0,1]}\min\{u^{\prime\prime}(x), 0\}dx + \int_{[0,1]}\max\{u^{\prime\prime}(x), 0\}dx,
\end{equation}
the right hand side of equation~\eqref{eq:limit_S_N} can be rewritten in a symmetric form:
\begin{equation}
    \|u^\prime\|_{\operatorname{TV}[0,1]} + \frac{1}{2} \max\{|u^\prime(0)+u^\prime(1)| - \|u^\prime\|_{\operatorname{TV}[0,1]}, 0\}.
\end{equation}
According to the following lemma, we have that
\begin{equation}
    \lim_{N\to\infty} S_N(w,v) = \|u^\prime\|_{\operatorname{TV}[0,1]}.
\end{equation}
\begin{lemma}
For $u\in \operatorname{BV}^2_0([0,1])$, we have $|u^\prime(0)+u^\prime(1)| \le \|u^\prime\|_{\operatorname{TV}[0,1]}$.
\end{lemma}
\begin{proof}[Proof of the lemma]
Since $u\in \operatorname{BV}_0^2([0,1])$, we have $u'\in \operatorname{BV}([0,1])\subset L^1(0,1)$ and $u$ is absolutely continuous with
\begin{equation}
u(1)-u(0)=\int_0^1 u'(s)\,ds = 0 .
\end{equation}
If $u'(1)=0$, then
\begin{equation}
|u'(0)+u'(1)| = |u'(0)-u'(1)| \le \|u'\|_{\operatorname{TV}[0,1]} .
\end{equation}
Assume now $u'(1)>0$ (the case $u'(1)<0$ is analogous). From $\int_0^1 u'=0$ it follows that $u'$ cannot be nonnegative a.e.
unless $u'=0$ a.e.; hence there exists $t\in(0,1)$ such that $u'(t)$ exists and $u'(t)\le 0$. Then $u'(t)\,u'(1)\le 0$, so
\begin{equation}
|u'(t)+u'(1)| = \big||u'(1)|-|u'(t)|\big| \le |u'(1)|+|u'(t)| = |u'(1)-u'(t)|.
\end{equation}
Therefore,
\begin{equation}
\begin{aligned}
|u'(0)+u'(1)|
&\le |u'(0)-u'(t)| + |u'(t)+u'(1)| \\
&\le |u'(0)-u'(t)| + |u'(1)-u'(t)| \\
&\le \|u'\|_{\operatorname{TV}[0,1]},
\end{aligned}
\end{equation}
where the last inequality follows from the definition of total variation by taking the partition $\{0,t,1\}$:
$\|u'\|_{\operatorname{TV}[0,1]}\ge |u'(t)-u'(0)|+|u'(1)-u'(t)|$.
\end{proof}

Here, $u^\prime(0)$ and $u^\prime(1)$ are the one-sided limits of $u^\prime$ at the endpoints, which is well-defined since $u\in \operatorname{BV}_0^2([0,1])$.

For each fixed finite $N$, we define a function:
\begin{equation}
    \bar u_N: = \tilde u_N - C_N + \frac{1}{N}\sigma(x+N C_N).
\end{equation}
where $C_N$ is the value of the constant term in the previous proof. It is then clear that $\bar u_N$ is in $\operatorname{Span} \mathcal F$. It follows that
\begin{equation}
    |\bar u_N-\tilde u_N|_{C([0,1])}\le \frac{1}{N}.
\end{equation}
Also, we have that the sum of weights in $\bar u_N$ is no more than $\min S_N(w,v)+\frac{1}{N}$. Notice that $\tilde u_N$ is a piecewise linear interpolant of $u$. 
% Consider a subsequence $N_k = 2^k$ for $k=1,2,\cdots$. For any $k<l$, we have $u_{N_l} - u_{N_k}$ is a piecewise linear function which is zero at the breakpoints $\{0, \frac{1}{N_k}, \cdots, 1\}$.
% The maximum value of $|u_{N_l} - u_{N_k}|$ is attained at one of the breakpoints, and thus
Since $u \in \operatorname{BV}^2_0([0,1])$, when $N\to \infty$ we obtain $\bar u_N\to u$.
This implies
\begin{equation}
    \label{eq:optimal_cost}
    \begin{aligned}
            \operatorname{inf}\{ s \mid u\in \overline{\mathbf{CH}_s(\mathcal F)}\}^{C^0}&\le \lim_{N\to \infty} \left(\min S_N(w,v)+\frac1 N\right)\\
            &= \|u^\prime\|_{\operatorname{TV}[0,1]} + \frac{1}{2} \max\{|u^\prime(0)+u^\prime(1)| - \|u^\prime\|_{\operatorname{TV}[0,1]}, 0\}\\
            & = \|u^\prime\|_{\operatorname{TV}[0,1]}.
    \end{aligned}
\end{equation}
The last equality uses the following fact: that

\end{proof}

Now, we provide the details for Step 2, i.e., we show the cost function (defined as the right-hand side of \eqref{eq:optimal_cost}) is also the lower bound.

\begin{proposition}
    \label{prop:interpolation_lower_bound}
    For $ u \in \operatorname{BV}^2_0([0,1])$, the following identity holds
    \begin{equation}
        \inf \{ s \mid u \in \overline{\mathbf{CH}_s(\mathcal F)}^{C^0}\}
         \ge \|{u^\prime}\|_{\operatorname{TV}[0,1]}
    \end{equation}
\end{proposition}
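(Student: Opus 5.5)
The plan is a lower semicontinuity argument. First we show that every element of $\mathbf{CH}_s(\mathcal F)$ has derivative of total variation at most $s$. Then we pass this bound to $C^0$-limits through the discrete second differences $k_i$ of \Cref{prop:min-Sn}, and conclude with \Cref{lem:limit-second-difference}. Concretely, suppose $u\in \overline{\mathbf{CH}_s(\mathcal F)}^{C^0}$. It suffices to prove $\|u'\|_{\operatorname{TV}[0,1]}\le s$, since the infimum in the statement is then at least $\|u'\|_{\operatorname{TV}[0,1]}$.

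\textbf{Step A (single atoms).} Take $f(x)=\sum_{i=1}^2 w_i\operatorname{ReLU}(a_ix+b_i)\in\mathcal F$. The function $f$ is piecewise linear, and the derivative of $\operatorname{ReLU}(a_ix+b_i)$ jumps by exactly $|a_i|$ at the kink $-b_i/a_i$ when that kink lies in $(0,1)$, with no jump otherwise. Hence
\[
\|f'\|_{\operatorname{TV}[0,1]}\le \sum_{i=1}^2 |w_i|\,|a_i|\le 1 .
\]
For $v=\sum_j\alpha_jf_j$ with $\sum_j|\alpha_j|\le s$, the triangle inequality for $\operatorname{TV}$ gives $\|v'\|_{\operatorname{TV}[0,1]}\le s$. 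Symmetry of $\mathcal F$ means the sign convention on $\alpha_j$ does not matter.

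\textbf{Step B (discrete second differences).} For any $v\in \operatorname{BV}^2([0,1])$ and grid $x_i=i/N$, we have
\[
N\big(v(x_{i+1})-2v(x_i)+v(x_{i-1})\big)=\int_0^1 h_i\,dv',
\]
where $h_i$ is the hat function at $x_i$. Here $0\le h_i\le1$, and the hats $h_1,\dots,h_{N-1}$ form a partition of unity on $[x_1,x_{N-1}]$ and sum to at most $1$ everywhere. Summing over $i$ gives
\[
\sum_{i=1}^{N-1}|k_i(v)|\le \int_0^1\sum_i h_i\,d|v'|\le \|v'\|_{\operatorname{TV}[0,1]}.
\]
This is where the one-sided limit conventions at the endpoints need a line of care: the hats vanish at $0$ and $1$, so boundary atoms of $dv'$ do not contribute.

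\textbf{Step C (limit).} Choose $v_n\in\mathbf{CH}_s(\mathcal F)$ with $v_n\to u$ uniformly. For fixed $N$, each $k_i(v_n)$ is a finite combination of point values, so $k_i(v_n)\to k_i(u)$. Steps A and B then give $\sum_{i=1}^{N-1}|k_i(u)|\le s$ for every $N$. Letting $N\to\infty$ and invoking \Cref{lem:limit-second-difference} (valid since $u\in\operatorname{BV}^2$) yields $\|u'\|_{\operatorname{TV}[0,1]}\le s$. Taking the infimum over admissible $s$ proves the proposition.

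This is equivalent to the paper's contradiction scheme via \Cref{prop:min-Sn}, but bypasses the bias and constant-term bookkeeping. I expect the main obstacle to be Step A's accounting of kink contributions, in particular kinks sitting exactly at $0$ or $1$ and the constraint $f(0)=f(1)=0$. The point is that only the weighted slope jumps $|w_ia_i|$ inside $(0,1)$ enter the total variation, and these are bounded by the constraint in \eqref{eq:1d_relu_family}.
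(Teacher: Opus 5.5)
Your argument is correct, and it follows a genuinely different route from the paper's. The paper argues by contradiction. It takes $g_k\to u$ with weight sums at most $\|u'\|_{\operatorname{TV}[0,1]}-\varepsilon$, snaps every breakpoint to the grid $\{i/k\}$, and applies the exact minimal-cost formula of \Cref{prop:min-Sn} to the perturbed data $u(i/k)+\varepsilon_i$. Because the grid size is tied to the approximation index, the errors $\varepsilon_i=O(1/k)$ enter the second differences multiplied by $k$ and are not negligible. The paper therefore has to symmetrize the boundary term and absorb these errors through a discrete integration by parts against a fixed smooth test function $\phi$ with $\phi(0)=\phi(1)=a$.

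You decouple the two limits instead: at fixed $N$ you send $v_n\to u$, so the point-value errors simply vanish, and only afterwards you let $N\to\infty$. You also replace the exact $\ell^1$-interpolation formula by the elementary bound $\sum_i|k_i(v)|\le\|v'\|_{\operatorname{TV}[0,1]}$ together with the atom-wise estimate $\|f'\|_{\operatorname{TV}[0,1]}\le\sum_i|w_ia_i|\le1$. From \Cref{lem:limit-second-difference} you need only the $\liminf$ half.

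Your version is shorter and shows the lower bound is plain lower semicontinuity of $u\mapsto\|u'\|_{\operatorname{TV}[0,1]}$ under uniform convergence. It therefore carries over unchanged to, for example, ReLU layers of any width with the same weight constraint. The paper's version keeps a single discrete picture shared with the upper bound, in which the nonnegative boundary term $\operatorname{dist}(\cdot,[K_-,K_+])$ is visibly harmless; that term is irrelevant for the lower bound alone.

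One small point to record: invoking \Cref{lem:limit-second-difference} for $u$ uses that $u$ is absolutely continuous, which the paper's literal definition of $\operatorname{BV}^2$ does not state. Here it is automatic. Each $f\in\mathcal F$ has $f'$ of mean zero (as $f(0)=f(1)=0$) and variation at most one, so $f$ is $1$-Lipschitz. Hence $u$ is a uniform limit of $s$-Lipschitz functions and is itself $s$-Lipschitz.
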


\begin{proof}
% \lt{working/}
% \noindent\textbf{Step 1: optimal cost for $N$-grids interpolation}

% Since $h\in W^{2,1}([0,1])$, we have that

% Now we need to show that $\operatorname{Cost}(u)$ is also a lower bound for $\inf \{ s ~|~ u \in \overline{\mathbf{CH}_{s}(\mathcal F) \circ \psi}\}.$.

Suppose the opposite holds, then there exists a sequence
\begin{equation}
    g_k(x) = \sum_{i=1}^{N_k} \alpha_i\sigma(x-b_i)+\sum_{j=1}^{M_k} \beta_j\sigma(x-c_j), k=1,2, \cdots,
\end{equation}
such that
\begin{equation}
    \lim_{k\to \infty} g_k=u, \text{ and } \lim_{k\to \infty} \left(\sum_{i=1}^{N_k}|\alpha_i|+\sum_{j=1}^{M_k}|\beta_j|\right) = \|{u^\prime}\|_{\operatorname{TV}[0,1]}-\varepsilon,
\end{equation}
for some $\varepsilon >0$. We can assume that for each $k$, $\|g_k-u\|_{C([0,1])}\le \frac 1 k$ by choosing a proper subsequence, and $\left(\sum_{i=1}^{N_k}|\alpha_i|+\sum_{j=1}^{M_k}|\beta_j|\right) < \|{u^\prime}\|_{\operatorname{TV}[0,1]}.$

The key is to adjust the node points $b_i, c_i$ to some equidistributed points. To see this, we introduce the following modifications: For each $b_i\in [0,1]$, let $\tilde b_i$ be one of $\{0, \frac 1 k, \cdots, \frac{k-1}{k}, 1\}$ that is closest to $b_i$, for $i = 1,2,\cdots, N_k$.
Similarly, we define $\tilde c_i$.
For all $b_i<0$ and $c_j>1$, we rewrite $\alpha_i\sigma(x-b_i) = \alpha_i\sigma(x) - \alpha_i b_i$ and $\beta_j\sigma(c_j-x) = \beta_j\sigma(1-x) - \beta_j (c_i-1)$ for $x\in [0,1]$.
Subsequently, we define
\begin{equation}
    \begin{aligned}
            \tilde g_k(x) & := \sum_{i:b_i\in [0,1]} \alpha_i\sigma(x-\tilde b_i)+\sum_{j:c_j\in [0,1]} \beta_j\sigma(x-\tilde c_j)\\ &+\sum_{i:b_i<0} (\alpha_i\sigma(x) - \alpha_i b_i)  +\sum_{j:c_j>1} (\beta_j\sigma(1-x) - \beta_j (c_i-1))\\
            &= \sum_{i=0}^{k} \left(w_i \sigma(x-\frac{i}{k})+ v_i\sigma(\frac i k-x)\right) +C_k,
    \end{aligned}
\end{equation}
where $w_i$ and $v_i$ are the weights after merging, and $C_k$ is the constant term after merging.
% Notice that $\tilde g_k$ is linear in all the intervals $[\frac i k, \frac{i+1}{k}]$ for all $i=0,1 \cdots k-1$ and $\tilde g_k$ has a weight sum no more than that of $g_k$, i.e.
It then holds that
\begin{equation}
    \label{eq:weight_sum_upper_bound}
    \sum_{i=1}^k (|w_i|+|v_i|)\le \sum_{i=1}^{N_k}|\alpha_i|+\sum_{j=1}^{M_k}|\beta_j| \le \|{u^\prime}\|_{\operatorname{TV}[0,1]}-\varepsilon.
\end{equation}

Also, for sufficiently large $k$ we have:
\begin{equation}
    \|\tilde g_k-g_k\|_{C([0,1])}\le \frac{\|{u^\prime}\|_{\operatorname{TV}[0,1]}}{2} \cdot \frac{1}{k},
\end{equation}
thus
\begin{equation}
    \|\tilde g_k-u\|_{C([0,1])}\le \frac{1}{k}(\|{u^\prime}\|_{\operatorname{TV}[0,1]}+ 1).
\end{equation}

Now we fix $k$, and specify the error as
$\varepsilon_i = \tilde g_k(\frac i k)-u(\frac i k)$,
for $ i=0,1,\cdots, k,$ with  $|\varepsilon_j|\le  \|\tilde g_k-u\|_{C(K)}$.
Now we pivot at $\tilde g_k$, and the coefficients solve the following linear system:
\begin{equation}
    \label{eq:interpolation_perturbed}
    \frac{1}{N}(\sum_{i=j+1}^N iv_i+\sum_{i=0}^{j-1} (j-i)w_i)=u(\frac j N) -C_k+ \varepsilon_i, \qquad j=0,1, \cdots,N.
\end{equation}
In the following, we denote $u_i=u(\frac i k)$ for simplicity, and define:
\begin{equation}
    \begin{aligned}
            &\Delta^2 u_i: = u_{i+1}-2u_i+u_{i-1}, \quad i=1, \cdots, k-1, \quad \Delta^2 u_0: = u_1-u_0, \quad \Delta^2 u_k: = u_k-u_{k-1}.\\
            &\Delta^2 \varepsilon_i: = \varepsilon_{i+1}-2\varepsilon_i+\varepsilon_{i-1}, \quad i=1, \cdots, k-1, \quad\Delta^2 \varepsilon_0: = \varepsilon_1-\varepsilon_0, \quad \Delta^2 \varepsilon_k: = \varepsilon_k-\varepsilon_{k-1}.
    \end{aligned}
\end{equation}
Thanks to~\eqref{eq:interpolation_perturbed} and \Cref{prop:min-Sn}, we have:
\begin{equation}
    \label{eq:weight_sum_estimation_perturbed}
    \begin{aligned}
            \sum_{i=0}^{k} |w_i|+|v_i| \ge  k\sum_{i=1}^{k-1} |\Delta^2 u_i+\Delta^2 \varepsilon_i| + \operatorname{dist}\left( - k(\Delta^2 u_0+\Delta^2 \varepsilon_0),[\widetilde K_-, \widetilde K_+] \right)
    \end{aligned}
\end{equation}
where
\begin{equation}
    \widetilde K_+ = k \sum_{i=1}^{k-1} \max\{\Delta^2 u_i+\Delta^2 \varepsilon_i, 0\}, \qquad \widetilde K_- = k\sum_{i=1}^{k-1} \min\{\Delta^2 u_i+\Delta^2 \varepsilon_i, 0\}.
\end{equation}
The term $u_k$ and $\varepsilon_k$ do not appear explicitly in the terms of \eqref{eq:weight_sum_estimation_perturbed}. Nevertheless, we can perform a symmetrization for this discrete counterpart.
It follows from $k(\Delta^2 u_0+\Delta^2 \varepsilon_0)+\widetilde K_-+\widetilde K_{+} = k(\Delta^2 u_k+\Delta^2\varepsilon)$ that:
\begin{equation}
    \begin{aligned}
        &\operatorname{dist}\left( - k(\Delta^2 u_0+\Delta^2 \varepsilon_0),[\widetilde K_-, \widetilde K_+] \right) \\
        =&\operatorname{dist}\left( - k\left(\Delta^2 u_0+\Delta^2 \varepsilon_0\right)-\frac{\widetilde K_-+\widetilde K_+}{2},\left[\frac{\widetilde K_--\widetilde K_+}{2}, \frac{\widetilde K_+-\widetilde K_-}{2}\right] \right)\\
        = & \operatorname{dist}\left( -k\frac{\Delta^2 u_0+\Delta^2 \varepsilon_0+\Delta^2 u_k+\Delta^2\varepsilon_k}{2},\left[\frac{\widetilde K_--\widetilde K_+}{2}, \frac{\widetilde K_+-\widetilde K_-}{2}\right] \right)\\
        =& \frac{k}{2} \max\left\{|\Delta^2 u_0+\Delta^2 \varepsilon_0 + \Delta^2 u_k+\Delta^2 \varepsilon_k|-\sum_{i=1}^{k-1} |\Delta^2 u_i+\Delta^2 \varepsilon_i|, 0\right\}
    \end{aligned}
\end{equation}
Therefore, ~\eqref{eq:weight_sum_estimation_perturbed} can be rewritten in a symmetric form:
\begin{equation}
\label{eq:weight_sum_estimation_perturbed_symmetric}
 \sum_{i=0}^{k} (|w_i|+|v_i|)\ge \sum_{i=1}^{k-1} k|\Delta^2 u_i+\Delta^2 \varepsilon_i| + \frac{k}{2} \max\left\{|\Delta^2 u_0+\Delta^2 \varepsilon_0 + \Delta^2 u_k+\Delta^2 \varepsilon_k|-\sum_{i=1}^{k-1} |\Delta^2 u_i+\Delta^2 \varepsilon_i|, 0\right\},
\end{equation}

The rest of the proof is aimed at reducing the error term $\varepsilon_i$ in the right-hand side. In particular, we prove that for any $\delta > 0$
and sufficiently large $k$ we have

\begin{equation}
    \sum_{i=0}^k \left(|w_i|+|v_i|\right)\ge k\sum_{i=1}^{k-1} |\Delta^2 u_i|+ \frac 1 2 k\max \left\{ \left( |\Delta^2 u_0 + \Delta^2 u_k|-\sum_{i=1}^{k-1} |\Delta^2 u_i|\right), 0 \right\}- \delta.
\end{equation}

Now we start from \eqref{eq:weight_sum_estimation_perturbed_symmetric}. The first step is to relax $|x|$ and $\max\{x, 0\}$, namely,
\begin{equation}
    \label{eq:dual_representations}
    |x| = \max_{\phi\in [-1,1]} \phi x, \quad \max\{x, 0\} = \max_{\theta\in [0,1]} \theta x.
\end{equation}
We then have the following estimation:
\begin{equation}
    \begin{aligned}
        \text{RHS of }
        \eqref{eq:weight_sum_estimation_perturbed_symmetric}&\ge k\sum_{i=1}^{k-1} |\Delta^2 u_i+\Delta^2 \varepsilon_i|+ \frac \theta 2 k\left( |\Delta^2 u_0+\Delta^2 \varepsilon_0 + \Delta^2 u_k+\Delta^2 \varepsilon_k|-\sum_{i=1}^{k-1} |\Delta^2 u_i+\Delta^2 \varepsilon_i|\right)\\
        &=k\sum_{i=1}^{k-1} (1-\frac \theta 2) |\Delta^2 u_i+\Delta^2 \varepsilon_i| + \frac \theta 2 k|\Delta^2 u_0+\Delta^2 \varepsilon_0 + \Delta^2 u_k+\Delta^2 \varepsilon_k|\\
        & \ge k(1-\frac{\theta}{2})\sum_{i=1}^{k-1}  \phi_i(\Delta^2 u_i+\Delta^2 \varepsilon_i) + \frac \theta 2 k|\Delta^2 u_0+\Delta^2 \varepsilon_0 + \Delta^2 u_k+\Delta^2 \varepsilon_k|,
    \end{aligned}
\end{equation}
for all $\theta\in [0,1]$ and $\phi_i\in [-1,1]$ for $i=1, \cdots, k-1$.
For arbitrarily defined $\phi_0$ and $\phi_k$, the following discrete integration by parts hold:
\begin{equation}
    \sum_{i=1}^{k-1}\phi_i\Delta^2 \varepsilon_i =\sum_{i=1}^{k-1} \varepsilon_i \Delta^2\phi_i + \phi_{k-1} \varepsilon_k+\phi_1 \varepsilon_0-\varepsilon_{k-1}\phi_k-\varepsilon_1\phi_0,
\end{equation}
where $\Delta^2 \phi_i = \phi_{i+1}-2\phi_i+\phi_{i-1}$ is the discrete central difference. We then have the estimation:
\begin{equation}
    \begin{aligned}
            \sum_{i=1}^{k-1}  \phi_i(\Delta^2 u_i+\Delta^2 \varepsilon_i) &= \sum_{i=1}^{k-1} \phi_i\Delta^2 u_i+\sum_{i=1}^{k-1} \varepsilon_i \Delta^2\phi_i + \phi_{k-1} \varepsilon_k+\phi_1 \varepsilon_0-\varepsilon_{k-1}\phi_k-\varepsilon_1\phi_0\\
            &\ge \underbrace{\sum_{i=1}^{k-1} \phi_i\Delta^2 u_i- \frac{C}{k}\sum_{i=1}^{k-1} |\Delta^2\phi_i|}_{A} + \phi_{k-1} \varepsilon_k+\phi_1 \varepsilon_0-\varepsilon_{k-1}\phi_k-\varepsilon_1\phi_0.
    \end{aligned}
\end{equation}
Here, $C := \|{u^\prime}\|_{\operatorname{TV}[0,1]} + 1$ is independent with $k$.
Therefore, it follows that the right-hand side of \eqref{eq:weight_sum_estimation_perturbed_symmetric} is not less than
\begin{equation}
    \begin{aligned}
k(1 - \frac{\theta}{2}) A + \underbrace{k(1-\frac{\theta}{2}) \Big(\phi_{k-1} \varepsilon_k+\phi_1 \varepsilon_0-\varepsilon_{k-1}\phi_k-\varepsilon_1\phi_0 \Big) + \frac \theta 2 k|\Delta^2 u_0+\Delta^2 \varepsilon_0 + \Delta^2 u_k+\Delta^2 \varepsilon_k|.}_B
    \end{aligned}
\end{equation}

For a fixed $\delta > 0$, and $a = \operatorname{sign}(\Delta^2 \varepsilon_0+\Delta^2\varepsilon_1)$,  there exists a function $\phi \in C^2([0,1]),$ such that
\begin{enumerate}
        \item[(a)] $|\phi|_{C([0,1])}\le 1$;
        \item[(b)] $\phi(0) = \phi(1)=a$;
        \item[(c)] $\int_{[0,1]} \phi(x) u^{\prime\prime}(x)dx: = \int_{[0,1]}\phi d(Du^\prime) > \|u^\prime\|_{\operatorname{TV}([0,1])}-\delta.$
    \end{enumerate}

Fix $\delta$ and $a$, for each $k$, we choose $\phi_i = \phi(\frac i k)$ for $i=0,1,\cdots, k$. For the term $A$, we have
\begin{equation}
    \label{eq:main_term_estimate}
\begin{aligned}
    &\liminf_{k\to \infty}  k\sum_{i=1}^{k-1} \phi_i\Delta^2 u_i- C\sum_{i=1}^{k-1} |\Delta^2\phi_i| \\
    &=\liminf_{k\to \infty} \int_{[0,1]} \phi(x) u^{\prime\prime}(x)dx - \frac{C}{k}\int_{[0,1]} |\phi^{\prime\prime}(x)|dx\\
    & \ge \|u^\prime\|_{\operatorname{TV}([0,1])} -\delta.
\end{aligned}
\end{equation}
For the term $B$, we have
\begin{equation}
    \label{eq:remaining_term}
    \begin{aligned}
            &k(1-\frac \theta 2)(\phi_{k-1} \varepsilon_k+\phi_1 \varepsilon_0-\varepsilon_{k-1}\phi_k-\varepsilon_1\phi_0)+k\frac{\theta}{2}(|\Delta^2 u_0+\Delta^2 \varepsilon_0 + \Delta^2 u_k+\Delta^2 \varepsilon_k|)\\
            = & k(1-\frac \theta 2)\bigl(|\Delta^2 \varepsilon_0+\Delta^2\varepsilon_k|+ (a-\phi_{k-1})\varepsilon_k+(\phi_1-a)\varepsilon_0 \bigr)+k\frac{\theta}{2}(|\Delta^2 u_0+\Delta^2 \varepsilon_0 + \Delta^2 u_k+\Delta^2 \varepsilon_k|)\\
            \ge  & k(1-\frac \theta 2)|\Delta^2 \varepsilon_0+\Delta^2\varepsilon_k|+k\frac{\theta}{2}(|\Delta^2 u_0+ \Delta^2 u_k+\Delta^2 \varepsilon_0 +\Delta^2 \varepsilon_k|) - C(|\phi_{k-1}-a|+|\phi_1-a|)\\
            \ge & k\frac{\theta}{2}(|\Delta^2 \varepsilon_0+\Delta^2\varepsilon_k|+|\Delta^2 u_0+ \Delta^2 u_k+\Delta^2 \varepsilon_0 +\Delta^2 \varepsilon_k|)- C(|\phi_{k-1}-a|+|\phi_1-a|)\\
            \ge & k\frac \theta 2 |\Delta^2 u_0+\Delta^2 u_k|-C(|\phi_{k-1}-a|+|\phi_1-a|).
    \end{aligned}
\end{equation}
As $k\to \infty$, $\phi_{k-1}\to a$ and $\phi_1\to a$. Therefore, for sufficiently large $k$, we have that:
\begin{equation}
B \ge k\frac \theta 2 |\Delta^2 u_0+\Delta^2 u_k|-\delta.
\end{equation}
Combining all results above, we have shown that for sufficiently large $k$,
\begin{equation}
    \sum_{i=0}^k \left(|w_i|+|v_i|\right)\ge k\sum_{i=1}^{k-1} |\Delta^2 u_i|+ \frac \theta 2 k\left( |\Delta^2 u_0 + \Delta^2 u_k|-\sum_{i=1}^{k-1} |\Delta^2 u_i|\right)-\delta
\end{equation}
for all $\theta\in [0,1]$. Recall the dual form of $\max\{x, 0\}$ in~\eqref{eq:dual_representations},
we have shown that:
\begin{equation}
    \sum_{i=0}^k \left(|w_i|+|v_i|\right)\ge \|{u^\prime}\|_{\operatorname{TV}[0,1]} -3\delta.
\end{equation}
Since $\delta$ can arbitrarily small,  we then have a contradiction to~\eqref{eq:weight_sum_upper_bound}.
\end{proof}

Therefore, we show
    \begin{equation}
        \inf \{ s \mid u \in \overline{\mathbf{CH}_s(\mathcal F)}^{\,C^0}\}
        =\|{u^\prime}\|_{\operatorname{TV}[0,1]}
    \end{equation}
for all $u\in \operatorname{BV}^2_0([0,1])$. 
% \ql{I feel that this proof is way too long and no one can follow this.
% Is there a way to write it in several steps where each part has a sufficiently
% short proof. Alternatively, provide a sketch of the proof first?}

\subsubsection{Proof of~\Cref{thm:global_distance_1d_relu}}
Now, we can provide the proof of~\Cref{thm:global_distance_1d_relu}. In the proof, we will use some of the geometric concepts that will be introduced later in~\Cref{sec:main_results_general}.
\begin{proof}[Proof of~\Cref{thm:global_distance_1d_relu}]
    We first show that
    \begin{equation}
            \mathcal M \subseteq \{\psi\in \operatorname{BV}^2([0,1]) \mid \psi(0)=0, \psi(1) =1,  \psi^\prime \ge c > 0 \text{ a.e.}\text{ for some } c\}.
    \end{equation}
Notice that all the functions in $\mathcal F$ are uniformly Lipschitz
and uniformly bounded in the $\operatorname{BV}^2$ norm. By a Grownwall's inequality estimate, flows in $\mathcal A_{\mathcal F}$ are also uniformly bounded in the $\operatorname{BV}^2$ norm. Therefore, for any $\psi\in \mathcal M$ that is the limit of flows in $\mathcal A_{\mathcal F}(T)$ for some $T<\infty$, we have deduced that $\psi \in \operatorname{BV}^2([0,1])$ by the Helly selection principle.
Therefore, we have shown that $\mathcal M$ is a subset of $\operatorname{BV}^2([0,1])$.

We then consider a map:
\begin{equation}
    \label{eq:map_alpha}
    \alpha:\mathcal M\to \operatorname{BV}([0,1]): \psi\mapsto \log \psi^\prime.
\end{equation}

    The local norm on $\mathcal M$ then induces a local norm, i.e. the pushforward, on $\operatorname{Im}(\alpha)\subset \operatorname{BV}([0,1])$. For given $\xi = \alpha(\psi)\in \operatorname{Im}(\alpha)$ and $h\in \operatorname{BV}([0,1])$, this norm is given by:
\begin{equation}
           \|h\|_\xi^{\operatorname{Im}(\alpha)} = \|(d \alpha^{-1}(\xi))h\|_{\psi} = \|\int_0^x e^{g(t)}h(t)dt\|_{\psi}\\ = \|\int_0^x f^\prime(t) h(t)dt\|_{\psi}\\  = \|h\|_{\operatorname{TV}[0,1]}
\end{equation}
With this induced metric, $\alpha$ gives an isometric embedding from $\mathcal M$ onto $\operatorname{Im}(\alpha)\subset \operatorname{BV}([0,1])$. Notice that under this embedding, the metric is independent of $\xi\in \mathcal N$, which means the metric is "flat" in $\mathcal N$.

Then, it is easy to check that the modified straight line:
\begin{equation}
        \gamma_t(x)= (1-t)\xi_1(x)+t\xi_2(x) - \log \int_0^1 e^{(1-t)\xi_1(x)+t\xi_2(x)} dx,
\end{equation}
is a path with minimal length connecting $\xi_1=\alpha(\psi_1)$ and $\xi_2=\alpha(\psi_2)$ in $\mathcal N.$

In the original space $\mathcal M$, this path is given by:
\begin{equation}
    \label{eq:geodesic_1d}
    \tilde \gamma_t(x)  = \frac{\int_0^x(\psi_1^\prime(x))^{1-t}(\psi_2^\prime(x))^t dx}{\int_0^1 (\psi_1^\prime(x))^{1-t}(\psi_2^\prime(x))^t dx}.
\end{equation}

Integrating the local norm along $\gamma_t$, we achieve the distance:
\begin{equation}
    \begin{aligned}
            d_{\mathcal N}(\xi_1, \xi_2) &= \int_0^1 \|\xi_2^\prime - \xi_1^\prime\|_{\gamma_t} dt = \int_0^1 \|\xi_2^\prime - \xi_1^\prime\|_{\operatorname{TV}[0,1]} dt\\
            &= \|\xi_2 - \xi_1\|_{\operatorname{TV}[0,1]}.
    \end{aligned}
\end{equation}
Going back to $\mathcal M$, we have
\begin{equation}
    d_{\mathcal M}(\psi_1, \psi_2) = \|\log \psi_2^\prime - \log \psi_1^\prime\|_{\operatorname{TV}[0,1]}.
\end{equation}
As a corollary, this result shows that for any $\psi_1, \psi_2\in \{\psi\in \operatorname{BV}^2([0,1]) \mid \psi(0)=0, \psi_1 =1,  \psi^\prime \ge c > 0 \text{ a.e.}\text{ for some } c\}$, $d_{\mathcal F}(\psi_1, \psi_2) < \infty$. Therefore, we have shown that
\begin{equation}
    \mathcal M = \{\psi\in \operatorname{BV}^2([0,1]) \mid \psi(0)=0, \psi_1 =1,  \psi^\prime \ge c > 0 \text{ a.e.}\text{ for some } c\}.
\end{equation}
This completes the proof.
\end{proof}

%% file: general_formulations.tex
\section{The rate of approximation by flows: the general case}

We now generalize the approach motivated in the ReLU example to
analyze the rate of approximation by flows driven by general
control families $\mathcal F$ in $d$ dimensions.
Before we introduce the formulation of the general concepts,
let us first summarize the approach in the previous section.
We considered the class $\mathcal M$ of maps that can be approximated, starting from the identity,
by flows generated by $\mathcal F$ in finite time. This class is has a compositional structure rather than a linear one.
We quantified the approximation complexity by the minimal reachable time
$\mathbf C_{\mathcal F}(\psi)$ by extending it to a distance
$d_{\mathcal F}$ on $\mathcal M$, the minimal time to transport one map to another via flows induced by $\mathcal F$.
To connect this global quantity to $\mathcal F$, we examined the infinitesimal behavior of
$d_{\mathcal F}$ and obtained a local norm $\|\cdot\|_{\cdot}$ which characterizes the
 complexity of transporting a function to its nearby functions.
This norm captures the “shallow” approximation capability of $\mathcal F$,
while the global transport cost emerges by integrating this norm along a curve.

As what we will develop in the following,
the correspondence between global distance and local infinitesimal cost is naturally expressed
in a sub-Finsler framework on $\operatorname{Diff}(M)$, the group of diffeomorphisms on $M$:
% \ql{is Diff M introduced somewhere?}
$\mathcal F$ induces a horizontal distribution (the admissible directions) and a fiberwise norm on it;
curves that follow the distribution are horizontal, and their lengths are the time-integrals of this norm.
The resulting geodesic distance (the length of shortest horizontal paths)
turns out to coincide with the minimal reachable time $d_{\mathcal F}$.
Intuitively, directions outside the distribution have infinite local norm,
so only horizontal motion is allowed.

In what follows, we formalize this viewpoint in a Banach sub-Finsler setting on $\operatorname{Diff}(M)$.
\Cref{sec:banach_finsler_geometry} introduces the basic objects (horizontal distribution, local norm, horizontal curves and length),
\Cref{sec:main_results_general} defines the associated geodesic distance and relates it to flow approximation,
and \Cref{subsec:proof_thm_51} proves the main theorem that identifies $d_{\mathcal F}$ with this sub-Finsler geodesic distance
and gives the variational characterization of the local norm.
This yields a practical recipe for estimating the complexity of approximating maps by flows:
estimate the local norm (linked to shallow approximations by $\mathcal F$),
design horizontal paths, and integrate the norm to bound $d_{\mathcal F}$ and $\mathbf C_{\mathcal F}$.
In some cases, one can also design optimal horizontal paths that completely characterizes
$d_{\mathcal F}$, and hence $\mathbf C_{\mathcal F}$.

\subsection{Preliminaries on Banach sub-Finsler geometry}
\label{sec:banach_finsler_geometry}
Since there are sometimes differing conventions for infinite-dimensional manifolds
and Finsler geometry, we concretely describe our adopted settings.

Banach manifold generalizes the notion of manifold to infinite dimensions.
Locally, an $n$-dimensional manifold looks like an open set in $\mathbb R^n$,
while a Banach manifold looks likes an open set in a Banach space $(X, \|\cdot\|)$.
We adopt the definition of Banach manifolds as follows~\cite{deimling2013nonlinear}:
\begin{definition}[Banach Manifold]
    \label{def:banach_manifold}
Let $\mathcal M$ be a topological space.
We call $\mathcal M$ a $C^r$ Banach (where $r$ is a positive integer or $\infty$)
% \ql{what is $\mathbb N^*$?}
manifold modeled on a Banach space $(X, \|\cdot\|)$ with codimension $0\le k<\infty$,
if there exists a collection of \emph{charts} $(U_i, \beta_i)$, $i \in I$, such that:

\begin{enumerate}
    \item $U_i\subset \mathcal M$ is open and $\mathcal M=\bigcup_{i \in I} U_i$;

    \item Each $\beta_i$ is a homeomorphism from $U_i$ onto a subspace $X_i\subset X$ with codimension $k$;

    \item The crossover map
    \begin{equation}
        \beta_j \circ \beta_i^{-1}  : \beta_i(U_i \cap U_j) \subset X_i \to \beta_j(U_i \cap U_j)\subset X_j
    \end{equation}
    is a $C^r$ function for every $i,j \in I$, in the sense of Fr\'echet derivatives.
\end{enumerate}
The collection of charts $(U_i, \beta_i)$ is called an \emph{atlas} of $\mathcal M$.
\end{definition}
\begin{example}
The simplest example of a Banach manifold is an open subset $\Omega$ of $X$. In this case, it is an $X$-manifold with codimension $0$, where the atlas contains only one chart $(\Omega, \operatorname{Id})$.
\end{example}

\begin{example}
  \label{ex:diff1_banach_manifold}
Another important example is the space of diffeomorphisms on a compact manifold $M$.
Let $M\subset\mathbb R^d$ be a compact $C^\infty$ manifold, and fix a $C^\infty$ Riemannian metric on $M$ with exponential map $\exp_x:T_xM\to M$.
Define
\begin{equation}
\operatorname{Diff}^1(M)
:=\{\psi:M\to M:\ \psi \text{ is $C^1$ and bijective, and }\psi^{-1}\in C^1(M,M)\}.
\end{equation}
We now describe a canonical family of local charts on $\operatorname{Diff}^1(M)$ using short geodesics. 

Fix $\psi\in \operatorname{Diff}^1(M)$. Consider the Banach space 
\begin{equation}
  \operatorname{Vec}^1(M):= \{u\in C^1(M, \mathbb R^d)\mid u(x)\in T_xM, \text{ for all } x\in M\}
\end{equation}
to be the set of $C^1$ vector fields on $M$ equipped with the norm in $C^1(M,\mathbb R^d)$.
For $\varepsilon>0$ small enough, define

\begin{equation}\label{eq:chart-diff1-vf}
\beta_\psi:\ B_{\operatorname{Vec}^1(M)}(0,\varepsilon)\ \longrightarrow\ C^1(M,M),\qquad
\beta_\psi(u)(x):=\exp_{\psi(x)}\big(u(\psi(x))\big).
\end{equation}

For $\varepsilon$ sufficiently small, $\beta_\psi(u)$ is a $C^1$ diffeomorphism, and its image
$U_\psi:=\beta_\psi(B_{\operatorname{Vec}^1(M)}(0,\varepsilon))$ is an open neighborhood of $\psi$ in
$\operatorname{Diff}^1(M)$(\cite{wittmann2019banach}).
Moreover, for $\phi\in U_\psi$ the inverse chart is given pointwise by
$w(x):=\exp_{\psi(x)}^{-1}(\phi(x))$, and the corresponding vector field is $v:=w\circ\psi^{-1}$.
In words, for each $x\in M$, we start at the point $\psi(x)$ and move along the unique geodesic with initial velocity $u(x)$ (which is small), and declare the endpoint to be $\beta_\psi(u)(x)$.

The collection of charts $\{(U_\psi,\beta_\psi^{-1})\}_{\psi\in\operatorname{Diff}^1(M)}$ forms an atlas, thereby endowing $\operatorname{Diff}^1(M)$ with the structure of a Banach manifold modeled on $C^1$ maps (codimension $0$). Transition maps $\beta_{\psi_2}^{-1}\circ \beta_{\psi_1}$ are $C^1$ in the Fr\'echet sense (actually $C^\infty$ according to~\cite{wittmann2019banach}), since they are obtained by composing the smooth finite-dimensional maps $(p,v)\mapsto \exp_p(v)$ and $(p,q)\mapsto \exp_p^{-1}(q)$ pointwise.

\end{example}

Similar to finite-dimensional manifolds, we can now define the tangent space at a point $x\in \mathcal M$.

% \lt{In general, a compact subset of $X$ can be also regarded as a Banach manifold, which resembles the sphere in finite dimension. How to formulate this? This seems related to our case.}

% \cjp{In~\cite{deimling2013nonlinear}, the author requires each chart is a homeomorphism on to a subspace $X_i\subset X$, where each $X_i$ has the same co-dimension $k$ (not sure if need to be finite). In this sense, they call $\mathcal M$ an $X$-manifold with codimension $k$. This formulation should be sufficient for the case where $\mathcal M$ is defined via the level set of some functional on $X$.}

\begin{definition}[Tangent Spaces of Banach Manifolds]
Let $\mathcal M$ be a $C^1$ Banach manifold modeled on $X$ and $x_0\in \mathcal M$. Define
\begin{equation}
    W_{x_0}:=\{\gamma \in C^1( E, \mathcal M) \mid E \text{ is an open interval containing } 0, \gamma(0)= x_0  \}
\end{equation}
be the set of all $C^1$ curves passing through $x_0$.  Let
\begin{equation}
    K_{x_0}:=\{\phi\in C^1(N(x_0),\mathbb R)\mid N(x_0) \subset \mathcal M \text{ is a neighborhood of } x_0, \varphi(x_0)=0\}
\end{equation}
be the set of all smooth functions vanishing at $x_0$. Define an equivalent relation on $W_{x_0}$ by $\gamma_1 \sim \gamma_2$ iff $(\phi\gamma_1)^\prime (0) = (\phi\gamma_2)^\prime (0)$ for all $\phi \in K_{x_0}$. An equivalence class $[\gamma]$ of this relationship is called a tangent vector at $x_0$. The set of such tangent vectors
\begin{equation}
    T_{x_0}\mathcal M:=\{[\gamma]\mid \gamma\in W_{x_0}\}
\end{equation}
forms a linear space, which is called the tangent space at $x_0$.
\end{definition}

Note that for finite-dimensional manifold $\mathcal M$ with dimension $n$,
the tangent space is isomorphic to $\mathbb R^n$. A corresponding result holds for Banach manifolds.

\begin{proposition}
    \label{prop:tangent_space}
    Let $\mathcal M$ be an $X$-Banach manifold with co-dimension $k$.
    For any $x_0\in \mathcal M$, the tangent space $T_{x_0}\mathcal M$ is isomorphic as a linear space  to a subspace $Y\subset X$ with $ \operatorname{codim} Y= k$ Specifically, the map
    \begin{equation}
        \Phi_{x_0}: T_{x_0}\mathcal M \to Y\subset X, \quad [\gamma] \mapsto (\beta\circ \gamma)^\prime(0),
    \end{equation}
is a linear bijection, where $(U,\beta)$ is a chart such that $x_0\in U$,
and $Y$ is the subspace corresponding to $\beta$ with codimension $k$.
\end{proposition}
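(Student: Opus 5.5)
The plan is to prove the statement in three stages: show that $\Phi_{x_0}$ is well defined, show that it is linear and injective, and show that its image is all of $Y$. The idea is to transport everything to the model space through the chart $(U,\beta)$ and use the equivalence relation on curves to identify tangent vectors with their chart velocities.

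\textbf{Well-definedness.} For $\gamma\in W_{x_0}$, shrink the domain $E$ so that $\gamma(E)\subset U$. Then $\beta\circ\gamma$ is a $C^1$ curve in the affine piece $X_0=\beta(U)\subset X$, which is a subspace of codimension $k$. Its derivative at $0$ therefore lies in $Y$, the linear subspace modelled by $X_0$; when $X_0$ is itself a closed linear subspace, $Y=X_0$.

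Next I check that the derivative depends only on the class $[\gamma]$. Suppose $\gamma_1\sim\gamma_2$ but $(\beta\circ\gamma_1)'(0)\neq(\beta\circ\gamma_2)'(0)$. By Hahn--Banach there is a continuous functional $\ell\in X^*$ separating the two velocities. The function $\phi:=\ell\circ\beta-\ell(\beta(x_0))$ lies in $K_{x_0}$, and $(\phi\circ\gamma_i)'(0)=\ell\big((\beta\circ\gamma_i)'(0)\big)$ by the chain rule. These two values differ, contradicting $\gamma_1\sim\gamma_2$. The same computation gives the converse: equal chart velocities imply $(\phi\circ\gamma_1)'(0)=(\phi\circ\gamma_2)'(0)$ for every $C^1$ function $\phi$, since $\phi\circ\gamma_i=(\phi\circ\beta^{-1})\circ(\beta\circ\gamma_i)$ and $\phi\circ\beta^{-1}$ is $C^1$ on $X_0$. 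This converse is exactly injectivity of $\Phi_{x_0}$.

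\textbf{Surjectivity.} Given $v\in Y$, set $\gamma_v(t)=\beta^{-1}(\beta(x_0)+tv)$ for $|t|$ small. This is well defined because $X_0$ is open in its affine hull and $\beta(x_0)+tv$ stays in $X_0$ for small $|t|$. The curve $\gamma_v$ is $C^1$ and satisfies $\Phi_{x_0}([\gamma_v])=v$.

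\textbf{Linearity.} The linear structure on $T_{x_0}\mathcal M$ is the one induced through a chart: $a[\gamma_1]+b[\gamma_2]:=[\gamma_{a v_1+b v_2}]$ with $v_i=\Phi_{x_0}([\gamma_i])$. So $\Phi_{x_0}$ is linear by construction. It remains to check that this structure does not depend on the chart. If $(U',\beta')$ is another chart at $x_0$, the chain rule gives $(\beta'\circ\gamma)'(0)=D(\beta'\circ\beta^{-1})(\beta(x_0))\,(\beta\circ\gamma)'(0)$. The transition derivative is a linear isomorphism between the two model subspaces, because the transition map is a $C^1$ diffeomorphism with $C^1$ inverse, so the two induced vector-space structures agree.

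\textbf{Main obstacle.} The delicate step is the setting where $X_0$ is a codimension-$k$ subset rather than an open set of $X$. There I must make precise that $Y$ is the tangent (direction) subspace of $X_0$, that Fréchet derivatives are taken within it, and that Hahn--Banach is applied in $X$ with the functional then restricted to $Y$. I would handle this by fixing $X_0$ as an open subset of a closed affine subspace $z+Y$, which is how the charts are intended in Definition~\ref{def:banach_manifold}.
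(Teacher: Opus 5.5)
Your proposal is correct and follows the paper's proof: injectivity comes from the chain rule through $\phi\circ\beta^{-1}$, and surjectivity from the straight-line curve $\beta^{-1}(\beta(x_0)+tv)$. You also supply two things the paper dismisses as straightforward, namely the Hahn--Banach argument for well-definedness (via the test functions $\ell\circ\beta-\ell(\beta(x_0))$) and the chart-independence of the linear structure, and you correctly take $v\in Y$ where the paper writes $x\in X$.
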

\begin{proof}
    It is straightforward to see  $\Phi_{x_0}$ is well-defined and linear by the definition of tangent space. We then directly check the following:
    \begin{itemize}
        \item $\operatorname{Ker} \Phi_{x_0}=\{0\}$:
        If
        \begin{equation}
            \Phi_{x_0}([\gamma])=(\beta\circ \gamma)^\prime(0)=0,
        \end{equation}
        then for any $\phi\in K_{x_0}$, we have:
        \begin{equation}
            (\phi\circ \gamma)^\prime(0) = [(\phi\circ \beta^{-1})\circ (\beta\circ \gamma)]^\prime(0) = (\phi\circ \beta^{-1})^\prime (\beta\circ\gamma(0))\cdot (\beta\circ \gamma)^\prime(0)=0.
        \end{equation}
        By the definition of tangent vectors, we have $[\gamma]=0$.
        \item $\operatorname{Im} \Phi_{x_0} = Y$: For any $x\in X$, consider the path $\gamma(t):= \beta^{-1}(\beta(x_0)+tx)$, which is a smooth curve passing through $x_0$ with $\gamma^\prime(0)=x$. It then follows that
        \begin{equation}
            \Phi_{x_0}([\gamma]) = (\beta\circ \gamma)^\prime(0) = (\beta\circ \beta^{-1}(\beta(x_0)+tx))^\prime(0) = x.
        \end{equation}
    \end{itemize}
\end{proof}

Our goal is to model the diffeomorphisms over $M \subset \mathbb R^d$ as a Banach manifold, and study the ``travelling'' on this manifold via flows generated by a control family $\mathcal F$. Starting from the identity map, as we increase the time-horizon, we will be able to reach an increasingly complex set of diffeomorphisms.
The allowed directions of this travel are of course determined by $\mathcal F$.
In particular, we motivated earlier that only the directions $u$ for which the local norm~\eqref{eq:local_norm_relu}
is finite are allowed.
In the most general case, the allowed directions at each point does not fill the whole tangent space.
Rather, it is a suitably defined subspace which we call a \emph{distribution},
following the language in sub-Riemannian geometry.
We now formally introduce this and associated notions.
Here, as in the finite-dimensional case, the union of all tangent spaces
\begin{equation}
    T\mathcal M:=\bigcup_{x\in \mathcal M} T_x \mathcal M
\end{equation}
is called the tangent bundle of $\mathcal M$.

\begin{definition}[$C^r$ submersion between Banach manifolds]
\label{def:banach_submersion}
Let $\mathcal N,\mathcal M$ be $C^r$ Banach manifolds. A $C^r$ map $F:\mathcal N\to\mathcal M$ is a \emph{$C^r$ submersion at $p\in \mathcal N$} if the differential
\begin{equation}
\mathrm dF_p:T_p\mathcal N\to T_{F(p)}\mathcal M
\end{equation}
is a split surjection, i.e.\ surjective and admits a bounded right inverse (equivalently, $\ker \mathrm dF_p$ is a complemented closed subspace of $T_p\mathcal N$). If this holds for all $p\in \mathcal N$, we call $F$ a $C^r$ submersion.
\end{definition}

The definition of vector bundles and distributions can be generalized to Banach manifolds as follows.
Here, we follow the definitions in~\cite{arguillere2020sub}.

\begin{definition}[Banach vector bundle]
A $C^r$ Banach vector bundle over $\mathcal M$ is a triple $(\mathcal E,\pi,F)$ where:
% \ql{Again, should we use mathcal font for all Banach manifolds?}
\begin{itemize}
\item $\mathcal E$ and $\mathcal M$ are $C^r$ Banach manifolds, and $\pi:\mathcal E\to\mathcal M$ is a $C^r$ surjective submersion in the sense of Definition~\ref{def:banach_submersion};
\item Each fiber $\mathcal E_x:=\pi^{-1}(x)$ is a Banach space linearly isomorphic to a fixed model Banach space $F$;
\item There exists an open cover $\{U_i\}_{i\in \alpha}$ of $\mathcal M$ and $C^r$ bundle charts (local trivializations)
\begin{equation}
\tau_i:\ \pi^{-1}(U_i)\xrightarrow{\ \cong\ } U_i\times F
\end{equation}
that are fiberwise linear and such that the \emph{transition maps}
\begin{equation}
g_{ij}:U_i\cap U_j\to \operatorname{GL}(F),\qquad
\tau_i\circ\tau_j^{-1}(x,v)=(x,\,g_{ij}(x)\,v),
\end{equation}
are $C^r$ (for the operator-norm topology) and satisfy the cocycle relations
$g_{ii}=\operatorname{Id}_F$, $g_{ij}=g_{ji}^{-1}$, $g_{ik}=g_{ij}g_{jk}$ on triple overlaps.
\end{itemize}
In any trivialization, $\pi$ has the local form $(x,v)\mapsto x$, and $\mathrm d\pi_{(x,v)}(\xi,\eta)=\xi$.
\end{definition}

Classically, a distribution is specified by a subbundle $\mathcal D\subset T\mathcal M$ via the inclusion map.
In Banach manifold setting, this concept can also be generalized. Here, we follow the notion of \emph{relative tangent space} or \emph{anchored bundle}~\cite{arguillere2020sub}, which covers the subbudle case in a more general way.

\begin{definition}[Relative tangent space / Anchored bundle]\label{def:relative_tangent_space}
Let $\mathcal M$ be a smooth Banach manifold. A \emph{relative tangent space} on $\mathcal M$ is a triple $(\mathcal E,\pi,\rho)$ where
\begin{itemize}
    \item $\pi:\mathcal E\to \mathcal M$ is a smooth Banach vector bundle morphism with typical fiber a Banach space (denoted again by $F$);
    \item $\rho:\mathcal E\to T\mathcal M$ is a smooth vector-bundle morphism (the \emph{anchor}).
\end{itemize}
The associated horizontal distribution is $\mathcal D:=\rho(\mathcal E)\subset T\mathcal M$.
\end{definition}

Intuitively, an anchored bundle $(\mathcal E,\pi,\rho)$ over $\mathcal M$ assigns to each point $x\in \mathcal M$ a subspace of the tangent space $T_x\mathcal M$ through the anchor map $\rho$, which varies smoothly with $x$. In the context of our minimal reachable time problem, the distribution $\mathcal D$ is determined by the control family $\mathcal F$: it represents the set of all locally admissible velocities that can be generated by $\mathcal F$ at each point.

We equip each fiber $\mathcal E_x$ with a norm that depends continuously on $x$ induces a fiberwise norm on the image
$\mathcal D:=\rho(\mathcal E)\subset T\mathcal M$.
This generalizes the Riemannian/sub-Riemannian case (inner products) to a sub-Finsler setting (arbitrary norms).
If $\rho_x$ is injective, we simply transport the norm to $\mathcal D_x$; otherwise we take the minimal-norm preimage.

\begin{definition}[sub-Finsler structure]
Let $(\mathcal E,\pi,\rho)$ be an anchored $C^r$ Banach vector bundle over $\mathcal M$.
Assume $\mathcal E$ is endowed with a \emph{fiberwise norm field}, i.e.\ for each $x\in\mathcal M$ a Banach norm
$\|\cdot\|_{x}$ on the fiber $\mathcal E_x$, depending continuously on $x$ (equivalently, the map
$(x,v)\mapsto\|v\|_{x}$ is continuous on $\mathcal E$).
The induced sub-Finsler structure on the anchored distribution $\mathcal D:=\rho(\mathcal E)\subset T\mathcal M$ is the family of fiberwise norms:
\begin{equation}
\|\xi\|_{\mathcal D,x}\ :=\ \inf\big\{\ \|v\|_{x}\ :\ v\in \mathcal E_x,\ \rho_x v=\xi\ \big\},\qquad \xi\in T_x\mathcal M,
\end{equation}
with the convention $\|\xi\|_{\mathcal D,x}=+\infty$ if $\xi\notin\mathcal D_x$.
If each $\rho_x$ is injective, then $\|\xi\|_{\mathcal D,x}=\|\rho_x^{-1}\xi\|_{x}$ for $\xi\in\mathcal D_x$.
\end{definition}

With an anchored normed bundle in place, a curve is horizontal if its velocity lies in $\mathcal D$ almost everywhere,
and its length is the time integral of the local norm.
Minimizing length over horizontal curves yields the geodesic distance, exactly as in finite-dimensional sub-Riemannian geometry, now in the Banach setting.

\begin{definition}[Horizontal curves, length and distance.]
A curve $\gamma:[0,T]\to\mathcal M$ is absolutely continuous if it is absolutely continuous in local charts; then $\dot\gamma(t)$ exists for a.e.\ $t$ and lies in $T_{\gamma(t)}\mathcal M$. We call $\gamma$ horizontal if
\begin{equation}
\dot\gamma(t)\in\mathcal D_{\gamma(t)}\quad\text{for a.e.\ }t\in[0,T],
\end{equation}
equivalently, there exists a measurable \emph{control} $v(t)\in E_{\gamma(t)}$ with
\begin{equation}
\dot\gamma(t)=\rho_{\gamma(t)}\,v(t)\quad\text{a.e.}
\end{equation}
The \emph{length} of a horizontal curve is
\begin{equation}
L(\gamma)\ :=\ \int_0^T \|\dot\gamma(t)\|_{\mathcal D,\gamma(t)}\,dt\
\end{equation}
and $L(\gamma):=+\infty$ if $\gamma$ is not horizontal. Length is invariant under absolutely continuous reparametrizations.

The associated geodesic distance (Carnot--Carathéodory distance) on $\mathcal M$ is
\begin{equation}
d(x_0,x_1)\ :=\ \inf\{\,L(\gamma)\ :\ \gamma\text{ horizontal},\ \gamma(0)=x_0,\ \gamma(T)=x_1\,\}\in[0,+\infty].
\end{equation}
This is an extended metric on $\mathcal M$; when any two points can be joined by a horizontal curve of finite length, it is a genuine metric.
\end{definition}

The geodesic distance $d$ encodes precisely the same infinitesimal cost prescribed by the sub-Finsler structure: along admissible (horizontal) directions, the metric has a first-order expansion whose slope equals the local fiber norm; along non-admissible directions, the local cost is infinite.
Formally, the metric derivative of $t\mapsto d(x,\gamma(t))$ at $t=0$ recovers the sub-Finsler norm at $x$.
This ensures the consistency between the global path-length distance and the local norm structure.

\begin{proposition}[Recovering the local sub-Finsler norm from the geodesic distance]
\label{prop:local_gauge_from_distance}
Let $(\mathcal E,\pi,\rho)$ be an anchored Banach vector bundle over a $C^1$ Banach manifold $\mathcal M$, endowed with a continuous fiberwise norm $\|\cdot\|_{x}$, and let $(\mathcal D,\|\cdot\|_{\mathcal D})$ and $d$ be the induced sub-Finsler structure and geodesic distance. Fix $x\in\mathcal M$ and $v\in T_x\mathcal M$.
\begin{enumerate}
\item If $v\in \mathcal D_x$, then for any $C^1$ \emph{horizontal} curve $\gamma$ with $\gamma(0)=x$ and $\gamma'(0)=v$,
\begin{equation}
\|v\|_{\mathcal D,x} \;=\; \lim_{t\to 0}\frac{d(x,\gamma(t))}{t}.
\end{equation}
In particular, the limit exists and is independent of the chosen horizontal $\gamma$ with $\gamma'(0)=v$.

\item If $v\notin \mathcal D_x$, then for any $C^1$ curve $\gamma$ with $\gamma(0)=x$ and $\gamma'(0)=v$,
\begin{equation}
\lim_{t\to 0^+}\frac{d(x,\gamma(t))}{t} \;=\; +\infty.
\end{equation}
\end{enumerate}
\end{proposition}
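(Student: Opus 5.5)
We work in a chart $(U,\beta)$ around $x$ and use the continuity of the fiberwise norm to compare nearby fibers. Both parts split into an upper bound, obtained from the curve $\gamma$ itself, and a lower bound, obtained from arbitrary competing horizontal curves.

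\textbf{Part (1), upper bound.} Let $\gamma$ be $C^1$ and horizontal with $\gamma(0)=x$ and $\gamma'(0)=v$. By definition of $d$,
\[
d(x,\gamma(t))\le \int_0^t \|\dot\gamma(s)\|_{\mathcal D,\gamma(s)}\,ds .
\]
Dividing by $t$ yields $\limsup_{t\to0} d(x,\gamma(t))/t \le \|v\|_{\mathcal D,x}$. This requires the upper semicontinuity of $s\mapsto \|\dot\gamma(s)\|_{\mathcal D,\gamma(s)}$ at $s=0$. We obtain it by choosing a measurable control $w(s)\in\mathcal E_{\gamma(s)}$ with $\rho\,w(s)=\dot\gamma(s)$ and $w(0)$ almost minimal, arranged to be continuous in a local trivialization. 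Concretely, pick $w_0$ with $\rho_x w_0=v$ and $\|w_0\|_x\le \|v\|_{\mathcal D,x}+\varepsilon$. Transport $w_0$ through the trivialization to $\tilde w_0(s)\in\mathcal E_{\gamma(s)}$, and correct the difference $\dot\gamma(s)-\rho\tilde w_0(s)$, which is horizontal and tends to $0$, by a preimage whose norm tends to $0$.

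This correction is delicate, since $\rho$ need not have closed range. To avoid it, we interpret the statement as quantifying over horizontal curves that admit a \emph{continuous} control $w$ with $w(0)$ near-optimal. If the intended reading is stronger, we would first try a reparametrisation and concatenation trick: replace $\gamma$ on $[0,t]$ by the horizontal curve generated by the constant control $\tilde w_0$ (an ODE on $\mathcal M$). Its endpoint differs from $\gamma(t)$ by $o(t)$ in the chart. We would then close this gap using a smallness estimate on $d$ for horizontal perturbations, which is again the delicate point.

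\textbf{Part (1), lower bound.} Let $\sigma$ be any horizontal curve from $x$ to $\gamma(t)$ of length at most $d(x,\gamma(t))+t^2$. We first localize. By continuity of the norm and of $\rho$ in the trivialization, there is $c>0$ such that short horizontal curves of length $\ell$ stay in a chart ball of radius $C\ell$; hence $\sigma$ remains near $x$ when $t$ is small. In the chart, $\beta(\gamma(t))-\beta(x)=\int \rho\, w_\sigma$. The key step is a dual-functional argument: by Hahn--Banach, choose a bounded functional $\lambda$ on the model space with $\lambda(\rho_x e)\le\|e\|_x$ for all $e\in\mathcal E_x$ and $\lambda(v)\ge \|v\|_{\mathcal D,x}-\varepsilon$. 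Such $\lambda$ exists when $\|v\|_{\mathcal D,x}$ equals the norm of $v$ in the quotient $\mathcal E_x/\ker\rho_x$ and this quotient norm is attained through bounded functionals on the image; this is the main obstacle, and we would assume lower semicontinuity of $\|\cdot\|_{\mathcal D,x}$ in the chart topology. By continuity, $\lambda(\rho_y e)\le(1+\varepsilon)\|e\|_y$ for $y$ near $x$. Then
\[
t\lambda(v)+o(t)=\lambda\bigl(\beta\gamma(t)-\beta x\bigr)\le(1+\varepsilon)L(\sigma),
\]
which gives $\liminf_{t\to0} d(x,\gamma(t))/t\ge\|v\|_{\mathcal D,x}-O(\varepsilon)$.

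\textbf{Part (2).} Here $v\notin\mathcal D_x$, so $\|v\|_{\mathcal D,x}=\infty$. The same dual argument applies: for every $K$ there is $\lambda$ with $\lambda(\rho_x e)\le\|e\|_x$ and $\lambda(v)\ge K$, which uses the same closedness assumption, now in the form that $v$ lies outside the closed unit balls of the image scaled by $K$. Repeating the estimate gives $\liminf_{t\to0} d(x,\gamma(t))/t\ge K/(1+\varepsilon)$ for every $K$, hence the limit is $+\infty$. The main obstacle throughout is this separation/closedness property of $\rho_x(\text{unit ball})$ and the localization of near-minimizers; everything else is first-order Taylor expansion in a chart.
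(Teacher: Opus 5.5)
Your upper bound is the paper's: bound $d$ by the length of $\gamma|_{[0,t]}$. Your lower bound and part (2), however, take a genuinely different route.

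The paper averages the controls of near-minimising curves, $\bar u_k=\frac1{t_k}\int_0^{t_k}u_k$, and shows $\rho_x\bar u_k\to v$ with $\|\bar u_k\|_x\lesssim d(x,\gamma(t_k))/t_k$. It then concludes $\liminf_k\|\bar u_k\|_x\ge\|v\|_{\mathcal D,x}$, resp.\ $v\in\rho_x(F)$. You instead take a supporting functional $\lambda$ with $\lambda\circ\rho_x\le\|\cdot\|_x$ and reduce everything to the scalar inequality $\lambda(\beta\gamma(t)-\beta x)\le(1+\varepsilon)L(\sigma)$.

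The two routes need exactly the same input. By Hahn--Banach (biconjugation of the sublinear function $\|\cdot\|_{\mathcal D,x}$), such $\lambda$ exist for every $\varepsilon$ (resp.\ every $K$) if and only if $\|\cdot\|_{\mathcal D,x}$ is lower semicontinuous at $v$ in the model-space topology. That is precisely what the paper's final limiting steps use without saying so. Your version handles (1) and (2) with one argument, avoids reparametrising and averaging controls, and makes the hypothesis visible.

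Both versions also need a locally uniform comparison of fiber norms: $\|e\|_x\le(1+\varepsilon)\|e\|_y$ and $\|e\|_F\le C\|e\|_y$ for all $e$ and all $y$ near $x$. Joint continuity of $(y,e)\mapsto\|e\|_y$ does not give this in infinite dimensions, so your ``by continuity'' step should list it as an assumption too.

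The two hypotheses you add are not artefacts of your method. The proposition as stated is false without them, so you should state them rather than try to remove them, and your fallback for the ``stronger reading'' cannot succeed. Here is a counterexample that satisfies every stated hypothesis.

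Take $\mathcal M=L^2(0,1)$ and $\mathcal E=\mathcal M\times C([0,1])$, with the sup norm on every fiber and $\rho_y$ the inclusion $C([0,1])\hookrightarrow L^2(0,1)$. Then $d(y,z)=\|z-y\|_\infty$ if $z-y$ is continuous, and $d(y,z)=+\infty$ otherwise.

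For (1), let $h_s$ be the tent of height $1$ and half-width $s$ centred at $1/2$, with $h_0=0$. The curve $\gamma(t)=x+\int_0^t h_s\,ds$ is $C^1$ in $L^2$, horizontal, and has $\gamma'(0)=0$. Yet $d(x,\gamma(t))=t$, so (1) fails. Since $\|\dot\gamma(s)\|_{\mathcal D,\gamma(s)}=1$ for $s>0$, this is exactly where the paper's step ``$=t\|v\|_{\mathcal D,x}+o(t)$'' breaks. Moreover, the discrepancy $\gamma(t)-x$ is $O(t^{3/2})$ in the chart but has $d$-size $t$, which rules out your perturbation-closing idea.

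For (2), take $\gamma(t)=x+\int_0^t\mathbf{1}_{[1/2-\tau,1]}\,d\tau$. It is $C^1$ with $\gamma'(0)=\mathbf{1}_{[1/2,1]}\notin\mathcal D_x$, but $d(x,\gamma(t))=t$, so (2) fails. The averaged controls are continuous ramps of sup norm $1$ converging in $L^2$ to a discontinuous function. So the paper's ``passing to the limit yields $v\in\rho_x(F)$'' is the false step, and $\|\cdot\|_{\mathcal D,x}$ is not lower semicontinuous at $v$.

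If you assume explicitly continuous near-optimal controls in (1), lower semicontinuity of $\|\cdot\|_{\mathcal D,x}$ at $v$, and the locally uniform norm comparison, your argument is a correct proof of the corrected statement.
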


\begin{proof}
We work in a local $C^1$ chart around $x$ so that $\mathcal M$ is identified with an open set of a Banach space $X$ and the anchored bundle $(\mathcal E,\pi,\rho)$ becomes trivial with a continuous field of norms $\|\cdot\|_y$ on the fiber $F$.
In these coordinates the sub-Finsler speed is
$\|\dot\gamma(t)\|_{\mathcal D,\gamma(t)}=\inf\{\|u(t)\|_{\gamma(t)}:\ \rho_{\gamma(t)}u(t)=\dot\gamma(t)\}$.
Local triviality and continuity imply uniform equivalence of the involved norms on a neighborhood of $x$.

\medskip\noindent
\textbf{(i) Horizontal $v\in\mathcal D_x$. }

\emph{Upper bound.}
Let $\gamma$ be horizontal, $C^1$, with $\gamma(0)=x$ and $\gamma'(0)=v$.
By definition of length and continuity of the fiber norms,
\begin{equation}
d(x,\gamma(t))\ \le\ L(\gamma|_{[0,t]})
=\int_0^t \|\dot\gamma(s)\|_{\mathcal D,\gamma(s)}\,ds
= t\,\|v\|_{\mathcal D,x}+o(t),
\end{equation}
hence $\displaystyle \limsup_{t\to 0^+}\frac{d(x,\gamma(t))}{t}\le \|v\|_{\mathcal D,x}$.

\smallskip\noindent
\emph{Lower bound.}
Fix a sequence $\{t_k\}_{k=1}^\infty \to 0^+$.
For each $k$, let $\sigma_k$ be a horizontal curve joining $x$ to $\gamma(t_k)$ with
$L(\sigma_k)\le d(x,\gamma(t_k))+\tfrac1k$.
Reparameterize each $\sigma_k$ on $[0,t_k]$ so that $\dot\sigma_k=\rho_{\sigma_k} u_k$ with $u_k\in L^1([0,t_k];F)$ and
$\int_0^{t_k}\|u_k(s)\|_{\sigma_k(s)}\,ds=L(\sigma_k)$.
Set the averaged controls
\begin{equation}
\bar u_k:=\frac1{t_k}\int_0^{t_k} u_k(s)\,ds\in F.
\end{equation}
By continuity of $\rho$ and the $C^1$ expansion of $\gamma$,
\begin{equation}
\frac{\gamma(t_k)-x}{t_k}
=\frac1{t_k}\int_0^{t_k}\dot\sigma_k(s)\,ds
=\frac1{t_k}\int_0^{t_k}\rho_{\sigma_k(s)}u_k(s)\,ds
=\rho_x \bar u_k + o(1)\quad (k\to\infty).
\end{equation}
Hence $\rho_x \bar u_k\to v$. By lower semicontinuity and the convexity of the fiber norm,
\begin{equation}
\liminf_{k\to\infty}\frac{d(x,\gamma(t_k))}{t_k}
\ \ge\ \liminf_{k\to\infty}\frac1{t_k}\int_0^{t_k}\|u_k(s)\|_{\sigma_k(s)}\,ds
\ \ge\ \liminf_{k\to\infty}\|\bar u_k\|_{x}
\ \ge\ \inf_{\rho_x u=v}\|u\|_{x}
\ =\ \|v\|_{\mathcal D,x}.
\end{equation}
Combining with the upper bound gives the limit and its value, independent of the chosen horizontal $\gamma$.

\medskip\noindent
\emph{(ii) Non-horizontal $v\notin\mathcal D_x$.}
Suppose by contradiction that $\liminf_{t\to 0^+} d(x,\gamma(t))/t<+\infty$ for some $C^1$ curve $\gamma$ with $\gamma(0)=x$, $\gamma'(0)=v$.
Then there exist $t_k\to 0^+$ and horizontal $\sigma_k$ from $x$ to $\gamma(t_k)$ with $L(\sigma_k)\le C\,t_k$.
Repeat the averaging argument above: $\rho_x \bar u_k\to v$ and $\|\bar u_k\|_{x}\le C$ along a subsequence.
Passing to the limit yields $v\in \rho_x(F)=\mathcal D_x$, a contradiction.
Hence $\lim_{t\to 0^+} d(x,\gamma(t))/t=+\infty$.

\medskip
This proves both statements.
\end{proof}

We have now introduced the Banach sub‑Finsler toolkit needed to study approximation complexity: Banach manifolds and charts, tangent bundles, anchored bundles (distributions), fiber‑wise norms (sub‑Finsler structures), horizontal curves, and the associated geodesic distance. This framework gives a geometric interpretation of the distance \(d_{\mathcal F}(\cdot,\cdot)\): it coincides with the geodesic distance induced by a right‑invariant sub‑Finsler structure on the Banach manifold of \(W^{1,\infty}\)-diffeomorphisms.

In the next subsection we make this relation precise by identifying the relevant Banach manifold, the anchored bundle and its fiber‑wise norm, and by stating and proving the main theorem that characterizes \(d_{\mathcal F}\) in terms of the sub‑Finsler geometry developed above.

\subsection{General characterization of approximation complexity via Finsler geometry}
\label{sec:main_results_general}
We now adopt the Banach Finsler geometry framework to study the approximation complexity for
a general control family $\mathcal F$.
Specifically, let $M \subset \mathbb R^d$ be a compact smooth manifold with or without boundary,
and $\mathcal F \subset \operatorname{Vec}(M)$ be a control family.

We assume that $\mathcal F$ is uniformly bounded in the $W^{1,\infty}$ norm, i.e.
\begin{equation}
  \sup_{f\in\mathcal F}\|f\|_{W^{1,\infty}(M,\mathbb R^d)}<\infty.
\end{equation}
Define the $s$-scaled convex hull of $\mathcal F$ as
\begin{equation}
    \label{eq:CH_def}
\mathbf{CH}_s(\mathcal F)
:= \Big\{\sum_{i=1}^N a_i f_i : f_i\in \mathcal F,\ \sum_{i=1}^N |a_i| \le s \Big\}.
\end{equation}
Then, we consider the extended norm on $\operatorname{Vec}(M)$ defined as:
\begin{equation}
\|v\|_{\mathcal F} := \inf\{s\ge 0 : v \in \overline{\mathbf{CH}_s(\mathcal F)}^{C^0}\},
\end{equation}
with the convention $\|v\|_{\mathcal F}=+\infty$ if $v\notin \overline{\operatorname{span}\mathcal F}^{C^0}$. 
The norm $\|\cdot\|_{\mathcal F}$ is naturally finite on $\operatorname{span}\mathcal F$. Here, the closure is taken in the $C^0(M,\mathbb R^d)$ topology.
 We denote by $X_{\mathcal F}$ the closure of $\operatorname{span}\mathcal F$ under the $\|\cdot\|_{\mathcal F}$ norm.
% on $\operatorname{span}\mathcal F$ the atomic norm
% \begin{equation}
% \|v\|_{\mathcal F}:=\inf\Big\{\sum_{i=1}^N |a_i|:\ v=\sum_{i=1}^N a_i f_i,\ f_i\in\mathcal F\Big\},
% \qquad X_{\mathcal F}:=\overline{\operatorname{span}\mathcal F}^{\ \|\cdot\|_{\mathcal F}}.
% \end{equation}
% By definition, $( X_{\mathcal F}, \|\cdot \|_{\mathcal F})$ is a Banach space. 
Since $\mathcal F$ is uniformly bounded in $W^{1,\infty}$, $\mathbf{CH}_s(\mathcal F)$ is also uniformly bounded in $W^{1,\infty}$ for each $s$. Since the $C^0$ closure of a uniformly $W^{1,\infty}$-bounded set is still uniformly $W^{1,\infty}$-bounded, we have $X_{\mathcal F} \subset W^{1,\infty}(M,\mathbb R^d)$.
% there exists a continuous embedding $\iota: X_{\mathcal F}\hookrightarrow W^{1,\infty}(M,\mathbb R^d)$ such that $\iota(f)=f$ for all $f\in\operatorname{span}\mathcal F$. In the following, we identify $X_{\mathcal F}$ with its image $\iota(X_{\mathcal F})\subset W^{1,\infty}(M,\mathbb R^d)$, and view $X_{\mathcal F}$ as a Banach space of vector fields on $M$.

We make the following definition on the regularity of $\mathcal F$ and the choice of the base Banach manifold $\mathcal M$ on which we place the sub-Finsler structure.

\begin{definition}[Compatible pair]\label{def:compatible-pair}
Let $\mathcal M\subset \operatorname{Diff}(M)$ and $\mathcal F\subset \operatorname{Vec}(M)$.
We say that $(\mathcal M,\mathcal F)$ is a \emph{compatible pair} if the following hold:
\begin{enumerate}
\item \textup{($C^1$ Banach manifold structure via exponential charts)}
$\mathcal M$ carries a $C^1$ Banach manifold structure modeled on a subspace $X\subset \operatorname{Vec}(M)$ whose local charts are given by
Riemannian exponential maps (as in \Cref{ex:diff1_banach_manifold}).

\item \textup{(right translation is $C^1$)}
For each $\psi\in\mathcal M$, the right-translation map
$R_\psi:\mathcal M\to \mathcal M$, $R_\psi(\eta)=\eta\circ\psi$,
is of class $C^1$.

\item \textup{(admissible velocities)}
For every $\psi\in\mathcal M$ and $f\in X_{\mathcal F}$, $f\circ\psi\in T_\psi\mathcal M$.

\item \textup{(Invariance of $\mathcal M$ under Carath\'eodory controls)} For any measurable $u(\cdot):[0,T]\to X_{\mathcal F}$, the ODE
\begin{equation}
\dot\gamma(t)=u(t)\circ\gamma(t),\qquad \gamma(0)=\operatorname{Id},
\end{equation}
admits a unique absolutely continuous solution on $[0,T]$, and $\gamma(t)\in\mathcal M$ for all $t\in[0,T]$.
\end{enumerate}
\end{definition}
The conditions in the above definition are quite natural and are satisfied in many settings of interest, including the one considered in~\cref{sec:1d_relu}. In particular, the following example illustrates a typical and general class of compatible pairs.

\begin{example}\label{rem:good-pair-examples}
% \item If $\mathcal M=\operatorname{Diff}^1(M)$ and $\mathcal F\subset \operatorname{Vec}^1(M)$ is uniformly bounded in $C^1$,
% then the conditions 1--3 follow from the standard Banach manifold structure on $\operatorname{Diff}^1(M)$ and the $C^1$ dependence of flows.

% Moreover, \textup{(4)} holds as well. Indeed, for any measurable $u(\cdot):[0,T]\to X_{\mathcal F}$ we have
% $\|u(t)\|_{C^1}\lesssim \|u(t)\|_{\mathcal F}$ and hence $u\in L^1([0,T];C^1)$.
% The Carath\'eodory ODE $\dot\gamma(t)=u(t)\circ\gamma(t)$ therefore has a unique absolutely continuous solution,
% and differentiating in space yields the variational equation for $D\gamma$ with Gr\"onwall bounds.
% In particular each $\gamma(t)$ is a $C^1$ diffeomorphism with $C^1$ inverse, hence $\gamma(t)\in\operatorname{Diff}^1(M)$ for all $t$.
A typical type of compatible pairs is the following.

Suppose $M\subset \mathbb R^d$ is the closure of a bounded open set, $\mathcal M=\operatorname{Diff}^{W^{1,\infty}}(M)$
consists of $W^{1,\infty}$ diffeomorphisms fixing the boundary, and $\mathcal F\subset W^{1,\infty}(M,\mathbb R^d)$ such that $f$ vanishes on the boundary for all $f\in\mathcal F$. Then $(\mathcal M,\mathcal F)$ is a compatible pair.

Indeed, \textup{(1)} holds with model space
\begin{equation}
X:=\{u\in W^{1,\infty}(M,\mathbb R^d):\ u|_{\partial M}=0\},
\end{equation}
and the local charts around $\psi\in\mathcal M$ can be chosen as the affine maps
\begin{equation}
\beta_\psi:\ B_X(0,\varepsilon)\to \mathcal M,\qquad \beta_\psi(u):=(\operatorname{Id}+u)\circ\psi,
\end{equation}
for $\varepsilon>0$ sufficiently small.
Since $u|_{\partial M}=0$ and $\psi$ fixes $\partial M$, we have $\beta_\psi(u)$ also fixes $\partial M$; moreover for $\varepsilon$ small,
$\operatorname{Id}+u$ is bi-Lipschitz on $M$, hence $\beta_\psi(u)\in \operatorname{Diff}^{W^{1,\infty}}(M)$.
In these charts, all transition maps are smooth (in fact affine), so $\mathcal M$ is a $C^1$ Banach manifold.

\textup{(2)} is immediate: right translation is composition on the right,
$R_\varphi(\eta)=\eta\circ\varphi$, and in the above charts it corresponds to the affine map
$u\mapsto u\circ\varphi$, which is $C^1$ as a map $X\to X$ (composition by a fixed $W^{1,\infty}$ diffeomorphism is bounded on $W^{1,\infty}$).

\textup{(3)} holds because $f|_{\partial M}=0$ for all $f\in\mathcal F$ implies $X_{\mathcal F}\subset X$,
and hence for any $\psi\in\mathcal M$ we have $X_{\mathcal F}\circ\psi\subset X\circ\psi=T_\psi\mathcal M$ under the above charts.

Finally, \textup{(4)} follows from standard well-posedness and stability results for Carath\'eodory ODEs with Lipschitz vector fields.
Indeed, for a measurable $u(\cdot):[0,T]\to X_{\mathcal F}$ we have $\|Du(t)\|_{L^\infty}\lesssim \|u(t)\|_{\mathcal F}$, hence
$t\mapsto \|Du(t)\|_{L^\infty}$ is integrable on $[0,T]$.
Existence and uniqueness of an absolutely continuous solution to $\dot\gamma(t)=u(t)\circ\gamma(t)$ then follow from the Carath\'eodory theory.
Moreover, Gr\"onwall's inequality gives a uniform bi-Lipschitz bound on $\gamma(t)$ (and on its inverse by time-reversal),
and the condition $u(t)|_{\partial M}=0$ implies that boundary points are stationary trajectories.
Therefore $\gamma(t)$ fixes $\partial M$ and belongs to $\operatorname{Diff}^{W^{1,\infty}}(M)=\mathcal M$ for all $t\in[0,T]$.

In fact, the 1D example considered in~\cref{sec:1d_relu} is a special case of this setting, where $M=[0,1]$ and $\mathcal F$ consists of ReLU-type controls vanishing at the boundary.
\end{example}

% \begin{enumerate}
% \item[\textbf{(I)}] \textbf{$C^1$ control families.}
% If $\mathcal F\subset \operatorname{Vec}^1(M)$, then flows generated by piecewise-constant controls in $\mathcal F$ are $C^1$ diffeomorphisms.
% In this case we take as base manifold
% \begin{equation}
%   \mathcal M:=\operatorname{Diff}^1(M),
% \end{equation}
% endowed with its standard Banach manifold structure as discussed in~\Cref{ex:diff1_banach_manifold}.

% \item[\textbf{(II)}] \textbf{$W^{1,\infty}$ control families.}
% To cover architectures such as ReLU-type controls, we allow $\mathcal F\subset \operatorname{Vec}(M)\cap W^{1,\infty}(M,\mathbb R^d)$.
% In this case, the natural ambient diffeomorphism group is
% \begin{equation}
%   \mathcal M:=\operatorname{Diff}^{W^{1,\infty}}(M),
% \end{equation}
% and we assume that $\operatorname{Diff}^{W^{1,\infty}}(M)$ admits a $C^1$ Banach manifold structure  induced by local charts defined via the exponential map as defined in~\Cref{ex:diff1_banach_manifold}.\footnote{In the examples considered in this paper (e.g.\ on $[0,1]$ and on $S^1$), this $C^1$ manifold structure can be verified directly in the $W^{1,\infty}$ topology.}
% \end{enumerate}

% \medskip

% In both settings, we will denote by $\mathcal M$ the chosen base Banach manifold, and all geometric objects (charts, tangent directions, horizontal distributions, lengths, and geodesic distances) will be defined on $\mathcal M$. We denote $X$ as the Banach space that $\mathcal M$ modeled on.
% When needed, we will explicitly distinguish between cases \textbf{(I)} and \textbf{(II)}.

We now consider a target space $\mathcal T$ consisting of maps that can be approximated by the flows driven with $\mathcal F$ within a finite time:
\begin{equation} \mathcal T := \{\psi \in \mathcal M \mid \mathbf{C}_{\mathcal F}(\psi) < \infty\}.
\end{equation}
% Based on the property of the control family, we have $\mathcal M\subset \operatorname{Diff}^{W^{1,\infty}}(M)$, which is a Banach manifold modeled on $X:=W^{1,\infty}(M,\mathbb R^d)$.
The complexity measure $\mathbf C_{\mathcal F}$(defined in~\eqref{eq:complexity_def}) can be extended to a distance function $d_{\mathcal F}$ on $\mathcal T$: \begin{equation}
  d_{\mathcal F}(\psi_1, \psi_2): = \inf\{T>0\mid \psi_1\in \overline{\mathcal A_{\mathcal F}(T)\circ \psi_2}^{C^0}\},
\end{equation}
In fact, this distance can be generalized to an extended metric on $\mathcal M$, by defining $d_{\mathcal F}(\psi_1, \psi_2):= +\infty$ if $\psi_1\notin \overline{\mathcal A_{\mathcal F}(T)\circ \psi_2}^{C^0}$ for all $T>0$.
With this extended distance, $\mathcal T$ can be viewed as the connected component of the identity in the extended metric space $(\mathcal M, d_{\mathcal F})$. We then show that $\mathcal M$ can be endowed with a sub-Finsler structure such that the associated geodesic distance coincides with $d_{\mathcal F}$.

Consider the anchored Banach bundle $(\mathcal E, \pi, \rho)$ over $\mathcal M$ defined as:
\begin{equation}
  \mathcal E:=\mathcal M\times  X_{\mathcal F},\qquad \pi(\psi,v):=\psi,
\quad
\rho(\psi,v):=v\circ \psi,
\text{ for all }(\psi,v)\in \mathcal E.
\end{equation}
Its image defines the horizontal distribution
\begin{equation}
\mathcal D_\psi:=\rho_\psi( X_{\mathcal F})= X_{\mathcal F}\circ\psi\ \subset\ T_\psi\mathcal M,
\end{equation}
endowed with the fiberwise norm
\begin{equation}
  \label{eq:sub-finsler-norm-def}
\|u\|_{\psi}:=\|u\circ\psi^{-1}\|_{\mathcal F},\qquad u\in\mathcal D_\psi,
\quad
\|u\|_{\psi}=+\infty\ \text{if }u\notin\mathcal D_\psi.
\end{equation}
This gives a
sub-Finsler structure $(\mathcal D,\|\cdot\|_{\mathcal D})$ on $\mathcal M$.

With these preparations, we can give detailed explanations of the main theorem stated in~\Cref{thm:main}.
% \begin{theorem}[Precise statement of ~\Cref{thm:main}]\label{thm:general_results}
If we assume $(\mathcal M, \mathcal F)$ is a compatible pair of a control family $\mathcal F$ and a base manifold $\mathcal M$ as defined in Definition~\ref{def:compatible-pair}.
Then, the following statements hold:
\begin{enumerate}
\item The maps $\|\cdot\|_{ \psi}: T_\psi\mathcal M\to[0,\infty]$ defined in~\eqref{eq:sub-finsler-norm-def} gives a sub-Finsler structure  on the distribution $\mathcal D$ over $\operatorname{Diff}^{1}(M)$. 
% Moreover, the local sub-Finsler norm admits the variational characterization
% \begin{equation}
% \|u\|_{\psi} = \inf\{\,s>0\mid u\in \overline{\mathbf{CH}_s(\mathcal F)}\circ \psi\,\},
% \end{equation}
% with the convention that $\|u\|_{\psi}=+\infty$ if $u\notin\mathcal D_\psi$.
\item  For any $\psi_1,\psi_2\in\mathcal M$,
\begin{equation}
  \label{eq:CC_dist}
d_{\mathcal F}(\psi_1,\psi_2)=\inf\Big\{\ \int_0^1 \|\dot\gamma(t)\|_{\gamma(t)}\,dt\ :\ \gamma\ \text{horizontal},\ \gamma(0)=\psi_1,\ \gamma(1)=\psi_2\ \Big\}.
\end{equation}
That is, the geodesic distance associated with the sub-Finsler structure $(\mathcal D,\|\cdot\|_{\mathcal D})$ coincides with $d_{\mathcal F}$ on $\mathcal M$.
% \item
% For any $\psi\in\mathcal M$ and $u\in T_\psi\operatorname{Diff}(M)$,
% \[
% \|u\|_{\mathcal D,\psi}
% =
% \begin{cases}
% \displaystyle \lim_{t\downarrow 0}\dfrac{d_{\operatorname{CC}}(\psi,\gamma(t))}{t},
% & \text{if }u\in\mathcal D_\psi \text{ and }\gamma \text{ is any horizontal }C^1\text{ curve with }\gamma(0)=\psi,\ \gamma'(0)=u,\\[2.2ex]
% +\infty, & \text{if }u\notin\mathcal D_\psi.
% \end{cases}
% \]
% Thus the infinitesimal gauge recovered from $d_{\operatorname{CC}}$ coincides with the prescribed sub-Finsler fiber norm on $\mathcal D$.

% \item[(iii)] \textbf{Variational characterization on $\mathcal M$.}
% For all $\psi_1,\psi_2\in\mathcal M$,
% \[
% d_{\mathcal F}(\psi_1,\psi_2)
% =d_{\operatorname{CC}}(\psi_1,\psi_2)
% =\inf\Big\{\ \int_0^1 \|\dot\gamma(t)\|_{\mathcal D,\gamma(t)}\,dt\ :\ \gamma \text{ absolutely continuous, horizontal},\ \gamma(0)=\psi_1,\ \gamma(1)=\psi_2\ \Big\}.
% \]
\end{enumerate}

% \end{theorem}

\begin{remark}
  This distance defined by the right hand side in~\eqref{eq:CC_dist} is typically called the Carnot-Carathéodory in the literature of sub-Riemannian geometry. We adopt the name "geodesic distance" for simplicity here. 

  Moreover, when  $\mathcal D_\psi = T_\psi\mathcal M$ for all $\psi\in \mathcal M$, i.e. $\mathcal F$ spans the whole tangent space, the map $\|\cdot\|_\psi$ gives a Finsler structure on $\mathcal M$. In this case, $\mathcal M$ becomes a Finsler manifold, and $d_{\mathcal F}$ becomes an actual geodesic distance.
\end{remark}
% \ql{Add a remark saying: 1) recall here strictly it is CC distance, but we call it geodesic. 2) Say in which case does this become a actual manifold and actual geodesic.}

This theorem offers a general geometric perspective to understand the approximation complexity of diffeomorhisms
induced by a given control family $\mathcal F$.
In particular, the local norm $\|\cdot\|_{\psi}$
characterizes the local complexity of transporting a function to its nearby functions, while the global distance $d_{\mathcal F}(\cdot, \cdot)$ quantifies the overall complexity of transporting one function to another via the shortest path integral of the local norm along curves connecting the two functions.

In general, a closed-form characterization of the distance $d_{\mathcal F}(\cdot, \cdot)$ on $\mathcal M$ may not be available. However, this theorem still provides a feasible approach to estimate it. Specifically, we can follow the steps below:
\begin{itemize}
    \item First, we estimate the local norm $\|\cdot\|_{\psi}$ at different $\psi$ in $\mathcal M$.
    This relates to the approximation complexity of of each shallow layer of neural networks,
    which has been widely studied in the literature~\cite{ma2022barron,siegel2020approximation,siegel2023characterization}.
    \item Next, we construct proper horizontal paths connecting the two target functions in $\mathcal M$. This step may require insights into the structure of $\mathcal M$ and the dynamics induced by $\mathcal F$.
    \item Finally, we compute or estimate the path integral of the local norm along these paths to obtain upper bounds on $d_{\mathcal F}(\cdot, \cdot)$.
\end{itemize}
From the deep learning perspective, the distance $d_{\mathcal F}$ measures the idealized depth of network to approximate a target function with layer-wise architecture induced by $\mathcal F$. Therefore, this theorem provides a general framework to identify what kind of target functions are more efficiently approximated by deep networks compared to shallow ones, and how this efficiency depends on the choice of the control family $\mathcal F$.  Roughly speaking, if the target function can be connected to the identity map via a path passing through functions with small local norms, then it can be efficiently approximated by deep networks.
For some control families, as we discuss later in~\Cref{sec:applications}, explicit characterisations or estimates can be obtained.

Let us now place this geometric framework in the broader context of approximation theory for neural networks.
Classical results for shallow networks focus on universal approximation and Jackson-type estimates for one-hidden-layer architectures, where approximation complexity is typically quantified in terms of Sobolev, Besov, or Barron-type norms of the target function~\cite{cybenko1989approximation, hornik1991approximation, barron2002universal, ma2022barron,siegel2023characterization,siegel2020approximation}.
In contrast, the theory of deep networks must account for the compositional structure.
One line of work studies discrete–depth architectures directly, deriving explicit approximation bounds for specific architectures via constructive methods~\cite{lu2021deep,shen2019deep,shen2022optimal,shen2021neural, yarotsky2017error,yarotsky2018optimal}.
A second line of work, closer to our approach, idealizes residual networks~\cite{he2016deep} as continuous–time systems and analyses the associated flow maps, viewing depth as a time horizon for an ODE or control system~\cite{weinan2017proposal,ruthotto2020deep,li2018maximum}.
Within this flow-based perspective, the universal approximation results for a very broad class of architectures have been established~\cite{agrachev2022control,li2022deep,cheng2025interpolation,ruiz2023neural,tabuada2022universal,cheng2025unified}. Approximation complexity estimates have also been studied for specific architectures~\cite{geshkovski2024measure,ruiz2023neural, li2022deep}, often by explicit constructions.
Although the passage from discrete layers to flows is an idealization, it preserves the key distinguishing feature of
deep models, where complexity generated by function composition, while
making available tools from dynamical systems and control theory.
Our work contributes to this flow-based perspective by identifying a geometric structure on the complexity class defined by the flow approximation problem, and showing that concrete estimates can be derived within this structure.
This geometry is intrinsically non-linear and differs from the linear space setting of classical approximation theory, and we believe it provides a reasonable way to understand the complexity induced by function compositions.

In the rest of this part, we present the proof of ~\Cref{thm:main}.
\label{subsec:proof_thm_51}

We first show that the fiberwise norm in \eqref{eq:sub-finsler-norm-def} defines a sub-Finsler structure
on the right-invariant distribution induced by $X_{\mathcal F}$.
Recall the anchored bundle
\begin{equation}
\mathcal E:=\mathcal M\times X_{\mathcal F}
\xrightarrow{\ \pi\ }\mathcal M,
\qquad
\pi(\psi,v)=\psi,
\qquad
\rho(\psi,v)=\iota(v)\circ\psi .
\end{equation}
By compatibility condition~\textup{(3)} in Definition~\ref{def:compatible-pair}, the image of $\rho$ defines a distribution
\begin{equation}
\mathcal D_\psi:=\rho_\psi(X_{\mathcal F})=X_{\mathcal F}\circ\psi
\subset T_\psi\mathcal M,\qquad \psi\in\mathcal M.
\end{equation}
We endow $\mathcal D$ with the fiberwise (extended) norm
\begin{equation}\label{eq:fiber-norm}
\|\xi\|_{\psi}:=
\begin{cases}
\|\xi\circ\psi^{-1}\|_{\mathcal F}, & \xi\in\mathcal D_\psi,\\
+\infty, & \xi\notin\mathcal D_\psi.
\end{cases}
\end{equation}
Equivalently, if $\xi=v\circ\psi$ for some $v\in X_{\mathcal F}$, then $\|\xi\|_\psi:=\|v\|_{\mathcal F}$.
This is well-defined because right composition by $\psi$ is a bijection on its image.
For each fixed $\psi$, $\xi\mapsto \|\xi\|_\psi$ is a norm on $\mathcal D_\psi$, since
$v\mapsto v\circ\psi$ is a linear isomorphism $X_{\mathcal F}\to \mathcal D_\psi$.

It remains to check the local regularity required of a sub-Finsler structure in charts.
Let $(U_\psi,\beta_\psi)$ be a $C^1$ exponential chart of $\mathcal M$ around $\psi$, modeled on a Banach space $X$.
Since $(\mathcal M,\mathcal F)$ is a compatible pair, right translation
$R_\varphi:\eta\mapsto \eta\circ\varphi^{-1}$ is $C^1$ on $\mathcal M$ for $\varphi$ in a neighborhood of $\psi$.
In particular, in the chart $(U_\psi,\beta_\psi)$ the map
\begin{equation}
(\varphi,\xi)\ \longmapsto\ (dR_{\varphi})_{\varphi}[\xi]
\end{equation}
is continuous, and (by construction of the right-invariant distribution) this continuity is exactly what
is needed to ensure that $\|\cdot\|_\varphi$ varies continuously with $\varphi$ on the distribution $\mathcal D_\varphi$.
More concretely, for $\xi\in\mathcal D_\varphi$ we can write $\xi=v\circ\varphi$ with $v\in X_{\mathcal F}$, and then
\begin{equation}
\|\xi\|_\varphi=\|v\|_{\mathcal F},
\end{equation}
so the only dependence on $\varphi$ is through the identification of $\mathcal D_\varphi$ with $X_{\mathcal F}$ via right translation,
which is $C^1$ by condition~\textup{(2)}.
This establishes that $\{\|\cdot\|_\psi\}_{\psi\in\mathcal M}$ defines a sub-Finsler structure on $\mathcal D$.

\medskip

We then show the equality of $d_{\mathcal F}$ and the geodesic distance.
Let $d(\cdot,\cdot)$ denote the geodesic distance induced by the sub-Finsler norm, i.e.
\begin{equation}
d(\psi_1,\psi_2):=\inf\Big\{\int_0^1\|\dot\gamma(t)\|_{\gamma(t)}\,dt:\ \gamma \text{ horizontal},\ \gamma(0)=\psi_2,\ \gamma(1)=\psi_1\Big\}.
\end{equation}

We first show that $d\le d_{\mathcal F}$. We will use the following lemma, which is a Young measure-type compactness result~\cite{ball2005version} for measurable functions taking values in a compact set of vector fields. 

\begin{lemma}
\label{lem:relaxed-compactness}
Let $K$ be a compact metric space and let $u_n:[0,T]\to K$ be a sequence of measurable maps.
Then there exist a subsequence $u_{n_k}$  and a measurable family
$\nu_t\in\mathcal P(K)$ such that for every $\Phi\in C(K,\mathbb R)$ and every
$\varphi\in L^\infty([0,T], \mathbb R)$,
\begin{equation}
\label{eq:relaxed-compactness}
\int_0^T \varphi(t)\,\Phi(u_{n_k}(t))\,dt
\;\longrightarrow\;
\int_0^T \varphi(t)\Big(\int_K \Phi(v)\,d\nu_t(v)\Big)\,dt.
\end{equation}
Moreover, if $K$ is convex and closed in a locally convex vector space
(in particular in $C^0(M)$), then the barycenter
\begin{equation}
\label{eq:barycenter}
u(t):=\int_K v\,d\nu_t(v)
\end{equation}
belongs to $K$ for a.e.\ $t$.
\end{lemma}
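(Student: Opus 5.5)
The plan is to deduce the lemma from the classical fundamental theorem on Young measures (Ball's version). The proof rests on weak-$*$ compactness in a suitable dual space. First, since $K$ is a compact metric space, $C(K,\mathbb R)$ is a separable Banach space. Hence $L^1([0,T];C(K))$ is separable. Its dual can be identified with $L^\infty_{w}([0,T];\mathcal M(K))$, the space of weak-$*$ measurable, essentially bounded maps $t\mapsto\mu_t$ into the finite signed Radon measures on $K$, under the pairing
\begin{equation*}
\langle \mu, g\rangle=\int_0^T\int_K g(t,v)\,d\mu_t(v)\,dt.
\end{equation*}
I will take this duality from the standard vector-measure literature (Dunford--Pettis / Edwards). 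This identification is the step I expect to be the main technical obstacle, so I will cite it rather than reprove it.

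Next, to each $u_n$ I associate the family $\mu^n_t:=\delta_{u_n(t)}$. Measurability of $u_n$ gives measurability of $t\mapsto\Phi(u_n(t))$ for every $\Phi\in C(K)$, so $\mu^n$ is weak-$*$ measurable, and it has norm $1$. By the Banach--Alaoglu theorem, the unit ball of the dual of a separable space is weak-$*$ sequentially compact. So a subsequence $\mu^{n_k}$ converges weak-$*$ to some $\nu=(\nu_t)$. Testing with $g(t,v)=\varphi(t)\Phi(v)$ gives exactly \eqref{eq:relaxed-compactness}; here $\varphi\in L^\infty\subset L^1([0,T])$ because $[0,T]$ is bounded, so $g\in L^1([0,T];C(K))$. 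To see that $\nu_t\in\mathcal P(K)$ for a.e.\ $t$, I test with $\varphi\ge0$ and $\Phi\ge0$, which gives $\nu_t\ge0$ a.e., using a countable dense family of nonnegative $\Phi$. Taking $\Phi\equiv1$ with arbitrary $\varphi$ then gives $\nu_t(K)=1$ a.e.

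For the barycenter, I work in $C^0(M)$ with $K$ compact and convex. I define $u(t)$ pointwise by
\begin{equation*}
u(t)(x):=\int_K v(x)\,d\nu_t(v).
\end{equation*}
By Arzel\`a--Ascoli, compactness of $K$ makes it bounded and equicontinuous, so $u(t)$ is continuous in $x$. Moreover $\ell(u(t))=\int_K\ell\,d\nu_t$ for every continuous linear functional $\ell$ (a Radon measure on $M$), by Fubini. Suppose $u(t)\notin K$. The Hahn--Banach separation theorem, applied to the closed convex set $K$, gives $\ell$ with $\ell(u(t))>\sup_K\ell\ge\int_K\ell\,d\nu_t=\ell(u(t))$, which is a contradiction. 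Hence $u(t)\in K$ for a.e.\ $t$.

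Finally, $t\mapsto\ell(u(t))$ is measurable for each $\ell$ by the weak-$*$ measurability of $\nu$. Since $C^0(M)$ is separable, the Pettis measurability theorem then makes $u$ strongly measurable, so $u$ is a legitimate measurable control.
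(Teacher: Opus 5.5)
Your argument is correct, but it takes a different route from the paper. The paper works on the product space. It pushes $\frac1T\,dt$ forward under $t\mapsto(t,u_n(t))$ to get probability measures on the compact set $[0,T]\times K$, and extracts a weak limit by Prokhorov. It then observes that the time marginal remains $\frac1T\,dt$ and obtains $\nu_t$ by disintegration. The barycenter claim is only asserted, via ``Jensen/barycenter properties''. You instead use the duality behind Ball's theorem, $L^1([0,T];C(K))^*\cong L^\infty_w([0,T];\mathcal M(K))$, together with sequential Banach--Alaoglu.

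The two routes trade off as follows.
\begin{itemize}
\item The paper's route needs only Prokhorov and disintegration. Disintegration produces probability kernels directly, so your positivity and normalization step is not needed there.
\item On the other hand, weak convergence on $[0,T]\times K$ only tests against continuous $\zeta(t,v)$. Reaching $\varphi\in L^\infty$ therefore needs a density step that the paper leaves implicit: approximate $\varphi$ in $L^1$ by a continuous $\tilde\varphi$. Since all the measures share the marginal $\frac1T\,dt$, the error is at most $\|\Phi\|_\infty\|\varphi-\tilde\varphi\|_{L^1}$, uniformly in $n$.
\item Your route gets $L^\infty$ (even $L^1$) test functions for free, because $\varphi\Phi\in L^1([0,T];C(K))$. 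The price is citing a nontrivial representation theorem.
\end{itemize}

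Your barycenter argument makes explicit what the paper only asserts: you define $u(t)$ pointwise, use Fubini to commute $\ell$ with $\nu_t$, and conclude by Hahn--Banach separation. Your Pettis step, giving strong measurability of $t\mapsto u(t)$, is also absent from the paper. That measurability is exactly what is needed when $u$ is later used as a Carath\'eodory control in condition (4) of Definition~\ref{def:compatible-pair}.

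The only narrowing is that you treat the ambient space $C^0(M)$, where the barycenter can be defined pointwise. In a general locally convex space you would also need existence of the barycenter as a Pettis integral. This is standard for compact convex $K$ (e.g.\ in Rudin's \emph{Functional Analysis}), and after it your separation argument applies verbatim. In any case, $C^0(M)$ is the only case the paper uses.
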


\begin{proof}
Define probability measures $\mu_n\in\mathcal P([0,T]\times K)$ by
\[
\int \zeta(t,v)\,d\mu_n(t,v):=\frac1T\int_0^T \zeta\big(t,u_n(t)\big)\,dt,
\qquad \forall \zeta\in C([0,T]\times K).
\]
Since $[0,T]\times K$ is compact, $\{\mu_n\}$ is tight and hence relatively compact
in the weak topology; extract $\mu_n\Rightarrow \mu$.
The time-marginal of each $\mu_n$ is $dt/T$, hence the same holds for $\mu$.
By disintegration there exists a measurable family $\nu_t\in\mathcal P(K)$ such that
$d\mu(t,v)=\frac1T\,dt\,d\nu_t(v)$, which yields \eqref{eq:relaxed-compactness}.
If $K$ is convex and closed in a locally convex space, then \eqref{eq:barycenter}
lies in $K$ by Jensen/barycenter properties of convex closed sets.
\end{proof}

Fix $\psi_1,\psi_2\in\mathcal M$.
By applying a right translation,
it suffices to prove the claim for $(\psi_1\circ\psi_2^{-1},\mathrm{Id})$.
Thus, we assume $\psi_2=\mathrm{Id}$ below.

Let $T>d_{\mathcal F}(\psi_1,\mathrm{Id})$. By definition of $d_{\mathcal F}$,
there exists a sequence $\xi_n\in \mathcal A_{\mathcal F}(T)\circ \mathrm{Id}$ with
\begin{equation}
\label{eq:xi_n_to_psi1}
\xi_n\to \psi_1 \quad\text{in } C^0(M).
\end{equation}
For each $n$, choose a piecewise-constant control $u_n:[0,T]\to \mathbf{CH}_1(\mathcal F)$
driving a curve $\gamma_n:[0,T]\times \mathcal M\to \mathcal M$ such that
\begin{equation}
\label{eq:gamma_n_integral}
\gamma_n(t,x)=x+\int_0^t u_n(s,\gamma_n(s,x))\,ds,
\qquad \gamma_n(0,\cdot)=\mathrm{Id},\qquad \gamma_n(T,\cdot)=\xi_n.
\end{equation}
Reparametrize if necessary so that
\begin{equation}
\label{eq:unit-speed}
\|u_n(t)\|_{\mathcal F}\le 1 \quad\text{for a.e. }t\in[0,T].
\end{equation}
Set $K:=\overline{\mathbf{CH}_1(\mathcal F)}^{\,C^0}$.
Since $\mathcal F$ is uniformly bounded in $W^{1,\infty}$, $K$ is compact in $C^0(M)$ according to the Arzel\`a--Ascoli theorem.
By the Gr\"onwall inequality, each $\gamma_n(t,\cdot)$ is uniformly Lipschitz in $x$,
and the family $\{\gamma_n\}$ is equicontinuous in $t$ with respect to the $C^0$-metric.
By Arzel\`a--Ascoli, after extracting a subsequence,
\begin{equation}
\label{eq:gamma_n_to_gamma}
\gamma_n \to \gamma \quad\text{in } C^0([0,T]\times M),
\end{equation}
for some continuous $\gamma$ with $\gamma(0,\cdot)=\mathrm{Id}$ and
$\gamma(T,\cdot)=\psi_1$.

By the compactness of $K$, we can
apply Lemma~\ref{lem:relaxed-compactness} to the measurable maps $u_n:[0,T]\to K$.
to obtain a measurable map $t\to \nu_t\in\mathcal P(K)$ and define the barycenter
\begin{equation}
\label{eq:def_u}
u(t):=\int_K v\,d\nu_t(v)\in K,\qquad \|u(t)\|_{\mathcal F}\le 1\ \text{ for a.e. }t.
\end{equation}
The inclusion in $K$ uses that $K$ is convex.

Fix $x\in M$ and $t\in[0,T]$.
From \eqref{eq:gamma_n_integral} we write
\[
\gamma_n(t,x)-x=\int_0^t u_n(s,\gamma_n(s,x))\,ds.
\]
We claim that
\begin{equation}
\label{eq:integral_convergence}
\int_0^t u_n(s,\gamma_n(s,x))\,ds \;\longrightarrow\; \int_0^t u(s,\gamma(s,x))\,ds.
\end{equation}
Indeed, decompose
\[
\int_0^t \!\!\Big[u_n(s,\gamma_n(s,x))-u_n(s,\gamma(s,x))\Big]ds
+\int_0^t \!\!\Big[u_n(s,\gamma(s,x))-u(s,\gamma(s,x))\Big]ds
=:(I)_n+(II)_n.
\]
For $(I)_n$, use the uniform Lipschitz bound on $u_n(s,\cdot)$ and \eqref{eq:gamma_n_to_gamma}:
\[
|(I)_n|\le \int_0^t L\,\|\gamma_n(s,\cdot)-\gamma(s,\cdot)\|_{C^0}\,ds\;\longrightarrow\;0.
\]
For $(II)_n$,  by uniform continuity of $s\mapsto \gamma(s,x)$, choose a partition
$0=t_0<\dots<t_m=t$ such that
\begin{equation}
\sup_{s\in[t_{j-1},t_j]} \|\gamma(s,x)-\gamma(t_{j-1},x)\|\le \delta,
\quad j=1,\dots,m,
\end{equation}
with $\delta>0$ to be chosen momentarily. Set $y_j:=\gamma(t_{j-1},x)$.
Then, by the Lipschitz bound,
\begin{equation}
\left|\int_{t_{j-1}}^{t_j} u_n(s,\gamma(s,x))\,ds-\int_{t_{j-1}}^{t_j} u_n(s,y_j)\,ds\right|
\le L\,\delta\,(t_j-t_{j-1}),
\end{equation}
where $L$ is a uniform Lipschitz constant for $u_n(s,\cdot)$, and similarly with $u$ in place of $u_n$.
Summing over $j$ gives
\begin{equation}
|(II)_n|
\le \sum_{j=1}^m\left|\int_{t_{j-1}}^{t_j}\!\!\big[u_n(s,y_j)-u(s,y_j)\big]ds\right|
+2L\delta\,t.
\end{equation}
Now, for each fixed $y\in M$, the map $\Phi_y:K\to\mathbb R^d$, $\Phi_y(v):=v(y)$
is continuous.
Applying Lemma~\ref{lem:relaxed-compactness} with $\Phi=\Phi_{y_j}$ and
$\varphi=\mathbf 1_{[t_{j-1},t_j]}$ yields
\begin{equation}
\int_{t_{j-1}}^{t_j} u_n(s,y_j)\,ds \;\longrightarrow\; \int_{t_{j-1}}^{t_j} u(s,y_j)\,ds
\quad\text{for each }j.
\end{equation}
Hence $\limsup_{n\to\infty}|(II)_n|\le 2L\delta\,t$.
Since $\delta$ can be made arbitrarily small, we get $(II)_n\to 0$, proving
\eqref{eq:integral_convergence}. Passing to the limit in \eqref{eq:gamma_n_integral}
using \eqref{eq:gamma_n_to_gamma} gives, for all $x$ and $t$,
\begin{equation}
\label{eq:gamma_limit_integral}
\gamma(t,x)=x+\int_0^t u(s,\gamma(s,x))\,ds.
\end{equation}
Thus $\gamma$ is a Carath\'eodory solution driven by the control $u(\cdot)$. According to the condition \textup{(4)} in Definition~\ref{def:compatible-pair}, $\gamma(t,\cdot)\in\mathcal M$ for all $t$.
Moreover, we have $\|u(t)\|_{\mathcal F}\le 1$ for a.e.\ $t$,
so $\gamma$ is an admissible (horizontal) curve from $\mathrm{Id}$ to $\psi_1$ and
\begin{equation}
L(\gamma)=\int_0^T \|u(t)\|_{\mathcal F}\,dt \le T.
\end{equation}
Therefore $d(\psi_1,\mathrm{Id})\le T$.
Since $T>d_{\mathcal F}(\psi_1,\mathrm{Id})$ was arbitrary, we conclude
$d(\psi_1,\mathrm{Id})\le d_{\mathcal F}(\psi_1,\mathrm{Id})$, and by right-invariance
the same holds for general $(\psi_1,\psi_2)$:
\begin{equation}
d(\psi_1,\psi_2)\le d_{\mathcal F}(\psi_1,\psi_2).
\end{equation}

Finally, let us show that $d_{\mathcal F}\le d$.
Let $\gamma:[0,1]\to\mathcal M$ be horizontal with $\dot\gamma(t)=v(t)\circ \gamma(t)$ and $v(\cdot)$ measurable with $\int_0^1\|v(t)\|_{\mathcal F}\,dt<\infty$.
We approximate $t\mapsto v(t)$
in $L^1$ by simple functions taking values in $\mathbf{CH}_1(\mathcal F)$.
On each subinterval where the control function $f = \sum_{i=1}^N a_i f_i$ is the convex combination of finitely many vector fields $f_i\in\mathcal F$,
we apply the corresponding piecewise-constant controls with total time equal to the sum of the coefficients $\sum_{i=1}^N |a_i|$,
so that the endpoint converges to $\gamma(1)$ while the total time converges to $\int_0^1\|v(t)\|_{{\mathcal F}}\,dt$.
Taking infima over all horizontal $\gamma$ gives $d_{\mathcal F}(\psi_1,\psi_2)\le d(\psi_1,\psi_2)$.

Combining the two inequalities we obtain $d=d_{\mathcal F}$ on $\operatorname{Diff}^1(M)$. This completes the proof.

\subsection{Interpolation distance, variational formula and asymptotic properties}

% \ql{Need some introduction relating to
% \begin{itemize}
%     \item Sub-riemannian geometry
%     \item Interpolation and the SICON paper
% \end{itemize}
% Also, the current formulation is for the 1D problem.
% If this is the case, we should put it in the 1D ReLU case.
% }

Our geometric framework is closely related to the analysis of reachability and minimal-time problems
in classical control theory, related to finite-dimensional sub-Riemannian goemetry~\cite{montgomery2002tour,agrachev2019comprehensive}.
The key difference is that the manifold $\mathcal M$ we study is generally infinite-dimensional,
and the local metric may not be induced from an inner product.
Another related concept is the universal interpolation problem of flows studied in~\cite{cheng2025interpolation,cuchiero2020deep}. In~\cite{cheng2025interpolation}, it is shown that for a control family $\mathcal F$ and a target function $\psi$, if any set of finite samples $\{(x_i, \psi(x_i))\}_{i=1}^m$ can be interpolated by some flow in $\mathcal A_{\mathcal F}(T)$ within a uniform time $T$ independent of $m$ and the samples, then the control family $\mathcal F$ can also approximate $\psi$ uniformly within a finite time.

Given these connections, it is thus natural to consider what the finite-dimensional geometry induced by the corresponding finite-point interpolation problem is, and how it relates to the infinite dimensional geometry we studied before.

In the following, we will use $x, y, z$ to denote points in $M$ or $T_xM$, and use $\mathbf x, \mathbf y, \mathbf z$ to denote points in $M^m$ or $T_{\mathbf x} M^m$ (the $m$-fold product). We still use $\varphi,\psi$ to denote flows in $\mathcal A_{\mathcal F}$.

We first show how the control function induces a metric in $M^m$.
We begin with the case when $m = 1$. In this case, we recover the classical metric induced by the minimal time control~\cite{sontag2013mathematical}.
That is, for $x, y \in M$, we define
\begin{equation}
  d_{\mathcal F}^{[1]}(x,y) = \inf\{ T > 0 ~|~ \exists ~\varphi \in \overline{\mathcal A_{\mathcal F}(T)}, \varphi(x) = y \}.
\end{equation}
For the general case, consider $\mathbf{x} = (x_1,x_2,\cdots, x_m) \in M^m$ and $\mathbf{y} = (y_1,y_2,\cdots, y_m) \in M^m$. Then we can define similarly
\begin{equation}
  d_{\mathcal F}^{[m]}(\mathbf{x},\mathbf{y}) = \inf\{ T > 0 ~|~ \exists~ \varphi \in \overline{\mathcal A_{\mathcal F}(T)}, \varphi(x_i) = y_i,
  \,i=1,\dots,m
  \}.
\end{equation}

In fact, the minimum can be achieved as shown in the following.
\begin{proposition}
  \label{prop:distance-attained}
There exists $\varphi \in \overline{\mathcal A_{\mathcal F}(d_{\mathcal F}^{[m]}(\mathbf x, \mathbf y))}$, such that $\varphi(x_i)=y_i$
for $i=1,\dots,m$.
\end{proposition}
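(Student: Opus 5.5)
Let $T^*:=d_{\mathcal F}^{[m]}(\mathbf x,\mathbf y)$, which we may assume finite. The plan is a compactness argument in $C^0(M,M)$. The key structural fact is a uniform equicontinuity bound on $\mathcal A_{\mathcal F}(T)$ for bounded $T$. Since every $f\in\mathcal F$ is $L$-Lipschitz, Gr\"onwall's inequality gives $\operatorname{Lip}(\varphi)\le e^{LT}$ for every $\varphi\in\mathcal A_{\mathcal F}(T)$. Each such map sends $M$ to the compact set $M$. By Arzel\`a--Ascoli, the set $\bigcup_{T\le T^*+1}\mathcal A_{\mathcal F}(T)$ is therefore relatively compact in $C^0(M,M)$. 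Its $C^0$-closure consists of $e^{L(T^*+1)}$-Lipschitz maps.

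We first pick $T_n\downarrow T^*$, where $T_n=T^*$ if the infimum is attained. For each $n$ we pick $\varphi_n\in\overline{\mathcal A_{\mathcal F}(T_n)}$ with $\varphi_n(x_i)=y_i$ for all $i$. By compactness, a subsequence $\varphi_n\to\varphi$ in $C^0$, and pointwise evaluation passes to the limit, so $\varphi(x_i)=y_i$.

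The remaining step, which I expect to be the main obstacle, is to show that $\varphi\in\overline{\mathcal A_{\mathcal F}(T^*)}$ even though each $\varphi_n$ only lies in the closure at the slightly larger time $T_n$. For this I would prove the estimate
\[
\operatorname{dist}_{C^0}\big(\mathcal A_{\mathcal F}(T'),\ \mathcal A_{\mathcal F}(T)\big)\le C\,(T'-T)\quad\text{for } T\le T'\le T^*+1,
\]
in the one-sided sense: every element of $\mathcal A_{\mathcal F}(T')$ is within $C(T'-T)$ of some element of $\mathcal A_{\mathcal F}(T)$.

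To prove the estimate, take $\phi=\varphi_{f_k}^{t_k}\circ\cdots\circ\varphi_{f_1}^{t_1}$ with $\sum_i t_i=T'$. Shorten the durations, starting from the last factor, until the total is exactly $T$; call the result $\tilde\phi$. The removed portion is a composition of flows of total time $T'-T$. Such a composition moves each point by at most $L(T'-T)$, because $\|f\|_\infty\le L$. The portion that was kept runs before it (or shares the same factor), so $\|\phi-\tilde\phi\|_{C^0}\le L(T'-T)$. One subtlety is that shortening a factor to exactly zero would drop it. This is harmless: the definition requires $t_i>0$, but we can instead keep a tiny positive time and absorb it by a further small correction. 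Alternatively, we can note that $f=0$ is reachable via the symmetry $f,-f$. In either case the extra error vanishes in the limit.

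Given the estimate, for each $n$ choose $\phi_n\in\mathcal A_{\mathcal F}(T_n)$ with $\|\phi_n-\varphi_n\|_{C^0}<1/n$. Then pick $\tilde\phi_n\in\mathcal A_{\mathcal F}(T^*)$ with $\|\tilde\phi_n-\phi_n\|_{C^0}\le L(T_n-T^*)+1/n$. The triangle inequality gives $\tilde\phi_n\to\varphi$ uniformly, hence $\varphi\in\overline{\mathcal A_{\mathcal F}(T^*)}$. This proves the proposition.

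The only delicate point is the case $T^*=0$. There $\mathcal A_{\mathcal F}(0)$ is not defined by the formula, and the intended meaning is $\{\mathrm{Id}\}$. The same argument then shows $\varphi=\mathrm{Id}$, since each $\varphi_n$ is within $LT_n$ of $\mathrm{Id}$.
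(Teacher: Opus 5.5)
Your proof is correct and uses the same two ingredients as the paper: a truncation estimate showing that every map in $\mathcal A_{\mathcal F}(T')$ is uniformly close to some map in $\mathcal A_{\mathcal F}(T)$ for $T\le T'$, and Arzel\`a--Ascoli via the uniform Lipschitz bound $e^{LT}$. The paper applies them in the opposite order: it first truncates the approximate interpolants from time $d_{\mathcal F}^{[m]}(\mathbf x,\mathbf y)+1/N$ down to time $d_{\mathcal F}^{[m]}(\mathbf x,\mathbf y)$ and then extracts a limit, whereas you extract the limit first and truncate afterwards.

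Two small remarks. Your bound $L(T'-T)$, obtained by writing $\phi=R\circ\tilde\phi$ with $\|R-\mathrm{Id}\|_{C^0}\le L(T'-T)$, is a cleaner form of the paper's truncation estimate. The zero-duration issue you worry about never arises when $T^*>0$, because the cut point always falls inside a factor with positive remaining time.
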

\begin{proof}
First, we notice that for any $\varphi_1\in \mathcal A_{\mathcal F}(t_1+t_2)$, there exists $\varphi_2\in \mathcal A_{\mathcal F}(t_1)$ such that
\begin{equation}
    \|\varphi_2-\varphi_1\|_{C(M)}\le (e^{t_2}-1)\|\varphi_1\|_{C(M)}.
\end{equation}
This follows by truncating the dynamics of $\varphi_1$.

For any positive integer $N>0$, there exists $ \varphi_N\in {\mathcal A_{\mathcal F}(d_{\mathcal F}^{[m]}(\mathbf x,\mathbf y)+\frac{1}{N})}$ such that
\begin{equation}
    |\varphi_N(x_i)-y_i|<\frac{1}{N}.
\end{equation}
According to the above fact, there exists $\tilde \varphi_N\in \mathcal A_{\mathcal F}(d_{\mathcal F}^{[m]}(\mathbf x,\mathbf y))$ such that
\begin{equation}
    \|\tilde \varphi_N(x_i) - y_i\|\le \frac{e^{\frac 1 N}}{N}.
\end{equation}
Since $\mathcal F$ is uniformly bounded and uniformly Lipschitz, the sequence $\{\tilde \varphi_N\}_{N=1}^\infty$ is uniformly bounded and uniformly equicontinuous.
By the Arzela-Ascoli theorem, the sequence $\{\tilde \varphi_N\}_{i=1}^\infty$ has a convergece subsequence. Denote this limit as $\tilde \varphi \in \overline{\mathcal A_{\mathcal F}(d_{\mathcal F}^{[m]}(\mathbf x, \mathbf y))}$, then we have $\tilde \varphi(x_i)= y_i$.
\end{proof}

% \begin{proposition}
% \label{prop:distance-finite}
%     We have
%     \begin{equation}
%       d_{\mathcal F}^{[m]}(\mathbf x, \mathbf y) = \min_{I^{[m]} \psi_1 = \mathbf x, I^{[m]}\psi_2=\mathbf y} d_{\mathcal F} (\psi_1,\psi_2).
%     \end{equation}
% \end{proposition}
% \begin{proof}
% We first fix $\psi_1$ and $\psi_2$, since $d^{[m]}_{\mathcal F}(I^{[m]}\psi_1, I^{[m]}\psi_2) \le d_{\mathcal F}(\psi_1,\psi_2).$ We then have $d_{\mathcal F}^{[m]}(\mathbf x, \mathbf y) \le d_{\mathcal F}(\psi_1,\psi_2)$ whenever $I^{[m]}\psi_1 = \mathbf x $ and $I^{[m]}\psi_2 = \mathbf y$.

% Conversely, let $\delta>0$, for $\mathbf x, \mathbf y \in M^m$, we take $T = d^{[m]}_{\mathcal F}(\mathbf x,\mathbf y)$ and consider flow maps in $\mathcal A_{\mathcal F}(T)$. For any $\varepsilon > 0$, there exists $\varphi \in \overline{\mathcal A_{\mathcal F}(T)}$ such that $\varphi(x_i) = y_i$. For any $\psi_1$ such that $I^{[m]}\psi_1 = \mathbf x$, we take $\psi_2 = \varphi \circ \psi_1 \in \overline{\mathcal A_{\mathcal F}(T) \circ \psi_1}$. Therefore, $d_{\mathcal F}(\psi_1, \psi_2) \le T \le d_{\mathcal F}(\psi_1, \psi_2) + \delta$. Take $\delta$ to be arbitrarily small, we conclude the result.
% \end{proof}

Our goal is now to connect this minimal-time distance to $d_{\mathcal F}$.
We fix a sequence of sampling sets $Z^{[m]}=\{z^{[m]}_i\}_{i=1}^m\subset M$ that asymptotically covers $M$, that is,
\begin{equation}
  \label{eq:dense_samples}
    \lim_{n\to \infty} \min_{i\in [n]}|y-z_i| =0, \text{ for all } y\in M.
\end{equation}
We next define the sampling map
\begin{equation}
I^{[m]}:\ \operatorname{Diff}(M)\to M^m,\qquad
I^{[m]}(\psi):=\big(\psi(z^{[m]}_1),\dots,\psi(z^{[m]}_m)\big).
\end{equation}
For $\psi_1,\psi_2\in \operatorname{Diff}(M)$ write $\mathbf x=I^{[m]}(\psi_1)$, $\mathbf y=I^{[m]}(\psi_2)\in M^m$.
Now, we can show that for each $m$, the minimal-time distance $d_{\mathcal F}^{[m]}$ admits
a variational characterization in terms of the geodesic distance $d_{\mathcal F}$,
linking finite-dimensional control theory to our formulation.
\begin{proposition}
\label{prop:distance-finite}
    We have
    $$d_{\mathcal F}^{[m]}(\mathbf x, \mathbf y) = \min_{I^{[m]} \psi_1 = \mathbf x, I^{[m]}\psi_2=\mathbf y} d_{\mathcal F} (\psi_1,\psi_2).$$
\end{proposition}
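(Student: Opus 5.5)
We prove the identity by establishing the two inequalities separately, and then check that the minimum on the right-hand side is attained. Throughout, $\psi_1,\psi_2$ range over $\operatorname{Diff}(M)$, which is where $I^{[m]}$ is defined.

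\emph{Step 1: the right-hand side dominates $d_{\mathcal F}^{[m]}(\mathbf x,\mathbf y)$.} Fix $\psi_1,\psi_2$ with $I^{[m]}\psi_1=\mathbf x$, $I^{[m]}\psi_2=\mathbf y$ and $d_{\mathcal F}(\psi_2,\psi_1)<\infty$; otherwise there is nothing to prove. Note that $d_{\mathcal F}$ is a metric by the symmetry of $\mathcal F$, so the orientation does not matter. For every $T>d_{\mathcal F}(\psi_2,\psi_1)$ there are $\phi_n\in\mathcal A_{\mathcal F}(T)$ with $\phi_n\circ\psi_1\to\psi_2$ in $C^0$. The family $\mathcal A_{\mathcal F}(T)$ is uniformly bounded and equi-Lipschitz (Gr\"onwall with constant $e^{LT}$). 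By Arzel\`a--Ascoli we extract $\phi_n\to\varphi\in\overline{\mathcal A_{\mathcal F}(T)}$. Since $\psi_1$ is a bijection, $\varphi=\psi_2\circ\psi_1^{-1}$, so $\varphi(x_i)=\varphi(\psi_1(z_i))=\psi_2(z_i)=y_i$. Hence $d^{[m]}_{\mathcal F}(\mathbf x,\mathbf y)\le T$. Letting $T\downarrow d_{\mathcal F}(\psi_2,\psi_1)$ and taking the infimum over admissible pairs gives the inequality.

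\emph{Step 2: the reverse inequality and attainment.} Let $T_*=d^{[m]}_{\mathcal F}(\mathbf x,\mathbf y)$, assumed finite. By \Cref{prop:distance-attained} there is $\varphi\in\overline{\mathcal A_{\mathcal F}(T_*)}$ with $\varphi(x_i)=y_i$. Pick any $\psi_1\in\operatorname{Diff}(M)$ with $I^{[m]}\psi_1=\mathbf x$, and set $\psi_2:=\varphi\circ\psi_1$. Then $I^{[m]}\psi_2=\mathbf y$. Since $\varphi$ is a $C^0$-limit of $\phi_n\in\mathcal A_{\mathcal F}(T_*)$ and $\psi_1$ is fixed, $\phi_n\circ\psi_1\to\psi_2$ uniformly, so $\psi_2\in\overline{\mathcal A_{\mathcal F}(T_*)\circ\psi_1}$. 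It follows that $d_{\mathcal F}(\psi_2,\psi_1)\le T_*$. Combined with Step 1, this forces equality, and the infimum is a minimum attained at $(\psi_1,\psi_2)$.

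\emph{Main obstacle: making sure the pair $(\psi_1,\psi_2)$ is admissible.} Two points need care.
\begin{enumerate}
\item A $\psi_1$ with $I^{[m]}\psi_1=\mathbf x$ must exist. If none exists, the right-hand side is an infimum over the empty set and the statement has to be read with the convention that both sides are $+\infty$, or restricted to $\mathbf x,\mathbf y$ in the image of $I^{[m]}$. I would state this restriction explicitly. If $\mathbf x,\mathbf y$ arise as $I^{[m]}$ of diffeomorphisms, as in the setup preceding the proposition, then $\psi_1$ is available.
\item $\psi_2=\varphi\circ\psi_1$ must be a diffeomorphism, which requires $\varphi$ to be one. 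A $C^0$-limit of flows in $\mathcal A_{\mathcal F}(T_*)$ need not obviously be invertible. To handle this, I would use the time-reversed flows: the inverses $\phi_n^{-1}$ also lie in $\mathcal A_{\mathcal F}(T_*)$ by symmetry of $\mathcal F$, and they are equi-Lipschitz. So along a subsequence $\phi_n^{-1}\to\chi$, and passing to the limit in $\phi_n\circ\phi_n^{-1}=\operatorname{Id}$ using equicontinuity gives $\varphi\circ\chi=\chi\circ\varphi=\operatorname{Id}$. Thus $\varphi$ is bi-Lipschitz, which suffices in the $W^{1,\infty}$ framework of \Cref{def:compatible-pair}. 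If genuine $C^1$ regularity is needed, I would instead appeal to condition (4) together with the limiting-control construction from the proof of \Cref{thm:main}, which realizes $\varphi$ as the endpoint of a Carath\'eodory flow lying in $\mathcal M$.
\end{enumerate}
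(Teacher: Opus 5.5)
Your proof is correct and follows the paper's route: you get $d^{[m]}_{\mathcal F}\le d_{\mathcal F}$ by evaluating a limit flow at the sample points, and you get the reverse inequality, with attainment, by taking $\varphi$ from \Cref{prop:distance-attained} and setting $\psi_2=\varphi\circ\psi_1$. Your Arzel\`a--Ascoli justification of the first inequality and your two admissibility caveats fill in points the paper's proof passes over silently: one is the existence of $\psi_1$ with $I^{[m]}\psi_1=\mathbf x$, the other is the invertibility and regularity of $\varphi\circ\psi_1$, which you obtain via the time-reversed flows.
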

\begin{proof}
We first fix $\psi_1$ and $\psi_2$, since $d^{[m]}_{\mathcal F}(I^{[m]}\psi_1, I^{[m]}\psi_2) \le d_{\mathcal F}(\psi_1,\psi_2).$ We then have $d_{\mathcal F}^{[m]}(\mathbf x, \mathbf y) \le d_{\mathcal F}(\psi_1,\psi_2)$ whenever $I^{[m]}\psi_1 = \mathbf x $ and $I^{[m]}\psi_2 = \mathbf y$.

Conversely, let $\delta>0$, for $\mathbf x, \mathbf y \in M^m$, we take $T = d^{[m]}_{\mathcal F}(\mathbf x,\mathbf y)$ and consider flow maps in $\mathcal A_{\mathcal F}(T)$. For any $\varepsilon > 0$, there exists $\varphi \in \overline{\mathcal A_{\mathcal F}(T)}$ such that $\varphi(x_i) = y_i$ by~\Cref{prop:distance-attained}.
For any $\psi_1$ such that $I^{[m]}\psi_1 = \mathbf x$, we take $\psi_2 = \varphi \circ \psi_1 \in \overline{\mathcal A_{\mathcal F}(T) \circ \psi_1}$. Therefore, $d_{\mathcal F}(\psi_1, \psi_2) \le T \le d_{\mathcal F}(\psi_1, \psi_2) + \delta$. Take $\delta$ to be arbitrarily small, we conclude the result.
\end{proof}

Conversely, we can show that the infinite-dimensional distance $d_{\mathcal F}$ is actually
the limit of the finite-dimensional Carnot-Carathéodory distance $d_{\mathcal F}^{[m]}$ as $m\to\infty$.

\begin{proposition}
\label{prop:discrete_to_continuous_distance}
Fix $\psi_1,\psi_2\in \mathcal M$, then
\begin{equation}
\lim_{m\to\infty} d_{\mathcal F}^{[m]}\!\big(I^{[m]}(\psi_1),I^{[m]}(\psi_2)\big)= d_{\mathcal F}(\psi_1,\psi_2).
\end{equation}
\end{proposition}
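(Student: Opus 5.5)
Write $T^*:=d_{\mathcal F}(\psi_1,\psi_2)$ and $D_m:=d_{\mathcal F}^{[m]}(I^{[m]}\psi_1,I^{[m]}\psi_2)$. The upper bound is the easy half. If $T>T^*$, there are $\phi_n\in\mathcal A_{\mathcal F}(T)$ with $\phi_n\circ\psi_2\to\psi_1$ uniformly, so $\phi_n(\psi_2(z_i^{[m]}))\to\psi_1(z_i^{[m]})$ for every $i\le m$. Passing to a $C^0$-limit along a subsequence (Arzel\`a--Ascoli, as in \Cref{prop:distance-attained}) produces an element of $\overline{\mathcal A_{\mathcal F}(T)}$ mapping $I^{[m]}\psi_2$ onto $I^{[m]}\psi_1$. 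Hence $D_m\le T^*$ for every $m$, and $\limsup_m D_m\le T^*$. The same argument applies when $T^*=+\infty$.

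For the lower bound I would argue by contradiction. Suppose $\liminf_m D_m<T^*$, and pick $T$ with $\liminf_m D_m<T<T^*$ and a subsequence $m_k$ with $D_{m_k}<T$. By \Cref{prop:distance-attained}, and because $\overline{\mathcal A_{\mathcal F}(s)}\subset\overline{\mathcal A_{\mathcal F}(T)}$ for $s\le T$ (pad with the flow of $f$ for time $\tau$ followed by $-f$ for time $\tau$, using symmetry of $\mathcal F$), there exist $\varphi_k\in\overline{\mathcal A_{\mathcal F}(T)}$ with $\varphi_k(\psi_2(z_i^{[m_k]}))=\psi_1(z_i^{[m_k]})$ for all $i\le m_k$. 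Elements of $\overline{\mathcal A_{\mathcal F}(T)}$ are uniformly Lipschitz, with constant $e^{LT}$ by Gr\"onwall, and take values in the compact $M$. Arzel\`a--Ascoli therefore gives a further subsequence with $\varphi_k\to\varphi$ uniformly, and $\varphi\in\overline{\mathcal A_{\mathcal F}(T)}$ because this set is closed.

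It remains to show $\varphi\circ\psi_2=\psi_1$. Fix $y\in M$. By the density assumption \eqref{eq:dense_samples}, for each $k$ there is an index $i_k$ with $|z_{i_k}^{[m_k]}-y|\to0$. Then
\[
|\varphi(\psi_2(y))-\psi_1(y)|\le|\varphi-\varphi_k|_{C^0}+e^{LT}\,|\psi_2(y)-\psi_2(z_{i_k}^{[m_k]})|+|\psi_1(z_{i_k}^{[m_k]})-\psi_1(y)|.
\]
Each term tends to zero by uniform convergence and the continuity of $\psi_1,\psi_2$. So $\psi_1=\varphi\circ\psi_2\in\overline{\mathcal A_{\mathcal F}(T)\circ\psi_2}$, since composing on the right with the fixed homeomorphism $\psi_2$ preserves uniform limits. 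This gives $T^*\le T$, a contradiction. Combining the two halves, $\lim_m D_m=T^*$.

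The main obstacle is the compactness and closedness step, which has to be handled uniformly in $k$. It requires the uniform Lipschitz bound on all of $\overline{\mathcal A_{\mathcal F}(T)}$, the monotonicity of $\overline{\mathcal A_{\mathcal F}(s)}$ in $s$ (this is where symmetry of $\mathcal F$ enters), and the replacement of the exact time $D_{m_k}$ by the common horizon $T$. Everything else is the equicontinuity estimate displayed above.
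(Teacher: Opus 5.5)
Your argument is correct and follows the same route as the paper. The upper bound comes from restricting (limits of) flows in $\mathcal A_{\mathcal F}(T)$ to the sample points. The lower bound is by contradiction: \Cref{prop:distance-attained}, equicontinuity of $\overline{\mathcal A_{\mathcal F}(T)}$ and density of the samples produce a limit $\varphi$ with $\varphi\circ\psi_2=\psi_1$. You also supply details the paper's brief proof skips: the common horizon $T$ via $\mathcal A_{\mathcal F}(s)\subset\mathcal A_{\mathcal F}(T)$ for $s<T$ (from symmetry of $\mathcal F$), the fact that the matching maps lie only in the closure $\overline{\mathcal A_{\mathcal F}(T)}$, and the explicit Lipschitz estimate. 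Like the paper, you silently use symmetry of $d^{[m]}_{\mathcal F}$ to match the direction $I^{[m]}\psi_1\mapsto I^{[m]}\psi_2$ in its definition, which is harmless.
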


\begin{proof}
The upper bound $d_{\mathcal F}^{[m]}\le d_{\mathcal F}$ follows from Proposition~\ref{prop:distance-finite}.
For lower bound, if for some $\eta>0$,
infinitely many $m$ satisfy $d_{\mathcal F}^{[m]}\le d_{\mathcal F}-\eta$,
then by Proposition~\ref{prop:distance-attained} we obtain $\varphi_m\in \mathcal A_{\mathcal F}(T)$ with $ T\le d_{\mathcal F}-\eta/2$ matching the samples. Equicontinuity of functions in $\mathcal A_{\mathcal F}(T)$ and density of $Z^{[m]}$ give a subsequence $\varphi_m\to \varphi$ uniformly with $\varphi\circ\psi_2=\psi_1$, contradicting the minimality of $d_{\mathcal F}$.
\end{proof}

Recall
the relation between local sub-Finsler norms and the global metric $d_{\mathcal F}$ established in~\Cref{thm:main} for the infinite-dimensional case.
We now establish the analogous relations for the finite-dimensional manifold \(M^m\) equipped with the distance
\(d_{\mathcal F}^{[m]}\).

Write \(\psi^{[m]}:=I^{[m]}(\psi)\in M^m\).
For \(\mathbf u\in T_{\psi^{[m]}}M^m\simeq (\mathbb R^d)^m\), define the corresponding local norm by
\begin{equation}
\label{eq:discrete_local_norm_variational}
\|\mathbf u\|_{\psi^{[m]}}
:=\inf\Big\{\, s>0\ \Big|\ \exists\, g\in \overline{\mathbf{CH}_s(\mathcal F)}\ \text{ s.t. }\ g\circ\psi\big(z^{[m]}_i\big)=u_i,\ i=1,\dots,m\Big\},
\end{equation}
with the convention that the infimum over an empty set equals infinity.
We then have the following analogue of ~\eqref{eq:local_norm_variational} in finite interpolation problems.

\begin{proposition}
\label{prop:discrete_local_derivative}
Let \(\psi\in\mathcal M\) and \(\mathbf u\in T_{\psi^{[m]}}M^m\).
For any \(C^1\) curve \(\gamma:[0,\varepsilon]\to M^m\) with \(\gamma(0)=\psi^{[m]}\) and \(\gamma'(0)=\mathbf u\),
\begin{equation}
\label{eq:discrete_local_norm_derivative}
\|\mathbf u\|_{\psi^{[m]}}
=\lim_{t\to 0}\frac{d_{\mathcal F}^{[m]}\!\big(\psi^{[m]},\, \gamma(t)\big)}{t}.
\end{equation}
\end{proposition}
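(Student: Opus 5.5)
The plan is to reduce the statement to \Cref{prop:local_gauge_from_distance} applied to a finite-dimensional sub-Finsler structure on $M^m$. This parallels the way \Cref{thm:main} was obtained in infinite dimensions.

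\textbf{Step 1 (finite-dimensional sub-Finsler structure).} Define on $M^m$ the anchored bundle $\mathcal E^{[m]}:=M^m\times X_{\mathcal F}$ with anchor
\[
\rho^{[m]}_{\mathbf x}(v):=\big(v(x_1),\dots,v(x_m)\big)\in T_{\mathbf x}M^m,
\]
and fiber norm $\|\cdot\|_{\mathcal F}$. The induced fiberwise norm is then
\[
\|\mathbf u\|_{\mathcal D,\mathbf x}=\inf\{\|v\|_{\mathcal F}:\ v(x_i)=u_i\}.
\]
With $\mathbf x=\psi^{[m]}$, i.e.\ $x_i=\psi(z_i^{[m]})$, and since $g\circ\psi(z^{[m]}_i)=g(x_i)$, this is exactly \eqref{eq:discrete_local_norm_variational}. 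The closure $\overline{\mathbf{CH}_s(\mathcal F)}$ is compact in $C^0$ by Arzel\`a--Ascoli, and point evaluation is continuous, so the infimum is attained. Continuity of $\rho^{[m]}$ in $\mathbf x$ follows from the uniform Lipschitz bound on $X_{\mathcal F}$-balls.

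\textbf{Step 2 ($d^{[m]}_{\mathcal F}$ is the associated CC distance).} We then show that $d^{[m]}_{\mathcal F}$ equals the CC distance $d^{[m]}$ of this structure, repeating the two halves of the proof of \Cref{thm:main} with the point evaluations $x_i\mapsto\varphi(x_i)$ in place of whole maps.
\begin{itemize}
\item[(a)] For $d^{[m]}\le d^{[m]}_{\mathcal F}$: take near-optimal flows from \Cref{prop:distance-attained}, whose controls take values in $K=\overline{\mathbf{CH}_1(\mathcal F)}^{C^0}$. Pass to a relaxed limit via \Cref{lem:relaxed-compactness}. The barycenter control then drives the $m$ points along a horizontal curve of length $\le T$. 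The argument is unchanged, and is only simpler, because there are finitely many trajectories.
\item[(b)] For $d^{[m]}_{\mathcal F}\le d^{[m]}$: approximate a horizontal control in $L^1$ by simple $\mathbf{CH}$-valued controls and realize each convex combination by time-splitting. This yields a flow moving the $m$ points to within any $\varepsilon$ of the endpoint in time close to the length. Since $d^{[m]}_{\mathcal F}$ is defined through $\overline{\mathcal A_{\mathcal F}(T)}$, the closing argument of \Cref{prop:distance-attained} upgrades this approximate matching to exact matching.
\end{itemize}

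\textbf{Step 3 (conclusion).} Apply \Cref{prop:local_gauge_from_distance} on the finite-dimensional manifold $M^m$. If $\|\mathbf u\|_{\psi^{[m]}}<\infty$, part (i) gives the limit equal to the norm for horizontal $C^1$ curves. If $\|\mathbf u\|_{\psi^{[m]}}=\infty$, part (ii) gives that both sides of \eqref{eq:discrete_local_norm_derivative} are $+\infty$.

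\textbf{Main obstacle.} This is the remaining case: $\mathbf u\in\mathcal D$ but $\gamma$ is an arbitrary, possibly non-horizontal, $C^1$ curve. The upper bound then needs the $O(t^2)$ discrepancy between $\gamma(t)$ and the flow of an optimal $g$ (which reaches $\mathbf x+t\mathbf u+O(t^2)$) to be corrected at cost $o(t)$. I would first try to exploit finite-dimensionality: the set $\{(h(x_i))_i: h\in\mathbf{CH}_s(\mathcal F)\}$ is a compact convex subset of $\mathbb R^{dm}$, so if $\mathcal D_{\mathbf x}$ has constant rank nearby, the correction can be produced by a locally Lipschitz choice of control. Otherwise I would restrict the claim to horizontal $\gamma$, exactly as in \Cref{prop:local_gauge_from_distance}(i), and note this assumption explicitly.
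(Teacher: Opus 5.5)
Your route differs from the paper's. The paper argues directly with flows: for the upper bound it concatenates short flows realizing an optimal $g$, and for the lower bound it averages controls. You instead first identify $d^{[m]}_{\mathcal F}$ with the CC distance of the anchor $\rho^{[m]}_{\mathbf x}(v)=(v(x_1),\dots,v(x_m))$, which is the content of \Cref{prop:discrete_geodesic}, and then invoke \Cref{prop:local_gauge_from_distance}. This part is sound. It gives $\liminf_{t\to 0}d^{[m]}_{\mathcal F}(\psi^{[m]},\gamma(t))/t\ge\|\mathbf u\|_{\psi^{[m]}}$ for \emph{every} $C^1$ curve, including the value $+\infty$ when $\|\mathbf u\|_{\psi^{[m]}}=\infty$. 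This holds because $\rho^{[m]}_{\mathbf x}\big(\overline{\mathbf{CH}_C(\mathcal F)}\big)$ is compact in $\mathbb R^{dm}$ and its gauge is lower semicontinuous.

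The gap is the one you flagged, and it cannot be closed. For non-horizontal $\gamma$ the statement is false as soon as $\mathcal D_{\psi^{[m]}}\neq T_{\psi^{[m]}}M^m$. Take the paper's $\operatorname{SO}(3)$ example with $m=2$ and $z_1\neq\pm z_2$. There $\mathcal D_{(y_1,y_2)}=\{(\hat a y_1,\hat a y_2)\}$ has constant rank $3<4$ near $(z_1,z_2)$ and is tangent to the level sets of $y_1\cdot y_2$, which rotations preserve. Let $\gamma(t)=(z_1,w(t))$ with $w(0)=z_2$, $w'(0)=0$ and $z_1\cdot w(t)\neq z_1\cdot z_2$ for $t>0$. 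Then $\mathbf u=0$, so the left side is $0$, while $d^{[2]}_{\mathcal F}(\psi^{[2]},\gamma(t))=+\infty$. More generally, take $\mathbf u=0$ and $\gamma(t)=\exp_{\psi^{[m]}}(t^2w)$ with $w\notin\mathcal D_{\psi^{[m]}}$. If $d=o(t)$ along a sequence, your own averaging estimate gives $\rho^{[m]}_{\psi^{[m]}}(U_k)/t_k^2\to w$. This forces $w$ into the closed finite-dimensional subspace $\mathcal D_{\psi^{[m]}}$, a contradiction. So the constant-rank idea does not help; only full rank does. The paper's sketch has the same hole: a curve that ``matches $\gamma(t)$ to first order'' does not give $d\le st+o(t)$ at the exact point $\gamma(t)$. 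That would require removing the $o(t)$ mismatch at cost $o(t)$, which is possible only in the full-rank situation.

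Your horizontal fallback also needs an extra hypothesis. \Cref{prop:local_gauge_from_distance}(i) uses continuity of $s\mapsto\|\dot\gamma(s)\|_{\mathcal D,\gamma(s)}$ at $s=0$, and this fails where the rank of $\mathcal D$ drops. Take $m=1$, $\mathcal F=\{\pm\partial_x,\pm\sin x\,\partial_y\}$ on $\mathbb T^2$, and base point $(0,0)$. The curve $\gamma(s)=(s,\tfrac23 s^{3/2})$ is $C^1$ and horizontal, with $\gamma'(0)=\partial_x$ of norm $1$ (the control coefficient $\sqrt s/\sin s$ is integrable). Yet any admissible motion of time $T$ from the origin keeps $|x|\le T$, hence $|y|\le T^2$. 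So $d^{[1]}_{\mathcal F}((0,0),\gamma(t))\ge c\,t^{3/4}$ and $d/t\to\infty$.

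What your argument actually proves is the lower bound above, together with equality in two cases.
\begin{enumerate}
\item[(a)] $\mathcal D_{\psi^{[m]}}=T_{\psi^{[m]}}M^m$. The unit balls $\rho^{[m]}_{\mathbf y}(\overline{\mathbf{CH}_1(\mathcal F)})$ move $L$-Lipschitz in Hausdorff distance and contain a ball of uniform radius near $\psi^{[m]}$. Hence the Finsler norm is continuous, and Step 3 applies to every $C^1$ curve.
\item[(b)] Curves with $\dot\gamma(s)=\rho^{[m]}_{\gamma(s)}v(s)$ and $\limsup_{s\to0}\|v(s)\|_{\mathcal F}\le\|\mathbf u\|_{\psi^{[m]}}$, for example the flow of an optimal $g$. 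The upper bound $d\le\int_0^t\|v(s)\|_{\mathcal F}\,ds$ then follows directly from your Step 2(b).
\end{enumerate}
The proposition should be stated under one of these hypotheses.
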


\begin{proof}
  The argument is similar to the proof~\eqref{eq:local_norm_variational} in \Cref{thm:main}.

\emph{Upper bound}: If for some $s>0$, \(\mathbf u\) is realized at the samples by some \(g\in \overline{\mathbf{CH}_s(\mathcal F)}\) (i.e.\ \(g\circ\psi(z^{[m]}_i)=u_i\)), concatenate short flows of generators with total time \(s\,t+o(t)\)
to produce a horizontal curve in \(M^m\) starting at \(\psi^{[m]}\) whose endpoint at time \(t\) matches \(\gamma(t)\) to first order. Therefore,
\(d_{\mathcal F}^{[m]}(\psi^{[m]},\gamma(t))\le s\,t+o(t)\). Taking the limit \(t\to 0^+\) and then the infimum over \(s\) gives the upper bound.
\textbf{Lower bound}: the uniformly Lipschitz property of admissible flows and sub-additivity of \(d_{\mathcal F}^{[m]}\) imply that any admissible flow
of time \(o(t)\) realizing \(\gamma(t)\) forces \(\mathbf u\) to be attained by some \(g\in \overline{\mathbf{CH}_s(\mathcal F)}\) with \(s\) arbitrarily close to the metric slope \(\liminf_{t\to 0^+} \frac{d_{\mathcal F}^{[m]}(\psi^{[m]},\gamma(t))}{t}\).
\end{proof}

Similar to the infinite-dimensional case, we can also characterize the distance \(d_{\mathcal F}^{[m]}\) in terms of lengths of curves in \(M^m\) induced by the local norms.
Specifically, let \(\gamma:[0,1]\to M^m\) be an absolutely continuous curve, and define its discrete length by
\begin{equation}
L^{[m]}(\gamma):=\int_0^1 \big\|\dot\gamma(t)\big\|_{\gamma(t)}\,dt.
\end{equation}
We then have the following, whose proof is similar to that of~\eqref{eq:geodesic_distance} in \Cref{thm:main}.

\begin{proposition}
\label{prop:discrete_geodesic}
For all \(\mathbf x,\mathbf y\in M^m\),
\begin{equation}
\label{eq:discrete_cc}
d_{\mathcal F}^{[m]}(\mathbf x,\mathbf y)
=\inf\Big\{\, L^{[m]}(\gamma)\ \Big|\ \gamma:[0,1]\to M^m\ \text{absolutely continuous},\ \gamma(0)=\mathbf x,\ \gamma(1)=\mathbf y\Big\}.
\end{equation}
\end{proposition}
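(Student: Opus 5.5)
We prove the two inequalities in \eqref{eq:discrete_cc} separately, following the proof of \Cref{thm:main} but in the finite-dimensional manifold $M^m$. Throughout, $d$ denotes the right-hand side of \eqref{eq:discrete_cc}. For $\mathbf x\in M^m$ the local norm $\|\mathbf u\|_{\mathbf x}$ is understood as in \eqref{eq:discrete_local_norm_variational}: it is the least $s$ such that $u_i=g(x_i)$ for some $g\in\overline{\mathbf{CH}_s(\mathcal F)}$. This depends only on the points $x_i$, since $\psi(z_i^{[m]})=x_i$.

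\textbf{Step 1: $d_{\mathcal F}^{[m]}\le d$.} Fix an absolutely continuous $\gamma$ with $L^{[m]}(\gamma)<\infty$, and reparametrize it by arclength. For a.e.\ $t$ there is $g_t\in\overline{\mathbf{CH}_{1}(\mathcal F)}$ with $\dot\gamma_i(t)=g_t(\gamma_i(t))$. We select $g_t$ measurably using a measurable selection theorem; this applies because $K=\overline{\mathbf{CH}_1(\mathcal F)}^{C^0}$ is compact by Arzel\`a--Ascoli and the constraint is closed. Then $\gamma$ is the trajectory of the $m$ points under the Carath\'eodory flow of $\dot y=g_t(y)$. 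As in the last step of the proof of \Cref{thm:main}, we approximate $t\mapsto g_t$ in $L^1([0,T];C^0)$ by simple functions with values in $\mathbf{CH}_1(\mathcal F)$. On each piece we realize the convex combination $\sum a_if_i$ by alternating short flows of the generators $f_i$ (Trotter/chattering), with time $\sum|a_i|$. Gr\"onwall and the uniform Lipschitz bound give flows $\varphi_n\in\mathcal A_{\mathcal F}(T+o(1))$ whose values at $x_i$ converge to $y_i$. By compactness (Arzel\`a--Ascoli and the truncation argument of \Cref{prop:distance-attained}), a limit $\varphi\in\overline{\mathcal A_{\mathcal F}(T)}$ interpolates exactly, so $d_{\mathcal F}^{[m]}(\mathbf x,\mathbf y)\le T=L^{[m]}(\gamma)$.

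\textbf{Step 2: $d\le d_{\mathcal F}^{[m]}$.} This reuses the relaxation argument. Let $T>d_{\mathcal F}^{[m]}(\mathbf x,\mathbf y)$. Choose piecewise-constant controls $u_n:[0,T]\to K$ whose flows $\gamma_n(t,\cdot)$ send $x_i$ to points converging to $y_i$; the truncation argument from \Cref{prop:distance-attained} lets us fix the horizon at $T$. By Lemma~\ref{lem:relaxed-compactness} we extract Young measures $\nu_t$ with barycenter $u(t)\in K$. By Arzel\`a--Ascoli, $\gamma_n\to\gamma$ uniformly. The $(I)_n+(II)_n$ decomposition from the proof of \Cref{thm:main} then gives $\gamma(t,x)=x+\int_0^t u(s,\gamma(s,x))\,ds$ for every $x$. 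Set $\Gamma(t):=(\gamma(t,x_1),\dots,\gamma(t,x_m))$. This is an absolutely continuous curve in $M^m$ from $\mathbf x$ to $\mathbf y$ with $\dot\Gamma_i(t)=u(t)(\Gamma_i(t))$. Hence $\|\dot\Gamma(t)\|_{\Gamma(t)}\le\|u(t)\|_{\mathcal F}\le 1$, and so $L^{[m]}(\Gamma)\le T$. Letting $T\downarrow d_{\mathcal F}^{[m]}$ gives the claim.

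\textbf{Main obstacle.} The delicate point is Step 1's measurability and integrability. We must produce a measurable control $g_t$ realizing $\dot\gamma(t)$ with $\|g_t\|_{\mathcal F}$ equal, or arbitrarily close, to the local norm. We handle this by applying the Kuratowski--Ryll-Nardzewski theorem to the closed-valued multifunction $t\mapsto\{g\in \|\dot\gamma(t)\|_{\gamma(t)}K:\ g(\gamma_i(t))=\dot\gamma_i(t)\ \forall i\}$. Nonemptiness of its values follows from compactness of $K$, which ensures the infimum in \eqref{eq:discrete_local_norm_variational} is attained. Its measurability follows from continuity of point evaluations on $K$. A secondary issue is that the time-$t$ flow of the limiting Carath\'eodory control must be a uniform limit of elements of $\mathcal A_{\mathcal F}$. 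This is standard once the controls converge in $L^1([0,T];C^0)$ with uniform Lipschitz bounds.
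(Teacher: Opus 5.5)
Your proposal is correct and follows the route the paper intends: the paper gives no separate argument for this proposition and simply defers to the proof of Theorem~\ref{thm:main}. Your Step~2 is that proof's Young-measure relaxation, and your Step~1 is its simple-function and piecewise-constant realization; the extra ingredient you supply, a measurable selection lifting an absolutely continuous curve in $M^m$ to a $K$-valued control (with compactness of $K$ guaranteeing the infimum in the discrete local norm is attained), is exactly what the finite-dimensional setting needs and what the paper leaves implicit.
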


Finally, recall the variational characterization of the local norm in~\eqref{eq:local_norm_variational} for the infinite-dimensional manifold \(\mathcal M\):
\begin{equation}
\label{eq:cont_local_norm}
\|u\|_{\psi}=\inf\Big\{\, s>0\ \Big|\ u\in \overline{\mathbf{CH}_s(\mathcal F)\circ\psi}\Big\}.
\end{equation}
Write \(\mathbf u^{[m]}:=I^{[m]}(u)\in T_{\psi^{[m]}}M^m\).

The following proposition shows that the discrete local norms converge to the continuous local norm as the number of samples increases.

\begin{proposition}
\label{prop:local_norm_convergence}
Let \(\psi\in\mathcal M\) and \(u\in T_\psi\mathcal M\). We then have:
\begin{equation}
\lim_{m\to\infty}\ \big\|\, \mathbf u^{[m]}\,\big\|_{\,\psi^{[m]}}\ =\ \|u\|_{\psi}.
\end{equation}
\end{proposition}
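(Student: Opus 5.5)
The plan is to prove the two inequalities separately. The upper bound $\limsup_m \|\mathbf u^{[m]}\|_{\psi^{[m]}} \le \|u\|_\psi$ is immediate. The lower bound is a compactness argument, driven by the fact that $\overline{\mathbf{CH}_s(\mathcal F)}^{C^0}$ is compact in $C^0(M)$; this was already used in the proof of \Cref{thm:main}.

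\emph{Upper bound and monotonicity.} Suppose $\|u\|_\psi<\infty$ and $s>\|u\|_\psi$. By \eqref{eq:cont_local_norm} there is $g\in\overline{\mathbf{CH}_s(\mathcal F)}^{C^0}$ with $u=g\circ\psi$. This $g$ interpolates every sample vector, since $g\circ\psi(z^{[m]}_i)=u(z^{[m]}_i)$ for all $i$. Hence $\|\mathbf u^{[m]}\|_{\psi^{[m]}}\le s$ for every $m$, and letting $s\downarrow\|u\|_\psi$ gives
\[
\limsup_{m\to\infty}\|\mathbf u^{[m]}\|_{\psi^{[m]}}\le\|u\|_\psi .
\]
If $\|u\|_\psi=+\infty$ there is nothing to prove in this direction. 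If the sampling sets are nested, the sequence is also nondecreasing in $m$, since more constraints can only shrink the feasible set. Without nestedness we work with $\liminf$ and $\limsup$ directly, as below.

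\emph{Lower bound.} Set $s_*:=\liminf_m\|\mathbf u^{[m]}\|_{\psi^{[m]}}$ and assume $s_*<\infty$. Fix $\eta>0$. Pass to a subsequence $m_k$ for which there exist $g_k\in\overline{\mathbf{CH}_{s_*+\eta}(\mathcal F)}^{C^0}$ satisfying $g_k(\psi(z^{[m_k]}_i))=u(z^{[m_k]}_i)$ for all $i\le m_k$.

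The set $K_s:=\overline{\mathbf{CH}_s(\mathcal F)}^{C^0}$ is uniformly bounded and uniformly Lipschitz, with constant $sL$, because $\mathcal F$ is uniformly bounded in $W^{1,\infty}$. By Arzel\`a--Ascoli, $K_s$ is compact in $C^0(M)$. Extract a further subsequence with $g_k\to g$ uniformly, where $g\in K_{s_*+\eta}$.

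Next we show $g\circ\psi=u$. Take any $y\in M$. By the covering property \eqref{eq:dense_samples}, we can choose indices $i_k$ with $z^{[m_k]}_{i_k}\to y$. We then estimate
\[
|g(\psi(y))-u(y)|\le |g-g_k|_{C^0}+|g_k(\psi(y))-g_k(\psi(z_{i_k}))|+|u(z_{i_k})-u(y)|.
\]
The first term tends to zero by uniform convergence. The second tends to zero by the uniform Lipschitz bound on $K_s$ together with continuity of $\psi$. The third tends to zero by continuity of $u$; here $u\in T_\psi\mathcal M$, and tangent vectors are continuous (indeed $W^{1,\infty}$) in the charts of \Cref{def:compatible-pair}. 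Thus $u=g\circ\psi$ with $g\in K_{s_*+\eta}$, so $\|u\|_\psi\le s_*+\eta$. Letting $\eta\to0$ gives $\|u\|_\psi\le\liminf_m\|\mathbf u^{[m]}\|_{\psi^{[m]}}$.

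The same argument covers the case $\|u\|_\psi=\infty$. If $s_*$ were finite, the construction would force $\|u\|_\psi<\infty$. Hence $\|u\|_\psi=\infty$ implies $\|\mathbf u^{[m]}\|_{\psi^{[m]}}\to\infty$.

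\emph{Main obstacle.} The delicate point is the compactness step. Compactness of $K_s$ needs the closure in \eqref{eq:discrete_local_norm_variational} to be the $C^0$ closure, so that the weak-star limit of the interpolants stays in $K_s$; this is the same closure used for $\|\cdot\|_{\mathcal F}$. It also needs the Lipschitz constant of elements of $K_s$ to be bounded uniformly in $k$, which is what allows passing from the samples to all of $M$. We also need to confirm that the closure of $\mathbf{CH}_s(\mathcal F)\circ\psi$ in \eqref{eq:cont_local_norm} coincides with $\overline{\mathbf{CH}_s(\mathcal F)}^{C^0}\circ\psi$. This holds because $\psi$ is a homeomorphism of the compact set $M$, so composition with $\psi$ is an isometry of $C^0$.
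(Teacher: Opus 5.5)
Your proof is correct and follows essentially the same route as the paper. The upper bound restricts a single global representer $g$ with $u=g\circ\psi$ to the samples. The lower bound uses Arzel\`a--Ascoli compactness of $\overline{\mathbf{CH}_s(\mathcal F)}^{C^0}$ to extract a uniform limit of the sample interpolants, then the covering property and continuity give $g\circ\psi=u$. Your direct $\liminf$ formulation, in place of the paper's contradiction argument, also handles the case $\|u\|_\psi=+\infty$ and does not need the discrete infimum to be attained at exactly $\|u\|_\psi-\eta$.
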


\begin{proof}
\emph{Upper bound for left-hand side:}
Fix $\varepsilon>0$. By definition of $\|u\|_\psi$, choose $g_\varepsilon\in \overline{\mathbf{CH}_{\|u\|_\psi+\varepsilon}(\mathcal F)}$ such that
$u=g_\varepsilon\circ\psi$ on $M$.
Evaluating at the samples gives, for every $m$ and $i=1,\dots,m$,
\begin{equation}
g_\varepsilon\big(\psi(z^{[m]}_i)\big)\ =\ u(z^{[m]}_i).
\end{equation}
Hence, by the definition of the finite-sample local norm,
\begin{equation}
\|u^{[m]}\|_{\psi^{[m]}}\ \le\ \|u\|_\psi+\varepsilon\qquad\text{for all }m.
\end{equation}
Taking $\limsup_{m\to\infty}$ and then $\varepsilon\to  0^+$ yields
\begin{equation}
\limsup_{m\to\infty}\ \|\mathbf u^{[m]}\|_{\psi^{[m]}}\ \le\ \|u\|_\psi.
\end{equation}

\medskip\noindent
\emph{Lower bound for left-hand side:}
Assume, by contradiction, that there exists $\eta>0$ and an infinite subsequence $(m_k)$ such that
\begin{equation}
\|\mathbf u^{[m_k]}\|_{\psi^{[m_k]}}\ \le\ \|u\|_\psi-\eta\qquad\text{for all }k.
\end{equation}
By the discrete variational definition, for each $k$ there exists
$g_k\in \overline{\mathbf{CH}_{\|u\|_\psi-\eta}(\mathcal F)}$ such that
\begin{equation}\label{eq:sample_match}
g_k\big(\psi(z^{[m_k]}_i)\big)\ =\ u(z^{[m_k]}_i),\qquad i=1,\dots,m_k.
\end{equation}
The family $\{g_k\}$ is uniformly bounded and uniformly equicontinuous on $M$;
by the Arzelà-Ascoli theorem, passing to a further subsequence (not relabeled) we may assume
\begin{equation}
g_k\ \to\ g\quad\text{uniformly on }M,
\end{equation}
for some $g\in \overline{\mathbf{CH}_{\|u\|_\psi-\eta}(\mathcal F)}$ (closedness of the hull).

We claim that $g\circ\psi=u$ on $M$, which contradicts the definition of $\|u\|_\psi$.
Fix any $y\in M$. Since $Z^{[m]}$ asymptotically covers $M$ (defined in~\eqref{eq:dense_samples}), there exists a choice of indices $i(k)\in\{1,\dots,m_k\}$ such that
\begin{equation}
z^{[m_k]}_{i(k)}\ \to\ y\qquad (k\to\infty).
\end{equation}
By continuity of $\psi$ and $u$ and uniform convergence $g_k\to g$,
\begin{equation}
u\big(z^{[m_k]}_{i(k)}\big)\ \to\ u(y),\qquad
g_k\big(\psi(z^{[m_k]}_{i(k)})\big)\ \to\ g\big(\psi(y)\big).
\end{equation}
Using the matching property \eqref{eq:sample_match} at the indices $i(k)$, we conclude
\begin{equation}
g\big(\psi(y)\big)\ =\ u(y).
\end{equation}
Since $y\in M$ was arbitrary, $g\circ\psi=u$ on all of $M$. But $g\in \overline{\mathbf{CH}_{\|u\|_\psi-\eta}(\mathcal F)}$, hence the definition of $\|u\|_\psi$ would force $\|u\|_\psi\le \|u\|_\psi-\eta$, a contradiction.

Therefore no such subsequence can exist, and we must have
\begin{equation}
\liminf_{m\to\infty}\ \|\mathbf u^{[m]}\|_{\psi^{[m]}}\ \ge\ \|u\|_\psi.
\end{equation}

\medskip
Combining the two steps gives
\(
\displaystyle \lim_{m\to\infty}\|\mathbf u^{[m]}\|_{\psi^{[m]}}=\|u\|_\psi.
\)
\end{proof}

We have shown that the infinite-dimensional sub-Finsler geometry on $\operatorname{Diff}(M)$
(both its \emph{local} norm and its \emph{global} minimal-time distance $d_{\mathcal F}$)
arises as the limit of the corresponding finite-sample interpolation geometry as the sampling sets densify.
Specifically, as $Z^{[m]}$ becomes dense we have the convergence of distances
\begin{equation}
\lim_{m\to\infty} d_{\mathcal F}^{[m]}\!\big(I^{[m]}(\psi_1),\,I^{[m]}(\psi_2)\big)
\;=\; d_{\mathcal F}(\psi_1,\psi_2),
\end{equation}
and the convergence of local norms
\begin{equation}
\lim_{m\to\infty}\ \big\|\, I^{[m]}(u)\,\big\|_{\,I^{[m]}(\psi)}
\;=\; \|u\|_{\psi},
\end{equation}
Thus the finite-sample sub-Finsler structure is a consistent discretization of the infinite dimensional one.
The relations established in this subsection can be summarized in the following commutative diagram.
\begin{equation}
\begin{tikzcd}[row sep=3.5em, column sep=5em]
d_{\mathcal F}(\cdot, \cdot)
  \arrow[r, shift left=0.7ex, "\text{derivative}"]
  \arrow[d, shift left=0.7ex, "\text{sampling}"]
&
\|\cdot \|_{\psi}
  \arrow[l, shift left=0.7ex, "\text{integral}"]
  \arrow[d, shift left=0.7ex, "\text{sampling}"]
\\
d_{\mathcal F}^{[m]}(\cdot, \cdot)
  \arrow[u, shift left=0.7ex, "m\to \infty"]
  \arrow[r, shift left=0.7ex, "\text{integral}"]
&
\|\cdot \|_{\psi^{[m]}}
  \arrow[l, shift left=0.7ex, "\text{integration}"]
  \arrow[u, shift left=0.7ex, "m\to \infty"]
\end{tikzcd}
\end{equation}

This is fully consistent with the relation between universal interpolation and approximation studied in \cite{cheng2025interpolation}:
there, the existence of a \emph{uniform} time $T$ that interpolates any finite sample of a target $\psi$ is shown to be equivalent to $C^0$ approximation of $\psi$ in finite time.
The results here provide a geometric refinement of that equivalence:
uniform control of the discrete distances $d_{\mathcal F}^{[m]}$ over dense samples is equivalent to control of the global distance $d_{\mathcal F}$, and the discrete local norms converge to the continuous sub-Finsler norm that determines $d_{\mathcal F}$ via the geodesic (path-length) formula.
Practically, this links finite-sample interpolation complexity to the intrinsic (sub-Finsler) complexity of flow approximation.

%% file: applications.tex
\section{Applications}
\label{sec:applications}

The geometric picture we presented holds generally for any control family $\mathcal F$ over a smooth compact manifold $M \subset \mathbb R^d$  satisfying our assumptions. For a given control family $\mathcal F$, our framework provides a way to characterize or estimate the approximation complexity $C_{\mathcal F}(\psi)$ for given target function $\psi
\in \mathcal M$, by studying the induced sub-Finsler norm and geodesics. Moreover, our framework also generalizes the approximation complexity $\mathrm C_{\mathcal F}$ to a distance $d_{\mathcal F}(\psi_1, \psi_2)$ between any two diffeomorphisms $\psi_1, \psi_2\in \mathcal M$, which may provide new insights to function/distribution approximation and model building.
In the following, we will demonstrate the applicability and discuss the insights of our framework using specific examples.

\subsection{1D ReLU revisited}
Let us revisit the results for 1D ReLU control family
studied in~\Cref{sec:1d_relu} following the geometric framework established in the previous section.
In particular, we will identify all the spaces involved precisely,
and show how the manifold viewpoint reveal new insights to ReLU flow approximation.

\subsubsection{A complete summary for the ReLU case}
We start from the target function space $\mathcal T$ associated with the 1D ReLU control family $\mathcal F$ defined in~\eqref{eq:1d_relu_family}.
In our geometric viewpoint, $\mathcal T$ is the connected component of the identity in $\mathcal M =\operatorname{Diff}^{W^{1,\infty}}([0,1])$ under the topology induced by the geodesic distance induced by the sub-Finsler norm defined by the control family $\mathcal F$.
According to the general results, the horizontal (tangent) space at each $\psi\in\mathcal M$ can be identified as:
\begin{equation}
    X_{\mathcal F}\circ \psi \;:=\;\{\, f\circ \psi \mid f\in X_{\mathcal F}\,\}.
\end{equation}
With the local sub-Finsler norm given by
% \begin{equation}
%     \|u\|_{\psi} \;:=\; \lim_{t\to 0^+} \frac{d_{\mathcal F}(\psi, \gamma(t))}{t},
%     \quad \text{for any horizontal $C^1$ curve $\gamma$ with }\gamma(0)=\psi,\ \gamma'(0)=u.
% \end{equation}
\begin{equation}
    \|u\|_{\psi}=\inf\{\,s>0\mid u\in \overline{\mathbf{CH}_s(\mathcal F)}^{C^0}\circ \psi\,\},
\end{equation}
Moreover, $d_{\mathcal F}$ can be identified as the geodesic distance given by:
\begin{equation}
d_{\mathcal F}(\psi_1,\psi_2)=\inf\Big\{\ \int_0^1 \|\dot\gamma(t)\|_{\gamma(t)}\,dt\ :\ \gamma\ \text{horizontal},\ \gamma(0)=\psi_1,\ \gamma(1)=\psi_2\ \Big\}.
\end{equation}

In the 1D ReLU case, fortunately, the local norm $\|\cdot\|_\psi$ admits a closed-form characterization, as given in~\Cref{prop:local_norm_relu}.
This in turn gives a closed-form characterization of the space $X_{\mathcal F}$: it is just all the $\operatorname{BV}^2$ functions on $[0,1]$ that vanishes at the boundary.
This also results in a closed-form characterization of $\mathcal M$.
Furthermore, after a global transformation in~\eqref{eq:map_alpha}, we can also characterize a geodesic connecting any two target functions in $\mathcal M$.
With these characterizations, we can compute the distance $d_{\mathcal F}$ exactly as~\Cref{thm:1D_relu} by the geodesic characterization. Finally, we have the explicit expression for $d_{\mathcal F}$:
\begin{equation}
    d_{\mathcal F}(\psi_1, \psi_2) =\|\ln \psi_1^\prime -\ln \psi_2^\prime \|_{\operatorname{TV}([0,1])} \int_0^1 \left| \frac{\psi_1^{\prime\prime}(x)}{\psi_1^\prime(x)} - \frac{\psi_2^{\prime\prime}(x)}{\psi_2^\prime(x)} \right|\, dx.
\end{equation}

\subsubsection{Discussions on the closed-form results for 1D ReLU}
Now, let us take a closer look at the closed-form representation of the complexity $C_{\mathcal F}$:
\begin{equation}
    C_{\mathcal F}(\psi) = d_{\mathcal F}(\operatorname{Id}, \psi) = \int_0^1 \left|\frac{\psi^{\prime\prime}(x)}{\psi^\prime(x)}\right|\, dx.
\end{equation}
We can see that the complexity depends on the ratio between the second derivative and the first derivative of the target function $\psi$. Therefore, if a map $\psi\in \mathcal M$ has first order derivative bounded below by some $c>0$, and second order derivative bounded above by some $C>0$, then it has a complexity bounded by $C/c$. If $C$ is small and $c$ is relatively large, then the function is easy to approximate with ReLU flows.

On the other hand, if the first order derivative $\psi^\prime$ is close to zero at some point, or the second order derivative $\psi^{\prime\prime}$ is large at some point, then the complexity becomes large. For example, consider $\psi_{\varepsilon}(x):= \varepsilon x+ x^2$ for small $\varepsilon>0$. Then, we have
\begin{equation}
    C_{\mathcal F}(\psi_\varepsilon) = \int_0^1 \left|\frac{2}{\varepsilon + 2x}\right|\, dx = 2\ln\left(1+\frac{2}{\varepsilon}\right),
\end{equation}
which goes to infinity as $\varepsilon\to 0$. This is consistent with the intuition that when $\varepsilon$ is small, the function $\psi_\varepsilon$ has a very small first order derivative near $x=0$, which makes approximation by flow maps hard.
Also, if we consider $\xi_n: = x+\frac{1}{2n\pi}\sin(n\pi x)$ for positive integer $n$, then we have
\begin{equation}
    \xi_n^\prime(x) = 1+\frac{1}{2}\cos(n\pi x)\in [\frac{1}{2}, \frac{3}{2}]
\end{equation}
We have
\begin{equation}
    C_{\mathcal F}(\xi_n) = \int_0^1 \left|\frac{-\frac{n\pi}{2}\sin(n\pi x)}{1+\frac{1}{2}\cos(n\pi x)}\right|\, dx = \int_0^1 \left|\frac{n\pi \sin(n\pi x)}{2+\cos(n\pi x)}\right|\, dx\ge \frac{1}{3} \int_0^1 n\pi |\sin(n\pi x)|\, dx = \frac{2}{3} n,
\end{equation}
which goes to infinity as $n\to \infty$. This is also consistent with the intuition that when $n$ is large, the derivative of $\xi_n$ oscillates rapidly, making flow approximation hard.

We also notice that this complexity measure is very different from the classical function space norms,
such as Sobolev norms or Besov norms, which mainly depend on the absolute values of the derivatives of different orders.
Instead, our complexity measure depends on the first order derivative relative to the second order derivative,
which is more related to the geometry of the function graph.

For a more intuitive understanding of this difference,
we provide a visualization of the distance $d_{\mathcal F}(\cdot, \cdot)$
compared to the classical $L^2$ distance with respect to the Lebesgue measure
on a set of functions in $\mathcal M$. Specifically, we consider three functions in $\mathcal M$ defined as
\begin{equation}
    \psi_1(x) = x+\sin (2\pi x)/7\pi, \quad \psi_2(x) = x+\sin (4\pi x)/7\pi, \quad \psi_3(x) = x-\sin (2\pi x)/7\pi - \sin(4\pi x)/7\pi.,
\end{equation}

By the characterization of $\mathcal M$, their convex combinations $a\psi_1 + b\psi_2 + c\psi_3$, with $a,b,c\ge 0$ and $a+b+c=1$, also belong to $\mathcal M$. We then parametrize the function $\psi = a\psi_1 + b\psi_2 + c\psi_3$ by the barycentric coordinates $(a,b,c)$ in the triangle with vertices $(1,0,0), (0,1,0), (0,0,1)$.
Notice that the point $(\frac{1}{3}, \frac{1}{3}, \frac{1}{3})$ in the triangle corresponds to the identity map $\operatorname{Id}$.
We then visualize the contours of the distances $d_{\mathcal F}(\psi_0, \psi)$ and $\|\psi_0 - \psi\|_{L^2([0,1])}$ to the identity in the triangle. The results are shown in~\Cref{fig:distance-comparison}.
\begin{figure}[htbp]
    \centering
    \begin{minipage}{0.48\textwidth}
        \centering
        \includegraphics[width=\linewidth]{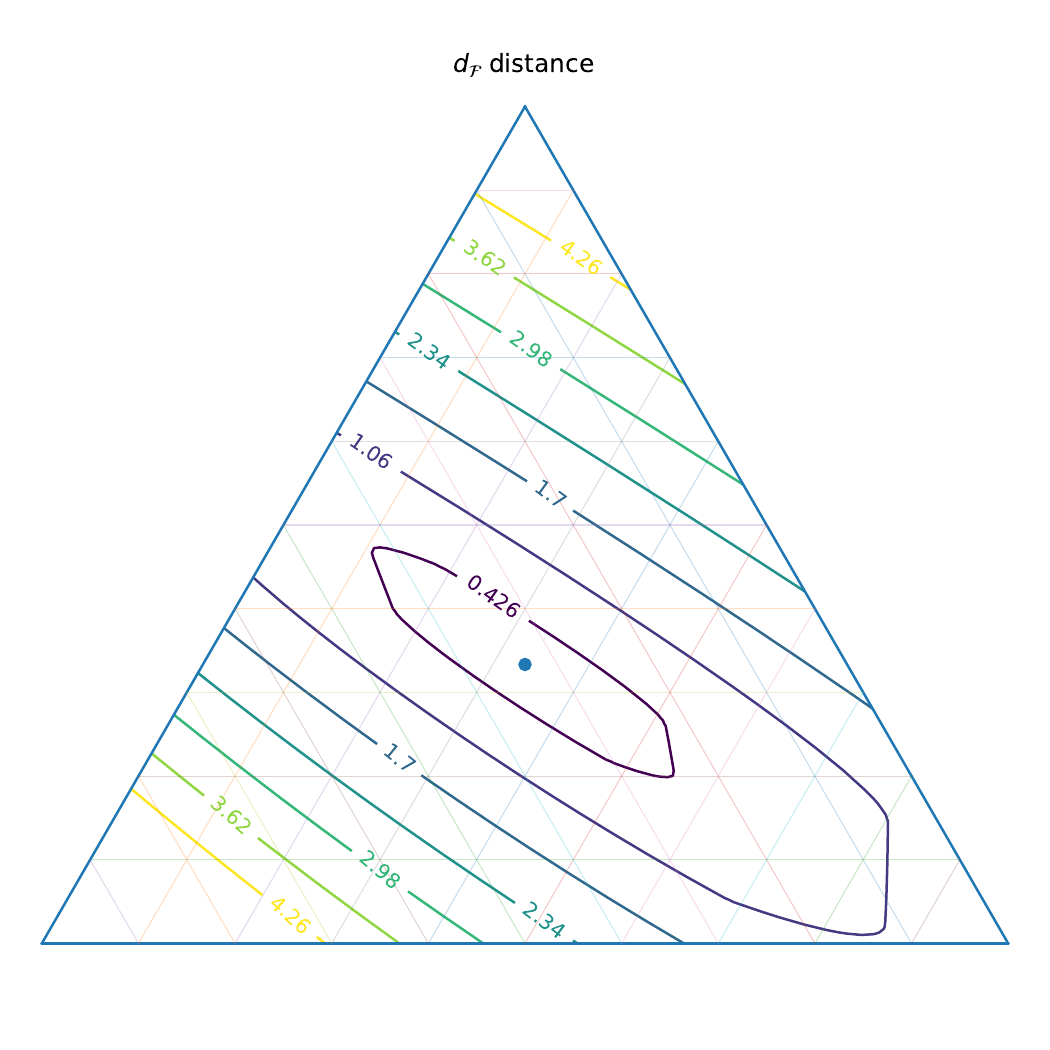}
        \par\medskip\small (a) Countours of $d_{\mathcal F}$
    \end{minipage}\hfill
    \begin{minipage}{0.48\textwidth}
        \centering
        \includegraphics[width=\linewidth]{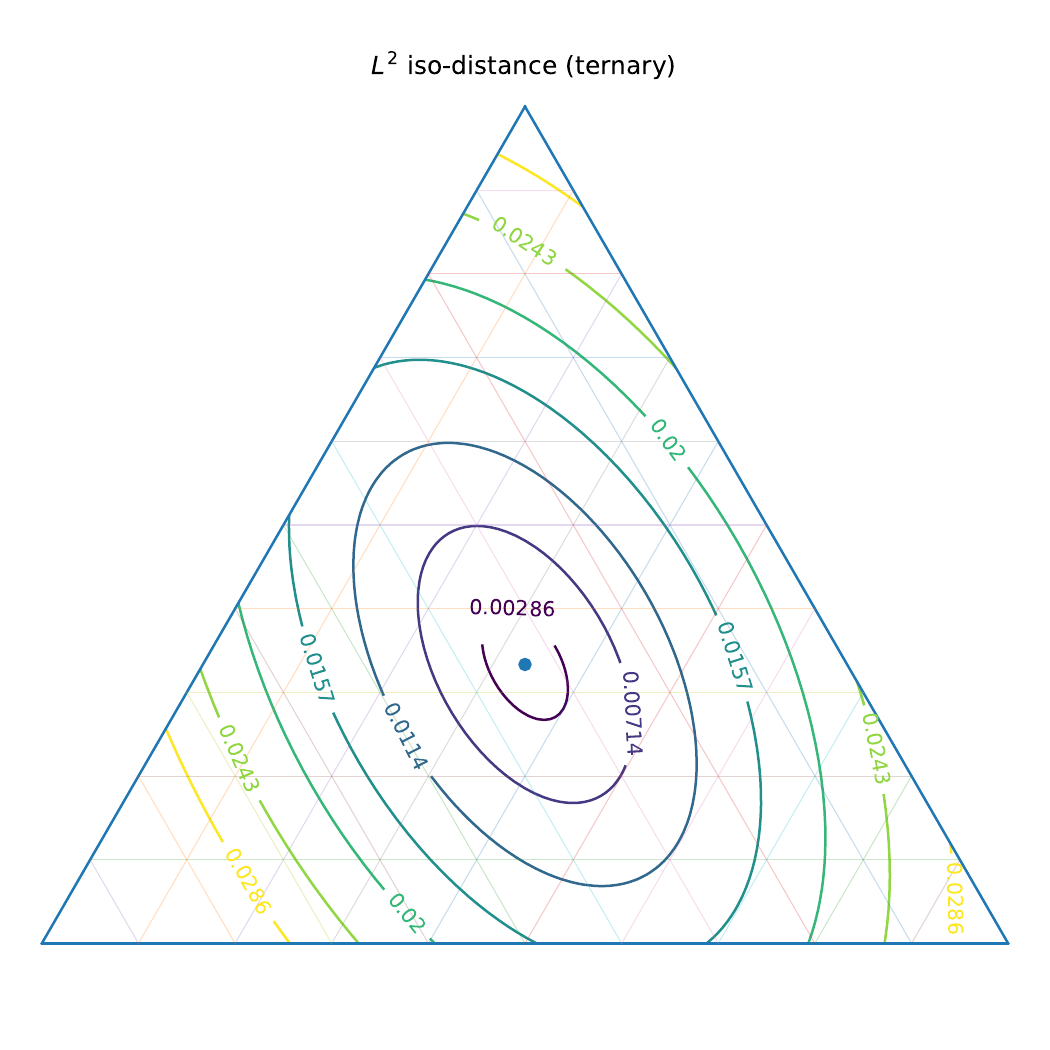}
        \par\medskip\small (b) Countours of $L^2$ distance
    \end{minipage}
    \caption{Comparison of distance contours between $d_{\mathcal F}$ and $L^2$ on the convex hull of three functions in $\mathcal M$}
    \label{fig:distance-comparison}
\end{figure}
From the figure, the contours of $L^2$ distance are elliptical, which is consistent with the fact that it is the affine transformation of the Euclidean distance in $\mathbb R^3$. However, the contours of $d_{\mathcal F}$ are highly non-elliptical, which reflects the difference of the complexity measure from classical norms.

% \cjp{Visualizations to be added here. However, if we want to compare with the orthonormal basis in $L^2$, it is impossible to find two basis function in $\mathcal M$ such that they are orthogonal in $L^2$ sense(because the image are all between $[0,1]$). So how do we give the visualization here? }
% \ql{We discuss this}

\subsubsection{Connections with flow-based generative models}
Another interesting observation is that the distance $d_{\mathcal F}(\psi_1, \psi_2) = |\ln \psi_1^\prime-\ln \psi_2^\prime|_{\operatorname{TV}([0,1])}$ has a connection to the learning of distributions. Specifically, functions in $\mathcal M$ can be naturlly identified as the cumulative distribution functions (CDFs) of some probability distributions supported on $[0,1]$.
For a given $\psi\in \mathcal M$, we denote its corresponding probability distribution as $\mu_\psi$, which has the density function $p_\psi = \psi^\prime$.
Then, the metric $d_{\mathcal F}(\cdot, \cdot)$ naturally induces a metric between probability distributions supported on $[0,1]$ as:
\begin{equation}
    d_{\mathcal F}(\mu_{\psi_1}, \mu_{\psi_2}) := d_{\mathcal F}(\psi_1, \psi_2) = \int_{[0,1]}|(\ln p_{\psi_1})^\prime - (\ln p_{\psi_2})^\prime|_{\operatorname{TV}([0,1])}dx,
\end{equation}
where $\mu_{\psi_1}, \mu_{\psi_2}$ are measures whose positive density functions that are bounded and bounded away from zero,  and have bounded variance.
Notice that the term $(\ln p_\psi)^\prime$ is known as the score function of the distribution $\mu_\psi$ in statistics~\cite{lehmann1998theory}, which is widely used in modern generative models, such as score-based diffusion models~\cite{song2019generative,ho2020denoising,song2020improved}. The metric $d_{\mathcal F}(\cdot, \cdot)$ actually measures $L^1$ distance between the score functions of two distributions.
Therefore, the results indicates that the complexity of transforming one distribution to another using flow maps of ReLU networks is closely related to the distance between their score functions. This provides a new perspective to understand the learning of distributions via flow-based models.
Similar connections in higher dimensional distributions are worthy of further investigations in the future.

\subsubsection{New ways to build models: insights from connected components}
\label{sec:new_ways_build_model}
From the geometric perspective, the distance $d_{\mathcal F}(\cdot, \cdot)$ offers a more general understanding of the approximation complexity $C_{\mathcal F}$: this distance can be defined between any two diffeomorphisms on $M$. Our discussion in~\Cref{sec:1d_relu} just focuses on the target set $\mathcal T$, which is the connected components of the identity map under the topology induced by $d_{\mathcal F}$.

More generally, for any $\psi \in \operatorname{Diff}([0,1])$, we can consider its connected component:
\begin{equation}
    \mathcal T_\psi := \{\tilde \psi \in \operatorname{Diff}([0,1]) \mid d_{\mathcal F}(\psi, \tilde \psi) < \infty\}.
\end{equation}
Then, according to our general results, $\mathcal T_\psi$ is also a Finsler Banach manifold modeled on $X_{\mathcal F}$, with similar characterizations of the local norm and geodesic distance.

This generalization offers us new understandings on function approximation: it is not necessarily to approximate a target function starting from the identity map.
A simple example is, if we choose $\psi = -\operatorname{Id}$ as the opposite of identity map, then $\mathcal T_{-\operatorname{Id}}$ contains orientation-reversing diffeomorphisms. Notice that it is well-known that it is impossible to uniformly approximate a orientation-reversing diffeomorphism by flow maps (i.e. from the identity map). However, if we start from the opposite map $-\operatorname{Id}$, the problem becomes feasible.

% \ql{Say something about ALL ResNets, transformers, etc start with the identity map}
In most of applications of deep learning, including ResNets~\cite{he2016deep}, transformers~\cite{vaswani2017attention}, and flow-based generative models~\cite{rezende2015variational,ho2020denoising,song2021score}, the networks are usually initialized as the identity map. This is of course a natural choice, since the identity map is the simplest function that preserves all information of the input data. However, an insight from our geometric framework is that approximation may be easier if we start from a proper initial function that is closer to the target function, at least in a topological sense.
With this perspective, our geometric framework provides a way of understanding the approximation complexity between any two functions, instead of only from the identity map to the target function.
This and its algorithmic consequences will be explored in future work.

\subsection{Other applications}
We further consider some other examples of control families and investigate the corresponding target function spaces and complexity measures using the geometric framework.

\subsubsection{Transport time on \(\operatorname{SO}(3)\)}
Although the goal of our framework is to study flows generated by neural networks for 
deep learning and artificial intelligence applications, it can be applied to general flows on manifolds.
Here, we consider a case where such a flow gives very familiar objects in geometry,
and discuss what our distance correspond to in these settings.
% \ql{Start with the goal here: motivation is ML/DL, but this geometry can apply to general flows, here we talk about
% a case where such a flow gives very familiar objects in geometry, and discuss what our distance correspond to in these settings.}

We consider a finite-dimensional example where the reachable diffeomorphisms form a Lie subgroup of \(\operatorname{Diff}(M)\). Let \(M=S^2\) (unit sphere) and consider the family of constant fields
\begin{equation}
x\ \mapsto\ A x,\qquad A\in \mathfrak{so}(3),
\end{equation}
i.e.\ \(A\) is skew-symmetric. Writing the \(\hat{\cdot}\)-map
\begin{equation}
\widehat{(a_x,a_y,a_z)}\ :=\
\begin{pmatrix}
0&-a_z&a_y\\
a_z&0&-a_x\\
-a_y&a_x&0
\end{pmatrix},
\qquad \operatorname{vee}(\hat a)=a,
\end{equation}
the vector field is \(x\mapsto \hat a\,x\) with body angular velocity \(a\in\mathbb R^3\).
Flows are rotations \(x\mapsto R x\) with \(R=\exp(t\hat a)\in \operatorname{SO}(3)\).
Hence the reachable set of diffeomorphisms is the finite-dimensional submanifold
\[
\mathcal M\ =\ \operatorname{SO}(3)\ \subset \operatorname{Diff}(S^2).
\]
For \(\psi\in\operatorname{SO}(3)\), define the principal rotation angle and principal logarithm
% \ql{don't use M for the matrix? I think better to use $\varphi$ for consistency, or use some other letter since M is the manifold S2.}
\begin{equation}
\theta(\psi)\ :=\ \arccos\!\Big(\frac{\operatorname{tr}(\psi)-1}{2}\Big)\in[0,\pi],\qquad
\operatorname{Log}(\psi):=\frac{\theta(\psi)}{2\sin\theta(\psi)}\,(\psi-\psi^\top)\in\mathfrak{so}(3),
\end{equation}
so that \(\exp(\operatorname{Log}\psi)=\psi\) and \(\operatorname{Log}(\psi)=\hat{\omega}\) with \(\|\omega\|_2=\theta(\psi)\).

We may identify \(T_R\operatorname{SO}(3) \equiv R\,\mathfrak{so}(3)\),
and define the anchored bundle \((\mathcal E,\pi,\rho)\) by
\begin{equation}
\mathcal E:=\operatorname{SO}(3)\times \mathbb R^3,\qquad
\pi(R,a)=R,\qquad
\rho_R(a)=R\,\hat a\ \in T_R\operatorname{SO}(3).
\end{equation}
Thus \(\rho\) is onto at each \(R\) (full actuation), so \(\mathcal D=\rho(\mathcal E)=T\operatorname{SO}(3)\) and the induced sub-Finsler structure is in fact a \emph{Finsler} structure.
Choosing a fiber norm on \(a\in\mathbb R^3\) yields a left-invariant metric with length
\begin{equation}
L(\gamma)=\int_0^1 \|a(t)\|\,dt,\qquad \dot R(t)=R(t)\,\hat{a}(t).
\end{equation}

We discuss two choices for the control family to realize rotations that will lead
to different approximation rates.

\medskip
\noindent\textbf{(A) \(\ell^2\)-fiber norm (Riemannian).}
Let
\begin{equation}
\mathcal F_2:=\big\{\,x\mapsto \hat a\,x\ \big|\ \|a\|_2\le 1\,\big\}.
\end{equation}
By convexity, the atomic norm coincides with \(\|a\|_2\):
\begin{equation}
\|R\hat a\|_{R}\ =\ \|a\|_2.
\end{equation}
This norm is induced by the inner product
\(\langle \hat a,\hat b\rangle=a\cdot b\) on \(\mathfrak{so}(3)\), hence we obtain the standard bi-invariant Riemannian metric on \(\operatorname{SO}(3)\)~\cite{huynh2009metrics}.
Geodesics are one-parameter subgroups \(R(t)=R_0\exp(t\,\hat \omega)\) with constant \(\omega\), and the geodesic (minimal-time) distance is
\begin{equation}
\label{eq:l2_equals_angle}
d_{\mathcal F_2}(\psi_1,\psi_2)\ =\ \big\|\operatorname{vee}\big(\operatorname{Log}(\psi_2\psi_1^{\top})\big)\big\|_2
\ =\ \theta(\psi_2\psi_1^\top),
\end{equation}
in particular \(d_{\mathcal F_2}(I,\psi)=\theta(\psi)\).
This distance is also called the angle metric on \(\operatorname{SO}(3)\) in the literature~\cite{hartley2013rotation}, since it measures the scale of the angle between two rotations.

\medskip
\noindent\textbf{(B) \(\ell^1\)-fiber norm (Finsler).}
Let the control family be the six axial generators
\[
\mathcal F_1:=\big\{ \pm\hat e_1 x, \pm\hat e_2 x, \pm\hat e_3 x \big\},
\]
so the atomic norm on \(a\in\mathbb R^3\) is \(\|a\|_1\):
\begin{equation}
\|R\hat a\|_{R}\ =\ \|a\|_1.
\end{equation}
The induced distance \(d_{\mathcal F_1}\) is the Finsler distance associated with \(\ell^1\) on the Lie algebra.
Using the constant-velocity path \(R(t)=\exp(t\,\operatorname{Log}\psi)\) gives the explicit \emph{upper bound}
\begin{equation}
\label{eq:l1_upper}
d_{\mathcal F_1}(I,\psi)\ \le\ \big\|\operatorname{vee}(\operatorname{Log}\psi)\big\|_1,\qquad
d_{\mathcal F_1}(\psi_1,\psi_2)\ \le\ \big\|\operatorname{vee}\!\big(\operatorname{Log}(\psi_2 \psi_1^{\top})\big)\big\|_1.
\end{equation}
Moreover, since \(\|a\|_1\ge \|a\|_2\) for all \(a\in\mathbb R^3\), the \(\ell^1\)-Finsler norm dominates the \(\ell^2\)-Riemannian norm pointwise; hence the \emph{lower bound}
\begin{equation}
\label{eq:l1_lower}
d_{\mathcal F_1}(\psi_1,\psi_2)\ \ge\ d_{\mathcal F_2}(\psi_1,\psi_2)\ =\ \theta(\psi_2\psi_1^{\top}).
\end{equation}
Combining \eqref{eq:l1_upper}-\eqref{eq:l1_lower} and the norm equivalence \(\|v\|_1\le \sqrt{3}\,\|v\|_2\) yields the two-sided estimate
\begin{equation}
\theta(\psi_2\psi_1^{\top})\ \le\ d_{\mathcal F_1}(\psi_1,\psi_2)\ \le\ \sqrt{3}\,\theta(\psi_2\psi_1^{\top}).
\end{equation}
Thus the \(\ell^1\)-Finsler transport time is quantitatively comparable to the canonical geodesic distance, with anisotropy encoded by the polyhedral \(\ell^1\) unit ball on the Lie algebra.

\subsubsection{Finite time approximation of smooth functions using additional dimensions}
\label{sec:finite_time_approximation}
In this example, we will consider an application of our framework to the approximation of general smooth functions $f:\mathbb{R}^d\to \mathbb{R}^d$ over compact sets, by using flows in higher dimensional spaces. 

For $R>0$, let $B_R\subset \mathbb{R}^{2d}$ denote the open Euclidean ball of radius $R$ centered at the origin. 
We will consider the following control family $\mathcal F$ on $\mathbb{R}^{2d}$:
\begin{equation}
\mathcal F := \{x\to \rho(x)\sigma(A x+b) \mid A\in \mathbb R^{2d\times 2d}, b\in \mathbb R^{2d}, \|A\|_2 \le 1, \|b\|_2 \le 1 \}
\end{equation}
where $\sigma$ is the ReLU activation function which is applied elementwise, and $\rho\in C_c^\infty(\mathbb{R}^{2d})$ is a smooth cutoff function such that $\rho\equiv 1$ on $B_1$ and $\rho\equiv 0$ outside $B_{7/4}$, and is positive on $B_{7/4}\setminus B_1$. 

Applying our results together with Theorem 2.1 in~\cite{yang2025optimal}, we have the following result:
\begin{proposition}
    \label{prop:finite_time_approximation}
Let $r\ge d+2$ be a positive integer.
For any given compact set $K\subset \mathbb R^d$ and any $C^r$ function $f:\mathbb R^d\to \mathbb R^d$, there exists $T>0$ and two linear maps $\alpha: \mathbb R^d\to \mathbb R^{2d}$, $\beta: \mathbb R^{2d}\to \mathbb R^d$ such that 
\begin{equation}
    f \in \overline{\{\beta \circ \varphi \circ \alpha \mid \varphi\in \mathcal A_{\mathcal F}(T)\}}^{C^0}.
\end{equation}
That is, one can approximate a function with enough smoothness in $d$-dimension arbitrarily well using a  continuous-time neural network with layer-wise architecture given by $\mathcal F$ in $2d$-dimension in a \emph{uniform} time $T$.
\end{proposition}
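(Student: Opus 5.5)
The plan is to realise $f$ by an explicit map $\Psi$ on $\mathbb R^{2d}$ and then bound $d_{\mathcal F}(\Psi,\operatorname{Id})$ using \Cref{thm:main}. Choose the linear maps $\alpha(x)=(\lambda x,0)$ and $\beta(x,y)=y/\lambda$, with $\lambda>0$ small enough that $\alpha(K)\subset B_{1/4}$; by rescaling $f$ we may also assume $\lambda\|f\|_{C^0(K)}$ is small. We want a diffeomorphism $\Psi$ with $\beta\circ\Psi\circ\alpha=f$ on $K$. The natural candidate is a shear: extend $g(z):=\lambda f(z/\lambda)$ to a $C^r$ map on $\mathbb R^d$ with small $C^1$ norm on $B_1$, multiply it by a cutoff, and set
\[
\Psi(x,y):=(x,\;y+\chi(x,y)\,g(x)).
\]
This $\Psi$ is the time-one map of the autonomous field $V(x,y)=(0,\chi(x,y)g(x))$, and we take $\chi$ supported in $B_1$ and equal to $1$ near $\alpha(K)\times\{0\}$. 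Since $\Psi$ is the time-one flow of $V$, the path $\gamma(t)=\varphi_V^t$ is horizontal with $\dot\gamma(t)=V\circ\gamma(t)$. Theorem~\ref{thm:main} applies on the compatible pair formed by $W^{1,\infty}$ diffeomorphisms of $\overline{B_2}$ fixing the boundary (as in Example~\ref{rem:good-pair-examples}), because every $f\in\mathcal F$ vanishes outside $B_{7/4}$. It then gives $d_{\mathcal F}(\Psi,\operatorname{Id})\le\int_0^1\|V\|_{\mathcal F}\,dt=\|V\|_{\mathcal F}$. It remains to show $\|V\|_{\mathcal F}<\infty$, and then $T:=\|V\|_{\mathcal F}+1$ works by the definition of $d_{\mathcal F}$ together with uniform convergence on compacta under the continuous maps $\alpha,\beta$.

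The key step is to show that $V\in X_{\mathcal F}$ with finite atomic norm, that is, $V\in\overline{\mathbf{CH}_s(\mathcal F)}^{C^0}$ for some finite $s$. On $B_1$ the cutoff $\rho\equiv1$, so an element of $\mathbf{CH}_s(\mathcal F)$ restricted to $B_1$ is a combination of vector-valued ReLU layers $\sigma(Ax+b)$ with $\|A\|_2,\|b\|_2\le1$. Taking $A$ of rank one (a single nonzero row $w^\top$ placed at coordinate $j$) isolates a single ridge function $e_j\sigma(w\cdot x+b)$. Finite-norm membership therefore reduces to a Barron or variation-space type statement: each component of $V$ should lie in the $\mathcal F_1$-variation space of ReLU ridge functions on $B_1$. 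Here I would invoke Theorem~2.1 of \cite{yang2025optimal}, which I expect to say that $C^r$ functions with $r\ge d+2$ on a ball (in dimension $2d$, with the $\chi$ factor handled by choosing $\chi$ smooth) belong to the ReLU variation space with norm controlled by the $C^r$ norm. Applying this to the smooth compactly supported components of $V$ gives finite $\|V\|_{\mathcal F}$, up to a fixed constant accounting for the normalisation $\|A\|_2,\|b\|_2\le1$ through positive homogeneity of ReLU.

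The main obstacle is the region $B_{7/4}\setminus B_1$. There $\rho\neq1$, so the ridge expansion of $V$ on $B_1$ does not extend as $\rho\cdot(\text{expansion})$ to a vector field that matches $V$ (which is $0$) outside $B_1$. I would handle this by not requiring exact matching outside: replace $V$ by $\tilde V:=\rho\cdot h$, where $h$ is the ridge expansion on a larger ball, and use the flow of $\tilde V$ instead. Because $\tilde V=V$ on $B_1$ and the trajectories starting from $\alpha(K)$ stay in $B_1$ for $t\in[0,1]$ (by the smallness of $\lambda$), the identity $\beta\circ\varphi^1_{\tilde V}\circ\alpha=f$ on $K$ still holds, and the length bound is unchanged. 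The exponent condition $r\ge d+2$ is exactly what the cited theorem needs in dimension $2d$ to guarantee finite variation norm.
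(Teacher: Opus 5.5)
Your proposal is correct and follows essentially the paper's argument. You lift with $\alpha(x)=(\lambda x,0)$, realise the shear on $\alpha(K)$ as the time-one flow of a cut-off vertical field, work in the compatible pair of boundary-fixing $W^{1,\infty}$ diffeomorphisms of $\overline{B_2}$, and invoke Theorem~2.1 of \cite{yang2025optimal} to make $\|\cdot\|_{\mathcal F}$ of that field finite, so that $d_{\mathcal F}(\cdot,\operatorname{Id})<\infty$.

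Two small points. First, $(x,y)\mapsto(x,y+\chi(x,y)g(x))$ is not literally the time-one map of $V$. The two agree only along the segments $\{(\lambda x,s\lambda f(x)):s\in[0,1]\}$ on which $\chi\equiv1$, which is all you actually use; the paper secures this by taking the cutoff $\equiv1$ on the whole convex ball $B_1$. Second, your ``main obstacle'' on $B_{7/4}\setminus B_1$ is avoided, as in the paper, by writing the compactly supported $C^r$ field as $\rho h$ with $h:=V/\rho$ and applying the theorem to $h$ on $B_2$. This is legitimate since $\rho>0$ on $B_{7/4}$, and for your $V$, supported in $B_1$, simply $V=\rho V$, so no modified field $\tilde V$ is needed.
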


The application of our framework uses the following proposition:
\begin{proposition}
    \label{prop:lift_projection}
    Given a compact set $K\subset \mathbb R^d$. For any $C^r$ function $f:\mathbb R^d\to \mathbb R^d$, there exists linear maps $\alpha: \mathbb R^d\to \mathbb R^{2d}$, $\beta: \mathbb R^{2d}\to \mathbb R^d$ and a $C^r$ diffeomorphism $\Phi:\mathbb R^{2d}\to \mathbb R^{2d}$ such that
    \begin{equation}
        \beta\circ \Phi \circ \alpha|_K = f|_K,
    \end{equation}
    $\alpha(K)\subset B_1$,  $\Phi$ is equal to the identity on $B_2\setminus B_{3/2}$, and $\Phi$ maps $B_2$ onto itself, and maps $\alpha(K)$ to $B_1$.
\end{proposition}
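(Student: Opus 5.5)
We prove \Cref{prop:lift_projection} with a ``graph'' construction. Lift $x\in\mathbb R^d$ to $(x,0)\in\mathbb R^{2d}$. Use a shear map in the second block of coordinates to write $f(x)$ there. Then project onto that block.

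\textbf{Choice of $\alpha$, $\beta$ and the model diffeomorphism.} Since $K$ is compact, choose $\lambda>0$ so small that $\lambda K\subset B_{1/4}^{(d)}$. Set
\[
\alpha(x):=(\lambda x,0), \qquad \beta(y,z):=c^{-1}z,
\]
where $c>0$ is a further small constant fixed below. Let
\[
g(y):=c\,f(y/\lambda),
\]
which is $C^r$ on $\mathbb R^d$. Consider the shear
\[
S(y,z):=(y,\,z+\chi(y,z)\,g(y)),
\]
where $\chi\in C_c^\infty(\mathbb R^{2d})$ equals $1$ on $B_{1/2}$, vanishes outside $B_{5/4}$, and has $\|\nabla\chi\|_\infty\le 4$.

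On $\alpha(K)$ we have $\chi=1$. Hence $\beta\circ S\circ\alpha(x)=c^{-1}g(\lambda x)=f(x)$ for $x\in K$. Also $S=\operatorname{Id}$ outside $B_{5/4}$, so $S$ is the identity on $B_2\setminus B_{3/2}$ and on the complement of $B_2$.

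\textbf{Invertibility.} The map $S$ is a $C^r$ map that is a compactly supported perturbation of the identity. Write $S=\operatorname{Id}+h$ with
\[
h=(0,\chi g).
\]
Choose $c$ small enough that $\|Dh\|_{L^\infty}<1/2$; this is possible because $g$ and $Dg$ scale linearly in $c$ on the compact set $\overline{B_{5/4}}$. Then $S$ is injective, since $|S(p)-S(q)|\ge\tfrac12|p-q|$. It is a local $C^r$ diffeomorphism by the inverse function theorem, and it is proper, hence surjective. So $S$ is a global $C^r$ diffeomorphism. Because $S$ equals the identity outside $B_{5/4}$ and is a bijection of $\mathbb R^{2d}$, it maps $B_2$ onto itself.

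\textbf{Getting $\Phi(\alpha(K))\subset B_1$.} For $x\in K$ we have
\[
|S(\alpha(x))|\le |\lambda x|+c\,\|f\|_{C(K)}<\tfrac14+\tfrac14<1,
\]
once $c\le 1/(4\|f\|_{C(K)}+1)$. Taking $\Phi:=S$ therefore gives all the required properties.

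The step that needs care is the global invertibility. The small-$c$ choice handles it, at the price of the factor $c^{-1}$ in $\beta$; that factor is harmless because $\beta$ only needs to be linear. If instead one insists that $\beta$ have a prescribed norm, one would compose the shear with a radial dilation in the $z$-block supported inside $B_{3/2}$. That dilation would have to be built as the flow of a compactly supported vector field, which is the more delicate construction. For this proposition the scaling trick suffices.
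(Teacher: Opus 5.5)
Your proof is correct, but it gets the diffeomorphism by a different mechanism than the paper.

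\textbf{The paper's route.} It starts from the same global shear $\Psi(u,v)=(u,v+\kappa f(u/\lambda))$, and it also puts a factor $1/\kappa$ into $\beta$, so your closing remark about radial dilations is not needed. It then cuts off the generating vector field rather than the map: $\Phi$ is the time-one map of $X(u,v)=\eta(u,v)\,(0,\tilde f(u))$, with $\eta\equiv1$ on $B_1$ and $\operatorname{supp}\eta\subset B_{3/2}$. As the flow of a compactly supported $C^r$ field, $\Phi$ is automatically a $C^r$ diffeomorphism equal to the identity off $B_{3/2}$. It agrees with $\Psi$ on $\alpha(K)$ for the following reason. Once $\kappa$ is small enough that $\alpha(K)$ and $\Psi(\alpha(K))$ both lie in the convex ball $B_1$, the segment $t\mapsto(1-t)x+t\Psi(x)$ stays where $\eta\equiv1$, so by ODE uniqueness it is the trajectory.

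\textbf{Your route.} You cut off the shear itself, $S=\operatorname{Id}+(0,\chi g)$, and then prove global invertibility by hand. You do this correctly, by imposing the extra smallness condition $\|Dh\|_{L^\infty}<1/2$ on $c$ and then using the bi-Lipschitz lower bound, the inverse function theorem and properness. By contrast, the paper's $\kappa$ only has to keep $\Psi(\alpha(K))$ inside $B_1$.

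\textbf{What each buys.} Your route is more elementary and gives $\Phi$ explicitly on all of $\mathbb R^{2d}$. The paper's route needs no derivative estimate on $f$. It also directly produces the isotopy $(\Phi_t)_{t\in[0,1]}$ from $\operatorname{Id}$, generated by a single $C^r$ field supported in $B_{3/2}$. The proof of \Cref{prop:finite_time_approximation} relies on exactly this to get a finite-length horizontal path.

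Your construction yields such a path too, and you should say so. The maps $S_t(y,z)=(y,z+t\chi(y,z)g(y))$ satisfy the same bound $\|D(S_t-\operatorname{Id})\|_{L^\infty}<1/2$ for every $t\in[0,1]$, so each $S_t$ is a diffeomorphism. Moreover $\dot S_t=v_t\circ S_t$ with $v_t:=(0,\chi g)\circ S_t^{-1}$, which is a $C^r$ field supported in $\overline{B_{5/4}}\subset B_{3/2}$.
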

\begin{proof}
First, let $\alpha:\mathbb R^d\to \mathbb R^{2d}$ be the embedding $\alpha_0(u)=(\lambda u,0)$, and $\beta_0:\mathbb R^{2d}\to \mathbb R^d$ be the projection $\beta(u,v)=v/\kappa$, where $\lambda, \kappa>0$. Since $K$ and $f(K)$ are compact, there exists $\lambda, \kappa>0$ such that $\alpha_0(K)\subset B_1\subset B_2\subset \mathbb R^{2d}$ and $K\times f(K)\subset \mathbb R^{2d}$. 
Choose such $\lambda, \kappa>0$, and consider the map
$\Psi:\mathbb R^{2d}\to \mathbb R^{2d}$ by $\Psi(u,v)=(u,v+\kappa f(1/\lambda u))$. Then, we have $\beta\circ \Psi \circ \alpha|_K = f|_K$. 
Moreover, $\Psi$ is a $C^r$ diffeomorphism of $\mathbb R^{2d}$, with inverse $\Psi^{-1}(u,v)=(u,v-\kappa f(1/\lambda u))$. Also, $\Psi$ maps $\alpha(K)\subset B_1$ to $B_1$.

Next, we will modify $\Psi$ to obtain a diffeomorphism $\Phi$ that is equal to the identity on $B_2\setminus B_{3/2}$, and is equal to $\Psi$ on $\alpha(K)$. This is done by the following lemma:

\begin{lemma}
    \label{lemma:extension}
There exists a family $(\Phi_t)_{t\in[0,1]}$ such that, for every $t\in[0,1]$,
\begin{equation}
\Phi_t:B_2\to B_2
\end{equation}
is a $C^r$ diffeomorphism, and the following properties hold:
\begin{align*}
\Phi_0 &= \operatorname{id}_{B_2},\\
\Phi_1|_K &= \Psi|_K,\\
\Phi_t &= \operatorname{id}
\qquad \text{on } B_2\setminus B_{3/2}.
\end{align*}
In particular, $(\Phi_t)_{t\in[0,1]}$ is an isotopy of $B_2$ from $\operatorname{id}_{B_2}$ to $\Phi_1$, and every $\Phi_t$ is equal to the identity in a neighborhood of $\partial B_2$.
\end{lemma}

\begin{proof}[Proof of the lemma]
    Denote $\tilde f(x): = \kappa f(x/\lambda)$.
Choose $\eta\in C_c^r(B_{3/2})$ such that
\begin{equation}
0\le \eta\le 1,
\qquad
\eta\equiv 1 \quad \text{on } B_1.
\end{equation}
Extend $\eta$ by zero outside $B_{3/2}$, and define a $C^r$ vector field on $\mathbb{R}^{2d}$ by
\begin{equation}
X(u,v):=\eta(u,v)\,(0,\tilde f(u)).
\end{equation}
Since $X$ has compact support, its flow $(\Phi_t)_{t\in\mathbb{R}}$ is globally defined, and each $\Phi_t$ is a $C^r$ diffeomorphism of $\mathbb{R}^{2d}$. Moreover,
\begin{equation}
X\equiv 0 \quad \text{on } \mathbb{R}^{2d}\setminus B_{3/2},
\end{equation}
hence
\begin{equation}
\Phi_t=\operatorname{id}
\quad\text{on } \mathbb{R}^{2d}\setminus B_{3/2}
\end{equation}
for every $t\in\mathbb{R}$. In particular,
\begin{equation}
\Phi_t=\operatorname{id}
\quad\text{on } B_2\setminus B_{3/2}.
\end{equation}

We next show that each $\Phi_t$ maps $B_2$ onto itself. Since $\Phi_t$ fixes every point of $\mathbb{R}^{2d}\setminus B_{3/2}$, it fixes in particular every point of $\mathbb{R}^{2d}\setminus B_2$. If there were $x\in B_2$ such that $\Phi_t(x)\notin B_2$, then $\Phi_t(\Phi_t(x))=\Phi_t(x)$ as well, contradicting injectivity of $\Phi_t$. Thus $\Phi_t(B_2)\subset B_2$. Applying the same argument to $\Phi_t^{-1}$ gives $\Phi_t(B_2)=B_2$.

Finally, let $x=(u,v)\in K$, and define
\begin{equation}
\gamma_x(t):=(u,v+t \tilde f(u))=(1-t)x+t\Psi(x), \qquad t\in[0,1].
\end{equation}
Since $x\in B_1$ and $\Psi(x)\in B_1$, and $B_1$ is convex, we have $\gamma_x(t)\in B_1$ for all $t\in[0,1]$. Hence $\eta(\gamma_x(t))=1$ for all $t$, and therefore
\begin{equation}
\dot\gamma_x(t)=(0,\tilde f(u))=X(\gamma_x(t)).
\end{equation}
Also $\gamma_x(0)=x$. By uniqueness of solutions to the ODE generated by $X$, it follows that
\begin{equation}
\Phi_t(x)=\gamma_x(t)
\qquad\text{for all } t\in[0,1].
\end{equation}
In particular,
\begin{equation}
\Phi_1(x)=\gamma_x(1)=\Psi(x).
\end{equation}
Since $x\in K$ was arbitrary, we conclude that
\begin{equation}
\Phi_1|_K=\Psi|_K.
\end{equation}
This proves the result.
\end{proof}
The lemma implies that we can extend $\psi_{|\alpha(K)}$ to a $C^r$ diffeomorphism $\Phi$ of $\mathbb R^{2d}$ that is equal to the identity on $B_2\setminus B_{3/2}$, and maps $B_1$, $B_2$ onto themselves. Then, we have $\beta\circ \Phi \circ \alpha|_K = f|_K$, which completes the proof of the proposition.

\end{proof}

Now, we can provide the proof of Proposition~\ref{prop:finite_time_approximation}.

\begin{proof}[Proof of Proposition~\ref{prop:finite_time_approximation}]

With Proposition~\ref{prop:lift_projection}, we can apply our framework on the diffeomorphism $\Phi$ from $B_2$ to itself in $\mathbb R^{2d}$. We choose $\mathcal M$ as the set of $W^{1,\infty}$ diffeomorphisms of $B_2$ to itself that fixes the boundary.

For any map $g:\mathbb R^{2d} \to \mathbb R^{2d}$ that is supported on $B_{3/2}$ and of class $C^r$, there exists a $C^r$ smooth function $h$ supported on $B_2\subset \mathbb R^{2d}$ such that $g(x) = h(x)\rho(x)$ for all $x\in \mathbb{R}^d$. 
According to the discussion in Example~\ref{rem:good-pair-examples}, the pair $(\mathcal M, \mathcal F)$ is a compatible pair. According to Theorem 2.1 in~\cite{yang2025optimal}, the $C^0$ closure of the convex hull of the following family
\begin{equation}
\tilde{\mathcal F}:  \{x\to \sigma(A x+b) \mid A\in \mathbb R^{2d\times 2d}, b\in \mathbb R^{2d}, \|A\|_2 \le 1, \|b\|_2 \le 1 \}
\end{equation}
contains a ball in the space $C^r(B_2)$. Therefore, the closure of the convex hull of $\mathcal F$ contains a ball in $C^r(B_2)$ intersected with the set of functions supported on $B_{3/2}$. 
Furthermore, by the proof of Lemma~\ref{lemma:extension} above, the diffeomorphism $\Phi$ can be connected to the identity map by a path of diffeomorphisms that keeps to be the identity map on $B_2\setminus B_{3/2}$.
This means that the path $\gamma(t)$ from $\operatorname{Id}$ to $\Phi$ can be generated by a flow of $C^r$ vector fields supported on $B_{3/2}$, and therefore $\|\dot\gamma(t)\|_{\gamma(t)}$ is finite for all $t$. This implies that
\begin{equation}
    d_{\mathcal F}(\operatorname{Id}, \Phi) = \int_0^1 \|\dot\gamma(t)\|_{\gamma(t)} dt < \infty.
\end{equation}

\end{proof}

\subsubsection{Deep ResNet with layer normalisation}
We now return to learning problems, where we demonstrate how our framework
can be used as a recipe to estimate approximation complexities even when exact computations are difficult.

Let us consider a two-dimensional control family with normalization, i.e., a projection onto the unit circle.
Our base manifold is the unit circle \(M=S^1\subset \mathbb R^2\).
Fix \(b\in[-1,1]\) and introduce a control family \(\mathcal G_b\subset \operatorname{Vec}(\mathbb R^2)\) by
\begin{equation}
    \mathcal G_b:=\big\{\,x\mapsto W\,\sigma(Ax+b\mathbf 1)\ \big|\ W,A\in\mathbb R^{2\times 2},\ W\ \text{diagonal},\ \|W\|_F\le 1,\ \|A_i\|_2=1,\ i=1,2 \big\},
\end{equation}
where \(\sigma\) is the ReLU activation and \(\mathbf 1=(1,1)^\top\).
Let \(\operatorname{Proj}: \operatorname{Vec}(\mathbb R^2)\to \operatorname{Vec}(S^1)\) be the orthogonal projection onto the tangent bundle:
\begin{equation}
\operatorname{Proj}(V)(x):=(I-xx^\top)V(x),\qquad x\in S^1,\ V\in \operatorname{Vec}(\mathbb R^2).
\end{equation}
Define the projected control family on \(S^1\) by
\begin{equation}
\mathcal F_b:=\{\,\operatorname{Proj}(g)\mid g\in \mathcal G_b\,\}\ \subset\ \operatorname{Vec}(S^1).
\end{equation}
We introduce the manifold $\mathcal M := \operatorname{Diff}^{W^{1,\infty}}(S^1)$, the set of diffeomorphisms of $S^1$ that are bi-Lipschitz and have Lipschitz inverses, which is a Banach manifold modeled on $W^{1,\infty}(S^1)$.

This construction is motivated by layer normalization in deep learning~\cite{ba2016layer}: in a continuous-time idealization, normalization corresponds to projecting the ambient vector field onto the tangent space of a sphere; here the projection \(\operatorname{Proj}\) plays exactly that role.
Layer-normalised networks are core building blocks for deep ResNets (of the transformer type) 
used in practical applications, especially in large language models~\cite{ba2016layer,vaswani2017attention,xiong2020layer}.

Now, we check that $(\mathcal M,\mathcal F_b)$ satisfies Definition~\ref{def:compatible-pair}.

\emph{(1) Exponential charts / $C^1$ Banach manifold structure.}
Identify $S^1$ with $\mathbb R/2\pi\mathbb Z$ via the angle coordinate $\theta$ and write diffeomorphisms as
$\psi(\theta)=\theta+h(\theta)$ with $h\in W^{1,\infty}(S^1)$ and $\|h'\|_{L^\infty}$ sufficiently small (so that $\psi$ is bi-Lipschitz).
In these coordinates, local charts are given by addition,
\[
\beta_\psi(u):=\psi+u,\qquad u\in W^{1,\infty}(S^1)\ \text{small},
\]
which is the exponential chart on the circle (geodesics in $\theta$ are affine). Hence $\mathcal M=\operatorname{Diff}^{W^{1,\infty}}(S^1)$
admits a $C^1$ Banach manifold structure modeled on $W^{1,\infty}(S^1)$.

\emph{(2) Right translation is $C^1$.}
For fixed $\varphi\in\mathcal M$, right translation is $R_\varphi(\eta)=\eta\circ\varphi$.
In the angle coordinate, composition by a fixed bi-Lipschitz map acts boundedly on $W^{1,\infty}(S^1)$, and the induced map on charts is
$u\mapsto u\circ\varphi$, which is $C^1$ (indeed affine in these coordinates).

\emph{(3) Admissible velocities.}
Each $g\in\mathcal G_b$ is globally Lipschitz on $\mathbb R^2$,
and $\operatorname{Proj}$ is smooth on $S^1$ with uniformly bounded derivative. Thus every $f\in\mathcal F_b$ belongs to
$W^{1,\infty}\operatorname{Vec}(S^1)$ with a uniform $W^{1,\infty}$ bound, and consequently $X_{\mathcal F_b}$
embeds continuously into $W^{1,\infty}(S^1)$. Therefore, for any $\psi\in\mathcal M$ and $f\in X_{\mathcal F_b}$,
the composition $f\circ\psi$ gives a tangent direction at $\psi$ in the $W^{1,\infty}$ manifold structure.

\emph{(4) Invariance under Carath\'eodory controls.}
Let $u(\cdot):[0,T]\to X_{\mathcal F_b}$ be measurable. Since $X_{\mathcal F_b}\subset W^{1,\infty}(S^1)$
and the uniform $W^{1,\infty}$ bound imply that $u(t,\cdot)$ is Lipschitz in space with an integrable Lipschitz modulus.
Hence the Carath\'eodory ODE $\dot\gamma(t)=u(t)\circ\gamma(t)$ admits a unique absolutely continuous solution.
Moreover, Gr\"onwall's inequality yields a bi-Lipschitz bound on $\gamma(t)$, so $\gamma(t)\in\mathcal M$
for all $t\in[0,T]$.

\medskip
The family \(\mathcal F_b\) induces an anchored bundle \((\mathcal \mathcal E,\pi,\rho)\) over \(\mathcal M\), with horizontal distribution \(\mathcal D=\rho(\mathcal E)\subset T\mathcal M\) and fiberwise atomic norms yielding a sub-Finsler structure \(\|\cdot\|_{\mathcal D,x}\).
Identifying the \emph{full} norm \(\|\cdot\|_{\mathcal D,x}\) on all of \(\mathcal D_x\) is difficult in this example.
Instead, we estimate the norm on a fiberwise linear subspace \(\widetilde{\mathcal D}_x\subset \mathcal D_x\) on which the explicit bound is easy to compute. This collection \(\{\widetilde{\mathcal D}_x\}_{x\in\mathcal M}\) actually defines a subbundle of the anchored bundle \((E,\pi,\rho)\).

Specifically, we work on the subset
\[
\bar{\mathcal M}:=\big\{\psi\in \operatorname{Diff}(S^1)\ \big|\ \psi,\psi^{-1}\in C^3,\ \psi \text{ orientation preserving}\big\},
\]
composed of smooth orientation-preserving diffeomorphisms whose inverses are also smooth.
We parametrize \(S^1\) by the angle \(\theta\in\mathbb R\) and identify \(\psi\in\bar{\mathcal M}\) with its \(2\pi\)-periodic lift \(\bar\psi:\mathbb R\to\mathbb R\) satisfying
\begin{equation}
\bar\psi(\theta+2\pi)=\bar\psi(\theta)+2\pi,\qquad \bar\psi'(\theta)>0,\ \forall\,\theta\in\mathbb R,\qquad \bar\psi(0)\in[0,2\pi).
\end{equation}
We use the identification
\begin{equation}
C_{\operatorname{per}}^3([0,2\pi])=\{\,u\in C^3(\mathbb R)\mid u(\theta+2\pi)=u(\theta)\ \text{for all }\theta\,\}
\end{equation}
to represent a \emph{chosen} fiberwise subspace \(\widetilde{\mathcal D}_\psi\subset \mathcal D_\psi\) on which we can compute explicit bounds; in what follows \(u\in C_{\operatorname{per}}^3([0,2\pi])\) should be read as \(u\in \widetilde{\mathcal D}_\psi\).

We then have the following estimate on the local norm over \(\widetilde{\mathcal D}_\psi\) for each \(\psi\in\bar{\mathcal M}\):
\begin{proposition}
\label{prop:S1_local_bound}
Suppose \(b=\cos\beta\) with \(\beta/\pi\notin\mathbb Q\).
There exist constants \(C_{b,1},C_{b,2}>0\) such that for any \(u\in C_{\operatorname{per}}^3([0,2\pi])\) and \(\psi\in\bar{\mathcal M}\),
\begin{equation}
\|u\|_{\mathcal D,\psi}\ \le\ C_{b,1}\,\big\|(\,u\circ \psi^{-1}\,)^{(3)}\big\|_{L^2([0,2\pi])}
\ +\ C_{b,2}\,\big\|u\circ \psi^{-1}\big\|_{C([0,2\pi])}.
\end{equation}
\end{proposition}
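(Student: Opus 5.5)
By the definition of the fiber norm \eqref{eq:fiber-norm}, $\|u\|_{\mathcal D,\psi}=\|v\|_{\mathcal F_b}$ with $v:=u\circ\psi^{-1}$. The dependence on $\psi$ therefore disappears, and it suffices to prove a bound on the atomic norm of a single periodic function $v\in C^3_{\operatorname{per}}$, viewed as the tangent vector field $v(\theta)\,\tau(\theta)$ with $\tau(\theta)=(-\sin\theta,\cos\theta)$:
\begin{equation*}
\|v\|_{\mathcal F_b}\le C_{b,1}\|v^{(3)}\|_{L^2}+C_{b,2}\|v\|_{C}.
\end{equation*}
The plan is to exhibit explicit elements of $\mathcal F_b$ whose angular profiles are simple and then synthesize $v$ from them by a Fourier argument.

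\textbf{Step 1: the atoms.} Take $W=\operatorname{diag}(w,0)$ and $A_1=(\cos\alpha,\sin\alpha)$. On $S^1$ we get $A_1x+b=\cos(\theta-\alpha)+\cos\beta$. After projection, the angular component of the field is
\begin{equation*}
g_\alpha(\theta)=-w\sin\theta\,\sigma\big(\cos(\theta-\alpha)+\cos\beta\big).
\end{equation*}
Using the second diagonal entry instead gives the companion atom with $\cos\theta$ in place of $-\sin\theta$. Integrating over the rotation parameter $\alpha$ against a bounded density $c(\alpha)$, and using $\mathbf{CH}_s$ together with the $C^0$ closure, we obtain that every function of the form $(-\sin\theta)\,(c*k)(\theta)+\cos\theta\,(d*k)(\theta)$ lies in $\overline{\mathbf{CH}_s(\mathcal F_b)}$, where $s\le\|c\|_{L^1}+\|d\|_{L^1}$ and $k(\phi)=\sigma(\cos\phi+\cos\beta)$. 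Choosing $c=-\sin(\cdot)\,a$ and $d=\cos(\cdot)\,a$ up to the rotation twist, a short trigonometric computation reduces the problem to representing a scalar periodic function as $a*k$ plus lower-order corrections.

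\textbf{Step 2: Fourier inversion.} The kernel $k$ is a truncated cosine. Its Fourier coefficients are explicit: for $n\ge2$,
\begin{equation*}
\hat k(n)\propto \frac{n\cos(n\beta)\sin\beta-\sin(n\beta)\cos\beta}{n(n^2-1)},
\end{equation*}
so they decay like $n^{-2}$ times an oscillating factor. Given $v$, we set $\hat a(n)=\hat v(n)/\hat k(n)$ and need $\|a\|_{L^1}\le\|a\|_{L^2}\lesssim\sum_n n^{3}|\hat v(n)|\cdot|n^{-1}/\hat k(n)\cdots|$. Concretely, if $|\hat k(n)|\ge c_b n^{-3}$ for all $n$, then
\begin{equation*}
\|a\|_{L^2}\le c_b^{-1}\Big(\sum n^6|\hat v(n)|^2\Big)^{1/2}=c_b^{-1}\|v^{(3)}\|_{L^2},
\end{equation*}
and the low modes $n=0,\pm1$ (where the formula degenerates) are handled separately. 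Those finitely many modes are bounded by $\|v\|_C$ through $|\hat v(n)|\le\|v\|_C$, together with a fixed finite combination of atoms that realizes each such trigonometric mode. This produces the $C_{b,2}\|v\|_C$ term.

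\textbf{Main obstacle: the lower bound $|\hat k(n)|\gtrsim n^{-3}$.} This is exactly where $\beta/\pi\notin\mathbb Q$ enters. Irrationality guarantees $\hat k(n)\neq0$ for every $n$: the numerator $n\cos(n\beta)\sin\beta-\sin(n\beta)\cos\beta$ vanishes only if $\tan(n\beta)=n\tan\beta$. A uniform rate $n^{-3}$, however, requires the numerator to be $\gtrsim1/n$ rather than merely nonzero, and this is a Diophantine-type condition. My first attempt would be to split into two regimes. When $|\cos(n\beta)|\ge n^{-2}$, the term $n\cos(n\beta)\sin\beta$ dominates and gives numerator $\gtrsim n^{-1}$. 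When $\cos(n\beta)$ is tiny, $|\sin(n\beta)|\approx1$, so the second term is of order $|\cos\beta|$ and the numerator stays bounded below whenever $b\neq0$.

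If the bound cannot be made uniform in this way, I would trade a few more powers of $n$. One option is to combine the two atom types (the $\sin$ and $\cos$ prefactors shift the indices $n\pm1$) so that the effective symbol is a combination $\hat k(n-1)\pm\hat k(n+1)$; the two pieces cannot vanish simultaneously, and their combined decay plausibly still fits within the $n^{-3}$ budget allowed by the $\|v^{(3)}\|_{L^2}$ norm.
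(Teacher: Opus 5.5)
\textbf{Verdict: genuine gap.} Your overall plan is the paper's plan: right-translate to $v=u\circ\psi^{-1}$, build $v$ from rotated atoms as a convolution against $k(\phi)=\sigma(\cos\phi+\cos\beta)$, invert in Fourier, and absorb the low modes into $\|v\|_{C}$. The paper makes the scalar reduction exact by writing $v=(v\cos\theta)\cos\theta+(v\sin\theta)\sin\theta$ and deconvolving $v\cos\theta$ and $v\sin\theta$ against $k$. Your choice $c=-\sin(\cdot)\,a$, $d=\cos(\cdot)\,a$ instead gives exactly $a*(\cos(\cdot)\,k)$. Its symbol $\tfrac12(\hat k(n-1)+\hat k(n+1))$ differs from $\hat k(n)$ at order $n^{-3}$, which is the critical order, so it is not a lower-order correction.

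The genuine gap is the lemma you flag, $|\hat k(n)|\gtrsim n^{-3}$, and your sketch for it fails on three counts.
\begin{enumerate}
\item Power counting. With $N_n=n\cos(n\beta)\sin\beta-\sin(n\beta)\cos\beta$ one has $|\hat k(n)|\asymp |N_n|/n^{3}$. So you need $|N_n|\gtrsim 1$. The bound $|N_n|\gtrsim 1/n$ only gives $n^{-4}$, i.e.\ a bound in terms of $\|v^{(4)}\|_{L^2}$.
\item The regime split. In your regime $|\cos n\beta|\ge n^{-2}$, the term $n\cos(n\beta)\sin\beta$ is only guaranteed to be of size $1/n$. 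It does not dominate the $O(1)$ term $\sin(n\beta)\cos\beta$. Exact cancellation $\tan n\beta=n\tan\beta$ occurs when $|\cos n\beta|\approx|\cot\beta|/n$, squarely inside that regime.
\item Irrationality does not exclude cancellation. For $\beta=\arctan\sqrt5$ one has $\cos2\beta=-\tfrac23$, so $\beta/\pi\notin\mathbb Q$ by Niven's theorem. Yet $\tan4\beta=4\sqrt5=4\tan\beta$, so $N_4=0$ and $\hat k(4)=0$. Then, e.g., $v\cos\theta$ for $v=\cos3\theta$ has a mode-$4$ component and is not of the form $a*k$.
\end{enumerate}
So the scalar route cannot be closed under the hypothesis as stated. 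The paper uses the same scalar route only under the stronger assumption that $\beta/\pi$ is a quadratic irrational, getting the bound from $\inf_n n|\cos n\beta|>0$. Even there, that controls the leading term only at the same order $n^{-3}$ as the term $\sin(n\beta)\cos\beta/n^{3}$ you isolated, so this cancellation is the delicate point of that argument too.

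\textbf{How to close it.} Your fallback is the right repair, but it must be done mode by mode. A fixed combination $\hat k(n-1)\pm\hat k(n+1)$ can itself cancel.
\begin{itemize}
\item Set $q=d+ic$. The field you synthesize is $\cos\theta\,(d*k)-\sin\theta\,(c*k)=\operatorname{Re}\bigl(e^{i\theta}(q*k)\bigr)$, at cost at most $2\|q\|_{L^1}$.
\item Only the real part is prescribed, so each real mode $\cos(n\theta+\phi)$ can be produced as the real part of $e^{i(n\theta+\phi)}$ or of $e^{-i(n\theta+\phi)}$. That is, you may divide by $\hat k(n-1)$ or by $\hat k(n+1)$, whichever is larger.
\item From $\cos((n+1)\beta)-\cos((n-1)\beta)=-2\sin n\beta\,\sin\beta$ and $\cos((n+1)\beta)+\cos((n-1)\beta)=2\cos n\beta\,\cos\beta$ one gets $\max_\pm|\cos((n\pm1)\beta)|\ge\tfrac1{\sqrt2}\min(|\sin\beta|,|\cos\beta|)$. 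Hence $\max_\pm|\hat k(n\pm1)|\gtrsim n^{-2}$ for large $n$, with no Diophantine input.
\item For the finitely many remaining modes: a tangent-addition computation shows that, for $\beta/\pi\notin\mathbb Q$, $N_m=N_{m+2}=0$ forces $(m+1)^2\tan^2\beta=0$. So $\hat k(n-1)$ and $\hat k(n+1)$ never vanish together, and $\hat k(0),\hat k(1)>0$ directly.
\end{itemize}
Carried out, this proves the statement under its stated hypothesis, even with $\|v''\|_{L^2}$ in place of $\|v'''\|_{L^2}$. In your proposal, however, this step is only asserted as plausible.
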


\noindent
While a closed form of the geodesic distance on \(\bar{\mathcal M}\) is still unavailable, we can estimate \(d_{\mathcal F_b}\) by integrating the above bound along an explicit (not necessarily optimal) curve whose velocity always lies in the subbundle \(\widetilde{\mathcal D} \), consisting of $C^3$ diffeomorhisms over $[0,2\pi]$. Consider the linear interpolation of lifts
\begin{equation}
\bar\psi_t:=(1-t)\,\bar\psi_1+t\,\bar\psi_2,\qquad t\in[0,1],
\end{equation}
which connects \(\psi_1,\psi_2\in\bar{\mathcal M}\). Since \(\dot{\bar\psi}_t\in C_{\operatorname{per}}^3([0,2\pi])=\widetilde{\mathcal D}_{\psi_t}\), integrating the estimate inProposition~\ref{prop:S1_local_bound} along \(\{\bar\psi_t\}\) yields:

\begin{proposition}[A global upper bound via curves tangent to the subbundle]
\label{prop:S1_global_bound}
Suppose \(b=\cos\beta\) with \(\beta/\pi\notin\mathbb Q\). Then there exist constants \(C_{b,3},C_{b,4}>0\) such that for all \(\psi_1,\psi_2\in\bar{\mathcal M}\),
\begin{equation}
\begin{aligned}
    d_{\mathcal F_b}(\psi_1,\psi_2)
    \ \le\ &\ C_{b,3}\int_0^{2\pi} \Big[ A(\rho)\,(D^2\rho)^2 + B(\rho)\,(D\rho)^2(D^2\rho) + C(\rho)\,(D\rho)^4 \Big]\,dx \\
    &\quad +\ C_{b,4}\ \|\psi_1-\psi_2\|_{C([0,2\pi])},
\end{aligned}
\end{equation}
where
\begin{equation}
\rho(x):=\frac{\psi_1'}{\psi_2'}\big(\psi_2^{-1}(x)\big)>0,\qquad D:=\frac{d}{dz},
\end{equation}
and
\[
A(\rho)=\frac{67\rho^6+67\rho^5+67\rho^4+67\rho^3+172\rho^2-80\rho+60}{420\,\rho^7},
\]
\[
B(\rho)=-\,\frac{17\rho^6+34\rho^5+51\rho^4+68\rho^3+295\rho^2-150\rho+105}{140\,\rho^8},
\]
\[
C(\rho)=\frac{3\big(\rho^5+3\rho^4+6\rho^3+10\rho^2+15\rho+21\big)}{56\,\rho^8}.
\]
\end{proposition}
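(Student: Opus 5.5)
The plan is to apply \Cref{thm:main} with the explicit curve $\bar\psi_t=(1-t)\bar\psi_1+t\bar\psi_2$ and bound its length by integrating the local estimate of Proposition~\ref{prop:S1_local_bound}. Since each $\bar\psi_t$ is a convex combination of lifts with positive derivative, $\bar\psi_t'>0$ and $\bar\psi_t(\theta+2\pi)=\bar\psi_t(\theta)+2\pi$. Hence $\psi_t\in\bar{\mathcal M}$ for all $t$. The velocity is constant in time, $u:=\dot{\bar\psi}_t=\bar\psi_2-\bar\psi_1\in C^3_{\operatorname{per}}([0,2\pi])$, so it lies in $\widetilde{\mathcal D}_{\psi_t}\subset\mathcal D_{\psi_t}$ and the curve is horizontal. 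By \eqref{eq:CC_dist},
\begin{equation*}
d_{\mathcal F_b}(\psi_1,\psi_2)\le\int_0^1 C_{b,1}\big\|(u\circ\psi_t^{-1})^{(3)}\big\|_{L^2}+C_{b,2}\big\|u\circ\psi_t^{-1}\big\|_{C}\,dt .
\end{equation*}
The zeroth-order term is immediate: $\|u\circ\psi_t^{-1}\|_C=\|u\|_C=\|\psi_1-\psi_2\|_C$ because $\psi_t^{-1}$ is a bijection. This gives the $C_{b,4}$ term.

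For the third-order term, I will write $w_t:=u\circ\psi_t^{-1}$ and change variables so that everything is expressed on the common coordinate $z=\psi_2(\theta)$. In that coordinate, $\psi_1\circ\psi_2^{-1}=:\chi$ has $\chi'=\rho$, and $\psi_t\circ\psi_2^{-1}=(1-t)\chi+t\,\mathrm{id}=:\chi_t$ with $\chi_t'=(1-t)\rho+t$. Also $u\circ\psi_2^{-1}=\mathrm{id}-\chi$, and therefore $w_t=(\mathrm{id}-\chi)\circ\chi_t^{-1}$. I will differentiate $w_t$ three times by the chain rule, using $(\chi_t^{-1})'=1/\chi_t'\circ\chi_t^{-1}$ and its derivatives. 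This writes $w_t'''\circ\chi_t$ as a polynomial in $D\rho$, $D^2\rho$ (with a third-derivative term) divided by powers of $\lambda_t:=(1-t)\rho+t$. I then square, change variables $x=\chi_t(z)$ (Jacobian $\lambda_t$), and integrate by parts once in $z$ to remove $D^3\rho$ in favour of $(D^2\rho)^2$, $(D\rho)^2D^2\rho$ and $(D\rho)^4$; the boundary terms vanish by periodicity. The result is $\|w_t'''\|_{L^2}^2=\int_0^{2\pi}[a_t(\rho)(D^2\rho)^2+b_t(\rho)(D\rho)^2D^2\rho+c_t(\rho)(D\rho)^4]\,dz$, with $a_t,b_t,c_t$ rational in $\lambda_t$.

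The $t$-integration is then explicit. Integrals of the form $\int_0^1(1-\rho)^k\lambda_t^{-m}\,dt$ are elementary and produce rational functions of $\rho$ with denominators $\rho^7,\rho^8$; I expect them to reproduce $A,B,C$ up to a normalisation absorbed into $C_{b,3}$. To pass from $\int_0^1\|w_t'''\|_{L^2}\,dt$ to the time integral of $\|w_t'''\|_{L^2}^2$, I will use Cauchy–Schwarz in $t$. The same comparison is needed again at the end.

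The main obstacles are twofold. First, there is the bookkeeping of the chain-rule expansion and the integration by parts needed to obtain exactly the stated coefficients; I would verify these with a symbolic computation. Second, and more serious, is reconciling the square root that Cauchy–Schwarz produces with the stated bound, which is linear in the quadratic integral. My first attempt would be the elementary inequality $\sqrt{X}\le X+\tfrac14$. The additive constant cannot be absorbed into $C_{b,4}\|\psi_1-\psi_2\|_C$ as $\psi_2\to\psi_1$, however, so this only settles the regime where the quadratic integral is at least of order one. In the near-diagonal regime I would instead split the curve into $N$ equal pieces and apply the triangle inequality for $d_{\mathcal F_b}$, hoping to trade the square root for a linear bound. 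If that also fails, I would expect the statement to hold only with a square root on the first term, and I would prove that corrected form.
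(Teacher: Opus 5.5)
Your route is the same as the paper's. You use the linear interpolation of lifts, the local bound of Proposition~\ref{prop:S1_local_bound}, bijectivity of $\psi_t^{-1}$ for the $C^0$ term, the chain rule in $z=\psi_2(\theta)$, and Cauchy--Schwarz in $t$. The obstacle you flag is genuine, and it is exactly where the paper's proof breaks. After Cauchy--Schwarz the paper has $C_{b,2}\big(\int_0^1\|(u\circ\psi_t^{-1})'''\|_{L^2}^2\,dt\big)^{1/2}$, and its final ``combine these'' step silently drops the square root.

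No device can close this, because the inequality as stated is false. Take $\psi_2=\operatorname{Id}$ and $\psi_1(\theta)=\theta+n^{-6}\sin(n\theta)$; both lie in $\bar{\mathcal M}$. Then
\[
\rho=1+n^{-5}\cos(n\theta),\qquad D\rho=O(n^{-4}),\qquad D^2\rho=-n^{-3}\cos(n\theta).
\]
So the integral on the right is $\pi n^{-6}+O(n^{-11})$, and $\|\psi_1-\psi_2\|_{C}=n^{-6}$. The whole right-hand side is therefore $O(n^{-6})$.

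For the left-hand side, let $L$ bound the Lipschitz constants of the fields of $\mathcal F_b$ in the angle coordinate. Gr\"onwall gives, for every lifted $\varphi\in\mathcal A_{\mathcal F_b}(T)$,
\[
\operatorname{Lip}(\varphi-\operatorname{Id})\le\int_0^T Le^{Lt}\,dt=e^{LT}-1,
\]
and this bound survives uniform limits. Since $\operatorname{Lip}(\psi_1-\operatorname{Id})=n^{-5}$, we get $d_{\mathcal F_b}(\psi_1,\psi_2)\ge L^{-1}\log(1+n^{-5})$. This beats $O(n^{-6})$ for large $n$.

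The underlying reason is scaling. $d_{\mathcal F_b}(\operatorname{Id},\operatorname{Id}+\varepsilon h)$ is first order in $\varepsilon$, while the integral is second order. A bound linear in the integral would force $\|h\|_{\operatorname{Id}}\lesssim\|h\|_{C}$, which is impossible because the atomic norm dominates $\operatorname{Lip}(h)/L$.

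Hence neither $\sqrt X\le X+\tfrac14$ nor splitting the curve into $N$ pieces can work; Cauchy--Schwarz over the pieces just returns the same square root. Your fallback is the right conclusion. Given the local bound, your argument proves
\[
d_{\mathcal F_b}(\psi_1,\psi_2)\le C_{b,1}\Big(\int_0^1\big\|(u\circ\psi_t^{-1})'''\big\|_{L^2}^2\,dt\Big)^{1/2}+C_{b,2}\,\|\psi_1-\psi_2\|_{C([0,2\pi])},
\]
and this, not the displayed inequality, is the correct proposition.

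One further correction to your computational plan concerns the coefficients. No third derivative of $\rho$ appears. From $D(u\circ\psi_2^{-1})=1-\rho$, $D\lambda_t=(1-t)D\rho$ and $\lambda_t+(1-t)(1-\rho)=1$ one gets
\[
w_t'''\circ\chi_t=-\frac{D^2\rho}{\lambda_t^{4}}+\frac{3(1-t)(D\rho)^2}{\lambda_t^{5}},
\]
so no integration by parts is needed. With $dx=\lambda_t\,dz$, the three coefficients are
\[
\int_0^1\lambda_t^{-7}\,dt,\qquad -6\int_0^1(1-t)\lambda_t^{-8}\,dt,\qquad 9\int_0^1(1-t)^2\lambda_t^{-9}\,dt.
\]
The third reproduces the stated $C(\rho)$ exactly. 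The first equals $\frac{1-\rho^{6}}{6\rho^{6}(1-\rho)}$, which is not proportional to the stated $A(\rho)$: it behaves like $\rho^{-6}/6$ as $\rho\to0$, whereas the stated $A$ behaves like $\rho^{-7}/7$. The coefficient of $(D^2\rho)^2$ is uniquely determined, since integration by parts cannot move it without creating $D^3\rho$. The stated $B$ also disagrees with the computed one, for example at $\rho=2$.

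So do not expect to recover $A,B,C$ ``up to a normalisation''. State the square-root bound with the coefficients your computation actually produces.
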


\noindent
Although the bound appears complicated, it depends primarily on the geometric quantity \(\rho\), i.e., the ratio between the first derivatives of \(\psi_1\) and \(\psi_2\). When higher-order derivatives of \(\rho\) are small and \(\rho\) is bounded away from zero, the length of the above admissible curve (hence \(d_{\mathcal F_b}(\psi_1,\psi_2)\)) is small, indicating that the approximation from \(\psi_1\) to \(\psi_2\) is easy.

By identifying \(\psi\) with an increasing function on \([0,2\pi]\), this setting resembles the 1D ReLU case. As there, the local norm admits an upper bound in terms of Sobolev-type quantities; here, however, higher-order terms emerge due to composition on the circle. Consequently, unlike the 1D ReLU case, there is no global reparametrization (such as~\eqref{eq:map_alpha}) that flattens the local norm into a classical norm; the global complexity remains governed by \(\rho\) and its higher derivatives, as reflected in the estimate above. The qualitative message is similar: if \(\rho\) stays uniformly away from \(0\) and its derivatives up to order \(4\) are small, then the approximation (in the sub-Finsler, minimal-time sense) from \(\psi_1\) to \(\psi_2\) is easy.

\subsection*{Computations for the layer-normalization example}

Recall the control family $\mathcal G_b$ and its projected family $\mathcal F_b$ introduced in the previous subsection.
For completeness, we keep the explicit parametrizations needed for the estimates below.

For a given (scalar) bias parameter $b\in \mathbb R$ (we write $b$ for $b\mathbf 1\in\mathbb R^2$), we consider functions of the form
\begin{equation}
    x\to W\sigma(Ax+b), \quad x\in \mathbb R^2, \ W, A\in \mathbb R^{2\times 2}, \ W \text{ is diagonal }, \ \|W\|_F\le 1, \ \|A_i\|_2=1, \ i=1,2,
\end{equation}
where $A_i$ is the $i$-th row of $A$, and denote by $\mathcal G_b$ the collection of all such vector fields.

We then introduce an associated control family on the circle $S^1\subset \mathbb R^2$ by projection:
\begin{equation}
    \mathcal F_b:=\{\operatorname{Proj}(g)\mid g\in \mathcal G_b\} \subset \operatorname{Vec}(S^1),
\end{equation}
where the projection operator is defined as:
$$
\operatorname{Proj}(V)(x):=(I-xx^T)V(x), \text{ for all } x\in S^1.
$$
Parameterizing the circle by $\theta\in [0,2\pi)$, any tangent vector field $V$ on $S^1$ can be identified with a $2\pi$-periodic continuous function $f(\theta): \mathbb R\to \mathbb R $ by setting
$f(\theta)= V((\cos\theta, \sin \theta))\cdot (-\sin \theta, \cos \theta)$.
Under this identification, the control family $\mathcal F_b$ can be written as
\begin{equation}
    \bar{\mathcal F}_b:=\{\theta \to - w_1\sigma(a_1\cdot (\sin\theta, \cos \theta) +b_1)\sin\theta+w_2\sigma(a_2\cdot (\sin \theta, \cos \theta)+ b_2)\cos\theta\mid |w_1|, |w_2|, |a_1|, |a_2|\le 1\}.
\end{equation}
In what follows we work with the induced local norm $\|\cdot\|_{\operatorname{Id}}$ (defined in the general theory) on the tangent space of the smooth submanifold $\bar{\mathcal M}$ introduced earlier; the corresponding estimate at a general base point $\varphi\in\bar{\mathcal M}$ is then obtained by right-translation (composition with $\varphi^{-1}$), as summarized before.

For each $\phi\in \bar{\mathcal M}$, the tangent space $T_\phi \bar{\mathcal M}$ can be identified as the set of all $C^3$-vector fields on $S^1$, which can be identified as
\begin{equation}
    C_{\text{per}}^3([0,2\pi]):=\{f\in C^3(\mathbb R)\mid f(\theta+2\pi)=f(\theta), \forall \theta\in \mathbb R\}.
\end{equation}

Now, we first give an estimate of the local norm $\|\cdot\|_\phi$ at $\phi\in \bar{\mathcal M}$ by estimating the local norm at the identity map $\operatorname{Id}$.

First, we can decompose a given $f\in C_{\text{per}}^3([0,2\pi])$ in the tangent space as the following:
\begin{equation}
    f(\theta)= (f(\theta) \cos\theta)\cos\theta + (f(\theta)\sin\theta)\sin\theta.
\end{equation}
We notice that if $|a|\le 1$, we can write $a =(\cos \varphi, \sin \varphi)$ for some $\varphi\in [0,2\pi)$. Then, we have
\begin{equation}
    a\cdot x = \cos(\theta-\varphi),
\end{equation}
which gives a reparameterization of the control family.
We then approximate $f(\theta)\cos\theta$ by $\sum_{i=1}^N w_i \sigma(\cos(\theta -\varphi_i)+b_1)$ and $f(\theta)\sin\theta$ by $\sum_{i=1}^M w_i \sigma(\cos(\theta -\varphi_i)+b_2)$; our complexity measure is the sum of $|w_i|$.

For the subproblem, notice that any linear combination
\begin{equation}
    \sum_{k=1}^m w_k\sigma(\cos(\theta -\varphi_i)+b)
\end{equation}
can be identified as a convolution of $g_b(\theta):=\sigma(\cos\theta - b)$ with a discrete measure $\rho = \sum_{k=1}^m w_k \delta_{\varphi_k}$, where $\delta_\varphi$ is the Dirac measure at $\varphi$.

Based on this observation, we have the following proposition:
\begin{proposition}
    Suppose that there exists a function $\rho\in L^1([0,2\pi])$ such that
        \begin{equation}
        \int_0^{2\pi} g_b(x+t)\rho(t) dt = u(x).
    \end{equation}
    Then, we have that
    \begin{equation}
        \inf\{s\mid u\in \overline{\mathbf{CH}_s(\mathcal F_{b})}\}\le \int_{[0, 2\pi]} |\rho(t)| dt.
    \end{equation}
\end{proposition}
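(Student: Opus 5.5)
The plan is to realise $u$ as a uniform limit of finite combinations of translates of $g_b$, with absolute weight sums at most $\int_0^{2\pi}|\rho|\,dt$. I work in the scalar subproblem described just before the statement. There, by the reparametrisation $a=(\cos\varphi,\sin\varphi)$, each atom $\theta\mapsto g_b(\theta+t)$ is one of the admissible ReLU ridge functions with unit weight. Negative weights are allowed because $\mathbf{CH}_s$ uses $\sum|a_i|\le s$, equivalently because the family is symmetric. So every finite sum $\sum_k w_k g_b(\cdot+t_k)$ lies in $\mathbf{CH}_s$ with $s=\sum_k|w_k|$. Since $g_b$ is a ReLU of a shifted cosine, it is bounded by $\|g_b\|_\infty\le 1+|b|$ and is $1$-Lipschitz as a $2\pi$-periodic function. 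These two facts are all the analysis needs.

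\emph{Step 1: reduce to step densities.} Fix $\delta>0$ and choose a step function $\rho_\delta=\sum_{j=1}^J c_j\mathbf 1_{I_j}$, with disjoint intervals $I_j\subset[0,2\pi]$, such that $\|\rho-\rho_\delta\|_{L^1}\le\delta$. Such a function exists by density of step functions in $L^1$. Replacing $\rho_\delta$ by its $L^1$-projection onto step functions on a fixed partition, i.e.\ averaging $\rho$ on each cell, gives $\|\rho_\delta\|_{L^1}\le\|\rho\|_{L^1}$. With $u_\delta(x):=\int g_b(x+t)\rho_\delta(t)\,dt$ we have
\[
\|u-u_\delta\|_{C([0,2\pi])}\le \|g_b\|_\infty\,\|\rho-\rho_\delta\|_{L^1}\le (1+|b|)\delta .
\]

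\emph{Step 2: discretise each cell.} Split each $I_j$ into $n$ equal subintervals $I_{j,k}$ with midpoints $t_{j,k}$, and set
\[
v_{n}(x):=\sum_{j,k} c_j\,|I_{j,k}|\,g_b(x+t_{j,k}).
\]
Its weight sum is exactly $\sum_j|c_j||I_j|=\|\rho_\delta\|_{L^1}\le\|\rho\|_{L^1}$, so $v_n\in\mathbf{CH}_{\|\rho\|_{L^1}}$. By the Lipschitz bound on $g_b$, uniformly in $x$,
\[
|u_\delta(x)-v_n(x)|\le \sum_{j,k}|c_j|\int_{I_{j,k}}|g_b(x+t)-g_b(x+t_{j,k})|\,dt\le \frac{\max_j|I_j|}{2n}\,\|\rho_\delta\|_{L^1}\to 0 .
\]

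\emph{Step 3: conclude.} Steps 1 and 2 give a sequence in $\mathbf{CH}_{\|\rho\|_{L^1}}$ that converges to $u$ in $C^0$ (let $\delta\to 0$ and then $n\to\infty$ diagonally). Hence $u\in\overline{\mathbf{CH}_{\|\rho\|_{L^1}}}^{C^0}$, and the infimum is at most $\int_0^{2\pi}|\rho|\,dt$. Even without the averaging trick in Step 1, one gets membership in $\overline{\mathbf{CH}_{\|\rho\|_1+2\delta}}$ for every $\delta>0$, which gives the same bound on the infimum.

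The analytic estimates are routine. I expect the main obstacle to be bookkeeping: checking that the translates $g_b(\cdot+t)$ match the parametrisation of $\bar{\mathcal F}_b$ exactly. This means reconciling the sign convention $\sigma(\cos\theta-b)$ versus $\sigma(\cos(\theta-\varphi)+b)$, and making sure each atom has unit cost under the normalisation $|w_i|,|a_i|\le 1$. I would settle this first by writing $x+t=\theta-\varphi$ and absorbing the sign of $b$ into the fixed bias parameter. If the conventions turn out to differ by a constant factor, the bound changes only by that factor.
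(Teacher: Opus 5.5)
Your argument is correct and is essentially the paper's proof. Both approximate $\rho$ in $L^1$ by a nicer density, control the $C^0$ error by $\|g_b\|_\infty\|\rho-\rho_\delta\|_{L^1}$, and then discretise the convolution by a Riemann sum whose weight sum is at most the $L^1$ norm of that density.

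The differences are cosmetic. You use step functions with a midpoint rule and an explicit Lipschitz error, where the paper uses a continuous density. Your cell-averaging trick gives membership in $\overline{\mathbf{CH}_{\|\rho\|_{L^1}}(\mathcal F_b)}$ directly, rather than in $\overline{\mathbf{CH}_{\|\rho\|_{L^1}+C\varepsilon}(\mathcal F_b)}$ for every $\varepsilon$. The identification of $g_b(\cdot+t)$ with unit-cost atoms of $\mathcal F_b$, which you flag as bookkeeping, is taken for granted in the paper as well.
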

\begin{proof}
Fix $\varepsilon>0$. Choose a continuous $\rho_\varepsilon$ with $\|\rho-\rho_\varepsilon\|_{L^1([0,2\pi])}\le \varepsilon$.
Then
\begin{equation}
\Big\|\int_0^{2\pi} g_b(\cdot+t)\rho(t)\,dt-\int_0^{2\pi} g_b(\cdot+t)\rho_\varepsilon(t)\,dt\Big\|_{C([0,2\pi])}
\le \|g_b\|_{C([0,2\pi])}\,\varepsilon.
\end{equation}
Since $g_b$ and $\rho_\varepsilon$ are continuous on $[0,2\pi]$, a Riemann-sum approximation gives points $t_k$ and weights
$w_k:=\rho_\varepsilon(t_k)\Delta t$ such that
\begin{equation}
\Big\|\int_0^{2\pi} g_b(\cdot+t)\rho_\varepsilon(t)\,dt-\sum_{k=1}^m w_k\, g_b(\cdot+t_k)\Big\|_{C([0,2\pi])}
\le \varepsilon,
\qquad
\sum_{k=1}^m |w_k|
\le \int_0^{2\pi}|\rho_\varepsilon(t)|\,dt+\varepsilon.
\end{equation}
By construction, $\sum_{k=1}^m w_k\, g_b(\cdot+t_k)\in \mathbf{CH}_s(\mathcal F_b)$ with
$s:=\sum_k|w_k|$, hence $u$ lies in the $C([0,2\pi])$-closure of $\mathbf{CH}_{\|\rho\|_{L^1}+C\varepsilon}(\mathcal F_b)$ for a constant $C$ independent of $\varepsilon$.
Letting $\varepsilon\to 0$ yields the claim.
\end{proof}

Now, we will show that there exists some $b$ such that such $\rho$ exists for any $u\in C_{\text{per}}^3([0,2\pi])$.
\begin{proposition}
    Let $b = \cos \beta$, where $\beta$ is a quadratic irrational multiple of $\pi$, i.e. $\beta = \pi \alpha$ where $\alpha$ is a quadratic irrational number. Then, for any $f\in C_{\text{per}}^3([0,2\pi])$, there exists $\rho\in L^1([0,2\pi])$ such that
\begin{equation}
    \int_0^{2\pi} g_b(x+t)\rho(t) dt = f(x).
\end{equation}
Moreover, there exists a constanc $C_b$ depending only on $b$ such that
\begin{equation}
    \|\rho\|_{L^1([0,2\pi])} \le C_b \|f\|_{W^{3,2}([0,2\pi])}.
\end{equation}
\end{proposition}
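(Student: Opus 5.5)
The plan is to solve the convolution equation on the Fourier side and invert it coefficient by coefficient. Write $g_b(\theta)=\sigma(\cos\theta-b)$ with $b=\cos\beta$, so that $g_b(\theta)=\cos\theta-\cos\beta$ for $|\theta|<\beta$ and $g_b=0$ otherwise on $(-\pi,\pi]$. Since $g_b$ is even, the correlation $\int_0^{2\pi}g_b(x+t)\rho(t)\,dt$ has Fourier coefficients $2\pi\,\hat g_b(n)\,\hat\rho(-n)$. The equation is therefore equivalent to
\begin{equation*}
\hat\rho(-n)=\frac{\hat f(n)}{2\pi\,\hat g_b(n)},\qquad n\in\mathbb Z,
\end{equation*}
provided every $\hat g_b(n)$ is nonzero. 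If I can show $|\hat g_b(n)|\ge c_b\,|n|^{-3}$ for $|n|\ge 2$ and $\hat g_b(n)\neq0$ for $|n|\le 1$, then Parseval gives
\begin{equation*}
\|\rho\|_{L^2}^2\lesssim_b \sum_n (1+|n|)^{6}|\hat f(n)|^2\lesssim \|f\|_{W^{3,2}}^2 .
\end{equation*}
The $L^1$ bound then follows from $\|\rho\|_{L^1}\le\sqrt{2\pi}\,\|\rho\|_{L^2}$, and since $\rho\in L^2$ the identity holds in $L^2$, hence pointwise because both sides are continuous.

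The first step is the explicit computation of $\hat g_b(n)$. For $n=0$ and $n=\pm1$ one gets $\frac1\pi(\sin\beta-\beta\cos\beta)$ and $\frac1{2\pi}(\beta-\sin\beta\cos\beta)$, both strictly positive for $\beta\in(0,\pi)$. For $|n|\ge2$, combining $\int_{-\beta}^{\beta}\cos\theta\cos n\theta\,d\theta$ with $\cos\beta\int_{-\beta}^{\beta}\cos n\theta\,d\theta$ gives
\begin{equation*}
\hat g_b(n)=\frac{1}{\pi}\cdot\frac{\sin\beta\,\cos n\beta\, n-\cos\beta\,\sin n\beta}{n(n^2-1)}\cdot(\pm1).
\end{equation*}
Up to the overall sign, this is $\frac{\sqrt{n^2\sin^2\beta+\cos^2\beta}}{\pi n(n^2-1)}\,\sin\big(\phi_n-n\beta\big)$, where $\phi_n=\arctan(n\tan\beta)$ lies in the appropriate branch.

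The main obstacle is the lower bound $|\sin(\phi_n-n\beta)|\ge c/n$, which is exactly what yields $|\hat g_b(n)|\gtrsim n^{-3}$. Expanding, $\phi_n=\tfrac\pi2-\frac{1}{n\tan\beta}+O(n^{-3})$, so the requirement is that $n\alpha$ stay at distance $\gtrsim 1/n$ from the shifted lattice $\tfrac12+\mathbb Z+\frac{1}{\pi n\tan\beta}+O(n^{-3})$.

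I would first use that a quadratic irrational $\alpha$ is badly approximable, so that $2\alpha$ is too. This gives $\|n\alpha-\tfrac12\|_{\mathbb R/\mathbb Z}\ge c_\alpha/n$, and that bound is of the same order as the shift. For the finitely many or resonant $n$ where the $O(1/n)$ shift could cancel the Diophantine gap, I would exploit the explicit constant: $\frac{1}{\pi\tan\beta}$ is a fixed number, and $\|n\alpha-\tfrac12\|\cdot n$ takes values in a set bounded away from $0$ that, by the periodic continued fraction of $\alpha$, accumulates only at finitely many values. I would then argue that these limit values avoid $\frac{1}{\pi\tan(\pi\alpha)}$. If this fails, the fallback is to check $\hat g_b(n)\neq0$ for each $n$ directly and settle for $|\hat g_b(n)|\gtrsim n^{-3}$ along all but a sparse set of $n$, adjusting the constant.

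Nonvanishing for every individual $n$ follows because $\tan n\beta=n\tan\beta$ with $\beta/\pi$ irrational algebraic can be excluded. Once the lower bound on $\hat g_b$ is in hand, the remaining steps (Parseval, Cauchy--Schwarz, and identification of the solution) are routine.
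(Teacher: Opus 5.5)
Your route is the paper's: divide Fourier coefficients, bound $|\hat g_b(n)|\gtrsim |n|^{-3}$ from below, then apply Parseval and Cauchy--Schwarz. Several pieces are correct:
the formula $\hat g_b(n)=\pm\frac{\cos\beta\sin n\beta-n\sin\beta\cos n\beta}{\pi n(n^2-1)}$;
the values at $|n|\le 1$;
the nonvanishing of every coefficient. For the last point, $\tan n\beta=n\tan\beta$ is a nontrivial polynomial equation in $\tan\beta$, so it would make $e^{2i\beta}$ algebraic, contradicting Gelfond--Schneider.

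You also correctly spot the step the paper skips. The paper goes from $\inf_n|n\cos n\beta|>0$ and the expansion of $\hat g_b(n)$ as a multiple of $n^{-2}\sin\beta\cos n\beta$ plus $O(n^{-3})$ directly to $|\hat g_b(n)|\ge C_{1,b}n^{-3}$. But when $|\cos n\beta|\sim 1/n$, the remainder is as large as the main term.

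The gap is that your proposal does not close this either. The sentence ``I would then argue that these limit values avoid $\frac{1}{\pi\tan(\pi\alpha)}$'' carries the whole proof, and no argument stands behind it. Let $N=2n$, $M=2m+1$, and let $Ax^2+Bx+C$ be the minimal polynomial of $\alpha$ with conjugate $\alpha'$. Then $N(N\alpha-M)=\frac{CN^2+BNM+AM^2}{A(\alpha'-M/N)}$. So in any bounded window the signed values $n(n\alpha-m-\tfrac12)$ accumulate only at algebraic numbers $\frac{Q}{4A(\alpha'-\alpha)}$ with $Q\in\mathbb Z$.

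What you need is that $-\frac{1}{\pi\tan(\pi\alpha)}$ is none of these. Note the sign: the bad lattice is $\tfrac12+\mathbb Z-\frac{1}{\pi n\tan\beta}$. Equivalently, $\pi\tan(\pi\alpha)$ must avoid finitely many explicit elements of $\mathbb Q(\alpha)$. The continued fraction of $\alpha$ says nothing about this. Gelfond--Schneider gives transcendence of $\tan(\pi\alpha)$ but not of $\pi\tan(\pi\alpha)$. The claim would follow from Schanuel's conjecture.

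This is not a formality. For $\alpha=\sqrt2-1$ the relevant limit point is $-\frac{\sqrt2}{16}\approx-0.08839$, while $-\frac{1}{\pi\tan(\pi\alpha)}\approx-0.08793$. Accordingly $6^3|\hat g_b(6)|\approx 4.5\times10^{-4}$, and $n^3|\hat g_b(n)|$ stays near that value along $n=6,35,204,\dots$. The bound survives there only by a numerical accident.

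Your fallback cannot repair this. Since every $\hat g_b(n)\neq 0$, any solution must have $\hat\rho(-n)=\hat f(n)/(2\pi\hat g_b(n))$. Testing with $f=\cos(nx)$ gives $\|\rho\|_{L^1}\ge 2\pi|\hat\rho(n)|=\frac{1}{2|\hat g_b(n)|}$, while $\|f\|_{W^{3,2}}\lesssim n^3$. So the estimate with a constant depending only on $b$ is \emph{equivalent} to $\inf_{n\neq 0}|n|^3|\hat g_b(n)|>0$. An exceptional set on which $n^3|\hat g_b(n)|$ degenerates cannot be absorbed into $C_b$, however sparse it is. An exceptional set on which it does not degenerate is no exception at all.

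The proposition for a given quadratic irrational $\alpha$ therefore stands or falls with exactly the step you left open. To finish, you have two options. One is to prove that $\pi\tan(\pi\alpha)$ avoids those finitely many algebraic values, together with a finite check of small $n$ with explicit control of the $O(n^{-3})$ terms. The other is to restrict the statement to specific $\alpha$ for which this separation is certified by a rigorous numerical computation.
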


\begin{proof}
    Notice that $g_b(x)$ is an even function, we have that its sine coefficients in the Fourier series are all zero.
Its cosine coefficients can be calculated as:
\begin{equation}
\begin{aligned}
a_n &= \frac{1}{2\pi} \int_0^{2\pi} \sigma(\cos x - \cos \beta) \cos(nx)\,dx\\
&= \frac{1}{2\pi} \int_{-\beta}^{\beta} (\cos x - \cos \beta)\cos(nx)\,dx\\
&= \frac{1}{2\pi}\left(\int_{-\beta}^{\beta}\cos x\cos(nx)\,dx-\cos\beta\int_{-\beta}^{\beta}\cos(nx)\,dx\right)\\
&= \frac{1}{2\pi}\left(\frac{1}{2}\int_{-\beta}^{\beta}\big(\cos((n-1)x)+\cos((n+1)x)\big)\,dx-\cos\beta\int_{-\beta}^{\beta}\cos(nx)\,dx\right)\\
&= \frac{1}{2\pi}\left(\frac{1}{2}\left[\frac{\sin((n-1)x)}{n-1}+\frac{\sin((n+1)x)}{n+1}\right]_{-\beta}^{\beta}-\cos\beta\left[\frac{\sin(nx)}{n}\right]_{-\beta}^{\beta}\right)\\
&= \frac{1}{2\pi}\left(\frac{\sin((n-1)\beta)}{n-1}+\frac{\sin((n+1)\beta)}{n+1}-\frac{2\cos\beta\,\sin(n\beta)}{n}\right)\\
&= \frac{1}{2\pi}\left(\frac{\sin((n-1)\beta)}{n-1}+\frac{\sin((n+1)\beta)}{n+1}-\frac{\sin((n+1)\beta)+\sin((n-1)\beta)}{n}\right)\\
&= \frac{1}{2\pi}\left(\sin((n-1)\beta)\!\left(\frac{1}{n-1}-\frac{1}{n}\right)+\sin((n+1)\beta)\!\left(\frac{1}{n+1}-\frac{1}{n}\right)\right)\\
&=\begin{cases}
\displaystyle \frac{1}{2\pi}\!\left(\frac{\sin((n-1)\beta)}{n(n-1)}-\frac{\sin((n+1)\beta)}{n(n+1)}\right), & n\ge 2,\\[8pt]
\displaystyle \frac{1}{2\pi}\!\left(\beta-\frac{\sin(2\beta)}{2}\right), & n=1.
\end{cases}
\end{aligned}
\end{equation}
Notice that
\begin{equation}
    \frac{1}{2\pi}\!\left(\frac{\sin((n-1)\beta)}{n(n-1)}-\frac{\sin((n+1)\beta)}{n(n+1)}\right) = -\frac{2}{n^2}\sin\beta \cos n\beta +\mathcal O(\frac{1}{n^3}).
\end{equation}
By the continued fraction approximation of quadratic irrational numbers, we have that
\begin{equation}
    \inf_n{|n\cos n\beta|}>0.
\end{equation}
This indicates that there exists some constant $C_{1,b}>0$, only depending on $b$, such that
\begin{equation}
    |\hat g_b(n)|\ge \frac{C_{1,b}}{n^3}, \text{ for all integer } n\neq 0.
\end{equation}

If we assume $f$ to be $C^5$-smooth(can be weaker, say $f\in H^4([0,2\pi]))$, we have that
\begin{equation}
    |\hat f(n)|\le \frac{C}{n^5}, \text{ for some } C>0, \text{ for all integer } n \neq 0.
\end{equation}
Therefore, the Fourier coefficients of $\rho$ satisfies
\begin{equation}
    \frac{\hat f(n)}{\hat g_b(n)} \le \frac{C}{C_{1,b} n^2}, \text{ for all integer } n.
\end{equation}
This indicates that there exists a function $\rho\in H^1([0,2\pi])$ with Fourier coefficients $\hat \rho(n) = \frac{\hat f(n)}{\hat g_b(n)}$ such that:
\begin{equation}
    f(x) = \int_0^{2\pi} g_b(x+t)\rho(t) dt.
\end{equation}
Moreover, we have that
\begin{equation}
    \begin{aligned}
           \|\rho\|_{L^1([0,2\pi])} \le \sqrt{2\pi}\|\rho\|_{L^2([0,2\pi])} &\le {\sqrt{2\pi}}\left((\frac{C}{C_{1,b}})^2\sum_{n\in \mathbb Z\setminus \{0\}} (n^3|\hat f(n)|)^2+ (\frac{\hat f(0)}{\hat g(0)})^2\right)^{\frac{1}{2}} \\
           &\le C_{b,2} \|f^{(3)}\|_{L^2([0,2\pi])} + C_{b,3}\|f\|_{C([0,2\pi])}\\,
    \end{aligned}
\end{equation}
here $C_{b,2}$ and $C_{b,3}$ are two constants that only depend on $b$.
\end{proof}

Combining the above two propositions, we have an estimate of the local norm:
\begin{equation}
    \|u\|_{\varphi} \le C_{b,2}\|(u\circ \varphi^{-1})^{(3)}\|_{L^2{[0,2\pi]}}+C_{b,3}\|u\circ \varphi^{-1}\|_{C([0,2\pi])}.
\end{equation}

Now, we provide a global distance estimation between two diffeomorphisms $\phi_1, \phi_2\in \mathcal M$.

For any $\psi\in \bar{\mathcal M}$, there exists a natural path (homotopy) from $\operatorname{Id}$ to $\psi$. Specifically,
consider the cover
\begin{equation}
    p:\mathbb R \to S^1, \theta\to e^{i\theta}.
\end{equation}
For any $\varphi\in \bar{\mathcal M}$,
there exists a lift $\tilde \psi\in \operatorname{Diff}(\mathbb R)$ which is orientation preserving such that $\tilde \psi(x+2\pi) = \tilde \psi(x)+2\pi$ and $p\circ \tilde \psi = \psi\circ p$.
Therefore, we identify the $C^5$-diffeomorphism $\psi$ on $S^1$ as the set of functions:
\begin{equation}
    \begin{aligned}
    \mathcal N: =\{\psi \in C^5([0,2\pi]) &\mid \psi^\prime(x)> 0 \text{ for all } x\in (0,2\pi), \\
    &\psi(0)+2\pi = \psi(2\pi), \psi^{(i)}(0) = \psi^{(i)}(2\pi) \text{ for } i=1,2,3,4, 5\}
    \end{aligned}
\end{equation}

For any $\psi_1, \psi_2 \in \mathcal N$,
\begin{equation}
    \psi_t :=(1-t)\psi_1 + t \psi_2
\end{equation}
gives a homotopy from $\psi_1$ to $\psi_2$ on $\mathcal N$. Therefore, we can estimate the global distance $d_{\mathcal F}(\psi_1, \psi_2)$ by estimating the length of this path:
\begin{equation}
    \begin{aligned}
    \int_0^{1} \|\frac{\partial}{\partial t}\psi_t\|_{\psi_t} dt &= \int_0^1 \|(\psi_2-\psi_1)\circ \psi_t^{-1}\|_{\operatorname{Id}} dt\\
    &\le \left(\int_0^1 C_{b,2}\|((\psi_2-\psi_1)\circ \psi_t^{-1})^{(3)}\|_{L^2{[0,2\pi]}}+C_{b,3}\|(\psi_2-\psi_1)\circ \psi_t^{-1}\|_{C([0,2\pi])}dt. \right)
    \end{aligned}
\end{equation}
Denote $u:=\psi_2-\psi_1$. For the first term, we have
\begin{equation}
\int_0^1 C_{b,2}\|(u\circ \psi_t^{-1})^{(3)}\|_{L^2{[0,2\pi]}}dt \le C_{b,2}\left(\int_{0}^{1}\|(u\circ \psi_t^{-1})^{(3)}\|_{L^2{[0,2\pi]}}^2 dt\right)^{1/2}
\end{equation}

Now, for given $t\in [0,1]$, denote
\begin{equation}
    y = \psi_t^{-1}(x), z := \psi_2(y).
\end{equation}

Define
\begin{equation}
\rho(z)\ :=\ \frac{\psi_1'}{\psi_2'}(\psi_2^{-1}(z))\ >0,\qquad
D:=\frac{d}{dz}, \qquad \rho_t = t+(1-t)\rho.
\end{equation}
Then, we have
\begin{equation}
   \frac{d}{dx} = \frac{1}{\psi_t'(y)}\frac{d}{dy} = \frac{\psi_2^\prime}{\psi_t}\frac{d}{d z} = \frac{1}{h} D
\end{equation}

We also have:
\begin{equation}
Du = 1-\rho,\qquad D^2 u = -D\rho,\qquad D^3 u = -D^2\rho.
\end{equation}
Then, we have

\begin{equation}
    \frac{d}{dx}(u\circ\psi_t^{-1}) =\frac{1}{h} Du = \frac{1-\rho}{h}
\end{equation}
\begin{equation}
    \frac{d^2}{dx^2}(u\circ\psi_t^{-1})''=\frac{1}{h}D\left(\frac{1-\rho}{h}\right)
=\frac{1}{h}\left(\frac{-D\rho}{h}+\frac{-(1-\rho)Dh}{h^2}\right).
\end{equation}
Notice that
\begin{equation}
    h+(1-t)(1-\rho)=\big(t+(1-t)\rho\big)+(1-t)(1-\rho)=1,
\end{equation}
we have
\begin{equation}
    \frac{d^2}{dx^2}(u\circ\psi_t^{-1})=-\frac{D\rho}{h^3}
\end{equation}
Finally, we have
\begin{equation}
    \begin{aligned}
\frac{d^3}{dx^3}(u\circ\psi_t^{-1})&=\frac{1}{h}D\left(-\frac{D\rho}{h^3}\right)
=-\frac{1}{h}\left(\frac{D^2\rho}{h^3}-\frac{3,Dh}{h^4}D\rho\right)\\
&= -\frac{D^2\rho}{h^4}+3(1-t)\frac{(D\rho)^2}{h^5}\\
& = -\frac{D^2\rho}{(t+(1-t)\rho)^4}+3(1-t)\frac{(D\rho)^2}{(t+(1-t)\rho)^5}
    \end{aligned}
\end{equation}

Therefore, by switching the order of integration, we have
\begin{align}
\int_0^1 \big\|(u\circ\psi_t^{-1})'''_x\big\|_{L^2}^2\,dt
&= \int_0^{2\pi} \!\Big[A(\rho)\,(D^2\rho)^2 + B(\rho)\,(D\rho)^2(D^2\rho)
+ C(\rho)\,(D\rho)^4\Big]\,dx, \label{eq:third}
\end{align}
where
\[
A(\rho)=\frac{67\rho^6+67\rho^5+67\rho^4+67\rho^3+172\rho^2-80\rho+60}{420\,\rho^7},
\]
\[
B(\rho)=-\,\frac{17\rho^6+34\rho^5+51\rho^4+68\rho^3+295\rho^2-150\rho+105}{140\,\rho^8},
\]
\[
C(\rho)=\frac{3\big(\rho^5+3\rho^4+6\rho^3+10\rho^2+15\rho+21\big)}{56\,\rho^8}.
\]
For the second term, since $\psi_t$ is a diffeomorphism, it directly follows that
\begin{equation}
    \int_0^1 \|(\psi_2-\psi_1)\circ \psi_t^{-1}\|_{C([0,2\pi])}dt = \int_0^1 \|(\psi_2-\psi_1)\|_{C([0,2\pi])}dt = \|\psi_1-\psi_2\|_{C([0,2\pi])}
\end{equation}
Combine these, we have the estimation:
\begin{equation}
    d_{\mathcal F}(\psi_1, \psi_2) \le C_{b,2}\int_0^{2\pi} \!\Big[A(\rho)\,(D^2\rho)^2 + B(\rho)\,(D\rho)^2(D^2\rho)
+ C(\rho)\,(D\rho)^4\Big]\,dx + C_{b,3} \|\psi_1-\psi_2\|_{C([0,2\pi])}
\end{equation}
That is, the complexity is mainly related to the regularity of $\rho$.

% \section{Conclusion}
% In this paper, we introduced a geometric framework to analyze
% the approximation rate problem for deep learning in a dynamical systems setting.
% In particular, we showed that the time-cost (corresponding to depth requirement) of
% approximating maps can be viewed as geodesic distances over infinite dimensional
% sub-Finsler manifolds.
% This viewpoint departs in an interesting way from classical
% approximation theory over linear spaces.
% The framework provides a unified perspective and can be potentially used to study a wide
% range of problems involving function (or distribution) approximation
% by flows, which has applications both in modern deep learning and
% other areas of mathematical sciences.
% Important future directions include the investigation of learning architectures
% inspired by the current theoretical analysis, such as the choice of reference maps
% different from the identity.